\DeclarePairedDelimiter{\ceil}{\lceil}{\rceil}
\DeclarePairedDelimiter{\floor}{\lfloor}{\rfloor}
\newcommand{\pl}{Polyak-\L{}ojasiewicz}
\def\@fnsymbol#1{\ensuremath{\ifcase#1\or *\or \dagger\or \ddagger\or
   \mathsection\or \mathparagraph\or \|\or **\or \dagger\dagger
   \or \ddagger\ddagger \else\@ctrerr\fi}}
\newcommand{\ssymbol}[1]{^{\@fnsymbol{#1}}}
\definecolor{my-green}{cmyk}{0.2, 0.04, 0.1, 0.04, 0.8}
\definecolor{my-green}{cmyk}{0.2, 0.04, 0.1, 0.04, 0.8}
\newtheorem{theorem}{Theorem}[section]
\newtheorem{lemma}[theorem]{Lemma}
\newtheorem{remark}{Remark}
\newtheorem{assumption}{Assumption}
\newtheorem{corollary}[theorem]{Corollary}
\newtheorem{definition}{Definition}
\newtheorem{fact}[theorem]{Fact}
\begin{document}
\begin{center}
{\bf{\LARGE{On the Convergence  of Local Descent Methods \\ \vspace{2mm}in Federated Learning}}}

\vspace*{.2in}

{\large{
\begin{tabular}{cc}
Farzin Haddadpour &
Mehrdad Mahdavi \\ \\
\end{tabular}
}}

\begin{tabular}{c}
School of Electrical Engineering and Computer Science\\
The Pennsylvania State University\\
University Park, PA, USA \\
\texttt{\{fxh18, mzm616\}@psu.edu}
\end{tabular}

\vspace*{.1in}
\sloppy
\date{\today}
\end{center}
\begin{abstract}
In federated distributed learning, the goal is to optimize a global training objective defined over distributed devices, where the data shard at each device is sampled from a possibly different distribution (a.k.a., heterogeneous or non i.i.d. data samples). In this paper, we generalize the  local stochastic and full gradient descent  with periodic averaging-- originally designed for homogeneous distributed optimization, to solve nonconvex  optimization problems in federated learning. Although scant research is available on the effectiveness of local SGD in reducing the number of communication rounds in homogeneous setting, its convergence and communication complexity in heterogeneous setting is mostly demonstrated empirically and lacks through theoretical understating. To bridge this gap, we demonstrate that by properly analyzing the effect of unbiased gradients and sampling schema in federated setting, as long as the \textit{gradient diversity of local data shards} is bounded, the implicit variance reduction feature of local distributed methods generalizes to  heterogeneous data shards and exhibits the best known  convergence rates  both in general nonconvex and  nonconvex under \pl~ condition  (generalization of strong-convexity). Our theoretical results   complement the recent empirical studies that demonstrate the applicability of local GD/SGD to federated learning and characterize the conditions that these methods exhibit fast convergence. We also specialize the proposed local method for networked distributed optimization. To the best of our knowledge, the obtained convergence rates are the sharpest known to date on the convergence of local decant methods  with periodic averaging for solving nonconvex federated optimization in both centralized and  networked distributed optimization. 

\end{abstract}
\section{Introduction}\label{sec:intro}
With the emergence of datasets of an unprecedented size and  the availability of distributed computing resources, distributed learning and the use of distributed optimization for machine learning has becoming of increasing importance and often crucial for deployment of large-scale machine learning~\cite{bottou2018optimization}. Distributed learning can leverage parallel processing resources in order to allow learning large-scale problems in reasonable time, and perhaps more importantly, allows handling massive data sets that can not be stored and processed in a single machine. 

The most notable works on distributed optimization focus on consensus problems, where each machine holds a subset of training data which share the same distribution with other machines (i.e., the data shard at each machine is sampled independently and identically (i.i.d.)  from a \textit{single} unknown distribution)  and the goal is to communicate between the machines so as to jointly optimize the average objective to learn a centralized model.  Formally,  assume there are $p$ distributed machines where each machine holds a different data shard $\mathcal{S}_i = \{(\boldsymbol{x}_1,y_1), (\boldsymbol{x}_2,y_2), \ldots, (\boldsymbol{x}_{n_i}, y_{n_i})\}$ with $n_i$ samples that are sampled i.i.d. from a source distribution $\mathcal{D}$ over instance space $\Xi = \mathcal{X} \times \mathcal{Y}$ and the goal is to collectively solve the following optimization problem associated with the empirical risk over whole training data
\begin{equation}
     \textbf{\mbox{(P1)}} \quad \min_{\boldsymbol{w}} \frac{1}{p}\sum_{i=1}^{p}{ f_i(\boldsymbol{w})}, 
\end{equation}
where $f_i(\boldsymbol{w}) =  \frac{1}{|\mathcal{S}_i|}\sum_{(\boldsymbol{x}_i, y_i) \in \mathcal{S}_i}{\ell\left(\boldsymbol{w}; (\boldsymbol{x}_i, y_i)\right)} $ is the average loss over tanning examples in $\mathcal{S}_i$ for a given convex or non-convex loss function $\ell: \mathcal{W} \times \Xi \mapsto \mathbb{R}_{+}$ with $\mathcal{W} \subseteq \mathbb{R}^d$ being the parameter space.

Motivated by learning a centralized global model from training data distributed over hundreds to millions of remote devices  with possibly \textit{different} data distribution and privacy concerns of sharing local data, the Federated Learning (FL) is pioneered as a special case of distributed learning very recently in~\cite{konevcny2016federated}  and has received much attention in the context of machine learning. Unlike the standard distributed learning,  in federated learning each machine holds a different source distribution $\mathcal{D}_i$ over instance space $\Xi = \mathcal{X} \times \mathcal{Y}$ from which it can sample training instances (data distribution across the machines/devices can be arbitrarily \textbf{heterogeneous}), and this distribution corresponds to a local generalization error or risk $R_i(\boldsymbol{w}) = \mathbb{E}_{(\boldsymbol{x}, y) \sim \mathcal{D}_i}[\ell(\boldsymbol{w}; (\boldsymbol{x}, y)))]$ for a prediction model $\boldsymbol{w} \in \mathcal{W}$  and predefined loss function $\ell: \mathcal{W} \times \Xi \mapsto \mathbb{R}_{+}$ (compare to P1 where the goal is to minimize global risk $R(\boldsymbol{w}) = \mathbb{E}_{(\boldsymbol{x}, y) \sim \mathcal{D}}[\ell(\boldsymbol{w}; (\boldsymbol{x}, y))]$). Given a distributed data sample $\mathcal{S} = \mathcal{S}_1 \cup \mathcal{S}_2 \cup \ldots \cup \mathcal{S}_p$ where data shard $\mathcal{S}_i, i=1, \ldots, p$ is sampled from $\mathcal{D}_i$, the goal is to find a single predictor $\boldsymbol{w}$ that performs well on all devices. To this end, we minimize the aggregated empirical risk over all available data either by weighting individual loss functions  proportional  to their sample sizes ~\cite{mcmahan2016communication,konevcny2016federated} (P2) or agnostic minimax loss~\cite{mohri2019agnostic} (P3):
\begin{equation}
\label{eqn:p2:p3}
     \textbf{\mbox{(P2)}} \quad \min_{\boldsymbol{w}} \sum_{i=1}^{p}{\frac{n_i}{n} f_i(\boldsymbol{w})} \quad \quad \textbf{\mbox{(P3)}} \quad \min_{\boldsymbol{w}} \max_{\boldsymbol{\lambda } \in \Delta_p}  \sum_{i=1}^{p}{\lambda_i f_i(\boldsymbol{w})},
\end{equation}
where  $n_i = |\mathcal{S}_i|$ is the size of $i$th data shard, $n$ is the total number of samples and $\Delta_p$ is the $p$-dimensional simplex, i.e., $\Delta_p = \{\boldsymbol{\lambda} \in \mathbb{R}^p\;|\; \sum_{}^{}{\lambda_i} = 1, \lambda_i \geq 0, i=1,2, \ldots, p\}$.

 Compared to standard distributed optimization as in (P1), there are few key challenges we need to overcome in federated learning. First, frequent communication is undesirable in FL as it is expensive and intrusive due to unreliable and relatively slow network connections. As a result, the key challenge in FL is reaching a consensus between possibly very different distributions with minimal number of communications, and the problem still becomes \textit{harder} when more machines are involved.  Beyond expensive communication, another key distinguishing feature of FL is data privacy, where transfer of local data to a single data center for centralized training is prohibited. As a result, federated models are learned by aggregating model updates submitted by devices.  Moreover, in FL, only a subset of devices, say $K \subseteq [p] \triangleq \{1, 2, \ldots, p\}$, participate at each round of training (with either stochastic or adversarial availability),  which requires efficient sampling methods to guarantee the convergence of final model. Last but not least,  to protect confidentiality of the training data, the central machine by design has no visibility into how these updates are generated, making the model vulnerable to a model-poisoning attacks from malicious devices~\cite{bhagoji2018analyzing}.

Since the communication overhead is one of the key challenges that hinders the scalability of distributed optimization algorithms to learn from extremely large number of devices in federated setting, in this paper we  aim at developing communication efficient algorithms for federated learning with provable convergence rates. To this end, we investigate the convergence of local Gradient Descent (GD) and local Stochastic Gradient Descent (SGD) with \textit{periodic averaging} in federated setting. In local GD/SGD, the idea is to perform \emph{local} updates with periodic averaging, wherein machines update their own local models which involve only their local training data, and the models of the different machines are averaged periodically\cite{yu2018parallel,wang2018cooperative,zinkevich2010parallelized,mcdonald2010distributed,zhang2016parallel,zhou2017convergence, stich2018local}. 

The motivating impetus for this work is recent studies that demonstrate that the local SGD is favourable to parallel SGD as it requires less number of communications to converge to the desired accuracy while preserving the linear speedup. For instance, in~\cite{stich2018local} it has been shown that for strongly convex loss functions, with a fixed mini-batch size and after $T$ iterations,  the linear speedup of the parallel SGD is attainable only with $O\left(\sqrt{pT}\right)$ rounds of communication, with each device performing $E = O(\sqrt{T/p})$ local updates before communicating its local model. If $p < T$, this is a significant improvement than the naive parallel SGD which requires $T$ rounds of communication. This result is further generalized and tightened in~\cite{haddadpour2019trading} by demonstrating that under \pl~condition, $O((pT)^{1/3})$ rounds of communication suffice to achieve a linear speed up, that is, an error of $O(1/pT)$. 


 \begin{table}[t]
\centering
\resizebox{1\linewidth}{!}{
\begin{tabular}{|c||c|c|c|c|l|}
  \hline
  \begin{tabular}{l}
  Strategy
  \end{tabular} &
  \begin{tabular}{l}
  Convergence Rate
  \end{tabular} &
  \begin{tabular}{l}
  $E$ 
  \end{tabular} & 
  \begin{tabular}{l} 
  Sampling 
  \end{tabular}  & 
  \begin{tabular}{l}
  Extra Assumptions  on Gradient 
  \end{tabular} &
  \begin{tabular}{l}
  Problem/Method 
\end{tabular} \\
  \hline \hline
   \cite{sahu2018convergence}$^{\dagger}$ &  $O\left(\frac{1}{\rho T}\right)$ & $-$ & \ding{51} &   Bounded dissimilarity (Remark~\ref{rmk:2})\:\& $\nabla^2{f}_j(\boldsymbol{w})\succeq -L\mathbf{I}$ & Non-convex \hfill Local Solver\\
    \hline
   \cite{sahu2018convergence} &  $O\left(\frac{E^2}{T}\right)$ & $O\left(1\right)$ & \ding{51} &   Bounded dissimilarity\:\& $\nabla^2{f}_j(\boldsymbol{w})\succeq -L\mathbf{I}$ & Strongly-convex \hfill Local Solver\\
  \hline
   \cite{li2019convergence} & {$O\left(\frac{E^2}{T}\right)$} & $O\left(1\right)$ & \ding{51} & Bounded gradient & Strongly-convex \hfill SGD \\
  \hline
  \cite{khaled2019first}$^{\ddagger}$  & {$O\left(\frac{1}{T}\right)+O\left(E^2\sigma^2_f\right)$} & $O\left(1\right)$ & \ding{55} & \ding{55} & Convex \hfill GD  \\
  \hline
    \cite{local2019var}$^{\ssymbol{8}}$  & {${O\left(\frac{1}{\sqrt{pT}}\right)}$} & $O\left({T}^{\frac{1}{2}}/p^{1.5}\right)$ & \ding{55} & \ding{55} & Non-convex \hfill SGD  \\
  \hline
  \cite{li2019communication}$^{\ssymbol{9}}$  & $O\left(\sqrt{\frac{E}{pT}}+\frac{pE}{T}\right)$ & $O(1)$ &\boldmath{$-$} & Degree of non-i.i.d.  & {Non-convex \& networked \hfill SGD}
  \\
  \hline
  \textbf{Theorem~\ref{thm:gd}} & \boldmath{$O\left(\exp{(-\mu \eta T)}\right)$} & \textbf{Satisfying (\ref{eq:gd-condition})} & \ding{51} & \textbf{Bounded gradient diversity} & \textbf{Non-convex (PL) \hfill GD} 
  \\
   \hline
   \textbf{Theorem~\ref{thm:proof-undrr-pl}} & \boldmath{$O\left(\frac{1}{KT}\right)$} & \boldmath{$O\left(p^{\frac{1}{3}}T^{\frac{2}{3}}\right)$} & \textbf{\ding{51}} & \textbf{Bounded gradient diversity} &  \textbf{Non-convex  (PL) \hfill SGD} \\
 \hline
  \textbf{Theorem~\ref{thm:FedAvg}} & \boldmath{${O\left(\frac{1}{\sqrt{KT}}\right)}$} & \boldmath{$O\left({T}^{\frac{1}{2}}/K^{1.5}\right)$} & \textbf{\ding{51}} & \textbf{Bounded gradient diversity} &\textbf{Non-convex \hfill SGD} 
  \\
    \hline
    \textbf{Theorem~\ref{thm:serverless-fed}} & \boldmath{${O\left(\frac{1}{\sqrt{pT}}\right)}$} & \boldmath{$O\left({\left(\frac{1-\zeta^2}{1+\zeta^2}\right)}{T}^{\frac{1}{2}}/p^{1.5}\right)$} & \boldmath{$-$} & \textbf{Bounded gradient diversity} &\textbf{Non-convex \& networked \hfill SGD}
  \\
   \hline
\end{tabular}}
\caption{\sffamily{A high level summary of the results of this paper and their comparison to prior state of the art  local GD and local SGD with periodic averaging based algorithms. This table only highlights the dependencies on $T$ (number of iterations), $E$ (the largest number of local updates achieving asymptotic optimal solution), and $K\leq p$ (number of selected devices). We note that all the results with sampling reduces to the convergence results of local-SGD/GD by simply letting $K=p$ and $q_i=\frac{1}{p}$. The rates  obtained in this paper hold as long as the drift among local data shards, quantified by their gradient diversity, is bounded (see Definition~\ref{def:gd-d}). \\
$^{\dagger}$ We note that the constant $\rho>0$  is a function of $\eta,\mu,T,L$, and $B$ which is an upper bound on the dissimilarity among gradient of local objectives.\\
$^{\ddagger}$The additive residual error is defined as $\sigma^2_f = \frac{1}{p}\sum_{j=1}^p\|\nabla{f}_j(\boldsymbol{w}^{*})\|^2$, 
where $\boldsymbol{w}^*$ is the global minimum.\\
$^{\ssymbol{8}}$We note that~\cite{local2019var} is the only scheme that uses explicit variance reduction.\\ $^{\ssymbol{9}}$The bound here is for the  proposed decaying strategy local SGD  and the analysis  for vanilla local SGD in~\cite{li2019communication} suffers from an $O(\kappa_f^2)$ additive residual error where $ \frac{1}{p}\sum_{j=1}^{p}\|\nabla{f}_j(\boldsymbol{w})-\nabla{f}(\boldsymbol{w})\|^2 \leq \kappa_f^2, \; \forall \boldsymbol{w} \in \mathbb{R}^d$ quantifies the drift among local and global gradients.}}
\label{table:1}
\end{table}

These results motivates us to examine the convergence of the local descent methods, both SGD and GD, in centralized and decentralized federated learning. However, to accomplish this goal, there are few key challenges to overcome. First, since the distribution of local data shards are different, the local gradients are \textit{biased} with respect to gradient of the global objective in (P2) and (P3) that poses numerous difficulties. In particular, the existing analysis does not generalize and the convergence of standard methods such as federated averaging~\cite{mcmahan2016communication} (variant of local SGD where a subset of machines participate in aggregation) is not guaranteed~\cite{li2019federated}. To overcome this issue, recent studies   attempt to establish convergence by quantifying the drift among local objectives via different notions of heterogeneity (e.g., dissimilarity measure between gradient vectors at different local machines) and introducing  novel aggregation methods such as proximal regularization~\cite{sahu2018convergence} or  controlled averaging~\cite{karimireddy2019scaffold}. Instead, we aim at characterizing the convergence of local GD/SGD in general federated setting with a focus on understating \textit{how does the  heterogeneity among local data shards affect the convergence?}.  Second, similar to homogeneous setting, while local updates and periodic model averaging  reduces the number of communication rounds, since the model for every iteration is not updated based on the entire data, it suffers from a residual error with respect to fully synchronous SGD. In federated setting overcoming the accumulated residual error is more involved due to heterogeneity as local data shards are far from being representative of the whole data. We also note that in federate setting, the analysis is more involved as only a subset of machines participate in aggregation at every  communication round.  Despite these difficulties, as we will elaborate later in the our theoretical analysis which is also  empirically demonstrated in recent studies~\cite{li2019convergence,sahu2018convergence}, we are able to show that if the averaging period and the learning rate are chosen properly based on the \textbf{gradient diversity of local objectives}, the residual error can be compensated and the convergence is guaranteed. 
\subsection{Contributions}
The main contribution of this paper is to theoretically analyze the convergence of local GD/SGD in federated learning.  Specifically, we show that \textbf{implicit variance reduction} of local descent methods, which is observed in homogeneous setting, even holds  in heterogeneous  distribution of local data conditioned on the fact that hyperparameters (i.e., learning rate or the number of local updates and selected devices) are properly chosen based on the \textit{gradient diversity of local objectives}. Moreover, due to \textbf{restarting property} of local SGD at each communication round, where the server  broadcasts the model to all devices, we can control the residual error caused by local updates and the algorithm is less affected by sampling of devices (we assume that devices are agnostic to random selection schema). The obtained convergence rates  in the context of existing works are elucidated in Table~\ref{table:1}. As we elaborate later in our analysis, our goal is to characterizes the choice of learning rate and number of local updates based on gradient diversity of local objectives to guarantee the convergence.   We also extend all of our convergence rates to the setting where at each communication period parameter server samples a predetermined number of devices. To summarize, the present work makes the following contributions:
\begin{itemize}
    \item We provide the convergence analysis of local SGD with periodic averaging for general non-convex optimization problems in both parameter server and decentralized distributed settings. Our convergence rate is $O\left(\frac{1}{\sqrt{KT}}\right)$ \textit{without any residual error due to dissimilarity of local objectives}. This bound, in case of full device participation ($K=p$), matches the convergence rate of ~\cite{local2019var}, which employs an explicit variance reduction in local SGD.  We note that our analysis reveals that linear speed up can be achieved as long as the learning rate is smaller than a quantity which is inversely proportional to \textit{gradient diversity of data shards}. This theoretical result is consistent with experimental result of~\cite{povey2014parallel}. 
    \item We provide the convergence analysis of local SGD with periodic averaging for heterogeneous data distribution for non-convex objectives under \pl~(PL) condition. Our convergence analysis improves the convergence rate in \cite{sahu2018convergence,li2019convergence} from $O\left(\frac{E^2}{T}\right)$ to $O\left(\frac{1}{KT}\right)$ in terms of dependence on $E$. Our analysis  removes bounded gradient assumption and at the same time it increases the size of local updates from  $E = O\left(1\right)$   to $O\left(T^{\frac{2}{3}}/K^{\frac{1}{3}}\right)$ to achieve the same convergence rate.
    \item We provide the convergence analysis of local GD with periodic averaging for heterogeneous data distribution for non-convex objectives satisfying PL  assumption. While the rate obtained  in~\cite{khaled2019first}, i.e.,  $O\left(\frac{1}{T}\right)+O\left(E^2\sigma^2_f\right)$ where $\sigma^2_f = \frac{1}{p}\sum_{j=1}^p\|\nabla{f}_j(\boldsymbol{w}^{*})\|^2$ is the average gradient at global optimal solution,  indicates a growing residual error proportional to $E$ (even for convex objectives), the rate we obtain does not suffer from a residual error and it  matches  the convergence rate of distributed GD for non-convex objectives under PL condition. Furthermore, our convergence analysis covers the convergence error of distributed GD when  $E=1$.
    \item  In networked distributed setting, we derive an $O\left({T}^{\frac{1}{2}}/p^{1.5}\right)$ convergence rate for local SGD where participating device periodically communicate their local solutions with direct neighbors. Our analysis is based on weaker assumptions compared to concurrent work~\cite{li2019communication}, and even  improves the convergence rate from $O\left(\sqrt{\frac{E}{pT}}+\frac{pE}{T}\right)$ for the proposed   decaying strategy local SGD in~\cite{li2019communication} to $O\left(\frac{1}{\sqrt{pT}}\right)$.
\end{itemize}
\vspace{-4mm}
\paragraph{Organization}{The  remainder  of  this  paper  is  organized  as  follows. In Section~\ref{sec:relwor}, we discuss the most related work in two categories of local SGD with periodic averaging and federated optimization. In Section~\ref{sec:fplg} we review the  Local Federated Descent  optimization algorithm and specialize to  stochastic and full gradient settings. We then discuss the main assumptions we make to obtain the claimed convergence rates both in centralized  and decentralized networked models. The bounds  in Section~\ref{sec:fplg} are stated with some simplifications for the sake of presentation and to  compare these results with the best known bounds in the literature. In Section~\ref{Sec:convergence}, we provide the convergence results in more detail, with more technical aspects of our proofs deferred to the appendices. Finally, in Section~\ref{sec:concl&fud} we summarize the results and mention potential future directions.  }
\vspace{-2mm}
\paragraph{Notation}{Throughout the paper, we adapt the following notation. We use bold-face lower and upper case letters such as $\boldsymbol{w}$ and $\mathbf{W}$ to denote vectors and matrices, respectively. The set of numbers $\{1, 2, \ldots, p\}$ is denoted by $[p]$ for brevity. The derivative of a finite-sum function $f(\boldsymbol{w})$ when evaluated on a subset of training examples $\mathcal{S}$ is denoted by $\nabla f(\boldsymbol{w}; \mathcal{S})$. We use $\mathbb{E}[\cdot]$ to denote the expectation of a random variable. The dot product between two vectors $\boldsymbol{w}$ and $\boldsymbol{w}'$ is  denoted by either $\langle \boldsymbol{w}, \boldsymbol{w}'\rangle$ or $\boldsymbol{w}^{\top}\boldsymbol{w}'$. Throughout this paper, we only consider the $\ell_2$ norm of vectors represented by $\|\cdot\|$. Finally, for a given symmetric matrix $\mathbf{W} \in \mathbb{R}^{p \times p}$ we use $\lambda_1(\mathbf{W}), \ldots, \lambda_p(\mathbf{W})$ to denote its eigenvalues. Finally, the notation $a|b$ is used to indicate $a$ divides $b$.}

\section{Additional Related Research}\label{sec:relwor}
There is a very large body of work on distributed optimization in different settings, and studying their convergence under  various  criteria.   Here we would like to draw connections to and put our work in context of subset  of  work  that  has  given  bounds on the convergence of local GD/SGD with periodic averaging and federated optimization.
\vspace{-3mm}
\paragraph{Local SGD with Periodic Averaging.}
The references \cite{zinkevich2010parallelized,mcdonald2010distributed} introduce the  \emph{one shot} averaging, which can be considered as an extreme case of periodic averaging ($E=T$), and  show empirically that one-shot averaging works well for a range of optimization problems. From theoretical standpoint, yet, the convergence analysis of one-shot averaging is still an open problem. It is also shown in \cite{zhang2016parallel} that one-shot averaging can result in inaccurate solutions for some non-convex optimization problems. Furthermore, they illustrate that more frequent averaging in the beginning can improve the performance. \cite{zhang2012communication,shamir2014distributed,godichon2017rates,jain2018parallelizing} analyze convergence from statistical point of view with only one-pass over the training data which usually is not sufficient for the convergence of training error. Empirical advantages of model averaging is studied in \cite{povey2014parallel,chen2016scalable,mcmahan2016communication,su2015experiments,kamp2018efficient,lin2018don}. In these references, it is indicated empirically that model averaging can speed up convergence to achieve a given accuracy by improving communication cost.  Additionally, for one-shot averaging \cite{jain2018parallelizing} provides speedup with respect to bias and variance for the special case of quadratic square optimization problems. Focusing on distributed linear regression, \cite{haddadpour2018cross} shows that by adding careful amount of redundancy via coding theoretic tools, linear regression can be solved with one-shot communication. There are a few recent work such as  \cite{wang2018cooperative,stich2018local,yu2018parallel}  try to maximize $E$, while achieving linear speed up. The largest  $E=O\left(T^{\frac{3}{4}}/p^{\frac{1}{3}}\right)$ to achieve linear speed up, is provided by  \cite{haddadpour2019local} recently. 

While the majority of the convergence analysis of previous studies of local SGD with periodic averaging such as \cite{wang2018cooperative,stich2018local} is based on i.i.d. data distribution at each machine/device, the reference \cite{haddadpour2019trading} shows a trade-off between the amount of data redundancy and the accuracy of local SGD for general non-convex optimization for non-i.i.d. data distribution at each machine. Also, \cite{yu2019linear} provides the convergence of local SGD with momentum for non-i.i.d. data distribution with maximum allowable $E=O\left(T^{\frac{1}{4}}/ p^{\frac{3}{4}}\right)$.~\cite{local2019var} shows that applying some variance reduction technique over Local SGD with $E=O\left(T^{\frac{1}{2}}/p^{\frac{3}{4}}\right)$, can achieve linear speed up for non-i.i.d. data distribution for general non-convex optimization. In this paper, we show that we can achieve same performance without applying variance reduction technique  and  provide  convergence rates for  local GD algorithm on heterogeneous data and compare our analysis over the recent work of \cite{khaled2019first}.
\vspace{-3mm}
\paragraph{Federated Optimization.}{}
Federated optimization is pioneered  in  \cite{konevcny2016federated,mcmahan2016communication}.  \cite{guha2019one,caldas2018leaf}  study the empirical performance of federated optimization. Even though it is shown that federated optimization works well empirically, theoretical understanding for the case of general non-convex objective and non-i.i.d. data distribution is still lacking. There are a few research effort to analyze the convergence in general. The references \cite{smith2017federated,sahu2018convergence,li2019federated} study the convergence analysis for both strong convex optimization and under some sort of dissimilarity assumption between \emph{optimal local objective function and global optimal solution or gradients at various devices}. We provide the convergence analysis for both general non-convex and non-convex under PL assuming bounded gradient diversity among local devices. To  reduce the drift among local workers, few studies attempt to either regularize updates such as proximal regularization~\cite{sahu2018convergence} or utilize  controlled averaging~\cite{karimireddy2019scaffold} (In Subsection~\ref{subsec-controlled-vatiance-diversity} we make connection between this paper and reducing diversity). In a  concurrent work to the present paper,~\cite{li2019communication} proposed a decaying strategy decentralized local SGD that alternates between multiple local updates and multiple decentralized communications where every device in networks makes multiple local updates followed by multiple decentralized communications with its neighbors. 

Finally,  we note that another research direction in federated learning is the analysis of fairness. In particular, to satisfy fairness with respect to different local objectives,~\cite{mohri2019agnostic} casts the federated optimization into a minmax optimization problem (problem P2 in Eq.~(\ref{eqn:p2:p3})) and provides the convergence analysis for obtained solution. Another recent work~\cite{li2019fair} suggests fair algorithm for federated learning and evaluates their algorithms empirically. For a more comprehensive and up-to-date overview of recent progress in federated learning and  interesting potential future directions see~\cite{li2019federated}.


\begin{algorithm}[t]
\caption{\texttt{LFD}($E,K, \boldsymbol{q}$): Local Federated Descent  with Periodic Averaging. }\label{Alg:one-shot-using data samoples}
\begin{algorithmic}[1]
\State \textbf{Inputs:} $\boldsymbol{w}^{(0)}$ as an initial  model shared by all local devices, $E$ as the number of local updates, $K$ as the number of devices selected by server with corresponding sampling probabilities $\boldsymbol{q} = [q_1, q_2, \ldots, q_p]^{\top}$.
\State Server chooses a subset $\mathcal{P}_0$ of $K$ devices at random (device $j$ is chosen with probability $q_j$);
\State \textbf{for $t=0, 1, \ldots, T$ do}
\State $\qquad$\textbf{parallel for all chosen devices $j\in \mathcal{P}_t$ do}: \State $\qquad\quad $\textbf{if} $t$ does not divide $E$ \textbf{do}
\State $\qquad\quad\quad$ $\boldsymbol{w}^{(t+1)}_{j}=\boldsymbol{w}^{(t)}_j-\eta_t~ \boldsymbol{d}_{j}^{(t)}$\label{eq:update-rule-alg}
\State $\qquad\quad$\textbf{else}
\State $\qquad\quad\quad$Each chosen device $j$ sends $\boldsymbol{w}^{(t)}_j$ for $j\in \mathcal{P}_t$ back to the server.
\State $\qquad$Server \textbf{computes} 
\State $\qquad\qquad\bar{\boldsymbol{w}}^{(t+1)}=\frac{1}{K}\sum_{j\in\mathcal{P}_t}\Big[\boldsymbol{w}^{(t)}_j-\eta_t ~{\boldsymbol{d}}_{j}^{(t)}\Big]$
\State $\qquad$Server \textbf{broadcasts} $\bar{\boldsymbol{w}}^{(t+1)}$ to all devices.
\State $\qquad$Sever chooses a set of devices  $\mathcal{P}_t$ with distribution $q_j$.
\State $\qquad\quad$ \textbf{end if}
\State $\qquad$\textbf{end parallel for}
\State \textbf{end}
\State \textbf{Output:} $\bar{\boldsymbol{w}}^{(T)}=\frac{1}{K}\sum_{j\in \mathcal{P}_{T}}\boldsymbol{w}^{(T)}_j$
\vspace{- 0.1cm}
\end{algorithmic}
\end{algorithm}

\section{Local Federated Optimization}\label{sec:fplg}
In this section, we set up the distributed optimization algorithms of interest. Our goal is to show that the local gradient and stochastic gradient descent with periodic averaging also converge for solving distributed optimization problems in federated setting for both general non-convex functions and non-convex functions satisfying  PL condition as long as the drift among local objectives, quantified by gradient diversity, is bounded.  

To do so, we first formally state the optimization problem that we aim at solving and present the Local Federated Decent (\texttt{LFD}) algorithm with periodic averaging, that is a  modified version of local SGD, and thereafter, specialize it to full and stochastic  gradient settings. We also extend \texttt{LFD} to networked optimization where every device can communicate with direct neighbors in communication rounds. We present the main converge rates for proposed algorithms under different standard assumptions and defer the detailed convergence analysis to appendix.  
\subsection{Distributed federated optimization}
As mentioned earlier, in this paper,  we focus on the following distributed optimization problem:
\begin{align}
    \min_{\boldsymbol{w}}f(\boldsymbol{w})\triangleq \sum_{j=1}^p q_jf_j(\boldsymbol{w})\label{eq:global-cost}
\end{align}
where $p$ is the number of devices, and $q_j$ is the weight of $j$th device such that $q_j\geq 0$ and $\sum_{j=1}^pq_j=1$, and $f(\boldsymbol{w})$ is global objective function. 

In federated setting, we assume that the $j$th device holds $n_j$ training data  $\mathcal{S}_j = \{(\boldsymbol{x}_1,y_1), (\boldsymbol{x}_2,y_2) \ldots, (\boldsymbol{x}_{n_j}, y_{n_j})\}$ sampled i.i.d. from $j$th local distribution $\mathcal{D}_j$.  The local cost function $f_j(.)$ is defined by
\begin{align}
    f_j(\boldsymbol{w})\triangleq \frac{1}{n_j}\sum_{(\boldsymbol{x}_i, y_i) \in \mathcal{S}_j}^{}\ell\left(\boldsymbol{w};(\boldsymbol{x}_i, y_i))\right),
\end{align}
where $\ell(\cdot; \cdot)$ is the loss function that could be convex or nonconvex.  We note that when all the local distributions are same $\mathcal{D}_j=\mathcal{D}, j=1,2, \ldots, p$ and local objectives are weighted equally, the optimization problem reduces to the standard homogeneous distributed optimization. 

Before delving into the proposed algorithm and its convergence analysis, we would like to pause and highlight the key challenges we focus on in solving the above optimization problem. First, we note that since different local data shards are generated from a different distribution, in designing optimization algorithms for solving Eq.~(\ref{eq:global-cost}) the heterogeneity in distributions needs to be taken into account. Specifically, the local (stochastic) gradients, while being unbiased with respect to the local objectives, are no longer unbiased estimate of gradient of global objective. Moreover, due to unreliable network connections, e.g., in IoT devices, the frequent communication is undesirable which necessities communication efficient  optimization algorithms. Finally,  since not all devices can participate  at each round of communication, the server needs to sample a subset of devices in aggregating the local solutions. 

To resolve above three key issues, we propose the Local Federated Descant with Periodic Averaging which is the specialization of local SGD with a sampling schema to federated setting. The proposed algorithm, dubbed as  \texttt{LFD}$(E, K, \boldsymbol{q})$,  has three parameters: i) the number of local updates before communicating the local model wih sever denoted by $E$, ii) the number of devices to be sampled at every communication step denoted by $K$, and iii) the weight vector of individual machines $\boldsymbol{q} \in \Delta_p$ (e.g., $q_j = n_j/n$). Assuming the algorithm is running for $T$ iterations, at every iteration $t$ the $j$th device updates its own local version of the model $\boldsymbol{w}^{(t)}_{j}$ via the update rule:
\begin{align}
    \boldsymbol{w}^{(t+1)}_{j}=\boldsymbol{w}^{(t)}_j-\eta_t~ {\mathbf{\boldsymbol{d}}}_{j}^{(t)},\label{eq:risgd-up}
\end{align}
where ${\boldsymbol{d}}_{j}^{(t)}$ is the (stochastic) gradient utilized by $j$th machine at $t$th iteration to locally update the solution.

After every $E$ iterations, we do the model averaging, where the server performs averaging step over local versions of the model received from a randomly selected subset  $\mathcal{P}_t \subseteq [p]$  of devices which is equivalently can be written as:
\begin{align}
    \bar{\boldsymbol{w}}^{(t+1)}=\frac{1}{K}\sum_{j\in\mathcal{P}_t}\Big[\boldsymbol{w}^{(t)}_j-\eta_t~ {\mathbf{\boldsymbol{d}}}_{j}^{(t)}\Big]\label{eq:g-risgd-up}
\end{align}
To pick a subset of devices at communication step, we use the sampling scheme introduced in~\cite{li2019convergence}. Specifically, after each averaging step, server randomly selects a subset $\mathcal{P}_t\subset\{1,\ldots,p\}$ of devices with $|\mathcal{P}_t|=K\leq p$ uniformly at random  \emph{with replacement} according to the sampling probabilities $q_1,\ldots,q_p$.  It is worthwhile to mention that the devices are agnostic to sampling strategy and their updates are exactly same to the case where $K = p$. Our results can be extended to the sampling scheme without replacement, but for the ease of exposition, we only discuss sampling with replacement. 

The detailed steps of the proposed algorithm are shown in Algorithm~\ref{Alg:one-shot-using data samoples}. We note that \texttt{LFD} significantly reduces the number of communications as the model of local machines are aggregated periodically. It is noticeable that  by letting $q_i=\frac{1}{p}$ and $K=p$ in Algorithm~\ref{Alg:one-shot-using data samoples},  \texttt{LFD} reduces to the local GD/SGD algorithm with the key difference that the data shards at different machines do not share the same distribution. This is the key hurdle in analyzing the convergence which necessities careful tuning of learning rate $\eta_t$ and proper choice of number of local updates $E$ or the number of selected devices $K$ as we  elaborate later in the analysis of convergence rates.

\paragraph{Degree of heterogeneity and convergence}{As mentioned earlier, the heterogeneity among local data shards might cause the divergence of vanilla  (local) distributed GD/SGD if the learning rate or the number of local updates are not chosen properly. In fact, the federated averaging is shown to suffer from slow convergence rate or even divergence when data shards drift significantly. The main reason is that when the local objectives drift significantly, e.g., the local gradients are orthogonal or even are at opposite directions, there is no gain in aggregated optimization as the global optimum might significantly depart from the local optimum of individual devices.   As a result, different notions are introduced in literature to quantify the degree of heterogeneity by measuring the discrepancy among local objectives in convergence analysis:~\cite{khaled2019first} assumes    the average gradient at optimal global solution $\sigma^2_f = \frac{1}{p}\sum_{j=1}^p\|\nabla{f}_j(\boldsymbol{w}^{*})\|^2$ is bounded, 
where $\boldsymbol{w}^*$ is the global minimum,the analysis in~\cite{li2019communication} and~\cite{li2019convergence} relies on the bounded variance of local gradients with respect to global gradient  $ \frac{1}{p}\sum_{j=1}^{p}\|\nabla{f}_j(\boldsymbol{w})-\nabla{f}(\boldsymbol{w})\|^2 \leq \kappa_f^2, \; \forall \boldsymbol{w} \in \mathbb{R}^d$ (a quantity that is also called degree of non-i.i.d.), which is also used in~\cite{yu2019linear} to derive the convergence analysis of local SGD with momentum over non-i.i.d. distribution, and finally the analysis of~\cite{sahu2018convergence}   is based on local bounded dissimilarity $\mathbb{E}\|\left[\nabla{f}_j(\boldsymbol{w})\right]\|^2\leq \|\nabla{f}(\boldsymbol{w})\|^2B^2$ where $B$ is an upper bound for $\mathbb{B}(\boldsymbol{w})\triangleq \sqrt{\frac{\|\mathbb{E}\left[\nabla{f}_j(\boldsymbol{w})\right]\|^2}{\|\nabla{f}(\boldsymbol{w})\|^2}}\leq B$ for $1\leq j\leq p$. Similar to previous studies, to guarantee the convergence,  our analyses rely on a condition over hyperpaprameters (i.e., learning rate, number of local updates, and the number of sampled devices) that depends on the degree of heterogeneity among local objectives as defined below:

\begin{definition}[Weighted Gradient Diversity]\label{def:gd-d}
We indicate the following quantity as weighted gradient diversity among local objectives:
\begin{align}
    \Lambda(\boldsymbol{w}, \boldsymbol{q}) \triangleq \frac{\sum_{j=1}^pq_j\left\|\nabla{f}_j(\boldsymbol{w})\right\|_2^2}{\left\|\sum_{j=1}^pq_j\nabla{f}_j(\boldsymbol{w})\right\|_2^2} 
\end{align}
\end{definition}
We  note that the notion of gradient  diversity is first introduced in~\cite{yin2018gradient} to  measures  the  \textit{dissimilarity  between  concurrent  gradient updates} in i.i.d. distributed setting to understand the effect of mini-bath size on speedup, i.e., the largest permitted mini-batch size that can be used without any decay in performance or to avoid speedup saturation. Unlike~\cite{yin2018gradient}, our modified notion of gradient diversity is introduced to \textit{quantify heterogeneity among local objectives} to establish conditions on convergence in non-i.i.d.  stetting.  In federated setting, a different version of this quantity is also appeared in the convergence analysis of~\cite{sahu2018convergence} as denoted by $\mathbb{B}(\boldsymbol{w})$ above.  

In the remainder of this section we specialize \texttt{LFD} to full and stochastic settings and state the main assumptions we make  to establish convergence rates. We also discuss the convergence of \texttt{LFD} in networked distributed optimization.


\subsection{Local Federated GD (\texttt{LFGD})}

In the first specialization of \texttt{LFD} algorithm, dubbed as \texttt{LFGD}, we consider the setting where the local machines compute  the  gradient of their own entire data shard in updating the local solutions in Eq.~(\ref{eq:risgd-up}), i.e,
\begin{align}
    \boldsymbol{d}_j^{(t)}=\mathbf{g}_j^{(t)}\triangleq\nabla{f}_j(\boldsymbol{w}^{(t)}_j;\mathcal{S}_j),
\end{align}
We now turn to state the convergence rate of the local \texttt{LFD} with full gradients. Our convergence analysis is based on the following standard assumptions:

\begin{assumption}[Smoothness and Lower Boundedness]\label{Ass:1}
The local objective function $f_j(\cdot)$ of $j$th device is differentiable for $1\leq j\leq p$ and $L$-smooth, i.e., $\|\nabla f_j(\boldsymbol{u})-\nabla f_j(\mathbf{v})\|\leq L\|\boldsymbol{u}-\mathbf{v}\|,\: \forall \;\boldsymbol{u},\mathbf{v}\in\mathbb{R}^d$. We also assume that the value of global objective function $f(\cdot)$  is bounded below by a scalar ${f^*}$. 
\end{assumption}

\begin{assumption}[$\mu$-\pl~(PL)]\label{Ass:3}
The global objective functions $f(\cdot)$ is differentiable and satisfy the \pl~ condition with constant $\mu$, i.e.,  $\frac{1}{2}\|\nabla f(\boldsymbol{w})\|_2^2\geq \mu\left(f(\boldsymbol{w})-f(\boldsymbol{w}^*)\right)$ holds  $\forall \boldsymbol{w}\in\mathbb{R}^d $ with $\boldsymbol{w}^*$ being the optimal solution of global objective.
\end{assumption}
We remark that the PL condition is a generalization of strong convexity, as  $\mu$-strong convexity implies $\mu$-\pl ~(PL), e.g., see \cite{karimi2016linear} for more details. Therefore, all of our results based on $\mu$-PL assumption also leads to similar convergence rate for  $\mu$-strongly convex functions.  We also note that  while many popular convex optimization problems such as logistic regression and least-squares are often not strongly convex, but satisfy $\mu$-PL condition~\cite{karimi2016linear}. Furthermore,  the PL condition does not require convexity necessarily.

The convergence rates of \texttt{LFGD}  is summarized in the following theorem. The convergence rate is presented with some simplifications  for the sake of presentation, while it is presented in full generality in  Section~\ref{Sec:convergence}. 

\begin{theorem}[informal]\label{thm:gd-informal} For \texttt{LFGD}$(E, K, \boldsymbol{q})$ with $E$ local updates, under Assumptions \ref{Ass:1} and \ref{Ass:3}, if we choose the learning rate $\eta$ and local updates $E$ such that 
\begin{align}        L\eta \Big(1+\frac{4\kappa E}{{(1-\mu\eta)}^{E-1}}\Big)\leq \frac{1}{\lambda}\label{eq:gd-condition-informal},\end{align} holds, where $\lambda$ is an upper bound over the weighted gradient diversity, i.e.,   $\Lambda(\boldsymbol{w}, \boldsymbol{q}) \leq \lambda$ and  all local models are initialized at the same point $\bar{\boldsymbol{w}}^{(0)}$, after $T$ iterations we have:
\small{\begin{align}
    \mathbb{E}\left[f(\bar{\boldsymbol{w}}^{(T)})-f^*\right]&\leq {O\left(e^{-\mu\eta T}\right)},\label{eq:thm1-result-informal} 
\end{align}}
where $f^*$ is the global minimum, $\kappa=\frac{L}{\mu}$,  and the expectation is with respect to randomness is selecting the devices.
\end{theorem}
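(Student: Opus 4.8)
The plan is to track a \emph{virtual} weighted average iterate $\bar{\boldsymbol{w}}^{(t)} \triangleq \sum_{j=1}^p q_j \boldsymbol{w}_j^{(t)}$ at \emph{every} iteration, not only at communication rounds, which is legitimate because every device advances its local model irrespective of whether it is sampled. Since the server draws $\mathcal{P}_t$ with replacement according to $\boldsymbol{q}$, the broadcast model $\frac{1}{K}\sum_{j\in\mathcal{P}_t}\boldsymbol{w}_j^{(t)}$ is an unbiased estimator of $\bar{\boldsymbol{w}}^{(t)}$, so in conditional expectation it equals $\bar{\boldsymbol{w}}^{(t)}$ up to a zero-mean sampling fluctuation whose variance is separately controlled. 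First I would record the one-step evolution $\bar{\boldsymbol{w}}^{(t+1)} = \bar{\boldsymbol{w}}^{(t)} - \eta \sum_j q_j \nabla f_j(\boldsymbol{w}_j^{(t)})$ and invoke $L$-smoothness of $f$ (inherited from the $f_j$) to obtain the descent inequality
\begin{align}
f(\bar{\boldsymbol{w}}^{(t+1)}) \leq f(\bar{\boldsymbol{w}}^{(t)}) - \eta \big\langle \nabla f(\bar{\boldsymbol{w}}^{(t)}), \textstyle\sum_j q_j \nabla f_j(\boldsymbol{w}_j^{(t)})\big\rangle + \tfrac{L\eta^2}{2}\big\|\textstyle\sum_j q_j \nabla f_j(\boldsymbol{w}_j^{(t)})\big\|^2. \nonumber
\end{align}

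Next I would split the cross term using $\nabla f(\bar{\boldsymbol{w}}^{(t)}) = \sum_j q_j \nabla f_j(\bar{\boldsymbol{w}}^{(t)})$, so the only discrepancy is $\sum_j q_j[\nabla f_j(\boldsymbol{w}_j^{(t)}) - \nabla f_j(\bar{\boldsymbol{w}}^{(t)})]$, which by smoothness is controlled by the aggregate consensus error $\sum_j q_j \|\boldsymbol{w}_j^{(t)} - \bar{\boldsymbol{w}}^{(t)}\|^2$. A Young/Cauchy–Schwarz splitting then isolates a leading descent term $-\frac{\eta}{2}\|\nabla f(\bar{\boldsymbol{w}}^{(t)})\|^2$ plus error terms proportional to the consensus error and to $\|\sum_j q_j \nabla f_j(\boldsymbol{w}_j^{(t)})\|^2$.

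The core of the argument is then a lemma bounding the accumulated drift. Writing $t_0 = E\lfloor t/E\rfloor$ for the last averaging step, all local models agree at $t_0$, so $\boldsymbol{w}_j^{(t)} - \bar{\boldsymbol{w}}^{(t)} = -\eta \sum_{s=t_0}^{t-1}[\nabla f_j(\boldsymbol{w}_j^{(s)}) - \sum_k q_k \nabla f_k(\boldsymbol{w}_k^{(s)})]$. Bounding this telescoped sum over at most $E$ local steps and weighting by $\boldsymbol{q}$ yields a drift bound in terms of $\eta^2 E \sum_{s=t_0}^{t-1}\sum_j q_j \|\nabla f_j(\boldsymbol{w}_j^{(s)})\|^2$. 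Here the Weighted Gradient Diversity assumption $\Lambda(\boldsymbol{w},\boldsymbol{q}) \leq \lambda$ of Definition~\ref{def:gd-d} is exactly what converts $\sum_j q_j \|\nabla f_j(\cdot)\|^2$ into $\lambda \|\nabla f(\cdot)\|^2$, closing the recursion in terms of global gradient norms. The factor $(1-\mu\eta)^{E-1}$ in~(\ref{eq:gd-condition-informal}) arises when I charge the gradients at intermediate steps $t_0 \leq s < t$ back against the function-value gap at the start of the round: since that gap contracts going forward, re-expressing later gradient norms in terms of the $t_0$ gap introduces the amplifying factor $(1-\mu\eta)^{-(E-1)}$.

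Finally I would substitute the drift bound into the descent inequality, take expectation over the device sampling (handling the averaging-step variance in the same way), and apply the PL condition $\tfrac{1}{2}\|\nabla f(\bar{\boldsymbol{w}}^{(t)})\|^2 \geq \mu(f(\bar{\boldsymbol{w}}^{(t)}) - f^*)$ to the leading term, producing a recursion $\mathbb{E}[f(\bar{\boldsymbol{w}}^{(t+1)}) - f^*] \leq (1-\mu\eta)\,\mathbb{E}[f(\bar{\boldsymbol{w}}^{(t)}) - f^*] + (\text{error})$. Condition $L\eta(1 + 4\kappa E/(1-\mu\eta)^{E-1}) \leq 1/\lambda$ is precisely what forces the error terms — all proportional to $\lambda L\eta$ times gradient norms — to be reabsorbed into the contraction, leaving \emph{no} residual floor; unrolling over $T$ steps then gives $\mathbb{E}[f(\bar{\boldsymbol{w}}^{(T)}) - f^*] \leq (1-\mu\eta)^T(f(\bar{\boldsymbol{w}}^{(0)}) - f^*) = O(e^{-\mu\eta T})$. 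The main obstacle I anticipate is the drift lemma: the consensus error obeys a coupled recursion because each local gradient is evaluated at a drifted model, and closing it so that the residual truly vanishes rather than leaving an additive $O(E^2\eta^2)$ term (as in prior work) requires pairing the gradient-diversity bound with the PL contraction in a carefully matched way, which is exactly what makes the condition~(\ref{eq:gd-condition-informal}) delicate.
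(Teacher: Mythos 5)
Your overall architecture matches the paper's: a smoothness descent inequality at a virtual average, a cross-term split via $2\langle a,b\rangle=\|a\|^2+\|b\|^2-\|a-b\|^2$ and $L$-smoothness, a per-round drift lemma closed by the weighted gradient diversity bound, cancellation of all gradient-norm error terms into the learning-rate condition, and a final PL contraction unrolled over $T$. However, there are two concrete issues. First, your virtual iterate $\bar{\boldsymbol{w}}^{(t)}=\sum_{j}q_j\boldsymbol{w}_j^{(t)}$ does not evolve by the one-step recursion you write down: in Algorithm~\ref{Alg:one-shot-using data samoples} only the sampled devices $j\in\mathcal{P}_t$ perform local updates, and at each communication round every device is reset to $\frac{1}{K}\sum_{j\in\mathcal{P}_t}[\cdot]$, which differs from the $q$-weighted average by a zero-mean fluctuation whose \emph{second moment} enters the smoothness inequality and is not zero. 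You assert this variance is ``separately controlled,'' but for the theorem's conclusion there must be \emph{no} residual floor, so this term must itself be reabsorbed into the gradient-norm cancellation; that step is missing. The paper sidesteps the issue entirely by defining the virtual average over the sampled set, $\bar{\boldsymbol{w}}^{(t)}=\frac{1}{K}\sum_{j\in\mathcal{P}_t}\boldsymbol{w}_j^{(t)}$, so that the update $\bar{\boldsymbol{w}}^{(t+1)}=\bar{\boldsymbol{w}}^{(t)}-\eta\,\frac{1}{K}\sum_{j\in\mathcal{P}_t}\mathbf{g}_j^{(t)}$ holds exactly (including at communication rounds), and only then takes $\mathbb{E}_{\mathcal{P}_t}$ using the with-replacement identity $\mathbb{E}_{\mathcal{P}_t}[\frac{1}{K}\sum_{j\in\mathcal{P}_t}x_j]=\sum_j q_j x_j$ (Fact~\ref{fact:1}).

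Second, your account of where the $(1-\mu\eta)^{E-1}$ factor comes from --- ``re-expressing later gradient norms in terms of the $t_0$ gap'' via the PL contraction --- is not the actual mechanism and would not work as stated (PL lower-bounds gradient norms by the gap; it cannot be used to upper-bound the drift). In the paper the factor arises from a backward induction within each local period (Lemma~\ref{lemma:simplification-constant}): the recursion $a_{t+1}\leq \Delta a_t+\frac{\eta}{2}(-1+L\lambda\eta)e_t+B\sum_{k=t_c+1}^{t-1}\eta^2 e_k$ is unrolled from $t$ down to $t_c+1$, and the negative coefficient $-\frac{\eta}{2}e_{t_c+1}$ available for cancellation is discounted by $\Delta^{E-1}$ by the time the accumulated positive coefficients $B(\Delta^{E-2}+\cdots+1)$ of $e_{t_c+1}$ must be dominated; bounding $\frac{1-\Delta^{E-1}}{1-\Delta}\leq\frac{1}{\mu\eta}$ then yields exactly condition~(\ref{eq:gd-condition-informal}). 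Your plan reaches the right condition but by a heuristic that would need to be replaced by this within-round induction to actually close the argument.
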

A few remarks about the rate stated in Theorem~\ref{thm:gd-informal} are in place.
\begin{remark}
From the condition on learning rate, a smaller gradient diversity is favorable to achieve a fast convergence rate as the learning rate is inversely proportional to $\lambda$. To elaborate on this we note that when the data shards at different machines are sampled from same distribution (i.i.d. setting) or even identical, the gradient diversity would be very small and as a result a larger learning rate or number of local updates can be taken. In contrast, when the gradients at different data shards are almost orthogonal, or even on opposite directions, the weighted gradient diversity becomes large which leads to a very small learning rate (in a sense that potentially there is no descent direction in the span of local gradients that could result in reducing the objective as the gradients are at opposite directions which equivalently implies the choice of $\eta \approx 0$ to not hurt the current solution). This observation explains  the slow convergence (or even divergence) of federated averaging when applied to highly heterogeneous data shards.
\end{remark}



\textcolor{black}{\begin{remark}
To understand Theorem~\ref{thm:gd-informal}, let $E=T^{\beta}$ for $0\leq \beta\leq 1$. In one extreme of spectrum, setting $\beta=0$, condition~(\ref{eq:gd-condition-informal}) reduces to $\eta\leq \frac{1}{L\lambda(1+4L\kappa)}$, hence Theorem~\ref{thm:gd-informal} leads to the convergence of fully synchronous distributed \emph{GD} where all the local models are averaged at every iteration. On the other side of the spectrum, if we set $\beta=1$ which corresponds to one-shot distributed GD, condition~(\ref{eq:gd-condition-informal}) reduces to \begin{align}\eta\leq \frac{{(1-\eta\mu)}^{T-1}}{\lambda L\left({(1-\eta\mu)}^{T-1}+4\kappa T\right)}\leq \frac{{(1-\eta\mu)}^{T-1}}{4\lambda L\kappa T}\end{align}
indicating that $\eta$ needs to be set to \emph{zero} for large $T$. Moreover, note that even for the choice of $\eta=O\left(\frac{{(1-\eta\mu)}^{T-1}}{T}\right)$ the convergence bound becomes $$\mathbb{E}\left[f(\bar{\boldsymbol{w}}^{(T)})-f^*\right]\leq O\left(e^{-\mu\eta T}\right)=O\left(e^{-\mu\frac{{(1-\eta\mu)}^{T-1}T}{T}}\right)=O\left(e^{-\mu{(1-\eta\mu)}^{T-1}}\right)$$
which means algorithm does not converge. Therefore, the \texttt{LFGD} operates between these two extremes. 
\end{remark}
}
\textcolor{black}{
\begin{remark}\label{rmk:2}
As discussed before, the original convergence analysis of  federated optimization algorithms also relies on quantifying the dissimilarity of local gradients and global gradient via different notions along with an additional assumption on the boundedness of gradients. Interestingly,  our convergence analysis for both local GD and local SGD does not rely on any additional assumption which is consistent with more recent analysis  in~\cite{khaled2019first}.
\end{remark}}

\paragraph{Comparison to past work} {Before proceeding further, we would like to compare the achived bound to the one obtained in~\cite{khaled2019first} that analyzes the convergence of \texttt{LFGD} when all the machines participate in communication round (full device participation with $K = p$). \textcolor{black}{The convergence analysis in~\cite{khaled2019first} only holds for convex optimization problems and suffers from residual error which is \textit{proportional to $E^2$}, i.e., $O(E^2 \sigma_f^2)$.} In contrast,  while our convergence analysis focuses on non-convex optimization under PL condition, it  demonstrates similar asymptotic performance with distributed GD, and at the same time it allows much bigger $E$ as long as condition in Eq.~(\ref{eq:gd-condition-informal}) is satisfied \textcolor{black}{\emph{where the residual error is fixed and  independent of $E$}}. The detailed comparison of two bounds is summarized in Table~\ref{table:1}. \textcolor{black}{ Interestingly, both  in our analysis and the one proposed in~\cite{khaled2019first}, the convergence rate of  \texttt{LFGD} does not depend on the number of devices and  no assumption is made about the dissimilarity of local gradients as mentioned above.} }

\subsection{Local federated SGD (\texttt{LFSGD})}
We now shift our attention to the case where local machines are only allowed to sample a mini-batch of fixed size to update their local models, dubbed as local federated SGD (\texttt{LFSGD}). In particular, at every iteration $t$ the $j$th device samples a mini-batch $\xi^{(t)}_{j}$ of size $B$,  identically and independently  from  its own data shard $\mathcal{S}_j$ and uses to calculate the local gradient:
\begin{align}
        \boldsymbol{d}_j^{(t)} = \tilde{\mathbf{g}}_{j}^{(t)} &\triangleq \frac{1}{B}\nabla f_j(\boldsymbol{w}_j^{(t)},\xi_{j}^{(t)})\label{eq:sg_workerj}
\end{align}
Following the convention, we assume that mini-batches are unbiased over each machine's data shard.  In other words, using the notation $\mathbf{g}_j^{(t)}\triangleq \nabla{f}_j(\boldsymbol{w}_j^{(t)},\mathcal{S}_j)$, for the stochastic gradient $\tilde{\mathbf{g}}_j^{(t)}$ computed on a minibatch $\xi_j^{(t)}\subset \mathcal{S}_j$ and $|\xi_j^{(t)}|=B$ we have:
\begin{align}
    \mathbb{E}_{\xi_j^{(t)}}\left[\tilde{\mathbf{g}}_j^{(t)}|\boldsymbol{w}_j^{(t)}\right]=\mathbf{g}_j^{(t)}
\end{align}
The convergence analysis of \texttt{LFSGD} relies on the following standard assumption on the varaince of local stochastic gradients~\cite{bottou2018optimization}.
\begin{assumption}[Bounded Local Variance]\label{Ass:2}
For every local data shard $\mathcal{S}_j, j=1, 2, \ldots, p$, we can sample an independent mini-batch $\xi\subset\mathcal{S}_j$   with $|\xi| = B$ and compute an unbiased stochastic gradient $\tilde{\mathbf{g}}_j = \frac{1}{B}\nabla f_j(\boldsymbol{w}; \xi), \mathbb{E}_{\xi}[\tilde{\mathbf{g}}_j] = {\mathbf{g}}_j = \frac{1}{|\mathcal{S}_j|} \nabla f_j(\boldsymbol{w}; \mathcal{S}_j)$ with  the variance bounded as 
\begin{align}
\mathbb{E}_{\xi}\left[\|\tilde{\mathbf{g}}_j-{\mathbf{g}}_j\|^2\right]\leq C_1 \|{\mathbf{g}}_j\|^2+\frac{\sigma^2}{B}   
\end{align}
where $C_1$ is a non-negative constants and inversely proportion to the mini-batch size and $\sigma$ is another constant controlling the variance bound.

\end{assumption}


We begin with a simple theorem that shows the convergence of \texttt{LFSGD} for  non-convex functions  under PL condition. 
\begin{theorem}[Informal]\label{thm:sgd-pl-informal}
\texttt{LFSGD}$(E, K, \boldsymbol{q})$ with $E$ local updates, under Assumptions~\ref{Ass:1}-\ref{Ass:2}, if we choose the learning rate  as  $\eta_t=\frac{4}{\mu(t+a)}$ where $a=\alpha E+4$ with $\alpha$ being a constant satisfying $\alpha\exp{(-\frac{2}{\alpha})}<\kappa\sqrt{192\lambda\left(\frac{K+1}{K}\right)}$ where $\lambda$ is an upper bound on the weighted gradient diversity, for sufficiently large $E$, if initialize all local model parameters at the same point $\bar{\boldsymbol{w}}^{(0)}$, after $T$ iterations, leads to the following error bound:
\begin{align*}
  \mathbb{E}\left[f(\bar{\boldsymbol{w}}^{(T)})-f^*\right]
  \leq  O\left(\frac{1}{K T}\right),
\end{align*}
where $f^*$ is the lower bound over cost function.
\end{theorem}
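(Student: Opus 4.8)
The plan is to analyze a virtual averaged sequence and collapse the argument to a scalar recursion in the value gap $\Delta_t \triangleq \mathbb{E}\big[f(\bar{\boldsymbol{w}}^{(t)})-f^*\big]$. First I would introduce the auxiliary iterate $\bar{\boldsymbol{w}}^{(t)}=\frac{1}{K}\sum_{j\in\mathcal{P}_t}\boldsymbol{w}_j^{(t)}$. Within a communication round the device set $\mathcal{P}_t$ is fixed and the local update~(\ref{eq:risgd-up}) is affine, so this sequence satisfies $\bar{\boldsymbol{w}}^{(t+1)}=\bar{\boldsymbol{w}}^{(t)}-\eta_t\frac{1}{K}\sum_{j\in\mathcal{P}_t}\tilde{\mathbf{g}}_j^{(t)}$ at \emph{every} step; moreover, by the restarting property (all local models equal the broadcast average~(\ref{eq:g-risgd-up}) at each synchronization) the virtual sequence is continuous across communication rounds. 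Applying $L$-smoothness (Assumption~\ref{Ass:1}) to $f$ then yields a one-step descent inequality with leading term $-\eta_t\langle\nabla f(\bar{\boldsymbol{w}}^{(t)}),\frac{1}{K}\sum_{j\in\mathcal{P}_t}\tilde{\mathbf{g}}_j^{(t)}\rangle$ and curvature term $\frac{L\eta_t^2}{2}\|\frac{1}{K}\sum_{j\in\mathcal{P}_t}\tilde{\mathbf{g}}_j^{(t)}\|^2$.

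I would then take expectations in two stages. Conditioned on the local models, the minibatch noise is unbiased with variance bounded by Assumption~\ref{Ass:2}, contributing a term of order $\eta_t^2\sigma^2/(BK)$ — the $1/K$ arising from averaging $K$ independent stochastic gradients, which is the source of the implicit variance reduction — while the with-replacement device sampling, being independent of $\{\boldsymbol{w}_j^{(t)}\}$ since the devices are agnostic to the selection, satisfies $\mathbb{E}_{\mathcal{P}_t}[\frac{1}{K}\sum_{j\in\mathcal{P}_t}(\cdot)]=\sum_j q_j(\cdot)$ and also contributes variance scaling like $1/K$. The subtle part is that $\tilde{\mathbf{g}}_j^{(t)}$ is evaluated at the drifted local point $\boldsymbol{w}_j^{(t)}$ rather than at $\bar{\boldsymbol{w}}^{(t)}$; writing $\nabla f_j(\boldsymbol{w}_j^{(t)})=\nabla f_j(\bar{\boldsymbol{w}}^{(t)})+\big(\nabla f_j(\boldsymbol{w}_j^{(t)})-\nabla f_j(\bar{\boldsymbol{w}}^{(t)})\big)$ and applying smoothness reduces the residual of the inner-product term to the weighted drift $\Xi_t\triangleq\sum_j q_j\|\boldsymbol{w}_j^{(t)}-\bar{\boldsymbol{w}}^{(t)}\|^2$.

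Bounding $\Xi_t$ is the main obstacle. Since the models synchronize every $E$ steps, at the last sync index $t_0=E\lfloor t/E\rfloor$ all local iterates coincide, so $\boldsymbol{w}_j^{(t)}-\bar{\boldsymbol{w}}^{(t)}$ telescopes into at most $E$ gradient increments accrued since $t_0$. Expanding this difference and invoking the variance bound and smoothness produces an estimate of the form $\Xi_t\lesssim\eta_{t_0}^2E\sum_{\tau=t_0}^{t-1}\sum_j q_j\|\mathbf{g}_j^{(\tau)}\|^2+(\text{noise floor})$. The decisive step is to apply the weighted gradient diversity bound $\sum_j q_j\|\nabla f_j(\boldsymbol{w})\|^2\le\lambda\|\nabla f(\boldsymbol{w})\|^2$ of Definition~\ref{def:gd-d} (using $\nabla f=\sum_j q_j\nabla f_j$), which converts the aggregate of local gradient norms into the global gradient norm and thereby prevents any irreducible heterogeneity residual from surviving. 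The resulting estimate is self-referential, because relating $\nabla f_j(\boldsymbol{w}_j^{(\tau)})$ to $\nabla f_j(\bar{\boldsymbol{w}}^{(\tau)})$ reintroduces $\Xi_\tau$; this self-term is absorbed using the smallness of $\eta_tE$, which is exactly what the hypothesis $\alpha\exp(-2/\alpha)<\kappa\sqrt{192\lambda(K+1)/K}$ — acting through the shift $a=\alpha E+4$ in the schedule — is engineered to guarantee.

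Finally I would assemble these estimates into a recursion of the form $\Delta_{t+1}\le(1-\mu\eta_t)\Delta_t+c_1\eta_t^2/K+c_2\eta_t^3E^2\|\nabla f(\bar{\boldsymbol{w}}^{(t)})\|^2$, in which the contraction factor $(1-\mu\eta_t)$ arises by invoking the PL condition (Assumption~\ref{Ass:3}) to replace the half of the descent retained after the drift split, $-\frac{\eta_t}{2}\|\nabla f(\bar{\boldsymbol{w}}^{(t)})\|^2$, by $-\mu\eta_t\Delta_t$. Using $\|\nabla f(\bar{\boldsymbol{w}}^{(t)})\|^2\le 2L\Delta_t$, the drift term is itself proportional to $\eta_t^3E^2\Delta_t$, and the $\alpha$-condition keeps $\eta_tE$ small enough that it is dominated by the contraction. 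With the prescribed schedule $\eta_t=\frac{4}{\mu(t+a)}$ one has $\mu\eta_t=4/(t+a)$, so choosing the weight $w_t=(t+a)^2$ gives $w_t(1-\mu\eta_t)\le w_{t-1}$ and telescoping $w_t\Delta_t$ over $t=0,\dots,T-1$ makes the contraction outpace the growth; each weighted noise term $w_t\cdot c_1\eta_t^2/K$ is $O(1/K)$, so their sum is $O(T/K)$, and dividing by $w_T=(T+a)^2$ yields $\Delta_T=O(1/(KT))$. The shift $a=\alpha E+4$ keeps the early, high-drift iterations conservative, which is precisely what permits $E$ to be taken as large as the stated $O(p^{1/3}T^{2/3})$ without degrading the rate.
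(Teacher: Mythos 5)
Your proposal follows the same architecture as the paper's proof of Theorem~\ref{thm:proof-undrr-pl}: the virtual averaged iterate, the smoothness descent step, the two-stage expectation over minibatch noise and with-replacement device sampling, the drift bound exploiting the restart at each synchronization, the weighted gradient diversity to eliminate any heterogeneity residual, the PL contraction, and a weighted telescoping under $\eta_t=\frac{4}{\mu(t+a)}$. Two steps, however, contain genuine gaps.

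The first is the absorption of the drift. The paper never converts the drift to $\|\nabla f(\bar{\boldsymbol{w}}^{(t)})\|^2$ and never uses $\|\nabla f\|^2\le 2L\Delta_t$. Instead, Corollary~\ref{corol:pl} retains a negative term $-\tfrac{\eta_k}{2}\|\sum_jq_j\nabla f_j(\boldsymbol{w}_j^{(k)})\|^2$ at \emph{every} iteration $k$, and Lemmas~\ref{lemm:main-seq} and~\ref{lemmba:choice-of-learning-rate-pl} unroll the recursion within each communication round so that the accumulated positive drift contributions $B_t\sum_{k=t_c+1}^{t-1}\eta_k^2\|\sum_jq_j\nabla f_j(\boldsymbol{w}_j^{(k)})\|^2$ are cancelled against exactly those retained negative terms; this is where the $\alpha$-condition and the lower bound on $E$ actually enter. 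Your alternative --- pass to $\|\nabla f(\bar{\boldsymbol{w}}^{(\tau)})\|^2$ and then to $2L\Delta_\tau$ --- is viable in principle, but the drift at step $t$ involves $\Delta_\tau$ for all $\tau$ in the window $[t_c+1,t-1]$, not only $\Delta_t$; your displayed recursion $\Delta_{t+1}\le(1-\mu\eta_t)\Delta_t+c_1\eta_t^2/K+c_2\eta_t^3E^2\|\nabla f(\bar{\boldsymbol{w}}^{(t)})\|^2$ silently collapses that memory onto the current iterate. Making this rigorous requires either the paper's within-round unrolling or an explicit absorption of the delayed $\Delta_\tau$ terms into the slack $w_{\tau-1}-w_\tau(1-\mu\eta_\tau)\approx 2(\tau+a)$ of the telescoping, which is a nontrivial extra argument (and is where the shift $a=\alpha E+4$ earns its keep). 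As written, ``the self-term is absorbed using the smallness of $\eta_tE$'' asserts the conclusion rather than proving it.

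The second gap is quantitative: the weight $w_t=(t+a)^2$ is too weak. Telescoping with it gives $\Delta_T\le\frac{(a-1)^2}{(T+a-1)^2}\Delta_0+O\left(\frac{1}{KT}\right)$, and the initial-condition term is $\Theta(E^2/T^2)$. For constant $E$ this is harmless, but in the regime the theorem targets, $E=\Theta(T^{2/3}/K^{1/3})$, it equals $\Theta\left(1/(K^{2/3}T^{2/3})\right)$, which strictly dominates $1/(KT)$. The paper's Lemma~\ref{lemma:tasbihb44} uses the cubic weight $z_t/\eta_t\propto(t+a)^3$, for which the initial term is $a^3/(T+a)^3=O(E^3/T^3)=O(1/(KT))$ precisely at the maximal $E$, while the weighted noise sum is still $O(T^2/(KB))$ and hence contributes $O(1/(KBT))$ after division by $(T+a)^3$. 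The inequality $w_t(1-\mu\eta_t)\le w_{t-1}$ holds for the cubic weight as well, so the fix is to replace your quadratic weight by the cubic one.
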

\begin{remark}
To better illustrate the obtained rate in Theorem~\ref{thm:sgd-pl-informal}, consider the special case of i.i.d. data distributions with $q_j=\frac{1}{p}$ and $K=p$. In this case, the rate reduces to $O(\frac{1}{KT})=O(\frac{1}{pT})$ with $E=O\left(\frac{T^{\frac{2}{3}}}{p^{\frac{1}{3}}B ^{\frac{1}{3}}}\right)$ which matches the convergence error of local SGD in~\cite{haddadpour2019local} with sampling, which is  the known sharpest bound to the best of our knowledge.  
\end{remark}


The next theorem states the convergence of \texttt{LFSGD} for  general non-convex loss functions.

{\color{black}\begin{theorem}[Informal]\label{thm:FedAvg_informal} For \texttt{LFSGD}$(E, K, \boldsymbol{q})$, if for all $0\leq t\leq T-1$,  under Assumptions \ref{Ass:1} and \ref{Ass:2}, for learning rate $\eta=\frac{1}{L}\sqrt{\frac{K}{T}}$, $E=O\left(\sqrt{\frac{T}{K^{3}}}\right)$, $K=\Omega\left(\sqrt{\lambda}\right)$ where $\lambda$ being an upper bound on the weighted gradient diversity, if all local model parameters are initialized at   $\bar{\boldsymbol{w}}^{(0)}$, the average-squared gradient after $T$ iterations is bounded as follows:
\begin{align}
  \frac{1}{T}\sum_{t=0}^{T-1}\mathbb{E}\left[\|\nabla f(\bar{\boldsymbol{w}}^{(t)})\|^2\right] \leq O\left(\frac{1}{\sqrt{K T}}\right).\label{eq:thm1-result} 
\end{align}
\end{theorem}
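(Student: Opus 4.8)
The plan is to adapt the standard non-convex descent analysis for local SGD to the federated setting by tracking a virtual averaged iterate and controlling the consensus drift through the bounded gradient diversity. First I would introduce the virtual sequence $\bar{\boldsymbol{w}}^{(t)} \triangleq \sum_{j=1}^{p} q_j \boldsymbol{w}_j^{(t)}$, which agrees with the model broadcast by the server at every communication step and which, between communications, evolves according to $\bar{\boldsymbol{w}}^{(t+1)} = \bar{\boldsymbol{w}}^{(t)} - \eta \sum_{j=1}^{p} q_j \tilde{\mathbf{g}}_j^{(t)}$. The key observation is that, because the server draws $K$ devices with replacement according to $\boldsymbol{q}$, the sampled average $\tfrac{1}{K}\sum_{j\in\mathcal{P}_t}\boldsymbol{w}_j^{(t)}$ is an unbiased estimator of $\bar{\boldsymbol{w}}^{(t)}$ whose sampling variance scales like $1/K$; this is precisely what injects the $K$-dependence into the final rate. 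I would then apply the $L$-smoothness descent inequality from Assumption~\ref{Ass:1} to $f$ along $\bar{\boldsymbol{w}}^{(t)}$ and take expectation over both the mini-batch noise $\xi_j^{(t)}$ and the device sampling $\mathcal{P}_t$, handling the two independent randomness sources via iterated conditioning.

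The second step is to decompose the inner-product term $-\eta\langle \nabla f(\bar{\boldsymbol{w}}^{(t)}),\, \sum_{j} q_j \mathbf{g}_j^{(t)}\rangle$ using the polarization identity $\langle \boldsymbol{a},\boldsymbol{b}\rangle = \tfrac{1}{2}(\|\boldsymbol{a}\|^2 + \|\boldsymbol{b}\|^2 - \|\boldsymbol{a}-\boldsymbol{b}\|^2)$, which yields the main descent term $-\tfrac{\eta}{2}\|\nabla f(\bar{\boldsymbol{w}}^{(t)})\|^2$ together with a \emph{consensus-error} term bounded, after invoking $L$-smoothness of each $f_j$, by $\tfrac{\eta L^2}{2}\sum_{j} q_j\|\boldsymbol{w}_j^{(t)}-\bar{\boldsymbol{w}}^{(t)}\|^2$. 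Bounding this consensus error is the heart of the argument. Exploiting the \emph{restarting property} (all local models are reset to the common point $\bar{\boldsymbol{w}}$ every $E$ steps), I would unroll each $\boldsymbol{w}_j^{(t)}-\bar{\boldsymbol{w}}^{(t)}$ over the at most $E$ local steps since the last communication; applying Cauchy--Schwarz to the deterministic gradient component and using the independence (martingale) structure of the mini-batch noise across those steps gives a drift bound of the form $\sum_{j} q_j \mathbb{E}\|\boldsymbol{w}_j^{(t)}-\bar{\boldsymbol{w}}^{(t)}\|^2 = O(\eta^2 E^2 \sum_{j} q_j\|\nabla f_j\|^2 + \eta^2 E\,\sigma^2/B)$, where the deterministic part carries a factor $E^2$ while the noise part accumulates only linearly in $E$. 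Invoking Definition~\ref{def:gd-d} to replace $\sum_{j} q_j\|\nabla f_j(\boldsymbol{w})\|^2 \le \lambda\|\nabla f(\boldsymbol{w})\|^2$ then re-expresses the deterministic drift in terms of the global gradient norm scaled by $\lambda$.

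Third, I would substitute the drift bound into the per-step descent inequality. The deterministic part of the consensus term contributes a quantity proportional to $\eta^3 E^2 L^2 \lambda\,\|\nabla f(\bar{\boldsymbol{w}}^{(t)})\|^2$, which must be absorbed by the main descent term $-\tfrac{\eta}{2}\|\nabla f(\bar{\boldsymbol{w}}^{(t)})\|^2$; this is exactly where the hyperparameter conditions enter, since with $\eta=\tfrac{1}{L}\sqrt{K/T}$ and $E=O(\sqrt{T/K^3})$ one has $\eta^2 E^2 L^2\lambda = \lambda/K^2$, which is $O(1)$ under $K=\Omega(\sqrt{\lambda})$ and can be made at most $\tfrac14$ with appropriate constants, leaving a net negative coefficient on $\|\nabla f\|^2$. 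Summing the resulting inequality over $t=0,\ldots,T-1$ telescopes the function values, and after dividing by $\eta T/2$ and rearranging one gets
\begin{align*}
\frac{1}{T}\sum_{t=0}^{T-1}\mathbb{E}\|\nabla f(\bar{\boldsymbol{w}}^{(t)})\|^2 \le O\Big(\frac{f(\bar{\boldsymbol{w}}^{(0)})-f^*}{\eta T} + \frac{L\eta\,\sigma^2}{KB} + \eta^2 E L^2\frac{\sigma^2}{B}\Big).
\end{align*}
Substituting $\eta=\tfrac{1}{L}\sqrt{K/T}$ makes the first term $O(1/\sqrt{KT})$ and the stochastic-gradient variance term $O(\sigma^2/(B\sqrt{KT}))$, while the choice $E=O(\sqrt{T/K^3})$ gives $\eta^2 E L^2 = 1/\sqrt{KT}$ so that the noise part of the drift residual is also $O(1/\sqrt{KT})$, yielding the claimed bound.

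I expect the main obstacle to be the consensus-error bound of the second step, and specifically the feedback it creates: the drift reintroduces the global gradient norm into the descent inequality through the gradient diversity, and obtaining a residual-error-free rate (in contrast to the growing $O(E^2\sigma_f^2)$ term of~\cite{khaled2019first}) requires closing this loop so that the diversity-weighted coefficient $\eta^2 E^2 L^2\lambda$ is strictly below the descent coefficient rather than merely bounded, which is the precise role of the condition $K=\Omega(\sqrt{\lambda})$. A secondary but delicate point is the correct separation of the mini-batch noise from the device sampling with replacement, so that the noise in the drift accumulates only linearly in $E$ (giving the $\eta^2 E$ rather than $\eta^2 E^2$ scaling) and the $1/K$ sampling-variance factor is propagated accurately through the telescoping sum.
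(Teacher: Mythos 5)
Your overall route is the paper's route: track a virtual averaged iterate, apply the $L$-smoothness descent inequality, split the inner product with the polarization identity, bound the consensus drift over each block of $E$ local steps using the restart property (with the noise accumulating linearly in $E$ and the deterministic part quadratically), convert $\sum_j q_j\|\nabla f_j\|^2$ via the gradient-diversity bound, absorb the drift through the learning-rate condition, and telescope. The drift bound you state matches Lemma~\ref{lemma:dif-b-u-x}, the final three-term bound matches Theorem~\ref{thm:FedAvg}, and the parameter bookkeeping ($\eta^2E^2L^2\lambda=\lambda/K^2$, hence $K=\Omega(\sqrt{\lambda})$) is exactly the paper's derivation of the informal statement.

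There is, however, one concrete misstep in your absorption argument. The polarization identity applied to $-\eta\langle\nabla f(\bar{\boldsymbol{w}}^{(t)}),\sum_j q_j\nabla f_j(\boldsymbol{w}_j^{(t)})\rangle$ produces \emph{two} negative terms,
\begin{align*}
-\tfrac{\eta}{2}\big\|\nabla f(\bar{\boldsymbol{w}}^{(t)})\big\|^2 \;-\; \tfrac{\eta}{2}\Big\|\sum\nolimits_j q_j\nabla f_j(\boldsymbol{w}_j^{(t)})\Big\|^2,
\end{align*}
and you discard the second. That second term is the one you actually need: the gradient-diversity bound of Definition~\ref{def:gd-d} converts the drift's $\sum_j q_j\|\nabla f_j(\boldsymbol{w}_j^{(k)})\|^2$ into $\lambda\|\sum_j q_j\nabla f_j(\boldsymbol{w}_j^{(k)})\|^2$ — the squared norm of the weighted gradient sum evaluated at the \emph{local} iterates — not into $\lambda\|\nabla f(\bar{\boldsymbol{w}}^{(t)})\|^2$. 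Since the $f_j$ are evaluated at different points $\boldsymbol{w}_j^{(t)}\neq\bar{\boldsymbol{w}}^{(t)}$, these two quantities are not comparable without reintroducing the very consensus error you are trying to control, which makes your proposed absorption into $-\tfrac{\eta}{2}\|\nabla f(\bar{\boldsymbol{w}}^{(t)})\|^2$ circular (this is the "loop" you flag but do not actually close). The paper closes it by cancelling the diversity-weighted drift and the $\|\tilde{\mathbf{g}}^{(t)}\|^2$ contribution against the retained $-\tfrac{\eta}{2}\|\sum_j q_j\nabla f_j(\boldsymbol{w}_j^{(t)})\|^2$ term, which is precisely the learning-rate condition~(\ref{eq:cnd-thm4.3}); the $-\tfrac{\eta}{2}\|\nabla f(\bar{\boldsymbol{w}}^{(t)})\|^2$ term is left untouched for the telescoping. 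With that repair (and noting that the paper tracks $\bar{\boldsymbol{w}}^{(t)}=\tfrac{1}{K}\sum_{j\in\mathcal{P}_t}\boldsymbol{w}_j^{(t)}$ directly and converts to $\sum_j q_j$ only inside expectations via the with-replacement sampling identity, rather than introducing a separate unbiased-estimator layer as you do), your argument coincides with the paper's proof.
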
}

\begin{remark}
It is worthy to note that one important implication of Theorem~\ref{thm:FedAvg_informal} is that for smaller $\lambda$, i.e. more drifting local data distributions, linear speed up can be achieved in expectation even when smaller number of device  $K$ are sampled in aggregation step. For instance when data shards are exactly same (i.e., $\nabla f_1(\boldsymbol{w}) = \ldots = \nabla f_p(\boldsymbol{w})$), then $\Lambda (\boldsymbol{w}, \boldsymbol{q}) = 1$  which requires $K = \Omega(1)$. On the other hand, when the gradients at local machines are  orthogonal, e.g., $\nabla f_i(\boldsymbol{w}) = \boldsymbol{e}_i$ where $\boldsymbol{e}_i$ is the $i$th standard basis, $\Lambda (\boldsymbol{w}, \boldsymbol{q}) = p$ which results in $K = \Omega(\sqrt{p})$, assuming $d > p$. In worst case scenario,  when the data on different devices drift significantly or the gradients are at opposite directions, the vanilla aggregation may result in divergence. Finally, we note that the \texttt{LFSGD} algorithm reduces to the local SGD with periodic averaging in homogeneous setting \cite{wang2018cooperative,stich2018local} by letting $K=p$ and sampling mini-batch $\xi_j$ from entire data set, rather than individual data shards, and the obtained bounds match with communication complexity of $E = O\left(\sqrt{\frac{T}{p^{3}}}\right)$.

\end{remark}

\paragraph{Comparison to past work}{To better illustrate the significance of obtained bounds, we compare our  rates to  best known results in stochastic setting in the context of federated optimization. In contrast to~\cite{li2019convergence} and~\cite{sahu2018convergence} that focus on analyzing the convergence of federated optimization for  strongly-convex functions under restricted assumptions as shown in Table~\ref{table:1}, our analysis is for general non-convex problems yet under weaker assumptions (e.g., removing bounded gradient assumption). Specifically, in comparison to \cite{sahu2018convergence}, where  the analysis of  convergence for general non-convex problems uses the assumption of $\nabla^2{f}_j(\boldsymbol{w})\succeq -L\mathbf{I}$, the linear speed up is not achievable. In contrast to~\cite{sahu2018convergence}, while  removing second moment assumption, we are able to  achieve a linear speed up. Furthermore, we improve the convergence error in~\cite{li2019convergence,sahu2018convergence} from $O\left(\frac{E^2}{T}\right)$ to $O\left(\frac{1}{KT}\right)$ for non-convex function satisfying PL condition. Also, we note that  our convergence rate reduces to the $O\left(\frac{1}{KT}\right)$, allowing us to improve the number of local update from $E=O(1)$ in~\cite{li2019convergence,sahu2018convergence} (which is required to  asymptotically match the rate of parallel SGD, i.e., $O(1/(pT))$) to $E={T^{\frac{2}{3}}}/{K^{\frac{1}{3}}}$, which leads to much smaller number of communications. These key differences make our convergence rate tighter and more general than those obtained in~\cite{li2019convergence,sahu2018convergence}.}

\subsection{Networked local federated SGD (\texttt{NFSGD})}
Thus far, we consider the distribution optimization in a centralized setting where a single server node communicates with the local machines to periodically update the global centerlized solution. We now turn to the setting wherein the local machines are distributed in a network and can only communicate with their direct neighbors in communication rounds. Towards this end, we assume that the $p$ devices form a graph $\mathcal{G} = (\mathcal{V}, \mathcal{E})$, where the set of nodes $\mathcal{V}, |\mathcal{V}| = p$ constitutes the machines. 

\begin{algorithm}[t]
\caption{\texttt{NFSGD}($E, \mathbf{W}, \boldsymbol{q}$): Networked Local Federated SGD.  
}
\label{Alg:nfsgd}
\begin{algorithmic}[1]

\State \textbf{Inputs:} $\boldsymbol{w}^{(0)}$ as an initial  model shared by all devices, the mixing matrix $\mathbf{W}$, the objective weights vector $\boldsymbol{q}$, and the number of local updates $E$ (averaging period)
\State \textbf{for $t=0, 1, \ldots, T$ do}

\State $\qquad$\textbf{parallel for all devices ($1\leq j\leq p$) do}
\State $\qquad $\textbf{if} $t$ does not divide $E$ \textbf{do}
\State $\qquad\quad$ Sample a mini-bath of size $B$ and compute $\tilde{\mathbf{g}}_{j}^{(t)}$ 
\State $\qquad\quad$ $\boldsymbol{w}^{(t+1)}_{j}=\boldsymbol{w}^{(t)}_j-\eta~ \tilde{\mathbf{g}}_{j}^{(t)}$\label{eq:update-rule-alg}
\State $\qquad$\textbf{elseif}
\State $\qquad \quad j$th device broadcasts $\boldsymbol{w}^{(t)}_j$ for $1\leq j\leq p$ to its neighbours  according to the mixing matrix $\mathbf{W}$.
\State $\qquad \quad j$th device computes $\boldsymbol{w}_j^{(t+1)}=\sum_{i=1}^pq_i\left[\boldsymbol{w}^{(t)}_j-\eta ~\tilde{\mathbf{g}}_{j}^{(t)}\right]{W}_{ji}$
\State $\qquad$\textbf{end parallel for}
\State \textbf{end}
\State \textbf{Output:} \textcolor{black}{$\bar{\boldsymbol{w}}^{(T)}=\sum_{i=1}^pq_i\boldsymbol{w}^{(T)}_i$}
\vspace{- 0.1cm}
\end{algorithmic}
\end{algorithm}
In the proposed networked local federated SGD (\texttt{NFSGD}) algorithm as detailed in Algorithm~\ref{Alg:nfsgd}, every machine communicates its local solution with the immediate neighbors. We note that unlike centralized algorithms discussed before, no sampling is conducted in communication rounds due to sparsity of network. Also since the main purpose of introducing sampling in centralized setting is to mitigate the communication deficiency which might be caused by slow workers in communication rounds as the server node needs to wait for all other machines to share their local solutions. However, in the networked or decentralized setting, since every machine needs to communicate with direct neighbors, the harm caused by straggler machines is less severe and sampling can be avoided. Consequently, in \texttt{NFSGD} no sampling is conducted in communication rounds.

We now turn to analyzing the convergence of \texttt{NFSGD}. But before that we need to make a standard assumption about the connectivity of underlying communication graph. In particular, to make sure the update in a node can be propagated to other nodes in the network in reasonable number of iterations, we assume that the network is well connected.  To be precise, let $\mathbf{W} \in [0,1]^{p \times p}$ denote the mixing matrix for the network where ${W}_{ij}$ is the wright of link connecting $i$th and $j$th devices.  
\begin{definition}
The weighting matrix $\mathbf{W} \in \mathbb{R}^{p\times p}$ is called mixing matrix for network if it is symmetric and doubly symmetric that satisfies the following conditions:
\begin{itemize}
    \item  $W_{ij}\geq 0, \; \forall i,j \in [p] \times [p]$
    \item For the  all one vector $\mathbf{1} \triangleq [1, \ldots, 1]^{\top}$, we have , $\mathbf{W}\mathbf{1}=\mathbf{1}$ 
\end{itemize}
\end{definition}
We note that when $\mathbf{W} $ is simply an adjacency matrix, communication between any pair of devices, say $i$ and $j$, is possible whenever ${W}_{ij} =1$, and  ${W}_{i,j}=0$ means that there is no direct communication link between  $i$th and $j$th devices.

We make the following standard assumption about the mixing matrix that is also used in analysis of standard networked distributed optimization algorithms~\cite{jiang2017collaborative,lian2017can,wang2018cooperative}. \\

\begin{assumption}\label{Assum:4}
We assume that  for the mixing matrix $\mathbf{W}$ the magnitudes of all eigenvalues
except the largest one are strictly less than one, i.e., $$\zeta=\max \left\{|\lambda_2(\mathbf{W})|,|\lambda_{p}(\mathbf{W})|\right\}< \lambda_1(\mathbf{W})=1,$$
where $\lambda_i (\mathbf{W})$ is the $i$th eigenvalue of the $\mathbf{W}$.
\end{assumption}  

The following theorem states the convergence of \texttt{NFSGD} algorithm. Compared to standard networked distributed optimization algorithms, the key distinguishing ingredient of our analysis is heterogeneity of data distribution. 
{\color{black}\begin{theorem}[Informal]\label{thm:serverless-fed-informal}
For \texttt{NFSGD}$(E, \mathbf{W}, \boldsymbol{q})$, if for all $0\leq t\leq T-1$,  under Assumptions \ref{Ass:1}, \ref{Ass:2} and \ref{Assum:4}, with learning rate $\eta=\frac{1}{L}\sqrt{\frac{p}{T}}$ for $T\geq 4\lambda^2{\left(\frac{C_1}{p}+2\right)}^2$,  and $E=O\left({\left(\frac{1-\zeta^2}{1+\zeta^2}\right)}{T}^{\frac{1}{2}}/p^{1.5}\right)$ with $p=\Omega\left(\sqrt{\lambda}\frac{(1+\zeta^2)}{1+\zeta}\right)$ and all local model parameters are initialized at the same point $\bar{\boldsymbol{w}}^{(0)}$, the average-squared gradient after $T$ iterations is bounded as follows:
\begin{align}
  \frac{1}{T}\sum_{t=0}^{T-1}\mathbb{E} \left[\|\nabla f(\bar{\boldsymbol{w}}^{(t)})\|^2\right]\leq O\left(\frac{1}{\sqrt{pT}}\right)\label{eq:thm1-result-informal} 
\end{align}
\textcolor{black}{where $\zeta$ is the second largest eigenvalue of mixing matrix $\mathbf{W}$. }
\end{theorem}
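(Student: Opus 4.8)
The plan is to reduce the decentralized iteration to a perturbation of centralized \texttt{SGD} by tracking the weighted average iterate $\bar{\boldsymbol{w}}^{(t)}=\sum_{i=1}^p q_i\boldsymbol{w}_i^{(t)}$. First I would observe that, since $\mathbf{W}$ is symmetric with $\mathbf{W}\mathbf{1}=\mathbf{1}$ (Assumption~\ref{Assum:4}), so that $\mathbf{1}^\top\mathbf{W}=\mathbf{1}^\top$, the mixing step leaves the averaged iterate invariant up to the aggregate gradient step, giving for every $t$ — communication round or not —
\[
\bar{\boldsymbol{w}}^{(t+1)}=\bar{\boldsymbol{w}}^{(t)}-\eta\sum_{j=1}^p q_j\tilde{\mathbf{g}}_j^{(t)}.
\]
Applying the descent inequality from $L$-smoothness (Assumption~\ref{Ass:1}) to $f$ along this recursion and taking conditional expectation, the unbiasedness in Assumption~\ref{Ass:2} replaces $\tilde{\mathbf{g}}_j^{(t)}$ by $\mathbf{g}_j^{(t)}=\nabla f_j(\boldsymbol{w}_j^{(t)})$ in the first-order term and the variance bound controls the second moment, yielding a one-step inequality of the form
\[
\mathbb{E}\!\left[f(\bar{\boldsymbol{w}}^{(t+1)})\right]\leq \mathbb{E}\!\left[f(\bar{\boldsymbol{w}}^{(t)})\right]-\eta\,\mathbb{E}\Big\langle \nabla f(\bar{\boldsymbol{w}}^{(t)}),\sum_{j}q_j\mathbf{g}_j^{(t)}\Big\rangle+\frac{L\eta^2}{2}\,\mathbb{E}\Big\|\sum_{j}q_j\tilde{\mathbf{g}}_j^{(t)}\Big\|^2 .
\]

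Next I would handle the cross term with the polarization identity $2\langle a,b\rangle=\|a\|^2+\|b\|^2-\|a-b\|^2$, taking $a=\nabla f(\bar{\boldsymbol{w}}^{(t)})=\sum_j q_j\nabla f_j(\bar{\boldsymbol{w}}^{(t)})$ and $b=\sum_j q_j\nabla f_j(\boldsymbol{w}_j^{(t)})$. This extracts the desired negative term $-\tfrac{\eta}{2}\|\nabla f(\bar{\boldsymbol{w}}^{(t)})\|^2$, while the residual $\|a-b\|^2$ is bounded by $L^2\sum_j q_j\|\boldsymbol{w}_j^{(t)}-\bar{\boldsymbol{w}}^{(t)}\|^2$ via $L$-smoothness. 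Thus the entire analysis hinges on controlling the \emph{consensus (network) error} $\mathcal{E}_t\triangleq\sum_j q_j\mathbb{E}\|\boldsymbol{w}_j^{(t)}-\bar{\boldsymbol{w}}^{(t)}\|^2$.

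Bounding $\mathcal{E}_t$ is the main obstacle. Writing the iterates in matrix form and unrolling from the most recent communication round, the deviation accumulates over at most $E$ local steps and is contracted at each mixing step by the spectral gap of $\mathbf{W}$; summing the resulting geometric series in $\zeta$ produces the factor $\tfrac{1+\zeta^2}{1-\zeta^2}$ that appears in the admissible range of $E$. The essential difficulty is that, unlike prior analyses, we do \emph{not} assume bounded gradients, so the local gradient magnitudes $\sum_j q_j\|\mathbf{g}_j^{(t)}\|^2$ entering $\mathcal{E}_t$ must be tied back to the global gradient. This is precisely where bounded weighted gradient diversity (Definition~\ref{def:gd-d}) is used: $\Lambda(\boldsymbol{w},\boldsymbol{q})\leq\lambda$ gives $\sum_j q_j\|\nabla f_j(\boldsymbol{w})\|^2\leq\lambda\|\nabla f(\boldsymbol{w})\|^2$, turning the consensus bound into a self-referential inequality in $\|\nabla f(\bar{\boldsymbol{w}}^{(t)})\|^2$ plus the additive variance $\sigma^2/B$. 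Closing this recursion — so that the positive $\|\nabla f\|^2$ mass generated by $\mathcal{E}_t$ can be absorbed into the negative $-\tfrac{\eta}{2}\|\nabla f(\bar{\boldsymbol{w}}^{(t)})\|^2$ — is what forces the stated constraints $E=O\big((\tfrac{1-\zeta^2}{1+\zeta^2})T^{1/2}/p^{1.5}\big)$ and $p=\Omega\big(\sqrt{\lambda}\,(1+\zeta^2)/(1+\zeta)\big)$.

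Finally I would sum the one-step inequality over $t=0,\dots,T-1$, telescope the objective values (lower bounded by $f^*$ from Assumption~\ref{Ass:1}), substitute the consensus bound, and divide by $\eta T$. With $\eta=\tfrac{1}{L}\sqrt{p/T}$, the initialization gap contributes $O(1/\sqrt{pT})$, and the variance term — shrunk by the $\sum_j q_j^2$ factor from averaging together with $\sigma^2/B$ (Assumption~\ref{Ass:2}) — also contributes $O(1/\sqrt{pT})$; the hypothesis $T\geq 4\lambda^2(C_1/p+2)^2$ guarantees the net coefficient of $\tfrac1T\sum_t\mathbb{E}\|\nabla f(\bar{\boldsymbol{w}}^{(t)})\|^2$ stays bounded below by a positive constant, which yields the claimed $O(1/\sqrt{pT})$ rate.
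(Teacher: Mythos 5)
Your proposal follows essentially the same route as the paper's proof: the same weighted-average auxiliary iterate preserved by the doubly-stochastic mixing, the same smoothness descent step with the polarization identity on the cross term, the same reduction of everything to the consensus error $\sum_j q_j\mathbb{E}\|\bar{\boldsymbol{w}}^{(t)}-\boldsymbol{w}_j^{(t)}\|^2$ bounded via the spectral gap of $\mathbf{W}$ (the paper imports this from Wang--Joshi), the same use of bounded gradient diversity in place of a bounded-gradient assumption to close the recursion, and the same telescoping with $\eta=\frac{1}{L}\sqrt{p/T}$ to extract the constraints on $E$, $p$, and $T$. The only cosmetic difference is that the paper cancels the positive gradient mass against the $-\frac{\eta}{2}\|\sum_j q_j\nabla f_j(\boldsymbol{w}_j^{(t)})\|^2$ term produced by the polarization identity (not against $-\frac{\eta}{2}\|\nabla f(\bar{\boldsymbol{w}}^{(t)})\|^2$), but this does not change the argument.
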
}

\begin{remark}
As can be seen in Table~\ref{table:1}, our convergence rate claimed in Theorem~\ref{thm:serverless-fed-informal} matches the convergence rate obtained in~\cite{local2019var}, but~\cite{local2019var} utilizes an explicit variance reduction in local SGD. Moreover,  the empirical results of~\cite{local2019var} demonstrates that for non-i.i.d. data distributions, local SGD with explicit variance reduction outperforms vanilla local SGD, which leaves a gap on the theoretical understanding of explicit variance reduction technique in federated setting and is worthy of further investigation.
\end{remark}
\paragraph{Comparison to past work}{In context of decentralized algorithms, the concurrent work ~\cite{li2019communication} proposed a variant of local SGD that alternatives between multiple local updates and multiple communication steps and shows that under additional assumption of $\frac{1}{p}\sum_{j=1}^p\|\nabla{f}_j(\boldsymbol{w})-\nabla{f}(\boldsymbol{w})\|^2\leq \kappa_f^2$ (i.e., degree of non-i.i.d.), the proposed algorithm convergences at rate of  $\frac{1}{T}\sum_{t=0}^{T-1}\mathbb{E}\left[\|\nabla f(\bar{\boldsymbol{w}}^{(t)})\|^2\right]\leq \frac{2\big[f(\bar{\boldsymbol{w}}^{(0)})-f^*\big]}{\eta T}+\frac{\eta L\kappa_f^2}{p B}+\frac{4L^2\eta^2\sigma^2}{B}C_1+\frac{4L^2\eta^2\kappa_f^2}{B}C_2$ where $C_1$ and $C_2$ are  constants depending on $E$ and $\zeta$ in~\cite{li2019communication}. Furthermore, it is shown that  with help of decaying $E$ and with certain choice of learning rate, convergence rate of $O\left(\sqrt{\frac{E}{pT}}+\frac{pE}{T}\right)$ is achievable. However, our analysis-- while removing additional assumption on the dissimilarity of gradients, with proper choice of $E$ and $\eta$  tightens the convergence rate to $\frac{1}{\sqrt{pT}}$ even with fixed number of local updates $E$.}

\subsection{Guaranteed convergence via reducing the gradient diversity}
\label{subsec-controlled-vatiance-diversity}
The obtained bounds heavily relies on the fact that we  properly choose the learning rate, and the number of local updates or sampled machines  based on the gradient diversity among local objectives (e.g., $\eta \propto\frac{1}{\lambda}$, where $\lambda$ is an upper bound over gradient diversity quantity in Definition~\ref{def:gd-d}), in the sense that violation of bounded gradient diversity assumption may lead to convergence failure. Therefore, when the gradient diversity quantity becomes large (as it is the case for highly non-i.i.d. data distributions), tuning learning rate can be difficult. This observation sheds light on devising  mechanisms to reduce the gradient diversity to improve the convergence. In fact, such a mechanism for convex objectives is  recently proposed in~\cite{karimireddy2019scaffold}, where authors suggest to augment the local gradients with a controlled variance variate  $\mathbf{c}_j, j=1,2,\ldots, p$  to reduce the drift among gradients shared by different devices. Specifically, the  suggested stochastic controlled averaging  strategy update can be written as $\underline{\tilde{\mathbf{g}}}_j^{(t+1)}={\tilde{\mathbf{g}}}_j^{(t+1)}-\mathbf{c}_j+\frac{1}{p}\sum_{j=1}^p\mathbf{c}_j$, where the standard averaged stochastic gradient is substituted by $\underline{\tilde{\mathbf{g}}}_j^{(t+1)}$. The claim is  that due to used correction term $\mathbf{c}_j-\frac{1}{p}\sum_{j=1}^p\mathbf{c}_j$, under the  condition that $\left\|\mathbf{c}_j-\mathbf{c}_i\right\|_2^2-2\left\langle{\tilde{\mathbf{g}}}_j-{\tilde{\mathbf{g}}}_i,\mathbf{c}_j-\mathbf{c}_i\right\rangle\leq0$, the new updating scheme reduces the gradient diversity. To see this, for a pair of devices $i \neq j$, we have \begin{align}
    \left\|\underline{\tilde{\mathbf{g}}}_j-\underline{\tilde{\mathbf{g}}}_i\right\|_2^2&=\left\|{\tilde{\mathbf{g}}}_j-{\tilde{\mathbf{g}}}_i-\left(\mathbf{c}_j-\mathbf{c}_i\right)\right\|_2^2\nonumber\\
    &=\left\|{\tilde{\mathbf{g}}}_j-{\tilde{\mathbf{g}}}_i\right\|_2^2+\left\|\mathbf{c}_j-\mathbf{c}_i\right\|_2^2-2\left\langle{\tilde{\mathbf{g}}}_j-{\tilde{\mathbf{g}}}_i,\mathbf{c}_j-\mathbf{c}_i\right\rangle\nonumber\\
    &\stackrel{\text{\ding{192}}}{\leq} \left\|{\tilde{\mathbf{g}}}_j-{\tilde{\mathbf{g}}}_i\right\|_2^2
\end{align}
where \ding{192} comes from condition $\left\|\mathbf{c}_j-\mathbf{c}_i\right\|_2^2-2\left\langle{\tilde{\mathbf{g}}}_j-{\tilde{\mathbf{g}}}_i,\mathbf{c}_j-\mathbf{c}_i\right\rangle\leq0$. Therefore, the proposed approach mitigates the drift among local gradients by reducing the gradient diversity which guarantees the convergence. Interestingly, we  note that the algorithm introduced in~\cite{local2019var} can also be considered as diversity reducing schema, where an explicit variance reduction technique is introduced to reduce the gradient diversity among devices.

\section{Convergence Analysis}\label{Sec:convergence}
In this section, we present the detailed convergence analysis of the proposed  algorithms.  We state the main results and defer the proofs to the Appendix. Before stating the convergence rates for proposed algorithms,  let us  first illustrate the key technical contribution to derive the claimed bounds.  To do so, let us define an auxiliary variable $
    \bar{\boldsymbol{w}}^{(t)}=\frac{1}{K}\sum_{j\in \mathcal{P}_t}\boldsymbol{w}_j^{(t)}$, which is
 the average model across selected machines $\mathcal{P}_t$ at iteration $t$. We note that the per iteration auxiliary average is introduced for the ease of derivations as it is only computed  in communication rounds. Using the definition of $\bar{\boldsymbol{w}}^{(t)}$, the update rule in Algorithm \ref{Alg:one-shot-using data samoples}, can be written as: 
\begin{align}
    \bar{\boldsymbol{w}}^{(t+1)}=\bar{\boldsymbol{w}}^{(t)}-\eta_t \left[\frac{1}{K}\sum_{j\in \mathcal{P}_t}{{\boldsymbol{d}}_j^{(t)}}\right],\label{eq:equivalent-update-rule}
\end{align}
which can be written equivalently as  
$$\bar{\boldsymbol{w}}^{(t+1)}=\bar{\boldsymbol{w}}^{(t)}-\eta _t\nabla f(\bar{\boldsymbol{w}}^{(t)})+\eta_t\left[\nabla f(\bar{\boldsymbol{w}}^{(t)})-\frac{1}{K}\sum_{j\in \mathcal{P}_t}{{\boldsymbol{d}}_j^{(t)}}\right],$$ 
 thus establishing a connection between our algorithm and the perturbed SGD with deviation $\big(\nabla f(\bar{\boldsymbol{w}}^{(t)})-\frac{1}{K}\sum_{j\in \mathcal{P}_t}{{\boldsymbol{d}}_j^{(t)}}\big)$. 
 Two key technical difficulties to bound the effect of perturbed gradients are as follows: i) since the distribution of data at different machines is different, the averaged local gradients are \textit{biased} with respect to gradient of the global objective and ii) periodic model averaging introduces a  residual error with respect to fully synchronous setting that needs to be controlled appropriately. Indeed, we show that by averaging with properly chosen number of local updates, we can reduce the variance of biased gradients and obtain the desired convergence rates, indicating that implicit variance reduction feature of local SGD that is observed in i.i.d setting~\cite{stich2018local}  generalizes to non-i.i.d setting as well with a careful but somehow involved  analysis.
 
We note that in dealing with non-i.i.d. data as in federated learning caution needs to be exercised. For instance, the analysis of~\cite{wang2018cooperative} for non-convex or the analysis of~\cite{stich2018local} for strongly convex problems in federated setting do not directly  handle non-i.i.d. distribution of data shards as both works make the unbiased sampling assumption over entire data set to \emph{decouple full gradient over (entire data set) at average model from full gradient observed at local models}. Then, simply utilizing the Lipschitz continuity of the global cost function $f(\boldsymbol{w})$ in Eq.~(\ref{eq:global-cost}) suffices to obtain the desired bounds. However, our analysis (in particular Lemma~\ref{lemma:cross-inner-bound-unbiased} in appendix) shows that \emph{we can decouple gradients over each device's full data shard observed at average model from corresponding local model to
use Lipschitz continuity of local cost functions (Assumption~\ref{Ass:1})}. 
\subsection{Convergence of  \texttt{LFGD} }
We start by stating the main theorem on convergence of local federated descent optimization with full local gradients.
\begin{theorem}\label{thm:gd} For \texttt{LFGD}$(E, K, \boldsymbol{q})$ with $E$ local updates, under Assumptions \ref{Ass:1} - \ref{Ass:3}, if we choose the learning rate, $\eta$, number of local updates $E$, such that \begin{align}       
\eta \left(L+\frac{4\kappa LE}{\mu{(1-\mu\eta)}^{E-1}}\right)\leq \frac{1}{\lambda}\label{eq:gd-condition},\end{align} 
holds, where $\lambda$ is an upper bound over the weighted gradient diversity, i.e.,   $\Lambda(\boldsymbol{w}, \boldsymbol{q}) \leq \lambda$ and all local model parameters initialized at the same point $\bar{\boldsymbol{w}}^{(0)}$, after $T$ iterations we have:
\small{\begin{align}
     \mathbb{E}\left[f(\bar{\boldsymbol{w}}^{(T)})-f^*\right]&\leq e^{-\mu \eta T}\mathbb{E}\left[f(\bar{\boldsymbol{w}}^{(0)})-f^*\right],\label{eq:thm1-result} 
\end{align}}
where $f^*$ is the global minimum and expectation is taken with respect to randomness in selecting the devices.
\end{theorem}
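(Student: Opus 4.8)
The plan is to analyze the evolution of the auxiliary average $\bar{\boldsymbol{w}}^{(t)}$ through the perturbed-gradient form of the update in Eq.~(\ref{eq:equivalent-update-rule}) and to establish a one-step contraction of the form $\mathbb{E}[f(\bar{\boldsymbol{w}}^{(t+1)})-f^*]\le(1-\mu\eta)\,\mathbb{E}[f(\bar{\boldsymbol{w}}^{(t)})-f^*]$ that leaves \emph{no} residual term, after which unrolling over $T$ steps and using $1-\mu\eta\le e^{-\mu\eta}$ yields Eq.~(\ref{eq:thm1-result}). First I would apply the $L$-smoothness descent lemma (Assumption~\ref{Ass:1}) to $f$ at $\bar{\boldsymbol{w}}^{(t)}$, then take the conditional expectation over the device sampling $\mathcal{P}_t$. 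Because the subset is drawn with replacement according to $\boldsymbol{q}$, the sampled average is unbiased, $\mathbb{E}_{\mathcal{P}_t}[\frac{1}{K}\sum_{j\in\mathcal{P}_t}\nabla f_j(\boldsymbol{w}_j^{(t)})]=\sum_{j=1}^p q_j\nabla f_j(\boldsymbol{w}_j^{(t)})$, so the linear term collapses to $-\eta\langle\nabla f(\bar{\boldsymbol{w}}^{(t)}),\sum_j q_j\nabla f_j(\boldsymbol{w}_j^{(t)})\rangle$, which I split as $-\eta\|\nabla f(\bar{\boldsymbol{w}}^{(t)})\|^2$ plus a cross term $-\eta\langle\nabla f(\bar{\boldsymbol{w}}^{(t)}),\sum_j q_j[\nabla f_j(\boldsymbol{w}_j^{(t)})-\nabla f_j(\bar{\boldsymbol{w}}^{(t)})]\rangle$.

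The cross term is where the heterogeneity enters, and the key idea is to decouple \emph{per device} rather than globally: by Lemma~\ref{lemma:cross-inner-bound-unbiased} together with the local Lipschitz continuity of each $f_j$ (Assumption~\ref{Ass:1}), this term is controlled by the local drift $\sum_j q_j\|\boldsymbol{w}_j^{(t)}-\bar{\boldsymbol{w}}^{(t)}\|^2$, i.e.\ how far the local iterates have wandered from the synchronized average since the last communication. I would bound this drift by unrolling the $E$ local full-gradient steps from the common restart point: writing $t_c$ for the most recent communication time, $\boldsymbol{w}_j^{(t)}-\bar{\boldsymbol{w}}^{(t)}$ is a telescoping sum of gradient differences over the window $t_c\le s<t$; expressing the gradient norms at those earlier iterates in terms of the norm at the current iterate introduces inverse powers of $(1-\mu\eta)$, which accumulate to the factor $(1-\mu\eta)^{-(E-1)}$ appearing in condition~(\ref{eq:gd-condition}). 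The quadratic term $\frac{L}{2}\|\bar{\boldsymbol{w}}^{(t+1)}-\bar{\boldsymbol{w}}^{(t)}\|^2$ is handled analogously, its conditional expectation producing the $\boldsymbol{q}$-weighted second moment $\sum_j q_j\|\nabla f_j\|^2$ (the sampling-with-replacement variance contributing the $1/K$ scaling).

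At this point every residual contribution is expressed through quantities of the form $\sum_j q_j\|\nabla f_j(\bar{\boldsymbol{w}}^{(s)})\|^2$ at current and past iterates. Invoking the weighted gradient diversity (Definition~\ref{def:gd-d}) with $\Lambda(\boldsymbol{w},\boldsymbol{q})\le\lambda$ replaces each such sum by $\lambda\|\nabla f(\bar{\boldsymbol{w}}^{(s)})\|^2$, so the entire recursion closes in terms of the global gradient norm alone. Crucially, in the full-gradient setting there is no variance floor, so every error term is proportional to $\|\nabla f\|^2$ and hence \emph{vanishes at the optimum}; applying the PL inequality $\frac{1}{2}\|\nabla f(\boldsymbol{w})\|^2\ge\mu(f(\boldsymbol{w})-f^*)$ (Assumption~\ref{Ass:3}) converts these into multiples of the objective gap, and condition~(\ref{eq:gd-condition}) is exactly the threshold on $\eta\lambda\bigl(L+\tfrac{4\kappa LE}{\mu(1-\mu\eta)^{E-1}}\bigr)$ that forces the accumulated drift-induced slack to be absorbed, leaving the clean contraction factor $1-\mu\eta$.

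The main obstacle is the \emph{coupling} of the drift recursion with the objective-gap recursion: the drift at time $t$ is governed by gradient norms over the preceding window, while those gradient norms are themselves controlled only through the decreasing objective gap, so a naive one-step estimate does not close. I would resolve this by tracking a potential over a full communication window that simultaneously bounds the objective gap and the accumulated drift, and by carefully geometric-summing the $(1-\mu\eta)$ factors across the $E$ local steps so that the window-level slack telescopes into the per-step factor guaranteed by~(\ref{eq:gd-condition}); verifying that this bookkeeping leaves \emph{exactly} zero residual error---the feature that distinguishes \texttt{LFGD} from the stochastic case and from the $O(E^2\sigma_f^2)$ term in~\cite{khaled2019first}---is the delicate part.
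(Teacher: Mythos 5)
Your high-level skeleton matches the paper's: the smoothness descent lemma, expectation over the sampled devices, per-device decoupling via local Lipschitzness (Lemma~\ref{lemma:cross-inner-bound-unbiased} / Corollary~\ref{corol:pl-full-gd}), a drift bound from the restart point (Lemma~\ref{lemma:dif-under-pl-gd}), and a condition on $\eta$ that closes the recursion. But the mechanism you propose for eliminating the residual terms is the step that fails. You claim that after invoking gradient diversity every residual contribution becomes a multiple of $\|\nabla f(\bar{\boldsymbol{w}}^{(s)})\|^2$ and that ``applying the PL inequality converts these into multiples of the objective gap.'' Two problems. First, the diversity bound $\Lambda(\boldsymbol{w},\boldsymbol{q})\le\lambda$ converts $\sum_j q_j\|\nabla f_j(\boldsymbol{w}_j^{(s)})\|^2$ into $\lambda\|\sum_j q_j\nabla f_j(\boldsymbol{w}_j^{(s)})\|^2$ --- the weighted average of local gradients evaluated at the \emph{distinct local iterates} --- not into $\lambda\|\nabla f(\bar{\boldsymbol{w}}^{(s)})\|^2$; the two differ by precisely the drift you are trying to control. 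Second, and more seriously, these residual terms carry a positive sign, and PL only gives $\|\nabla f\|^2\ge 2\mu(f-f^*)$, which is the wrong direction for upper-bounding a positive gradient-norm term by the objective gap; you would instead need smoothness ($\|\nabla f\|^2\le 2L(f-f^*)$) plus a way to bound the \emph{past} gap $f(\bar{\boldsymbol{w}}^{(s)})-f^*$ by the current one, which the contraction inequality does not supply.

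The paper avoids this entirely. The polarization identity $2\langle \mathbf{a},\mathbf{b}\rangle=\|\mathbf{a}\|^2+\|\mathbf{b}\|^2-\|\mathbf{a}-\mathbf{b}\|^2$ in Corollary~\ref{corol:pl-full-gd} produces, at every step $k$, a \emph{negative} term $-\tfrac{\eta}{2}\|\sum_j q_j\nabla f_j(\boldsymbol{w}_j^{(k)})\|^2$ alongside the PL term; the positive residuals coming from the quadratic term (Lemma~\ref{lemma:tasbih1-gd}) and from the drift bound are all of exactly this same form $e_k=\|\sum_j q_j\nabla f_j(\boldsymbol{w}_j^{(k)})\|^2$, and Lemma~\ref{lemma:simplification-constant} cancels them against those negative terms by a backward induction over the communication window. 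PL is applied only once, to the negatively signed $-\tfrac{\eta}{2}\|\nabla f(\bar{\boldsymbol{w}}^{(t)})\|^2$. The factor $(1-\mu\eta)^{-(E-1)}$ in condition~(\ref{eq:gd-condition}) is consequently not produced by ``expressing gradient norms at earlier iterates in terms of the current one,'' but by the $\Delta^{-1}$ factors that accumulate on the coefficients of past $e_k$'s as the recursion is unrolled backwards through the window, together with Lemma~\ref{lemma:2nd-cond-gd}'s bound $\frac{1-\Delta^{E-1}}{1-\Delta}\le\frac{1}{\mu\eta}$ on the resulting geometric sum. Your additive split of the inner product into $-\eta\|\nabla f(\bar{\boldsymbol{w}}^{(t)})\|^2$ plus a cross term does not generate the negative $e_t$ term at all, so the cancellation the proof hinges on is unavailable in your decomposition; the ``window potential'' you defer to in the last paragraph is exactly this missing piece, and it is the substance of the proof.
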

The proof of theorem is given in Appendix~\ref{sec:app:proof:gd} and relies  on  several  novel  ideas: i) as  local gradients are biased  with respect to the  gradient of global objective $\mathbb{E}\left[\mathbf{g}_j^{(t)}\triangleq\nabla{f}_j(\boldsymbol{w}^{(t)}_j;\mathcal{S}_j)\right] \neq \nabla f(\boldsymbol{w}^{(t)}_j)$, we need to bound the deviation of local gradients from gradient used in communication step, ii) since the local updates of devices may be very different from each other due to heterogeneity of local data shards,  we need to bound the deviation of local intermediate solutions from virtual averaged solution, i.e., $\|\bar{\boldsymbol{w}}^{(t)}-\boldsymbol{w}_j^{(t)}\|^2$ for all devices $j = 1, 2, \ldots, p$. Equipped with these two results, we show that  by proper choice of learning rate $\eta$ and number of local updates $E$, the desired bound is achievable conditioned on the fact that the weighted gradient diversity among local objectives is bounded.

We note that one immediate implication of Theorem~\ref{thm:gd} is that with proper choice of $E$ local GD has similar convergence rate to the distributed GD asymptotically. Similar observation can be found in~\cite{khaled2019first}. Furthermore, similar to the analysis in~\cite{khaled2019first}, an interesting observation is that the convergence rate is independent of the number of devices due to fact that the devices are agnostic to sampling schema. 
\subsection{Convergence of  \texttt{LFSGD} }

We now state the convergence rate of \texttt{LFSGD} for  non-convex objectives under PL condition.
\begin{theorem}\label{thm:proof-undrr-pl}
 \texttt{LFSGD}$(E, K, \boldsymbol{q})$ with $E$ local updates, under Assumptions \ref{Ass:1} to \ref{Ass:2}, if we choose the learning rate  as  $\eta_t=\frac{4}{\mu(t+a)}$ where $a=\alpha E+4$ with $\alpha$ being a constant satisfying $\alpha\exp{(-\frac{2}{\alpha})}<\kappa\sqrt{192\lambda\left(\frac{K+1}{K}\right)}$,
 {initializing all local model parameters at the same point $\bar{\boldsymbol{w}}^{(0)}$, {{for $E$ sufficiently large to ensure that $4{(a-3)}^{E-1}L(C_1+K)\leq \frac{64L^2 (K+1)}{\mu K}(E-1)E(a+1)^{E-2}$, $\frac{32L^2}{\mu}C_1(E-1)(a+1)^{E-2}\leq \frac{64L^2}{\mu}(E-1)E(a+1)^{E-2}$}} and }
{\begin{align}
E\geq 1+\frac{\alpha^2+6\alpha}{\lambda(\frac{K+1}{K})192\kappa^2e^{\frac{4}{\alpha}}-\alpha^2}+\frac{\sqrt{5}}{\sqrt{\lambda(\frac{K+1}{K})192\kappa^2e^{\frac{4}{\alpha}}-\alpha^2}}\end{align}}
 after $T$ iterations we have:
\small{\begin{align}
    \mathbb{E}\left[f(\bar{\boldsymbol{w}}^{(T)})-f^*\right]&\leq \frac{a^3}{(T+a)^3}\mathbb{E}\left[f(\bar{\boldsymbol{w}}^{(0)})-f^*\right]+ \frac{4\kappa \sigma^2T(T+2a)}{\mu KB(T+a)^3}+\frac{256\kappa^2\sigma^2T(E-1)}{\mu KB(T+a)^3}
\end{align}}
where $f^*$ is the global minimum and $\kappa = L/\mu$ is the condition number and $\lambda$ is the upper bound on the weighted gradient diversity. 
\end{theorem}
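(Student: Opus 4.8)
The plan is to analyze the virtual averaged iterate $\bar{\boldsymbol{w}}^{(t)}=\frac{1}{K}\sum_{j\in\mathcal{P}_t}\boldsymbol{w}_j^{(t)}$, which by Eq.~(\ref{eq:equivalent-update-rule}) evolves as a perturbed SGD step $\bar{\boldsymbol{w}}^{(t+1)}=\bar{\boldsymbol{w}}^{(t)}-\eta_t\,\frac{1}{K}\sum_{j\in\mathcal{P}_t}\tilde{\mathbf g}_j^{(t)}$, and to track the potential $\Delta_t\triangleq\mathbb{E}[f(\bar{\boldsymbol{w}}^{(t)})-f^*]$. First I would invoke $L$-smoothness (Assumption~\ref{Ass:1}) to write the one-step descent inequality and take expectation over both the minibatches and the device sampling. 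Because the sampling is with replacement according to $\boldsymbol q$, the selected average is unbiased, $\mathbb{E}_{\mathcal{P}_t}[\frac{1}{K}\sum_{j\in\mathcal{P}_t}\tilde{\mathbf g}_j^{(t)}]=\sum_{j=1}^p q_j\tilde{\mathbf g}_j^{(t)}$, and by Assumption~\ref{Ass:2} its conditional mean is $\sum_j q_j\nabla f_j(\boldsymbol w_j^{(t)})$. The crux is that this mean is \emph{not} $\nabla f(\bar{\boldsymbol w}^{(t)})$, so I would split the linear term via $2\langle \boldsymbol a,\boldsymbol b\rangle=\|\boldsymbol a\|^2+\|\boldsymbol b\|^2-\|\boldsymbol a-\boldsymbol b\|^2$, producing a favorable $-\tfrac{\eta_t}{2}\|\nabla f(\bar{\boldsymbol w}^{(t)})\|^2$ together with a \emph{drift} residual $\|\nabla f(\bar{\boldsymbol w}^{(t)})-\sum_j q_j\nabla f_j(\boldsymbol w_j^{(t)})\|^2$. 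Here the device-level decoupling of Lemma~\ref{lemma:cross-inner-bound-unbiased} lets me bound this residual by $L^2\sum_j q_j\|\bar{\boldsymbol w}^{(t)}-\boldsymbol w_j^{(t)}\|^2$ through the Lipschitz continuity of each \emph{local} objective rather than the global one.

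Next I would control the second-moment term $\mathbb{E}\|\frac{1}{K}\sum_{j\in\mathcal{P}_t}\tilde{\mathbf g}_j^{(t)}\|^2$. Separating sampling-with-replacement variance from minibatch variance yields, via Assumption~\ref{Ass:2}, a bound of the form $\tfrac{C_1+K}{K}\sum_j q_j\|\mathbf g_j^{(t)}\|^2+\tfrac{\sigma^2}{KB}$, where the factor $\tfrac{K+1}{K}$ records the replacement sampling. The key observation is that $\sum_j q_j\|\mathbf g_j^{(t)}\|^2$, after relating $\boldsymbol w_j^{(t)}$ to $\bar{\boldsymbol w}^{(t)}$, is controlled by the weighted gradient diversity of Definition~\ref{def:gd-d}: $\sum_j q_j\|\nabla f_j(\bar{\boldsymbol w}^{(t)})\|^2\le\lambda\|\nabla f(\bar{\boldsymbol w}^{(t)})\|^2$, up to $L^2$-Lipschitz drift corrections. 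Thus both the linear and quadratic terms feed back a positive multiple of $\|\nabla f(\bar{\boldsymbol w}^{(t)})\|^2$ (scaled by $\lambda\eta_t^2 L$) plus drift, and the learning-rate and $E$ conditions are exactly what force this positive contribution to be dominated by the $-\tfrac{\eta_t}{2}\|\nabla f\|^2$ term.

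The heart of the argument is the \emph{consensus/drift lemma} bounding $\sum_j q_j\mathbb{E}\|\bar{\boldsymbol w}^{(t)}-\boldsymbol w_j^{(t)}\|^2$. Using the restarting property --- the drift is reset to zero at every communication round, at most $E-1$ steps earlier --- I would unroll the local updates over one round and bound the accumulated deviation by $O\big(\eta_t^2 E^2\sum_j q_j\|\mathbf g_j\|^2\big)+O\big(\eta_t^2 E\,\sigma^2/B\big)$, where I must exploit that $\eta_{t'}$ is nearly constant across a window of $E$ consecutive steps; this near-constancy is precisely what the choice $a=\alpha E+4$ and the condition $\alpha\exp(-\tfrac{2}{\alpha})<\kappa\sqrt{192\lambda(\tfrac{K+1}{K})}$ buy us, together with the explicit lower bounds on $E$. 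Substituting the drift bound back and applying the PL inequality $\|\nabla f(\bar{\boldsymbol w}^{(t)})\|^2\ge 2\mu\,(f(\bar{\boldsymbol w}^{(t)})-f^*)$ collapses the descent into a one-step recursion $\Delta_{t+1}\le\big(1-\tfrac{3}{t+a}\big)\Delta_t+\tfrac{c_1\sigma^2}{\mu^2 KB(t+a)^2}+\tfrac{c_2(E-1)\sigma^2}{\mu^2 KB(t+a)^2}$, where the effective contraction $1-\tfrac34\mu\eta_t=1-\tfrac{3}{t+a}$ emerges because only three quarters of the gradient term survives after absorbing the diversity and drift contributions.

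Finally I would unroll this recursion. With $\eta_t=\tfrac{4}{\mu(t+a)}$ the product $\prod_{s=0}^{t-1}\big(1-\tfrac{3}{s+a}\big)=\tfrac{(a-3)(a-2)(a-1)}{(t+a-3)(t+a-2)(t+a-1)}$ telescopes to the cubic decay $\tfrac{a^3}{(T+a)^3}$ carrying $\Delta_0$, while summing the weighted variance terms $\sum_{t}\big[\prod_{s=t+1}^{T-1}(1-\tfrac{3}{s+a})\big]\,\eta_t^2(\cdots)$ against these cubic weights produces exactly the two residual expressions $\tfrac{4\kappa\sigma^2 T(T+2a)}{\mu KB(T+a)^3}$ and $\tfrac{256\kappa^2\sigma^2 T(E-1)}{\mu KB(T+a)^3}$ after bounding $\sum_t(t+a)$ and $\sum_t 1$ by the stated closed forms. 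The hard part will be the coupled bookkeeping between the drift lemma and the diversity step: one must simultaneously ensure that the drift bound is valid even though $\eta$ changes over the averaging window, and that after feeding the diversity-controlled term $\lambda\eta_t^2 L\|\nabla f\|^2$ and the $O(\eta_t^2 E^2)$ drift back into the descent the net coefficient of $\|\nabla f\|^2$ remains at least $\tfrac{\eta_t}{2}$ in magnitude and \emph{negative}. The intricate conditions on $E$, $\alpha$, $C_1$, and $\tfrac{K+1}{K}$ are precisely the algebraic price of closing this loop, and verifying them is the most delicate part of the calculation.
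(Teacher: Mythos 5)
Your skeleton matches the paper's: smoothness descent on the virtual average, the polarization-identity split of the biased inner product with local-Lipschitz decoupling (Lemma~\ref{lemma:cross-inner-bound-unbiased}), the second-moment bound of Lemma~\ref{lemma:tasbih1}, the restart-based drift bound of Lemma~\ref{lemma:dif-under-pl-sgd}, PL for the contraction, and a weighted telescoping with weights $\propto(t+a)^3$ at the end. The one place where your plan materially departs from the paper --- and where, as written, it would not close --- is the cancellation target for the positive gradient terms. You propose to convert $\sum_j q_j\|\nabla f_j(\boldsymbol{w}_j^{(t)})\|^2$ into $\lambda\|\nabla f(\bar{\boldsymbol{w}}^{(t)})\|^2$ (plus Lipschitz corrections) and absorb everything into $-\tfrac{\eta_t}{2}\|\nabla f(\bar{\boldsymbol{w}}^{(t)})\|^2$, keeping only a $1-\tfrac34\mu\eta_t$ contraction. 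The paper never does this conversion: it keeps $e_k=\|\sum_j q_j\nabla f_j(\boldsymbol{w}_j^{(k)})\|^2$ as the tracked quantity, cancels the diversity-scaled positive terms against the \emph{separate} negative term $-\tfrac{\eta_t}{2}\|\sum_j q_j\nabla f_j(\boldsymbol{w}_j^{(t)})\|^2$ coming from the same polarization identity, and thereby retains the full $1-\mu\eta_t=1-\tfrac{4}{t+a}$ contraction that produces the exact $a^3/(T+a)^3$ decay and the stated constants $4\kappa/\mu$ and $256\kappa^2/\mu$.

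The reason this distinction matters and is not just a constant-chasing issue: the drift bound at step $t$ involves $\sum_{k=t_c+1}^{t-1}\eta_k^2\,e_k$, i.e.\ gradient norms evaluated at \emph{past} local iterates across the whole communication window. These cannot be absorbed into $-\tfrac{\eta_t}{2}\|\nabla f(\bar{\boldsymbol{w}}^{(t)})\|^2$ at the current step without re-expanding $\|\boldsymbol{w}_j^{(k)}-\bar{\boldsymbol{w}}^{(t)}\|$, which reintroduces drift and makes the argument circular. The paper resolves this with a backward induction over the window (Lemma~\ref{lemm:main-seq}): each $e_k$ is cancelled against its \emph{own} negative coefficient at time $k$, at the price of the cascade of learning-rate conditions $\eta_{t}\leq\ldots,\eta_{t-1}\leq\ldots,\ldots,\eta_{t_c+1}\leq\ldots$, which is then verified for $\eta_t=\tfrac{4}{\mu(t+a)}$ in Lemma~\ref{lemmba:choice-of-learning-rate-pl} --- that is exactly where the conditions on $\alpha$ and $E$ in the theorem statement come from. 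You correctly identify this coupled bookkeeping as the hard part, but the mechanism you describe (sacrificing a quarter of the current gradient norm) does not supply it; you need the per-step, within-window cancellation and the associated cascaded learning-rate constraints to make the recursion $a_{t+1}\leq(1-\mu\eta_t)a_t+c_t$ valid at every iteration.
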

The proof can be found in Appendix~\ref{sec:app:sgd:undrr-pl}.

The following corollary shows that the above rate reduces to the best-known upper bound when all functions are identical (i.i.d. setting). 
\begin{corollary}
In Theorem \ref{thm:proof-undrr-pl}, choosing $E=O\left(\frac{T^{\frac{2}{3}}}{K^{\frac{1}{3}}B ^{\frac{1}{3}}}\right)$ leads to the following error bound:
\begin{align*}
  \mathbb{E}\left[f(\bar{\boldsymbol{w}}^{(T)})-f^*\right]
  \leq O\left(\frac{BK{(\alpha E+4)}^3+T^2}{B K (T+a)^3}\right) = O\left(\frac{1}{KB T}\right),
\end{align*}
\end{corollary}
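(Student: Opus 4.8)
The plan is to substitute the prescribed choice $E = O(T^{2/3}/(K^{1/3}B^{1/3}))$ directly into the three-term bound of Theorem~\ref{thm:proof-undrr-pl} and show that each of the three summands is $O(1/(KBT))$. Recalling that $a = \alpha E + 4$, the first step is to record two elementary consequences: $a = O(E)$, and, since $E = O(T^{2/3}) = o(T)$ for $T$ large, $a \le T$, so that $T + a = \Theta(T)$ and $(T+a)^3 \ge T^3$. These observations let me replace every denominator by $T^3$ up to constants.

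Next I would treat the three numerators in turn. For the initialization term $\frac{a^3}{(T+a)^3}\mathbb{E}[f(\bar{\boldsymbol{w}}^{(0)})-f^*]$, I regard $\mathbb{E}[f(\bar{\boldsymbol{w}}^{(0)})-f^*]$ as a constant and rewrite it as $O(BK(\alpha E+4)^3/(BK(T+a)^3))$. For the first noise term, because $a \le T$ we have $T(T+2a) = O(T^2)$, so it is $O(T^2/(BK(T+a)^3))$. The remaining term carries a factor $T(E-1) = O(TE)$; here I would use $TE \le \tfrac12(T^2 + E^2)$ together with $E^2 = O(BK(\alpha E+4)^3)$ (which holds since $BK \ge 1$ and $(\alpha E+4)^3 = \Omega(E^3)$) to absorb it into the numerators of the first two terms. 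Collecting the pieces yields the intermediate estimate $O\!\left(\frac{BK(\alpha E+4)^3 + T^2}{BK(T+a)^3}\right)$ stated in the corollary.

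Finally I would plug in the value of $E$. The choice $E = O(T^{2/3}/(K^{1/3}B^{1/3})) = O((T^2/(BK))^{1/3})$ is precisely the one that balances the growing contribution $BK(\alpha E+4)^3 = O(BKE^3) = O(T^2)$ against the fixed noise contribution $T^2$; this is where the exponent $2/3$ comes from, since keeping $a^3/(T+a)^3 \lesssim 1/(KBT)$ forces $a \lesssim (T^2/(KB))^{1/3}$. With this choice the numerator is $O(T^2)$, and dividing by $BK(T+a)^3 = \Omega(BKT^3)$ gives the claimed $O(1/(BKT))$.

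I expect the only genuine obstacle to be a consistency check rather than the arithmetic: I must verify that the prescribed $E = O(T^{2/3}/(K^{1/3}B^{1/3}))$ simultaneously satisfies all the lower-bound requirements imposed on $E$ in the hypotheses of Theorem~\ref{thm:proof-undrr-pl} (the explicit lower bound together with the two ``sufficiently large $E$'' conditions). Since those constraints force $E$ only to exceed a quantity depending on $\alpha$, $\kappa$, $\lambda$ and $K$ — all independent of $T$ — they are met for every sufficiently large $T$, which is exactly the regime in which the balancing $E \sim T^{2/3}$ operates; making this compatibility explicit, and thereby pinning down the admissible range of the hidden constant in the $O(\cdot)$ for $E$, is where the care is needed.
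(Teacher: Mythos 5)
Your proposal is correct and is exactly the direct substitution the paper intends: the corollary is stated without proof as an immediate consequence of Theorem~\ref{thm:proof-undrr-pl}, and your verification (replacing $(T+a)^3$ by $\Theta(T^3)$, bounding each of the three terms, absorbing the $T(E-1)$ term via $TE\le \tfrac12(T^2+E^2)$, and balancing $BKE^3$ against $T^2$) fills in that omitted arithmetic faithfully. Your closing consistency check — that the $T$-independent lower bounds on $E$ in the theorem's hypotheses are compatible with $E\sim T^{2/3}$ for large $T$ — is a point the paper leaves implicit and is worth having made explicit.
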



The following theorem states the convergence rate of stochastic local SGD for general non-convex objectives. The proof is deferred to Appendix~\ref{sec:app:localsgd:general}.
\begin{theorem}\label{thm:FedAvg} For \texttt{LFSGD}$(E, K, \boldsymbol{q})$, if for all $0\leq t\leq T-1$,  under Assumptions \ref{Ass:1} and \ref{Ass:3}, if the learning rate satisfies \begin{align}
    -\frac{\eta}{2\lambda}+\frac{(K+1)L^2\eta^3[2C_1+E(E+1)]}{2K}+\frac{L\eta^2}{2}\left(\frac{C_1}{K}+1\right)\leq 0\label{eq:cnd-thm4.3}
\end{align}
and all local model parameters are initialized at the same point $\bar{\boldsymbol{w}}^{(0)}$, the average-squared gradient after $\tau$ iterations is bounded as follows:
\begin{align}
  \frac{1}{T}\sum_{t=0}^{T-1}\mathbb{E} \left[\|\nabla f(\bar{\boldsymbol{w}}^{(t)})\|^2\right]\leq \frac{2\big[f(\bar{\boldsymbol{w}}^{(0)})-f^*\big]}{\eta T}+\frac{\eta L\sigma^2}{K B}+\frac{2\eta^2L^2(E+1)\sigma^2}{B}\left(\frac{K+1}{K}\right)\label{eq:thm1-result} 
\end{align}
where $f^*$ is the global minimum and $\lambda$ is the upper bound over the weighted gradient diversity.
\end{theorem}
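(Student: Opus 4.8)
The plan is to track the virtual averaged iterate $\bar{\boldsymbol{w}}^{(t)}$ and analyze it as a perturbed SGD recursion. Using the equivalent update in Eq.~(\ref{eq:equivalent-update-rule}), namely $\bar{\boldsymbol{w}}^{(t+1)}=\bar{\boldsymbol{w}}^{(t)}-\eta\,\tfrac{1}{K}\sum_{j\in\mathcal{P}_t}\boldsymbol{d}_j^{(t)}$, I would begin from the $L$-smoothness descent inequality (Assumption~\ref{Ass:1}) applied to the global objective,
\begin{align}
f(\bar{\boldsymbol{w}}^{(t+1)})\leq f(\bar{\boldsymbol{w}}^{(t)})-\eta\Big\langle \nabla f(\bar{\boldsymbol{w}}^{(t)}),\tfrac{1}{K}\sum_{j\in\mathcal{P}_t}\boldsymbol{d}_j^{(t)}\Big\rangle+\frac{L\eta^2}{2}\Big\|\tfrac{1}{K}\sum_{j\in\mathcal{P}_t}\boldsymbol{d}_j^{(t)}\Big\|^2, \nonumber
\end{align}
and then take expectation over the mini-batch noise and the device sampling at round $t$. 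The first step is to identify the expected descent direction: since the sampling is i.i.d.\ with replacement according to $\boldsymbol{q}$ and the mini-batches are unbiased (Assumption~\ref{Ass:2}), the conditional mean of the step equals $\bar{\mathbf{g}}^{(t)}\triangleq\sum_{j=1}^p q_j\nabla f_j(\boldsymbol{w}_j^{(t)})$, which is the central source of difficulty because it is \emph{biased} relative to $\nabla f(\bar{\boldsymbol{w}}^{(t)})$.

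For the inner product I would use the polarization identity $-\eta\langle a,b\rangle=-\tfrac{\eta}{2}\|a\|^2-\tfrac{\eta}{2}\|b\|^2+\tfrac{\eta}{2}\|a-b\|^2$ with $a=\nabla f(\bar{\boldsymbol{w}}^{(t)})$ and $b=\bar{\mathbf{g}}^{(t)}$, producing the clean descent term $-\tfrac{\eta}{2}\|\nabla f(\bar{\boldsymbol{w}}^{(t)})\|^2$, the extra negative term $-\tfrac{\eta}{2}\|\bar{\mathbf{g}}^{(t)}\|^2$, and a residual $\tfrac{\eta}{2}\|\nabla f(\bar{\boldsymbol{w}}^{(t)})-\bar{\mathbf{g}}^{(t)}\|^2$ that Jensen plus $L$-smoothness of each $f_j$ bound by $\tfrac{\eta L^2}{2}\sum_j q_j\|\bar{\boldsymbol{w}}^{(t)}-\boldsymbol{w}_j^{(t)}\|^2$. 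For the second-moment term I would decompose into the mean-square $\|\bar{\mathbf{g}}^{(t)}\|^2$ plus the variance of the sampled average; independence of the $K$ draws yields the $\tfrac{1}{K}$ and $\tfrac{K+1}{K}$ factors, and the per-device variance is controlled by $C_1\|\mathbf{g}_j^{(t)}\|^2+\sigma^2/B$ via Assumption~\ref{Ass:2}.

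The two quantitative ingredients I would then establish are a local-drift bound and the gradient-diversity conversion. For the drift I would exploit the restarting property — all local models coincide at the last averaging point — so the deviation $\sum_j q_j\|\bar{\boldsymbol{w}}^{(t)}-\boldsymbol{w}_j^{(t)}\|^2$ accumulated over at most $E$ local steps telescopes into a sum of squared stochastic gradients, producing the $E(E+1)$ dependence together with an $(E+1)\sigma^2/B$ variance contribution. For the diversity step I would invoke Definition~\ref{def:gd-d}, using $\|\bar{\mathbf{g}}^{(t)}\|^2\geq\tfrac{1}{\lambda}\sum_j q_j\|\nabla f_j(\boldsymbol{w}_j^{(t)})\|^2$ to turn the negative $-\tfrac{\eta}{2}\|\bar{\mathbf{g}}^{(t)}\|^2$ into the $-\tfrac{\eta}{2\lambda}\sum_j q_j\|\nabla f_j(\boldsymbol{w}_j^{(t)})\|^2$ that opens condition~(\ref{eq:cnd-thm4.3}), while expressing all positive variance and drift contributions in the \emph{same} quantity $\sum_j q_j\|\nabla f_j(\boldsymbol{w}_j^{(t)})\|^2$. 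Here the decoupling argument of Lemma~\ref{lemma:cross-inner-bound-unbiased} is essential, since it lets me keep the local gradients evaluated at the local models and invoke Lipschitz continuity of the \emph{local} costs, rather than imposing an unbiasedness assumption over the whole data set.

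Finally I would collect every term proportional to $\sum_j q_j\|\nabla f_j(\boldsymbol{w}_j^{(t)})\|^2$: its aggregate coefficient is precisely the left-hand side of condition~(\ref{eq:cnd-thm4.3}), so the hypothesis makes it non-positive and those terms drop. What survives is $-\tfrac{\eta}{2}\|\nabla f(\bar{\boldsymbol{w}}^{(t)})\|^2$ together with the two pure-variance residuals matching $\tfrac{\eta L\sigma^2}{KB}$ and $\tfrac{2\eta^2L^2(E+1)\sigma^2}{B}\tfrac{K+1}{K}$. Summing over $t=0,\dots,T-1$, telescoping the $f(\bar{\boldsymbol{w}}^{(t)})$ terms, dividing by $\tfrac{\eta T}{2}$, and using $f(\bar{\boldsymbol{w}}^{(T)})\geq f^*$ gives the stated bound. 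I expect the main obstacle to be the drift bound: it is self-referential, since the drift depends on the very gradient magnitudes the diversity step is trying to control, so one must carry the recursion carefully and verify that the induced $E(E+1)$ term is absorbed by the $1/\lambda$ descent through condition~(\ref{eq:cnd-thm4.3}) rather than growing unboundedly with the averaging period.
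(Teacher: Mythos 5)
Your proposal is correct and follows essentially the same route as the paper's proof: the smoothness descent inequality on the virtual average, the polarization identity combined with local Lipschitz continuity to get the drift term (Lemma~\ref{lemma:cross-inner-bound-unbiased}), the second-moment/variance split with the $C_1/K$ and $(K+1)/K$ factors (Lemmas~\ref{lemma:tasbih1} and~\ref{lemma:dif-b-u-x}), and the gradient-diversity conversion so that condition~(\ref{eq:cnd-thm4.3}) annihilates all terms in $\sum_j q_j\|\nabla f_j(\boldsymbol{w}_j^{(t)})\|^2$ before telescoping. The only cosmetic difference is that you apply the diversity bound to the negative term $\|\sum_j q_j\nabla f_j\|^2\ge\frac{1}{\lambda}\sum_j q_j\|\nabla f_j\|^2$ whereas the paper applies it in the other direction to the positive terms; the resulting learning-rate condition is identical after dividing by $\lambda$.
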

\begin{remark}
We note  that for the choice of $\eta=\frac{1}{L}\sqrt{\frac{K}{T}}$ the condition~(\ref{eq:cnd-thm4.3}) reduces to $(E+1)^2\leq\left[\frac{T}{K}\left(\frac{K}{\lambda(K+1)}-\sqrt{\frac{K}{T}}(\frac{K+C_1}{K+1})\right)-2C_1\right]=O\left(\frac{T}{\lambda K}\right)$. Therefore, setting the number of local updates to $E=O\left(\sqrt{\frac{T}{K^{3}}}\right)$ when $K=\Omega\left(\sqrt{\lambda}\right)$ does not violate the learning rate condition, resulting in an $O\left(K^{1.5}T^{0.5}\right)$ communication complexity.
\end{remark}

\begin{corollary}
From condition over learning rate in Theorem~\ref{thm:FedAvg} we derive the following relationship among $\eta, K$ and $E$ that is required for the convergence of \texttt{LFSGD}:
\begin{align}
    \eta\leq \frac{-L(\frac{C_1}{K}+1)+\sqrt{L^2{(\frac{C_1}{K}+1)^2}+\frac{4}{\lambda}(\frac{K+1}{K})L^2(2C_1+E(E+1))}}{2(\frac{K+1}{K})L^2(2C_1+E(E+1))}\approx O\left(\frac{1}{LE\sqrt{\lambda}}\right)
\end{align}
\end{corollary}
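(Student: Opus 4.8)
The plan is to treat condition~(\ref{eq:cnd-thm4.3}) purely algebraically as a polynomial inequality in the single scalar $\eta>0$ and solve it in closed form. First I would factor out the common positive factor $\eta/2$ from the three terms of~(\ref{eq:cnd-thm4.3}); since $\eta>0$, dividing through by $\eta/2$ preserves the inequality and collapses the cubic condition into the quadratic inequality
\begin{align}
\frac{(K+1)L^2[2C_1+E(E+1)]}{K}\,\eta^2+L\Big(\frac{C_1}{K}+1\Big)\,\eta-\frac{1}{\lambda}\leq 0.
\end{align}
Abbreviating $\mathcal{A}=\frac{(K+1)L^2[2C_1+E(E+1)]}{K}$ and $\mathcal{B}=L(\frac{C_1}{K}+1)$, both strictly positive, the left-hand side is an upward-opening parabola $\mathcal{A}\eta^2+\mathcal{B}\eta-\tfrac{1}{\lambda}$ that is strictly negative at $\eta=0$.

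Next I would exploit the sign structure of the roots. Because the product of the two roots equals $-\tfrac{1}{\lambda\mathcal{A}}<0$, the parabola has exactly one negative and one positive root, and the feasible set $\{\eta:\mathcal{A}\eta^2+\mathcal{B}\eta-\tfrac{1}{\lambda}\leq0\}$ is the closed interval between them. Intersecting with $\eta>0$, the only binding constraint is that $\eta$ not exceed the positive root, which the quadratic formula gives as $\eta\leq\frac{-\mathcal{B}+\sqrt{\mathcal{B}^2+4\mathcal{A}/\lambda}}{2\mathcal{A}}$. Substituting back the definitions of $\mathcal{A}$ and $\mathcal{B}$ reproduces verbatim the closed-form bound in the statement, establishing the exact inequality.

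It then remains to justify the asymptotic estimate $\eta\approx O(1/(LE\sqrt{\lambda}))$, which is really the only step requiring care. Here I would take $E$ large so that $2C_1+E(E+1)=\Theta(E^2)$ and hence $\mathcal{A}=\Theta\!\big(\tfrac{K+1}{K}L^2E^2\big)$, while $\mathcal{B}$ stays fixed in $E$. The discriminant $\mathcal{B}^2+4\mathcal{A}/\lambda$ is then dominated by its second summand, so $\sqrt{\mathcal{B}^2+4\mathcal{A}/\lambda}=2\sqrt{\mathcal{A}/\lambda}\,(1+o(1))$ and the $-\mathcal{B}$ term in the numerator becomes negligible; dividing by $2\mathcal{A}$ yields $\eta\lesssim\frac{1}{\sqrt{\lambda\mathcal{A}}}=\frac{1}{LE\sqrt{\lambda}}\sqrt{\frac{K}{K+1}}$. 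The main (and essentially sole) obstacle is verifying that the additive constant $2C_1$ and the fixed term $\mathcal{B}$ are genuinely lower order than the $E^2$ growth of $\mathcal{A}$, and that the factor $\sqrt{K/(K+1)}\in[1/\sqrt{2},1]$ stays within constant multiples of $1$ uniformly in $K\ge1$, so that the $O$-notation legitimately absorbs both; everything else is a routine application of the quadratic formula.
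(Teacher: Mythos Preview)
Your proposal is correct and follows exactly the natural derivation implicit in the paper: factor $\eta/2$ out of condition~(\ref{eq:cnd-thm4.3}), reduce to the quadratic $\mathcal{A}\eta^2+\mathcal{B}\eta-\tfrac{1}{\lambda}\le 0$, and read off the positive root via the quadratic formula, with the asymptotic following from $\mathcal{A}=\Theta(L^2E^2)$ dominating both $\mathcal{B}$ and the constant $2C_1$. The paper does not spell out a separate proof for this corollary, and your argument is precisely the intended one-line computation.
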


An immediate result of the Theorem~\ref{thm:FedAvg} is the following: 
\begin{corollary}\label{corol:LocalSGD}

For \texttt{LFSGD}$(E, K=p, \boldsymbol{q})$ with full participation, if for all $0\leq t\leq T-1$,  under Assumptions~\ref{Ass:1} and \ref{Ass:2}, the learning rate satisfies \begin{align}
    -\frac{\eta}{2\lambda}+\frac{(p+1)L^2\eta^3[2C_1+E(E+1)]}{2p}+\frac{L\eta^2}{2}\left(\frac{C_1}{p}+1 \right)\leq 0
\end{align}
and all local model parameters are initialized at the same point $\bar{\boldsymbol{w}}^{(0)}$, the average-squared gradient after $E$ iterations is bounded as follows:
\begin{align}
  \frac{1}{T}\sum_{t=0}^{T-1}\mathbb{E}\left[\|\nabla f(\bar{\boldsymbol{w}}^{(t)})\|^2\right]\leq \frac{2\big[f(\bar{\boldsymbol{w}}^{(0)})-f^*\big]}{\eta T}+\frac{L\eta\sigma^2}{p B}+\frac{2\eta^2L^2(E+1)\sigma^2}{B}\left(\frac{p+1}{p}\right)\label{eq:thm1-result} 
\end{align}
where $f^*$ is the global minimum. 
\end{corollary}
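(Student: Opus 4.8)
The plan is to obtain Corollary~\ref{corol:LocalSGD} as an immediate specialization of Theorem~\ref{thm:FedAvg} to the full-participation regime $K=p$. Concretely, I would set $K=p$ throughout the statement of Theorem~\ref{thm:FedAvg}: the learning-rate condition (\ref{eq:cnd-thm4.3}) becomes $-\frac{\eta}{2\lambda}+\frac{(p+1)L^2\eta^3[2C_1+E(E+1)]}{2p}+\frac{L\eta^2}{2}(\frac{C_1}{p}+1)\le 0$, and the right-hand side of the convergence bound becomes $\frac{2[f(\bar{\boldsymbol{w}}^{(0)})-f^*]}{\eta T}+\frac{L\eta\sigma^2}{pB}+\frac{2\eta^2L^2(E+1)\sigma^2}{B}(\frac{p+1}{p})$, which is exactly the claimed inequality. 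Thus no new analysis is required beyond verifying that the substitution is legitimate.

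The only point that deserves care is the meaning of \emph{full participation} relative to the sampling scheme underlying Theorem~\ref{thm:FedAvg}. In that theorem the server draws $\mathcal{P}_t$ of size $K$ with replacement according to $\boldsymbol{q}$, whereas full participation corresponds to the deterministic choice $\mathcal{P}_t=[p]$. I would argue that the $K=p$ instance of the sampling analysis already covers this case: with $K=p$ the device-selection expectation in the definition of $\bar{\boldsymbol{w}}^{(t)}$ in Eq.~(\ref{eq:equivalent-update-rule}) collapses, and the sampling-induced variance factor $\frac{K+1}{K}$ that appears in Theorem~\ref{thm:FedAvg} evaluates to $\frac{p+1}{p}$, which is a valid (and in fact conservative) upper bound on the variance of the aggregated update when all $p$ devices contribute. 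Hence the bound proved for the sampling scheme at $K=p$ continues to hold verbatim for full participation, and the two residual terms in Corollary~\ref{corol:LocalSGD} inherit the $\frac{1}{p}$ and $\frac{p+1}{p}$ dependence directly.

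Should a self-contained argument be preferred, I would retrace the proof of Theorem~\ref{thm:FedAvg} with $K=p$ fixed from the outset. Starting from the perturbed-SGD form of the averaged iterate in Eq.~(\ref{eq:equivalent-update-rule}), I would apply $L$-smoothness (Assumption~\ref{Ass:1}) to $f$ at $\bar{\boldsymbol{w}}^{(t)}$ to obtain a one-step descent inequality, then split the perturbation $\nabla f(\bar{\boldsymbol{w}}^{(t)})-\frac{1}{p}\sum_j \boldsymbol{d}_j^{(t)}$ into a deterministic bias piece and a zero-mean stochastic piece whose second moment is controlled by Assumption~\ref{Ass:2}. The bias piece is handled by the decoupling step (Lemma~\ref{lemma:cross-inner-bound-unbiased}), which lets me invoke Lipschitz continuity of the \emph{local} objectives to bound $\frac{1}{p}\sum_j\|\bar{\boldsymbol{w}}^{(t)}-\boldsymbol{w}_j^{(t)}\|^2$ in terms of the accumulated local drift over the current block of $E$ steps. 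I would then use the bounded weighted gradient diversity $\Lambda(\boldsymbol{w},\boldsymbol{q})\le\lambda$ (Definition~\ref{def:gd-d}) to relate $\sum_j q_j\|\nabla f_j\|^2$ back to $\|\nabla f\|^2$, telescope the descent inequality over $t=0,\dots,T-1$, and impose condition (\ref{eq:cnd-thm4.3}) at $K=p$ precisely so that the favourable $-\frac{\eta}{2\lambda}$ term dominates the residual and leaves the stated three-term bound on the average squared gradient. The main obstacle in this self-contained route is the same as in the parent theorem: uniformly bounding the residual deviation $\frac{1}{p}\sum_j\|\bar{\boldsymbol{w}}^{(t)}-\boldsymbol{w}_j^{(t)}\|^2$ accumulated during the $E$ local updates \emph{without} a bounded-gradient assumption, which is exactly where the learning-rate/number-of-local-updates condition enters and where the gradient-diversity bound is essential.
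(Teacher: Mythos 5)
Your proposal is correct and matches the paper exactly: the paper states Corollary~\ref{corol:LocalSGD} as an immediate consequence of Theorem~\ref{thm:FedAvg} obtained by substituting $K=p$ into both the learning-rate condition and the bound, with no further argument. Your additional remarks on the distinction between sampling-with-replacement at $K=p$ and deterministic full participation, and the optional self-contained retracing, go beyond what the paper provides but do not change the route.
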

The above rate corresponds to the convergence analysis of local SGD with heterogeneous data distribution without any additional assumption.
\begin{remark}
Recently~\cite{local2019var} established a  convergence rate similar to Theorem~\ref{thm:FedAvg} using a variance reduction technique over local SGD for non-iid data. Due to similarity of obtained rates,  our analysis demonstrates  that the local SGD inherently has an implicit variance reduction feature. 
\end{remark}

\subsection{Convergence of  \texttt{NFSGD} }

The following theorem establishes the convergence rate of networked local SGD algorithm.
\begin{theorem}\label{thm:serverless-fed}
For \texttt{NFSGD}($E, \mathbf{W}, \boldsymbol{q}$) algorithm, if for all $0\leq t\leq T-1$,  under Assumptions \ref{Ass:1}, \ref{Ass:2} and \ref{Assum:4}, if the learning rate satisfies
    \begin{align}
    \frac{2L^2\eta^2C_1 E}{1-\zeta^2}+\frac{L^2\eta^2E^2}{1-\zeta}\left(\frac{2\zeta^2}{1+\zeta}+\frac{2\zeta}{1-\zeta}+\frac{E-1}{E}\right)+\eta L\left(\frac{C_1}{p}+1\right)\leq \frac{1}{\lambda}\label{eq:net-et-con}
\end{align}
and all local model parameters are initialized at the same point $\bar{\boldsymbol{w}}^{(0)}$, the average-squared gradient after $E$ iterations is bounded as follows:
\begin{align}
  \frac{1}{T}\sum_{t=0}^{T-1}\mathbb{E}\left[\|\nabla f(\bar{\boldsymbol{w}}^{(t)})\|^2\right]\leq \frac{2\big[f(\bar{\boldsymbol{w}}^{(0)})-f^*\big]}{\eta T}+\frac{\eta L\sigma^2}{p B}+\frac{L^2\eta^2\sigma^2}{B}\left(\frac{1+\zeta^2}{1-\zeta^2}E-1\right)\label{eq:thm1-result} 
\end{align}
where $f^*$ is the global minimum and $\zeta$ is the second largest eigenvalue of mixing matrix $\mathbf{W}$ and $\lambda$ is the upper bound on the weighted gradient diversity.
\end{theorem}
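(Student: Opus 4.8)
The plan is to follow the same perturbed-SGD template used for \texttt{LFSGD} in Theorem~\ref{thm:FedAvg}, but now tracking the $\boldsymbol{q}$-weighted average iterate $\bar{\boldsymbol{w}}^{(t)}=\sum_{i=1}^p q_i\boldsymbol{w}_i^{(t)}$ and exploiting that $\mathbf{W}$ is doubly stochastic. First I would verify that, because $\mathbf{W}\mathbf{1}=\mathbf{1}$ and $\mathbf{W}$ is symmetric, the mixing step leaves the (weighted) average unchanged, so the average obeys the clean recursion $\bar{\boldsymbol{w}}^{(t+1)}=\bar{\boldsymbol{w}}^{(t)}-\eta\sum_{j=1}^p q_j\tilde{\mathbf{g}}_j^{(t)}$ at every iteration, whether or not $t$ is a communication round. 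Applying $L$-smoothness of the global objective $f$ to this recursion and taking expectation over the independent, unbiased mini-batches gives a one-step descent inequality of the form $\mathbb{E}[f(\bar{\boldsymbol{w}}^{(t+1)})]\le f(\bar{\boldsymbol{w}}^{(t)})-\eta\langle\nabla f(\bar{\boldsymbol{w}}^{(t)}),\sum_j q_j\mathbf{g}_j^{(t)}\rangle+\frac{L\eta^2}{2}\mathbb{E}\|\sum_j q_j\tilde{\mathbf{g}}_j^{(t)}\|^2$.

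The core difficulty, exactly as flagged in the discussion preceding the theorem, is that each $\mathbf{g}_j^{(t)}=\nabla f_j(\boldsymbol{w}_j^{(t)})$ is evaluated at the drifted local model $\boldsymbol{w}_j^{(t)}$ rather than at $\bar{\boldsymbol{w}}^{(t)}$, so it is biased with respect to $\nabla f(\bar{\boldsymbol{w}}^{(t)})$. I would therefore invoke the decoupling device of Lemma~\ref{lemma:cross-inner-bound-unbiased}, which lets me use $L$-smoothness of each \emph{local} $f_j$ (Assumption~\ref{Ass:1}) to replace $\nabla f_j(\boldsymbol{w}_j^{(t)})$ by $\nabla f_j(\bar{\boldsymbol{w}}^{(t)})$ at the cost of the consensus error $\sum_j q_j\|\bar{\boldsymbol{w}}^{(t)}-\boldsymbol{w}_j^{(t)}\|^2$. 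The resulting cross term then yields $-\frac{\eta}{2}\|\nabla f(\bar{\boldsymbol{w}}^{(t)})\|^2$ plus a contribution proportional to $\sum_j q_j\|\nabla f_j(\bar{\boldsymbol{w}}^{(t)})\|^2$, which I control using the weighted gradient diversity bound $\Lambda(\boldsymbol{w},\boldsymbol{q})\le\lambda$ (Definition~\ref{def:gd-d}); this is precisely what makes the $1/\lambda$ appear on the right of condition~(\ref{eq:net-et-con}). The second moment $\mathbb{E}\|\sum_j q_j\tilde{\mathbf{g}}_j^{(t)}\|^2$ is split via Assumption~\ref{Ass:2} into a $C_1\|\mathbf{g}_j\|^2$ piece (again absorbed through $\lambda$) and the surviving $\sigma^2/(pB)$ variance piece.

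The main obstacle is bounding the consensus error $\sum_j q_j\mathbb{E}\|\bar{\boldsymbol{w}}^{(t)}-\boldsymbol{w}_j^{(t)}\|^2$ in the networked setting with period-$E$ communication. I would stack the local models into a matrix, express the combined local-update-plus-mixing dynamics as gradient steps interleaved with multiplication by $\mathbf{W}$, and project onto the subspace orthogonal to the consensus direction; on that subspace $\mathbf{W}$ contracts with factor $\zeta=\max\{|\lambda_2(\mathbf{W})|,|\lambda_p(\mathbf{W})|\}<1$ (Assumption~\ref{Assum:4}). Unrolling the last $E$ steps and summing the geometric series in powers of $\zeta$ produces a bound of order $\eta^2 E\,\frac{1}{1-\zeta}$ times the accumulated gradient magnitude plus an $\eta^2 E\,\frac{1+\zeta^2}{1-\zeta^2}\,\sigma^2/B$ noise term, which is the source of the $\frac{2\zeta^2}{1+\zeta}+\frac{2\zeta}{1-\zeta}+\frac{E-1}{E}$ and $\frac{1+\zeta^2}{1-\zeta^2}E-1$ factors; matching the constants exactly while keeping the gradient-magnitude feedback from the consensus term absorbable into the $1/\lambda$ budget is the delicate part. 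Once this bound is substituted back, I would use condition~(\ref{eq:net-et-con}) to guarantee the net coefficient of $\|\nabla f(\bar{\boldsymbol{w}}^{(t)})\|^2$ is at most $-\eta/2$, telescope the descent inequality over $t=0,\dots,T-1$, divide by $\eta T/2$, and apply the lower bound $f\ge f^*$ from Assumption~\ref{Ass:1} to obtain the stated $\frac{2[f(\bar{\boldsymbol{w}}^{(0)})-f^*]}{\eta T}$ leading term, completing the proof.
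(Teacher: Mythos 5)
Your proposal follows essentially the same route as the paper: the $\boldsymbol{q}$-weighted average recursion via $\mathbf{W}\mathbf{1}=\mathbf{1}$, the smoothness-based descent inequality, the decoupling of biased local gradients through Lemma~\ref{lemma:cross-inner-bound-unbiased} at the cost of a consensus error, absorption of the gradient-magnitude terms via the diversity bound $\lambda$ under condition~(\ref{eq:net-et-con}), and telescoping. The only cosmetic difference is that you propose to re-derive the consensus-error bound from the spectral contraction of $\mathbf{W}$ directly, whereas the paper imports that bound from~\cite{wang2018cooperative} (Lemma~\ref{lemma:dif-b-u-x-decent}) and generalizes it to weighted averages; the underlying technique is identical.
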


\begin{remark}
The proof of theorem can be found in Appendix~\ref{sec:app:serverless-fed}. Interestingly,  the convergence rate matches the best-known rate in homogeneous data distribution counterpart in~\cite{wang2018cooperative}, and can be extended to obtain  linear speed up and we exclude the proof here.
\end{remark}
\begin{remark}
We note that for the choice of $\eta=\frac{1}{L}\sqrt{\frac{p}{T}}$ the condition~(\ref{eq:net-et-con}) reduces to  $E\leq (1-\zeta)\sqrt{\frac{T}{10 \lambda p}}
$ (see Corollary~1 in \cite{wang2018cooperative} and proof of Theorem~\ref{thm:serverless-fed-informal} in appendix). Therefore, the choice of $E=O\left(\left(\frac{1-\zeta^2}{1+\zeta^2}\right)\sqrt{\frac{T}{p^{3}}}\right)$ with $p=\Omega\left(\sqrt{\lambda}\frac{(1+\zeta^2)}{1+\zeta}\right)$ as the number of local updates at each device does not violate the learning rate condition. 
\end{remark}

\section{Conclusion and Future Directions}
\label{sec:concl&fud}
In this paper, we studied the convergence of local full and stochastic gradient descent algorithms with periodic averaging in distributed federated learning, where the distributions of data shards are heterogeneous. For general non-convex and non-convex under \pl~condition optimization problems, we established the best known convergence rates via careful analysis of learning rate and number of local updates based on the gradient diversity of local objectives. We believe that the core techniques we introduced to derive the convergence rates for biased (stochastic) gradients and coping with residual errors manifested by local updates are interesting by their own and could play a role in further theoretical analysis of distributed optimization algorithms in federated setting. 

We leave a number of issues for future research. One potential future direction is to see whether we can improve the convergence of federated algorithm with applying adaptive synchronization scheme (i.e., reducing communication period adaptively). We note that despite recent progress on analyzing the communication complexity of local methods,   a rigorous understanding of advantage of local updates from communication complexity standpoint  still remains an open question in both homogeneous and heterogeneous settings.   Another interesting future direction would be considering the effect of dynamic mini-batch size over convergence analysis, interpolating between local GD and local SGD. Furthermore, tightening the convergence analysis of variance reduced local SGD would be another potential future work as our analysis demonstrated that the vanilla local SGD, thanks to its implicit variance reduction, enjoys the same convergence rate with a recently proposed explicit variance reduction proposal.  Generalizing our analysis to agnostic setting (i.e., problem P3) to reduce the number of communications is  an interesting open question that is worthy of investigation. Finally, as our analysis demonstrated, the convergence in federated setting heavily depends on the gradient diversity of local objectives and as a result devising  effective mechanisms to reduce  diversity among local gradients to obtain faster rates would be an interesting research question.

\section*{Acknowledgment}
We would like to thank Anit Kumar Sahu for clarification on the rates obtained in~\cite{sahu2018convergence} and bringing~\cite{karimireddy2019scaffold} to our attention.  We also would like to thank Ziyi Chen for useful comments on the early version of this paper and pointing out a technical error.



\bibliographystyle{plain}
\bibliography{ref}

\newpage
\onecolumn
\appendix
\section*{Appendix}
\label{Appendix}

Before proceeding to detailed proofs, we introduce some notation for  the clarity in presentation. Recall, we use $\mathcal{P}_t$ to denote the subset of randomly selected machines/devices at  each averaging period with  $|\mathcal{P}_t| = K$, where the probability of choosing $i$th worker  is $q_i$ with $\sum_{i=1}^pq_i=1$. We use  $\mathbf{g}_i={\nabla{f}_i(\boldsymbol{w})\triangleq \nabla{f}(\boldsymbol{w};\mathcal{S}_i)}$ and $\tilde{\mathbf{g}}_i \triangleq \nabla{f}(\boldsymbol{w};\mathcal{\xi}_i)$ for ${1\leq i\leq p}$ to denote the full gradient and stochastic gradient at $i$th data shard, respectively, where $\xi_i \subseteq \mathcal{S}_i$ is a uniformly sampled mini-bath. The corresponding quantities evaluated at $i$th machines   local solution at  $t$th iteration $\boldsymbol{w}_i^{(t)}$  are denoted by $\mathbf{g}_i^{(t)}$ and $\tilde{\mathbf{g}}_i^{(t)}$. We also define the following notations   \begin{align*}{\boldsymbol{w}}^{(t)}&=\begin{bmatrix}{\boldsymbol{w}}^{(t)}_1, & \ldots, & {\boldsymbol{w}}^{(t)}_p\end{bmatrix},\\
{\xi}^{(t)}&=\{{\xi}^{(t)}_1, \ldots, {\xi}^{(t)}_p\},
\end{align*} 
to denote the set of local solutions  and sampled mini-batches at iteration $t$ at different machines, respectively.

We use notation $\mathbb{E}[\cdot]$ to denote the conditional expectation $\mathbb{E}_{{\xi}^{(t)}|{\boldsymbol{w}}^{(t)}}
[\cdot]$. We indicate the expectation over random device selection at server at each communication round  by  $\mathbb{E}_{\mathcal{P}_{t}}[\cdot]$.

 The following  short-hand notation   will be found useful in the analysis of the convergence of variants of \texttt{LFD} algorithm:
\begin{align}
    \bar{\boldsymbol{w}}^{(t)}  \triangleq\frac{1}{K}\sum_{j\in\mathcal{P}_t}{\boldsymbol{w}}^{(t)}_j,\quad
   {{\tilde{\mathbf{g}}}^{(t)}}\triangleq \frac{1}{K}\sum_{j\in\mathcal{P}_t} {{\tilde{\mathbf{g}}}^{(t)}}_j,\quad {{\mathbf{g}}^{(t)}}\triangleq \frac{1}{K}\sum_{j\in\mathcal{P}_t} {\mathbf{g}}_j^{(t)}
\end{align}
Furthermore, in what follows we assume that $\lambda$ is an upper bound on the weighted gradient diversity among local objectives, i.e., 
\begin{align}
    \Lambda(\boldsymbol{w}, \boldsymbol{q}) \triangleq \frac{\sum_{j=1}^pq_j\left\|\nabla{f}_j(\boldsymbol{w})\right\|_2^2}{\left\|\sum_{j=1}^pq_j\nabla{f}_j(\boldsymbol{w})\right\|_2^2}  \leq \lambda
\end{align}
Finally, recall that the updating rule for the proposed  local decent algorithms is as follows:
\begin{align}
    \bar{\boldsymbol{w}}^{(t+1)}=\bar{\boldsymbol{w}}^{(t)}-\eta_t\boldsymbol{d}^{(t)},\label{eq:dec-up-rule}
\end{align}
where $\boldsymbol{d}^{(t)}\triangleq\frac{1}{K}\sum_{j\in\mathcal{P}_t}{\boldsymbol{d}}^{(t)}_j$ with $\boldsymbol{d}^{(t)}_j$ being either stochastic or full gradient  computed at $j$th machine at iteration $t$. From the updating rule in Eq.~(\ref{eq:dec-up-rule}) and assumption on the $L$-smoothness of the objective function, we have the following inequality:
\begin{align}
    f(\bar{\boldsymbol{w}}^{(t+1)})-f(\bar{\boldsymbol{w}}^{(t)})\leq -\eta_t \big\langle\nabla f(\bar{\boldsymbol{w}}^{(t)}),\boldsymbol{d}^{(t)}\big\rangle+\frac{\eta_t^2 L}{2}\|\boldsymbol{d}^{(t)}\|^2\label{eq:decent-smoothe}
\end{align}
that will be used frequently in our proofs.


\section{Proof of Theorem~\ref{thm:gd}}
\label{sec:app:proof:gd}
\textcolor{black}{In this section we prove the convergence of \texttt{LFGD} algorithm. Towards this end,  recalling the notation ${{\mathbf{g}}}^{(t)}\triangleq \frac{1}{K}\sum_{j\in \mathcal{P}_t} \mathbf{g}_j^{(t)}$, and following the $L$-smoothness gradient assumption on global objective, by using  $\mathbf{g}^{(t)}$} in inequality (\ref{eq:decent-smoothe}) we have:
\begin{align}
    f(\bar{\boldsymbol{w}}^{(t+1)})-f(\bar{\boldsymbol{w}}^{(t)})\leq -\eta_t \big\langle\nabla f(\bar{\boldsymbol{w}}^{(t)}),\mathbf{g}^{(t)}\big\rangle+\frac{\eta_t^2 L}{2}\|{\mathbf{g}^{(t)}}\|^2\label{eq:Lipschitz-c1}
\end{align}
By taking expectation on both sides of above inequality over sampling of devices $\mathcal{P}_t$, we get:
\begin{align}
    \mathbb{E}_{\mathcal{P}_t}\Big[f(\bar{\boldsymbol{w}}^{(t+1)})-f(\bar{\boldsymbol{w}}^{(t)})\Big]\leq -\eta_t\mathbb{E}_{\mathcal{P}_t}\Big[ \big\langle\nabla f(\bar{\boldsymbol{w}}^{(t)}),\mathbf{g}^{(t)}\big\rangle\Big]+\frac{\eta_t^2 L}{2}\mathbb{E}_{\mathcal{P}_t}\Big[\|{\mathbf{g}^{(t)}}\|^2\Big]\label{eq:Lipschitz-c-gd}
\end{align}
We note that for the convergence of \texttt{LFGD} we do not have mini-batch sampling as local machines use the full gradient over their local data, therefore,  the expectation is only taken with respect to the randomness is selection of $K$ devices with sampling probabilities $q_1, q_2, \ldots, q_p$. To simplify the notation, in what follows we drop $\mathcal{P}_t$ and simply use $\mathbb{E}[\cdot]$ to denote expectation with respect to this randomness.

The following lemma bounds the second term in right hand side of~(\ref{eq:Lipschitz-c-gd}) by relating the averaged gradient over sampled machines to the full gradient of individual local data shards.
\begin{lemma}\label{lemma:tasbih1-gd}
For the local federated GD algorithm~(\texttt{LFGD}), we have the following bound: 
\begin{align}
    \mathbb{E}\left[\|{\mathbf{g}}^{(t)}\|^2\right]&\leq    {\sum_{j=1}^pq_j\|\nabla{f}_j(\boldsymbol{w}_j^{(t)})\|^2}\nonumber\\
    &\leq \lambda\|{\sum_{j=1}^pq_j\nabla{f}_j(\boldsymbol{w}_j^{(t)})\|^2} \label{eq:lemma1}
\end{align}

\end{lemma}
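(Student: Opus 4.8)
The plan is to prove the two inequalities in Lemma~\ref{lemma:tasbih1-gd} separately, since the second one is essentially just the definition of the gradient diversity upper bound $\lambda$ applied pointwise at the local iterates. I would start from the definition $\mathbf{g}^{(t)} = \frac{1}{K}\sum_{j\in\mathcal{P}_t}\mathbf{g}_j^{(t)}$ and compute its expected squared norm over the random device selection $\mathcal{P}_t$, where each of the $K$ indices is drawn independently (with replacement) according to the probabilities $q_1,\ldots,q_p$.

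First I would expand $\mathbb{E}\big[\|\mathbf{g}^{(t)}\|^2\big] = \frac{1}{K^2}\mathbb{E}\big[\|\sum_{j\in\mathcal{P}_t}\mathbf{g}_j^{(t)}\|^2\big]$ and split the double sum over the $K$ sampled indices into diagonal terms (same sampled slot) and off-diagonal terms (distinct slots). Writing $S_k$ for the random index drawn into slot $k$, the diagonal contributions give $\frac{1}{K^2}\cdot K\cdot \sum_{j=1}^p q_j\|\nabla f_j(\boldsymbol{w}_j^{(t)})\|^2$ since each slot independently satisfies $\mathbb{E}\big[\|\mathbf{g}_{S_k}^{(t)}\|^2\big] = \sum_{j=1}^p q_j\|\nabla f_j(\boldsymbol{w}_j^{(t)})\|^2$, and the off-diagonal contributions give $\frac{1}{K^2}\cdot K(K-1)\cdot \|\sum_{j=1}^p q_j\nabla f_j(\boldsymbol{w}_j^{(t)})\|^2$ by independence, since $\mathbb{E}\big[\mathbf{g}_{S_k}^{(t)}\big] = \sum_{j=1}^p q_j\nabla f_j(\boldsymbol{w}_j^{(t)})$ for each slot. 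Collecting these, one obtains
\begin{align*}
\mathbb{E}\big[\|\mathbf{g}^{(t)}\|^2\big] = \frac{1}{K}\sum_{j=1}^p q_j\|\nabla f_j(\boldsymbol{w}_j^{(t)})\|^2 + \frac{K-1}{K}\Big\|\sum_{j=1}^p q_j\nabla f_j(\boldsymbol{w}_j^{(t)})\Big\|^2.
\end{align*}
To reach the stated first inequality, I would bound the last term using the fact that the average is a convex combination, so by Jensen's inequality $\|\sum_j q_j\nabla f_j(\boldsymbol{w}_j^{(t)})\|^2 \leq \sum_j q_j\|\nabla f_j(\boldsymbol{w}_j^{(t)})\|^2$; then both the $\frac{1}{K}$ and $\frac{K-1}{K}$ coefficients combine to at most $\sum_{j=1}^p q_j\|\nabla f_j(\boldsymbol{w}_j^{(t)})\|^2$, giving the first line of~(\ref{eq:lemma1}).

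For the second inequality, I would simply invoke the weighted gradient diversity bound $\Lambda(\boldsymbol{w},\boldsymbol{q})\leq\lambda$ from Definition~\ref{def:gd-d}. Applied at the point configuration giving the local iterates, the definition yields $\sum_{j=1}^p q_j\|\nabla f_j(\boldsymbol{w}_j^{(t)})\|^2 \leq \lambda\,\|\sum_{j=1}^p q_j\nabla f_j(\boldsymbol{w}_j^{(t)})\|^2$, which is exactly the claimed bound. The only subtlety here is that the gradient diversity is defined with all $f_j$ evaluated at a common point $\boldsymbol{w}$, whereas we have each $\nabla f_j$ evaluated at its own local iterate $\boldsymbol{w}_j^{(t)}$; I expect this to be the main point requiring care, and it is presumably handled either by taking $\lambda$ to be a uniform upper bound over all configurations of arguments or by an implicit convention in the paper that the diversity bound holds for the per-device iterates appearing in the local updates. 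Assuming that convention, the second inequality is immediate and the lemma follows by chaining the two bounds.
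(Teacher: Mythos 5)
Your proof is correct, and it takes a somewhat different route from the paper's. The paper bounds the random quantity before taking any expectation: it applies the elementary inequality $\|\sum_{i=1}^m\mathbf{a}_i\|^2\leq m\sum_{i=1}^m\|\mathbf{a}_i\|^2$ to get $\|\mathbf{g}^{(t)}\|^2\leq \frac{1}{K}\sum_{j\in\mathcal{P}_t}\|\mathbf{g}_j^{(t)}\|^2$ pathwise, and then invokes Fact~\ref{fact:1} to convert the expectation of the sum over the sampled multiset into $\sum_{j=1}^pq_j\|\mathbf{g}_j^{(t)}\|^2$. You instead compute $\mathbb{E}\big[\|\mathbf{g}^{(t)}\|^2\big]$ exactly, exploiting the independence of the $K$ with-replacement draws to separate diagonal and off-diagonal terms, arriving at the identity $\frac{1}{K}\sum_jq_j\|\mathbf{g}_j^{(t)}\|^2+\frac{K-1}{K}\|\sum_jq_j\mathbf{g}_j^{(t)}\|^2$, and only then apply Jensen to the second term. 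Your computation is strictly more informative: it exhibits the exact second moment and shows the paper's bound discards a factor that would be as small as $\frac{1}{K}$ on the ``variance'' part plus a term already controlled by the diversity ratio, whereas the paper's pathwise bound is shorter but coarser. Both arguments land on the same first inequality, and both handle the second inequality identically, by invoking the gradient diversity upper bound $\lambda$. You are also right to flag the one real subtlety: Definition~\ref{def:gd-d} is stated for all $f_j$ evaluated at a common point $\boldsymbol{w}$, while the lemma needs the ratio bound for the configuration where each $\nabla f_j$ is evaluated at its own iterate $\boldsymbol{w}_j^{(t)}$. The paper's own proof silently assumes $\lambda$ bounds this per-device-iterate ratio as well, so your reading of the convention matches what the authors actually use.
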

The following lemma upper bounds the first term in right-hand side of~(\ref{eq:Lipschitz-c-gd}). 
\begin{corollary}\label{corol:pl-full-gd}Under Assumptions~\ref{Ass:1} and~\ref{Ass:3}, and according to the Algorithm~\ref{Alg:one-shot-using data samoples} we have:

  \begin{align}
    -\eta \mathbb{E}\Big[\big[\langle\nabla f(\bar{\boldsymbol{w}}^{(t)}),{{\mathbf{g}}}^{(t)}\rangle\big]\Big]&\leq -\mu \eta_t(f(\bar{\boldsymbol{w}}^{(t)})-f^*)-\frac{\eta_t}{2}\|\sum_{j=1}^pq
   _j\nabla{f}_j({\boldsymbol{w}}^{(t)}_{j})\|^2+\frac{\eta_t L^2}{2}\sum_{j=1}^pq_j\|\bar{\boldsymbol{w}}^{(t)}-\boldsymbol{w}_j^{(t)}\|^2,
\end{align}
\end{corollary}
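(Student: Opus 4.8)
The plan is to collapse the inner product onto a single polarization identity and then invoke the two structural assumptions (PL and $L$-smoothness) term by term. First I would evaluate the expectation over the device sampling. Since $\mathcal{P}_t$ is drawn with replacement with device $j$ selected with probability $q_j$ and $|\mathcal{P}_t|=K$, each summand in $\mathbf{g}^{(t)}=\frac{1}{K}\sum_{j\in\mathcal{P}_t}\mathbf{g}_j^{(t)}$ has mean $\sum_{j=1}^p q_j\mathbf{g}_j^{(t)}$, so $\mathbb{E}[\mathbf{g}^{(t)}]=\sum_{j=1}^p q_j\nabla f_j(\boldsymbol{w}_j^{(t)})$. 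Because the per-device local models $\boldsymbol{w}_j^{(t)}$ are computed agnostically to the sampling (they coincide with the $K=p$ run), conditioning on the current averaged iterate $\bar{\boldsymbol{w}}^{(t)}$ lets me treat $\nabla f(\bar{\boldsymbol{w}}^{(t)})$ as fixed and obtain $\mathbb{E}[\langle\nabla f(\bar{\boldsymbol{w}}^{(t)}),\mathbf{g}^{(t)}\rangle]=\langle\nabla f(\bar{\boldsymbol{w}}^{(t)}),\sum_{j=1}^p q_j\nabla f_j(\boldsymbol{w}_j^{(t)})\rangle$.

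Next, writing $\nabla f(\bar{\boldsymbol{w}}^{(t)})=\sum_{j=1}^p q_j\nabla f_j(\bar{\boldsymbol{w}}^{(t)})$ from the definition of $f$, I would apply the polarization identity $\langle a,b\rangle=\frac{1}{2}\|a\|^2+\frac{1}{2}\|b\|^2-\frac{1}{2}\|a-b\|^2$ with $a=\nabla f(\bar{\boldsymbol{w}}^{(t)})$ and $b=\sum_{j}q_j\nabla f_j(\boldsymbol{w}_j^{(t)})$. Multiplying by $-\eta_t$ yields three pieces: $-\frac{\eta_t}{2}\|\nabla f(\bar{\boldsymbol{w}}^{(t)})\|^2$, the term $-\frac{\eta_t}{2}\|\sum_j q_j\nabla f_j(\boldsymbol{w}_j^{(t)})\|^2$ which already matches the target verbatim, and $+\frac{\eta_t}{2}\|\sum_j q_j\bigl(\nabla f_j(\bar{\boldsymbol{w}}^{(t)})-\nabla f_j(\boldsymbol{w}_j^{(t)})\bigr)\|^2$.

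I would then dispatch the remaining two pieces using the assumptions. For the first, the PL condition (Assumption~\ref{Ass:3}), $\frac{1}{2}\|\nabla f(\bar{\boldsymbol{w}}^{(t)})\|^2\geq\mu(f(\bar{\boldsymbol{w}}^{(t)})-f^*)$, immediately gives $-\frac{\eta_t}{2}\|\nabla f(\bar{\boldsymbol{w}}^{(t)})\|^2\leq-\mu\eta_t(f(\bar{\boldsymbol{w}}^{(t)})-f^*)$. For the third, Jensen's inequality applied to the convex map $\|\cdot\|^2$ with the weights $q_j$ (which sum to one) followed by the $L$-smoothness of each $f_j$ (Assumption~\ref{Ass:1}) yields $\|\sum_j q_j(\nabla f_j(\bar{\boldsymbol{w}}^{(t)})-\nabla f_j(\boldsymbol{w}_j^{(t)}))\|^2\leq\sum_j q_j\|\nabla f_j(\bar{\boldsymbol{w}}^{(t)})-\nabla f_j(\boldsymbol{w}_j^{(t)})\|^2\leq L^2\sum_j q_j\|\bar{\boldsymbol{w}}^{(t)}-\boldsymbol{w}_j^{(t)}\|^2$. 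Collecting the three bounds reproduces the stated inequality exactly.

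The main obstacle I anticipate is the very first step, namely justifying rigorously that the expectation over the random set $\mathcal{P}_t$ may be moved inside the inner product so that $\mathbb{E}[\mathbf{g}^{(t)}]$ replaces $\mathbf{g}^{(t)}$ while $\nabla f(\bar{\boldsymbol{w}}^{(t)})$ is held fixed, given that $\bar{\boldsymbol{w}}^{(t)}$ nominally also depends on $\mathcal{P}_t$. The resolution is to exploit the agnostic nature of the local updates — the trajectory $\{\boldsymbol{w}_j^{(t)}\}$ does not depend on the selection — and to condition on the current averaged model, so that the only randomness entering the bound is which devices contribute to $\mathbf{g}^{(t)}$; everything downstream of the polarization identity is then a deterministic convexity-plus-smoothness computation.
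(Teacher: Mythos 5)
Your proposal is correct and follows essentially the same route as the paper: expectation over the with-replacement sampling (the paper's Fact~\ref{fact:1}) to replace $\mathbf{g}^{(t)}$ by $\sum_j q_j\nabla f_j(\boldsymbol{w}_j^{(t)})$, the polarization identity $2\langle \mathbf{a},\mathbf{b}\rangle=\|\mathbf{a}\|^2+\|\mathbf{b}\|^2-\|\mathbf{a}-\mathbf{b}\|^2$, Jensen plus $L$-smoothness on the difference term, and the PL condition on $-\frac{\eta_t}{2}\|\nabla f(\bar{\boldsymbol{w}}^{(t)})\|^2$. The sampling subtlety you flag is handled exactly as you suggest (device-agnostic updates plus conditioning), so no gap remains.
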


The next lemma shows that on average how much the local solutions deviate from the average solution. Recall that the average solution  is calculated periodically after every $E$ local iterations and per iteration virtual average $\bar{\boldsymbol{w}}^{(t)}$ is introduced for analysis purposes. 
\begin{lemma}\label{lemma:dif-under-pl-gd}
 For \texttt{LFGD} algorithm, we have the following bound on the difference of virtual averaged solution and the individual local solutions:
\begin{align}
         \mathbb{E}\left[\sum_{j=1}^pq_j\|\bar{\boldsymbol{w}}^{(t)}-\boldsymbol{w}_j^{(t)}\|^2\right]&\leq 4E\sum_{k=t_c + 1}^{t-1}\eta_k^2\sum_{j=1}^pq_j\|\
\nabla{f}_j(\boldsymbol{w}_j^{(k)})\|^2,\nonumber\\
&\leq4E\lambda\sum_{k=t_c + 1}^{t-1}\eta_k^2\|\sum_{j=1}^pq_j\
\nabla{f}_j(\boldsymbol{w}_j^{(k)})\|^2
\end{align}
where $t_c\triangleq \floor{\frac{t}{E}}E$ denotes the most recent communication round,  and expectation $\mathbb{E}[\cdot]$ is taken with respect to sampling of devices at each communication round.
\end{lemma}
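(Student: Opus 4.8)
The plan is to bound the deviation $\|\bar{\boldsymbol{w}}^{(t)}-\boldsymbol{w}_j^{(t)}\|^2$ by unrolling the local update recursion back to the most recent communication round $t_c$. Since all devices are synchronized at $t_c$ (the server broadcasts a common model $\bar{\boldsymbol{w}}^{(t_c)}$), we have $\boldsymbol{w}_j^{(t_c)} = \bar{\boldsymbol{w}}^{(t_c)}$ for every $j$, so both the virtual average and each local iterate start from the same point after the averaging step. First I would write each local iterate as a telescoping sum of gradient steps, $\boldsymbol{w}_j^{(t)} = \bar{\boldsymbol{w}}^{(t_c)} - \sum_{k=t_c+1}^{t-1}\eta_k \nabla f_j(\boldsymbol{w}_j^{(k)})$, and similarly express $\bar{\boldsymbol{w}}^{(t)}$ as the average of these over the sampled devices. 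Subtracting, the common starting point $\bar{\boldsymbol{w}}^{(t_c)}$ cancels, leaving a difference of accumulated local gradient sums.

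The core estimate is then a Jensen/Cauchy--Schwarz expansion of the squared norm of this accumulated sum. I would apply the inequality $\|\sum_{k=1}^{m} \boldsymbol{a}_k\|^2 \leq m \sum_{k=1}^{m}\|\boldsymbol{a}_k\|^2$ to the sum over the at most $E-1$ local steps since the last communication (this is the source of the factor $E$ in front). Taking the device-weighted average $\sum_j q_j(\cdot)$ and noting that the cross terms between $\boldsymbol{w}_j^{(t)}$ and its average vanish in expectation (or are dominated), this yields the first displayed bound
\begin{align*}
\mathbb{E}\left[\sum_{j=1}^pq_j\|\bar{\boldsymbol{w}}^{(t)}-\boldsymbol{w}_j^{(t)}\|^2\right]\leq 4E\sum_{k=t_c+1}^{t-1}\eta_k^2\sum_{j=1}^pq_j\|\nabla f_j(\boldsymbol{w}_j^{(k)})\|^2.
\end{align*}
The constant $4$ (rather than $1$) absorbs the loss from bounding the variance of the average across the randomly sampled set $\mathcal{P}_t$ against the full weighted sum over all $p$ devices; since sampling is with replacement according to $\boldsymbol{q}$, the expectation over $\mathcal{P}_t$ of the per-device squared deviation reduces to the weighted average $\sum_j q_j\|\cdot\|^2$, and the extra factor accommodates both the $\|\bar{\boldsymbol{w}}^{(t)}-\boldsymbol{w}_j^{(t)}\|^2$ grouping and the mismatch between sampled and population averages.

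The second inequality is then immediate: I simply invoke the gradient-diversity bound from Definition~\ref{def:gd-d}, namely $\sum_{j=1}^p q_j\|\nabla f_j(\boldsymbol{w})\|^2 \leq \lambda \|\sum_{j=1}^p q_j \nabla f_j(\boldsymbol{w})\|^2$, applied pointwise at each $\boldsymbol{w}_j^{(k)}$ inside the summation over $k$. I expect the main obstacle to be the careful handling of the expectation over device sampling $\mathcal{P}_t$: one must verify that the unbiasedness of with-replacement sampling lets the sampled average be controlled by the population weighted average without introducing uncontrolled cross-terms, and track exactly where the constant $4$ and the factor $E$ enter so the recursion closes cleanly when fed back into Corollary~\ref{corol:pl-full-gd}. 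The gradient-diversity substitution itself is routine once the sampling bookkeeping is done.
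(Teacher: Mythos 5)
Your proposal is correct and follows essentially the same route as the paper: unroll the local updates back to the last synchronization point so the common iterate cancels, apply $\|\mathbf{a}-\mathbf{b}\|^2\leq 2(\|\mathbf{a}\|^2+\|\mathbf{b}\|^2)$ and then $\|\sum_{k=1}^{m}\mathbf{a}_k\|^2\leq m\sum_{k=1}^{m}\|\mathbf{a}_k\|^2$ over the at most $E$ local steps, use the with-replacement sampling fact to turn the sampled average into the $q$-weighted population average, and finish with the gradient-diversity bound. The only slight imprecision is your account of the constant $4$: in the paper it arises as $2\times 2$ because, after taking the expectation over $\mathcal{P}_t$ and the $q_l$-weighted average over devices, the two terms produced by the Young-inequality split each equal the same quantity $r\sum_j q_j\sum_k\eta_k^2\|\nabla f_j(\boldsymbol{w}_j^{(k)})\|^2$ (no cross terms need to vanish in expectation; they are simply dropped deterministically).
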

Note that this lemma implies that the term $\sum_{j=1}^pq_j\|\bar{\boldsymbol{w}}^{(t)}-\boldsymbol{w}_j^{(t)}\|^2$ only depends on the iterations $t_c+1$ through $t-1$, \textcolor{black}{thanks to restarting property of local GD algorithm}.


By plugging back all the above lemmas and corollary into (\ref{eq:Lipschitz-c-gd}), and considering a fixed learning rate $\eta_1 = \ldots = \eta_T = \eta$ for all iterations we get: 
\begin{align}
\mathbb{E}\left[f(\bar{\boldsymbol{w}}^{(t+1)})\right]-f^{*}&\leq (1-\mu\eta)\mathbb{E}\left[f(\bar{\boldsymbol{w}}^{(t)})-f^*\right] + \frac{\eta}{2}\left(-1+L\lambda\eta\right)\|\sum_{j=1}^pq_j\
\nabla{f}_j(\boldsymbol{w}_j^{(t)})\|^2\nonumber\\
    &\quad+\frac{\eta L^2}{2}\left[4E\lambda\sum_{k=t_c+1}^{t-1}\eta^2\|\sum_{j=1}^pq_j\
\nabla{f}_j(\boldsymbol{w}_j^{(k)})\|^2\right]\nonumber\\
&\stackrel{\text{\ding{192}}}{=}\Delta\mathbb{E}\left[f(\bar{\boldsymbol{w}}^{(t)})-f^*\right]+\frac{\eta}{2}\left(-1+L\lambda\eta\right)\|\sum_{j=1}^pq_j\
\nabla{f}_j(\boldsymbol{w}_j^{(t)})\|^2 \nonumber\\
&\quad+{B}\sum_{k=t_c+1}^{t-1}\eta^2\|\sum_{j=1}^pq_j\
\nabla{f}_j(\boldsymbol{w}_j^{(k)})\|^2,\label{eq:final-ineq-to-sum-gd}
\end{align}
where in \ding{192} we use the following abbreviations: 
\begin{align}
  \Delta&\triangleq 1-\mu\eta\\
   {B}&\triangleq 2\lambda\eta L^2E,\label{eq:middle-step}
\end{align}


In the following lemma, we show that with proper choice of learning rate the negative coefficient of the $\|\sum_{j=1}^pq_j\nabla{f}_j(\boldsymbol{w}_j^{(t)})\|_2^2$ can be dominant at each local computation period. Thus, we can remove the terms including $\|\sum_{j=1}^pq_j\nabla{f}_j(\boldsymbol{w}_j^{(t)})\|_2^2$ from the bound in (\ref{eq:final-ineq-to-sum-gd}) and obtain the desired convergence rate. To this end, we first derive a condition on learning rate that the terms including $\|\sum_{j=1}^pq_j\nabla{f}_j(\boldsymbol{w}_j^{(t)})\|_2^2$ vanish, and then state proper choices for learning rate as a function of number of local updates $E$ that guarantees the desired condition on learning rate to hold. 
\begin{lemma}\label{lemma:simplification-constant}
  Let $t_c\triangleq \floor{\frac{t}{E}}E,\{a_t\}_{t}, a_{t}\geq 0,\{e_t\}_{t}, e_{t}\geq 0$ be sequences satisfying 
  \begin{align}
      a_{t+1}\leq (1-\mu\eta)a_t+\frac{\eta}{2}\Big(-1+L\lambda\eta\Big)e_t+B\sum_{k=t_c+1}^{t-1}\eta^2e_k\label{eq:tasbih3}
  \end{align}
Then,  if
\begin{align}
    \eta\leq \frac{1}{L\lambda+\frac{2B}{\Delta^{E-1}}\left[\frac{1-\Delta^{E-1}}{1-\Delta}\right]}\label{eq:lr-c-gd}
\end{align}  
holds for $\Delta=1-\mu \eta$ and $B= 2\eta L^2 E$,  the recursive relation in~(\ref{eq:tasbih3}) reduces to
$$a_{t+1}\leq (1-\mu\eta)a_t.$$
\end{lemma}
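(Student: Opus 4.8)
The plan is to prove the lemma by unrolling the recursion~(\ref{eq:tasbih3}) across one full communication period $[t_c, t_c+E)$ and showing that, under the stated bound on $\eta$, the accumulated nonnegative terms $B\eta^2\sum_k e_k$ are dominated by the negative terms $\frac{\eta}{2}(-1+L\lambda\eta)e_t$, leaving a pure geometric contraction. Writing $c_t \triangleq \frac{\eta}{2}(-1+L\lambda\eta)e_t + B\eta^2\sum_{k=t_c+1}^{t-1}e_k$, the recursion reads $a_{t+1}\leq \Delta a_t + c_t$ with $\Delta = 1-\mu\eta$. First I would record that the hypothesis forces $0<\Delta<1$ and $1-L\lambda\eta\geq 0$ (since the right-hand side of~(\ref{eq:lr-c-gd}) is at most $\tfrac{1}{L\lambda}\leq\tfrac1\mu$ because $L\lambda\geq\mu$), so that multiplying or dividing by powers of $\Delta$ preserves the direction of inequalities.

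The main device is the rescaling $\tilde a_t \triangleq a_t/\Delta^t$, which turns the recursion into $\tilde a_{t+1}\leq \tilde a_t + c_t/\Delta^{t+1}$ and telescopes over a period to $\tilde a_{t_c+E}\leq \tilde a_{t_c} + \sum_{t=t_c}^{t_c+E-1} c_t/\Delta^{t+1}$. I would then substitute the definition of $c_t$ and swap the order of summation in the double sum, so that each $e_k$ is collected with a single net coefficient: one negative contribution $\frac{\eta}{2}(-1+L\lambda\eta)\Delta^{-(k+1)}$ from the direct term at $t=k$, plus the positive contributions $B\eta^2\sum_{t>k}\Delta^{-(t+1)}$ inherited from every later iteration of the period whose inner sum still references $e_k$.

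The heart of the argument is to show that each such net coefficient is nonpositive. After factoring out $\Delta^{-(k+1)}>0$, this reduces to $\frac{\eta}{2}(1-L\lambda\eta)\geq B\eta^2\sum_{m\geq 1}\Delta^{-m}$, where the geometric sum runs over the iterations of the period lying after $k$. Since that count is largest for the earliest index in the period, the binding constraint is this worst case, for which the geometric sum equals (up to the precise period indexing) $\sum_{l=1}^{E-1}\Delta^{-l}=\frac{1}{\Delta^{E-1}}\cdot\frac{1-\Delta^{E-1}}{1-\Delta}$ by the finite geometric identity together with $1-\Delta=\mu\eta$. Rearranging $\frac{1}{2}(1-L\lambda\eta)\geq B\eta\cdot\frac{1}{\Delta^{E-1}}\frac{1-\Delta^{E-1}}{1-\Delta}$ gives exactly $\eta\big(L\lambda+\frac{2B}{\Delta^{E-1}}\frac{1-\Delta^{E-1}}{1-\Delta}\big)\leq 1$, i.e.\ condition~(\ref{eq:lr-c-gd}). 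Hence all coefficients are nonpositive, $\sum_t c_t/\Delta^{t+1}\leq 0$, so $a_{t_c+E}\leq \Delta^E a_{t_c}$; chaining this contraction across consecutive periods is equivalent to the claimed per-step bound $a_{t+1}\leq(1-\mu\eta)a_t$, since both iterate to $a_T\leq \Delta^T a_0\leq e^{-\mu\eta T}a_0$.

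The step I expect to be most delicate is the bookkeeping of the order-swap and the exact limits of the geometric sum: pinning down the period boundaries and the inner-sum range is precisely what produces the exponent $E-1$ in~(\ref{eq:lr-c-gd}) rather than an off-by-one neighbour, and it is where the restarting structure of the algorithm (the inner sum reaching back only to $t_c+1$) is essential — without the periodic reset the sums would accumulate over all past iterations and no fixed, positive $\eta$ would suffice to keep every coefficient nonpositive.
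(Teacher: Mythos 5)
Your proposal is correct and is essentially the paper's argument in a different packaging: the paper unrolls the recursion backward within one period, exposing the coefficient of each $e_k$ one at a time and imposing a cascade of progressively tighter learning-rate conditions whose tightest member (via the geometric identity $\Delta^{E-2}+\cdots+1=\frac{1-\Delta^{E-1}}{1-\Delta}$) is exactly (\ref{eq:lr-c-gd}), whereas you rescale by $\Delta^{-t}$, telescope, and swap the order of summation to exhibit all the coefficients at once — algebraically the same bookkeeping, with the same worst case at the earliest index of the period. Your closing observation is also consistent with the paper: both proofs really establish the per-period contraction $a_{t_c+E}\leq\Delta^{E}a_{t_c}$ rather than the literal per-step inequality, which suffices for the bound $a_T\leq\Delta^T a_0$ used in Theorem~\ref{thm:gd}.
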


An immediate implication of Lemma~\ref{lemma:simplification-constant}, with $a_t = \mathbb{E}\left[f(\bar{\boldsymbol{w}}^{(t)})-f^*\right]$ and $e_k = \|\sum_{j=1}^pq_j
\nabla{f}_j(\boldsymbol{w}_j^{(k)})\|^2$, is that the inequality in~(\ref{eq:final-ineq-to-sum-gd}) can be simplified as follows:
\begin{align}
    \mathbb{E}\left[f(\bar{\boldsymbol{w}}^{(t+1)})\right]-f^*&\leq \Delta\mathbb{E}\left[f(\bar{\boldsymbol{w}}^{(t)})-f^*\right]
\end{align}
Summing up the above inequality over $t = 1, 2, \ldots, T$  gives:
\begin{align}
     \mathbb{E}\left[f(\bar{\boldsymbol{w}}^{(T)})\right]-f^*&\leq (1-\mu\eta)^T\mathbb{E}\left[f(\bar{\boldsymbol{w}}^{(0)})-f^*\right]
\end{align}
as stated in the theorem.

To complete the proof, we only left with showing that the condition in~(\ref{eq:lr-c-gd}) can be satisfied. Indeed, the following lemma shows that  if the learning rate is chosen properly based on number of local updates $E$ the condition holds.
  \begin{lemma}\label{lemma:2nd-cond-gd}
If learning rate $\eta$ and the number of local updates $E$ satisfy the condition
  \begin{align}
      \eta\left(L+\frac{4L^2E}{\mu {(1-\mu\eta)}^{E-1}}\right)\leq \frac{1}{\lambda}
  \end{align}
the condition~(\ref{eq:lr-c-gd}) is implied.

  \end{lemma}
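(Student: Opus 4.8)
The plan is to derive the implication by direct substitution followed by a single monotonicity comparison, since condition~(\ref{eq:lr-c-gd}) and the hypothesis are both upper bounds on $\eta$ that, once written in product form, differ in exactly one factor. First I would rewrite~(\ref{eq:lr-c-gd}) equivalently as $\eta\big(L\lambda+\tfrac{2B}{\Delta^{E-1}}\tfrac{1-\Delta^{E-1}}{1-\Delta}\big)\le 1$ and substitute the stated definitions $\Delta=1-\mu\eta$ (so that $1-\Delta=\mu\eta$) and $B=2\eta L^2E$. The key algebraic cancellation is that the factor $\eta$ hidden inside $B$ exactly cancels the $\mu\eta$ arising from $1-\Delta$, yielding
\begin{align}
\frac{2B}{\Delta^{E-1}}\cdot\frac{1-\Delta^{E-1}}{1-\Delta}=\frac{4L^2E}{\mu(1-\mu\eta)^{E-1}}\big(1-(1-\mu\eta)^{E-1}\big).
\end{align}
Hence~(\ref{eq:lr-c-gd}) is equivalent to $\eta\big(L\lambda+\tfrac{4L^2E}{\mu(1-\mu\eta)^{E-1}}(1-(1-\mu\eta)^{E-1})\big)\le1$.

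Next I would put the hypothesis into the same shape by multiplying $\eta\big(L+\tfrac{4L^2E}{\mu(1-\mu\eta)^{E-1}}\big)\le\tfrac1\lambda$ through by $\lambda$, which gives $\eta\big(L\lambda+\tfrac{4L^2E\lambda}{\mu(1-\mu\eta)^{E-1}}\big)\le1$. Comparing the two parenthesized expressions, they are identical except that the rewritten~(\ref{eq:lr-c-gd}) carries the factor $1-(1-\mu\eta)^{E-1}$ precisely where the hypothesis carries $\lambda$. It therefore suffices to verify $1-(1-\mu\eta)^{E-1}\le\lambda$, because both bounds equal $\eta$ times a positive quantity and the left one's quantity is no larger.

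The remaining ingredient is the elementary chain $1-(1-\mu\eta)^{E-1}\le 1\le\lambda$. The first inequality uses $0<\mu\eta<1$, so that $(1-\mu\eta)^{E-1}\in(0,1)$ — a range that is implicit throughout, since the contraction factor $\Delta=1-\mu\eta$ must lie in $(0,1)$ for the recursion to make sense; the second uses $\lambda\ge1$, which holds because the weighted gradient diversity satisfies $\Lambda(\boldsymbol{w},\boldsymbol{q})\ge1$ for every $\boldsymbol{w}$ by Jensen's inequality applied to the convex map $\boldsymbol{v}\mapsto\|\boldsymbol{v}\|^2$ over the probability weights $\boldsymbol{q}$, so any valid upper bound $\lambda$ is at least $1$. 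Chaining these, the rewritten left-hand side of~(\ref{eq:lr-c-gd}) is dominated by that of the hypothesis, which is at most $1$, establishing~(\ref{eq:lr-c-gd}). I expect no genuine obstacle: the only nonroutine observation is that $\lambda\ge1$ renders the diversity-dependent factor harmless, so that the stronger hypothesis — carrying $\lambda$ in place of the sub-unit quantity $1-(1-\mu\eta)^{E-1}$ — comfortably implies the target learning-rate condition.
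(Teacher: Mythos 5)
Your proof is correct and follows essentially the same route as the paper's: after substituting $\Delta=1-\mu\eta$ and the definition of $B$ into~(\ref{eq:lr-c-gd}), both arguments reduce to the single observation that $1-(1-\mu\eta)^{E-1}\leq 1$ (the paper phrases this as $\frac{1-\Delta^{E-1}}{1-\Delta}\leq\frac{1}{\mu\eta}$ and then lower-bounds the right-hand side of~(\ref{eq:lr-c-gd})). The only wrinkle is that you took $B=2\eta L^2E$ as literally stated in Lemma~\ref{lemma:simplification-constant}, whereas the recursion~(\ref{eq:final-ineq-to-sum-gd}) and the paper's own proof of this lemma use $B=2\lambda\eta L^2E$; your extra step $\lambda\geq 1$ (correctly justified via Jensen's inequality applied to $\Lambda(\boldsymbol{w},\boldsymbol{q})$) cleanly absorbs that discrepancy, and with the paper's version of $B$ it would not even be needed.
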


\section{Proof of Theorem~\ref{thm:proof-undrr-pl}}
\label{sec:app:sgd:undrr-pl}
In this section we prove the convergence rate of  \texttt{LFSGD} algorithm with stochastic local mini-batch gradients for non-convex objectives satisfying the PL condition. But first,  we state a sequence of key  lemmas that will be used as  the building blocks of  convergence proof. We start with a basic lemma  which forms the main ground for proof. 
\begin{lemma}\label{lemm:-1}
Under Assumption~\ref{Ass:1} we have: 
\begin{align}
    \mathbb{E}\Big[\mathbb{E}_{\mathcal{P}_t}\big[f(\bar{\boldsymbol{w}}^{(t+1)})-f(\bar{\boldsymbol{w}}^{(t)})\big]\Big]\leq -\eta_t \mathbb{E}\Big[\mathbb{E}_{\mathcal{P}_t}\big[\big\langle\nabla f(\bar{\boldsymbol{w}}^{(t)}),{\tilde{\mathbf{g}}}^{(t)}\big\rangle\big]\Big]+\frac{\eta_t^2 L}{2}\mathbb{E}\Big[\mathbb{E}_{\mathcal{P}_t}\big[\|{{\tilde{\mathbf{g}}}^{(t)}}\|^2\big]\Big]\label{eq:Lipschitz-c0}
\end{align}
\end{lemma}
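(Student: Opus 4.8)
The plan is to obtain this inequality directly from the $L$-smoothness of the global objective together with the quadratic upper bound and the monotonicity of expectation; no probabilistic structure beyond linearity of the two expectation operators is required, so this lemma is purely a smoothness statement that sets up the later, harder bounds.

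First I would establish that the global objective $f(\boldsymbol{w}) = \sum_{j=1}^p q_j f_j(\boldsymbol{w})$ is itself $L$-smooth. This is immediate from Assumption~\ref{Ass:1}: each $f_j$ is $L$-smooth and $\{q_j\}$ forms a convex combination ($q_j \geq 0$, $\sum_j q_j = 1$), so by the triangle inequality
\[
\|\nabla f(\boldsymbol{u}) - \nabla f(\boldsymbol{v})\| \leq \sum_{j=1}^p q_j \|\nabla f_j(\boldsymbol{u}) - \nabla f_j(\boldsymbol{v})\| \leq L\|\boldsymbol{u}-\boldsymbol{v}\|.
\]

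Next I would invoke the descent inequality already recorded as~(\ref{eq:decent-smoothe}) in the preliminaries, which is exactly the consequence of $L$-smoothness of $f$: applying the quadratic upper bound $f(\boldsymbol{y}) \leq f(\boldsymbol{x}) + \langle \nabla f(\boldsymbol{x}), \boldsymbol{y}-\boldsymbol{x}\rangle + \frac{L}{2}\|\boldsymbol{y}-\boldsymbol{x}\|^2$ at $\boldsymbol{x}=\bar{\boldsymbol{w}}^{(t)}$ and $\boldsymbol{y}=\bar{\boldsymbol{w}}^{(t+1)}$, and using the update rule~(\ref{eq:dec-up-rule}) with the stochastic choice $\boldsymbol{d}^{(t)} = \tilde{\mathbf{g}}^{(t)}$ appropriate to \texttt{LFSGD}, yields
\[
f(\bar{\boldsymbol{w}}^{(t+1)})-f(\bar{\boldsymbol{w}}^{(t)})\leq -\eta_t \big\langle\nabla f(\bar{\boldsymbol{w}}^{(t)}),\tilde{\mathbf{g}}^{(t)}\big\rangle+\frac{\eta_t^2 L}{2}\|\tilde{\mathbf{g}}^{(t)}\|^2.
\]

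Finally I would take expectations of both sides: first $\mathbb{E}_{\mathcal{P}_t}[\cdot]$ over the random selection of the $K$ devices, and then the conditional expectation $\mathbb{E}[\cdot] = \mathbb{E}_{\xi^{(t)}\mid \boldsymbol{w}^{(t)}}[\cdot]$ over the mini-batch sampling. Since both operators are applied to the whole inequality and expectation is linear and order-preserving, the inequality is maintained termwise and the result~(\ref{eq:Lipschitz-c0}) follows. The only care needed here is bookkeeping on the two nested expectation operators, but as no independence or unbiasedness is exploited at this stage, the step is routine and presents no genuine obstacle; the substantive difficulty of controlling $\mathbb{E}[\langle\nabla f(\bar{\boldsymbol{w}}^{(t)}),\tilde{\mathbf{g}}^{(t)}\rangle]$ and $\mathbb{E}[\|\tilde{\mathbf{g}}^{(t)}\|^2]$ — which is where the biasedness of local gradients and the periodic-averaging residual enter — is deferred to the subsequent lemmas.
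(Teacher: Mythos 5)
Your proposal is correct and follows essentially the same route as the paper's own proof: apply the $L$-smoothness descent inequality~(\ref{eq:decent-smoothe}) with $\boldsymbol{d}^{(t)}=\tilde{\mathbf{g}}^{(t)}$ and then take the two nested expectations, which preserve the inequality by linearity and monotonicity. Your explicit check that the global objective inherits $L$-smoothness from the local objectives via the convex combination is a small addition the paper leaves implicit, but it does not change the argument.
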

\textcolor{black}{
The proof is straightforward, but for the sake of completeness we include it here. 
\begin{proof}
From the smoothness assumption in Eq.~(\ref{eq:decent-smoothe}) by setting  $\boldsymbol{d}^{(t)}=\tilde{\mathbf{g}}^{(t)}$, we have
\begin{align}
    f(\bar{\boldsymbol{w}}^{(t+1)})-f(\bar{\boldsymbol{w}}^{(t)})\leq -\eta_t \big\langle\nabla f(\bar{\boldsymbol{w}}^{(t)}),{\tilde{\mathbf{g}}}^{(t)}\big\rangle+\frac{\eta_t^2 L}{2}\|{{\tilde{\mathbf{g}}}^{(t)}}\|^2
\end{align}
By taking expectation, first, with respect to the chosen devices ($\mathcal{P}_t$) and then with respect to randomness in i.i.d \textit{local} mini-batch samples ($\{\xi_1,\ldots,\xi_p \}|\boldsymbol{w}_1,\ldots,\boldsymbol{w}_p$), the proof is concluded. Note that the order of taking expectation follows from the fact that devices are chosen first and thereafter  the  stochastic mini-batch gradients are computed and noting the fact that devices are \emph{agnostic} to the random selection  at every communication round.
\end{proof}
}


The second term in right hand side of (\ref{eq:Lipschitz-c0}) is upper-bounded by the following lemma.
\begin{lemma}\label{lemma:tasbih1}
Under Assumption \ref{Ass:2} and our sampling scheme in Algorithm~\ref{Alg:one-shot-using data samoples}, we have the following bound 
\begin{align}
    \mathbb{E}\Big[\mathbb{E}_{\mathcal{P}_t}\big[\|\tilde{\mathbf{g}}^{(t)}\|^2\big]\Big]&\leq \Big(\frac{C_1}{K}+1\Big)    \Big[{\sum_{j=1}^pq_j\|\nabla{f}_j(\boldsymbol{w}_j^{(t)})\|^2}\Big]+\frac{\sigma^2}{K B}\nonumber\\
    &\leq \lambda\Big(\frac{C_1}{K}+1\Big)    \Big[{\|\sum_{j=1}^pq_j\nabla{f}_j(\boldsymbol{w}_j^{(t)})\|^2}\Big]+\frac{\sigma^2}{K B}\label{eq:lemma1}
\end{align}

\end{lemma}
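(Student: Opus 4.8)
The plan is to decompose the second moment of the aggregated stochastic gradient $\tilde{\mathbf{g}}^{(t)}=\frac{1}{K}\sum_{j\in\mathcal{P}_t}\tilde{\mathbf{g}}_j^{(t)}$ into a diagonal (per-slot second-moment) part, which carries the $1/K$ variance reduction, and an off-diagonal (squared-mean) part, which does not. First I would view the $K$ entries of $\mathcal{P}_t$ as i.i.d. draws from the sampling distribution $\boldsymbol{q}$, so that $\tilde{\mathbf{g}}^{(t)}$ is the average of $K$ i.i.d. summands, say $\tilde{\mathbf{g}}_{(1)},\ldots,\tilde{\mathbf{g}}_{(K)}$, where $\tilde{\mathbf{g}}_{(k)}$ is the stochastic gradient contributed by the $k$-th sampled slot. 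Expanding $\|\tilde{\mathbf{g}}^{(t)}\|^2=\frac{1}{K^2}\sum_{k,\ell}\langle\tilde{\mathbf{g}}_{(k)},\tilde{\mathbf{g}}_{(\ell)}\rangle$ and taking the combined expectation over device selection and local mini-batch noise, the independence of distinct slots collapses every off-diagonal term to the squared mean, yielding
\[
\mathbb{E}\big[\mathbb{E}_{\mathcal{P}_t}[\|\tilde{\mathbf{g}}^{(t)}\|^2]\big]=\frac{1}{K}\,\mathbb{E}\big[\|\tilde{\mathbf{g}}_{(1)}\|^2\big]+\frac{K-1}{K}\,\big\|\mathbb{E}[\tilde{\mathbf{g}}_{(1)}]\big\|^2,
\]
where $\mathbb{E}[\tilde{\mathbf{g}}_{(1)}]=\sum_{j=1}^p q_j\,\mathbf{g}_j^{(t)}$ and $\mathbb{E}[\|\tilde{\mathbf{g}}_{(1)}\|^2]=\sum_{j=1}^p q_j\,\mathbb{E}_{\xi}\|\tilde{\mathbf{g}}_j^{(t)}\|^2$, both obtained by averaging over the draw $j\sim\boldsymbol{q}$.

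Next I would control the two pieces separately using Assumption~\ref{Ass:2}. For the per-slot second moment, the bias--variance split together with the variance bound gives $\mathbb{E}_{\xi}\|\tilde{\mathbf{g}}_j^{(t)}\|^2=\|\mathbf{g}_j^{(t)}\|^2+\mathbb{E}_{\xi}\|\tilde{\mathbf{g}}_j^{(t)}-\mathbf{g}_j^{(t)}\|^2\leq (1+C_1)\|\mathbf{g}_j^{(t)}\|^2+\frac{\sigma^2}{B}$, so averaging over $\boldsymbol{q}$ (and using $\sum_j q_j=1$) yields $\mathbb{E}[\|\tilde{\mathbf{g}}_{(1)}\|^2]\leq (1+C_1)\sum_{j=1}^p q_j\|\mathbf{g}_j^{(t)}\|^2+\frac{\sigma^2}{B}$. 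For the squared-mean term, convexity of $\|\cdot\|^2$ gives $\big\|\sum_j q_j\mathbf{g}_j^{(t)}\big\|^2\leq\sum_j q_j\|\mathbf{g}_j^{(t)}\|^2$. Substituting both bounds and collecting the coefficient of $\sum_j q_j\|\mathbf{g}_j^{(t)}\|^2$ produces $\frac{1+C_1}{K}+\frac{K-1}{K}=1+\frac{C_1}{K}$, while the noise term contributes exactly $\frac{1}{K}\cdot\frac{\sigma^2}{B}=\frac{\sigma^2}{KB}$; this establishes the first inequality. The second inequality then follows at once from the weighted gradient diversity bound of Definition~\ref{def:gd-d}, i.e. $\sum_{j=1}^p q_j\|\mathbf{g}_j^{(t)}\|^2\leq\lambda\big\|\sum_{j=1}^p q_j\mathbf{g}_j^{(t)}\big\|^2$, applied to the first term, exactly as in the full-gradient case of Lemma~\ref{lemma:tasbih1-gd}.

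The delicate step, which I expect to be the crux, is extracting the $1/K$ variance reduction cleanly from the two intertwined sources of randomness: the with-replacement device selection and the per-device mini-batch noise. The key is that the off-diagonal ($k\neq\ell$) cross terms must reduce to the squared \emph{full} mean $\big\|\sum_j q_j\mathbf{g}_j^{(t)}\big\|^2$ rather than to $\big\|\sum_j q_j\tilde{\mathbf{g}}_j^{(t)}\big\|^2$; this is what prevents the undivided $\sigma^2/B$ from surviving and instead leaves it scaled by $1/K$. Making this rigorous requires that distinct sampled slots be conditionally independent, so that their mini-batch fluctuations cancel in expectation across slots, and care with the order in which the two expectations are taken (the devices are selected and only then are the conditionally independent stochastic gradients formed, the devices being agnostic to the selection). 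A secondary subtlety is that the gradient diversity bound is invoked with the heterogeneous per-device iterates $\boldsymbol{w}_j^{(t)}$ as arguments rather than a common point; consistent with the rest of the analysis, I would treat $\lambda$ as a uniform upper bound on $\Lambda(\cdot,\boldsymbol{q})$ that holds at the realized local models, mirroring the usage in Lemma~\ref{lemma:tasbih1-gd}.
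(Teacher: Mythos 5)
Your proposal is correct and yields exactly the stated bound, but it is organized differently from the paper's proof. The paper conditions on the sampled multiset $\mathcal{P}_t$ first and performs a bias--variance split only over the mini-batch randomness: it proves an auxiliary variance lemma $\mathbb{E}\big[\|\tilde{\mathbf{g}}^{(t)}-\mathbf{g}^{(t)}\|^2\big]\leq \tfrac{C_1}{K}\sum_j q_j\|\mathbf{g}_j^{(t)}\|^2+\tfrac{\sigma^2}{KB}$ by expanding the square and cancelling cross terms between independently sampled mini-batches, then bounds the remaining conditional squared mean via $\big\|\tfrac{1}{K}\sum_{j\in\mathcal{P}_t}\mathbf{g}_j^{(t)}\big\|^2\leq\tfrac{1}{K}\sum_{j\in\mathcal{P}_t}\|\mathbf{g}_j^{(t)}\|^2$, and only at the end averages over device selection using its Fact~\ref{fact:1}. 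You instead fold both sources of randomness into a single i.i.d.-slot identity $\mathbb{E}\|\tilde{\mathbf{g}}^{(t)}\|^2=\tfrac{1}{K}\mathbb{E}\|\tilde{\mathbf{g}}_{(1)}\|^2+\tfrac{K-1}{K}\|\mathbb{E}\tilde{\mathbf{g}}_{(1)}\|^2$ and then apply Assumption~\ref{Ass:2} and convexity to each piece; the coefficients $\tfrac{1+C_1}{K}+\tfrac{K-1}{K}=1+\tfrac{C_1}{K}$ and $\tfrac{\sigma^2}{KB}$ come out identically. Your route is arguably cleaner and its intermediate bound is slightly tighter (you only relax $\|\sum_j q_j\mathbf{g}_j^{(t)}\|^2\leq\sum_j q_j\|\mathbf{g}_j^{(t)}\|^2$ at the last step), while the paper's two-stage conditioning makes the auxiliary variance lemma reusable elsewhere in its analysis. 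One caveat applies equally to both arguments: under sampling \emph{with replacement}, a device drawn in two distinct slots would, in the literal algorithm, contribute the same mini-batch gradient twice, so the cross terms between such slots do not vanish; your proof and the paper's both implicitly assume distinct slots carry independent mini-batch noise, so this is not a gap relative to the paper, but it is worth flagging as a shared modelling assumption rather than a consequence of the algorithm as written.
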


The first term in right-hand side of (\ref{eq:Lipschitz-c0}) is bounded with following lemma.
\begin{lemma}\label{lemma:cross-inner-bound-unbiased}
  Under Assumptions \ref{Ass:1}, and according to the  Algorithm \ref{Alg:one-shot-using data samoples} the expected inner product between stochastic gradient and full batch gradient can be bounded with:

\begin{align}
    -\eta_t \mathbb{E}\Big[\mathbb{E}_{\mathcal{P}_t}\big[\langle\nabla f(\bar{\boldsymbol{w}}^{(t)}),{{\tilde{\mathbf{g}}}^{(t)}}\rangle\big]\Big]&\leq -\frac{\eta_t}{2}\|\nabla f(\bar{\boldsymbol{w}}^{(t)})\|^2-\frac{\eta_t}{2}\|\sum_{j=1}^pq
   _j\nabla{f}_j({\boldsymbol{w}}^{(t)}_{j})\|^2\Big]+\frac{\eta_t L^2}{2}\sum_{j=1}^pq_j\|\bar{\boldsymbol{w}}^{(t)}-\boldsymbol{w}_j^{(t)}\|^2\label{eq:lemma3-thm2}
\end{align}

\end{lemma}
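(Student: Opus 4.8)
The plan is to collapse the stochastic cross term into a deterministic inner product using the two layers of unbiasedness, and then to absorb the resulting gradient mismatch via the decomposition $f=\sum_j q_j f_j$ together with the per-device smoothness of Assumption~\ref{Ass:1} (rather than any dissimilarity or bounded-gradient assumption). First I would take expectations in the order used in Lemma~\ref{lemm:-1}: conditioning on the local models, Assumption~\ref{Ass:2} gives $\mathbb{E}[\tilde{\mathbf{g}}_j^{(t)}]=\mathbf{g}_j^{(t)}=\nabla f_j(\boldsymbol{w}_j^{(t)})$, so $\mathbb{E}[\tilde{\mathbf{g}}^{(t)}]=\mathbf{g}^{(t)}$; and since $\mathcal{P}_t$ samples device $j$ with probability $q_j$ (with replacement), each of the $K$ summands of $\mathbf{g}^{(t)}$ has mean $\sum_{j=1}^p q_j\nabla f_j(\boldsymbol{w}_j^{(t)})$, whence $\mathbb{E}\big[\mathbb{E}_{\mathcal{P}_t}[\tilde{\mathbf{g}}^{(t)}]\big]=\sum_{j=1}^p q_j\nabla f_j(\boldsymbol{w}_j^{(t)})$. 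Treating $\nabla f(\bar{\boldsymbol{w}}^{(t)})$ as fixed under this conditioning (justified since the devices are agnostic to the sampling), the cross term reduces to
\[
-\eta_t \mathbb{E}\big[\mathbb{E}_{\mathcal{P}_t}[\langle \nabla f(\bar{\boldsymbol{w}}^{(t)}), \tilde{\mathbf{g}}^{(t)}\rangle]\big]
= -\eta_t \Big\langle \nabla f(\bar{\boldsymbol{w}}^{(t)}),\ \sum_{j=1}^p q_j \nabla f_j(\boldsymbol{w}_j^{(t)})\Big\rangle .
\]

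Next I would apply the polarization identity $-\langle \boldsymbol{a}, \boldsymbol{b}\rangle = -\tfrac12\|\boldsymbol{a}\|^2 - \tfrac12\|\boldsymbol{b}\|^2 + \tfrac12\|\boldsymbol{a}-\boldsymbol{b}\|^2$ with $\boldsymbol{a}=\nabla f(\bar{\boldsymbol{w}}^{(t)})$ and $\boldsymbol{b}=\sum_{j=1}^p q_j\nabla f_j(\boldsymbol{w}_j^{(t)})$. This immediately produces the two negative terms $-\tfrac{\eta_t}{2}\|\nabla f(\bar{\boldsymbol{w}}^{(t)})\|^2$ and $-\tfrac{\eta_t}{2}\|\sum_{j=1}^p q_j\nabla f_j(\boldsymbol{w}_j^{(t)})\|^2$ that appear in the statement, and leaves the residual $\tfrac{\eta_t}{2}\|\boldsymbol{a}-\boldsymbol{b}\|^2$ to be controlled.

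The crux — and the step I expect to be the main obstacle — is bounding $\|\boldsymbol{a}-\boldsymbol{b}\|^2$ without invoking global-gradient information, since in the heterogeneous setting $\boldsymbol{b}$ is \emph{biased} with respect to $\nabla f(\bar{\boldsymbol{w}}^{(t)})$. The key is to rewrite the global gradient through the decomposition $\nabla f(\bar{\boldsymbol{w}}^{(t)})=\sum_{j=1}^p q_j\nabla f_j(\bar{\boldsymbol{w}}^{(t)})$, so that
\[
\boldsymbol{a}-\boldsymbol{b}=\sum_{j=1}^p q_j\big(\nabla f_j(\bar{\boldsymbol{w}}^{(t)})-\nabla f_j(\boldsymbol{w}_j^{(t)})\big).
\]
Applying Jensen's inequality for $\|\cdot\|^2$ with the weights $q_j$ (which sum to one) and then the per-device $L$-smoothness of Assumption~\ref{Ass:1} yields $\|\boldsymbol{a}-\boldsymbol{b}\|^2\leq \sum_{j=1}^p q_j\|\nabla f_j(\bar{\boldsymbol{w}}^{(t)})-\nabla f_j(\boldsymbol{w}_j^{(t)})\|^2\leq L^2\sum_{j=1}^p q_j\|\bar{\boldsymbol{w}}^{(t)}-\boldsymbol{w}_j^{(t)}\|^2$. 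Substituting this into the residual $\tfrac{\eta_t}{2}\|\boldsymbol{a}-\boldsymbol{b}\|^2$ gives exactly the drift term $\tfrac{\eta_t L^2}{2}\sum_{j=1}^p q_j\|\bar{\boldsymbol{w}}^{(t)}-\boldsymbol{w}_j^{(t)}\|^2$ in the statement, completing the argument. This is precisely the decoupling emphasized in Section~\ref{Sec:convergence}: each local gradient at the virtual average is compared to the same local gradient at the device's own iterate via $f_j$'s smoothness, side-stepping the unbiased-sampling-over-the-entire-dataset step that fails for non-i.i.d.\ shards. The one point I would verify carefully is the measurability/independence needed to pull $\nabla f(\bar{\boldsymbol{w}}^{(t)})$ out of $\mathbb{E}_{\mathcal{P}_t}$ when $\bar{\boldsymbol{w}}^{(t)}$ itself is an average over $\mathcal{P}_t$; this is reconciled by the convention that local updates are agnostic to the sampling and the cross term is evaluated with $\bar{\boldsymbol{w}}^{(t)}$ held in the conditioning filtration.
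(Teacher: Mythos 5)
Your proposal is correct and follows essentially the same route as the paper's proof: swap/condition the two expectations using independence of mini-batch sampling and device selection, reduce the cross term to $-\eta_t\langle \nabla f(\bar{\boldsymbol{w}}^{(t)}), \sum_j q_j\nabla f_j(\boldsymbol{w}_j^{(t)})\rangle$ via unbiasedness and the with-replacement sampling identity (Fact~\ref{fact:1}), apply the polarization identity, and control the residual by writing $\nabla f(\bar{\boldsymbol{w}}^{(t)})-\sum_j q_j\nabla f_j(\boldsymbol{w}_j^{(t)})=\sum_j q_j(\nabla f_j(\bar{\boldsymbol{w}}^{(t)})-\nabla f_j(\boldsymbol{w}_j^{(t)}))$, then Jensen and per-device $L$-smoothness. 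The measurability caveat you flag about pulling $\nabla f(\bar{\boldsymbol{w}}^{(t)})$ out of $\mathbb{E}_{\mathcal{P}_t}$ is handled in the paper exactly as you suggest, via the convention that devices are agnostic to the sampling.
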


An immediate implication of Lemma~\ref{lemma:cross-inner-bound-unbiased} is the following.
\begin{corollary}\label{corol:pl}
  Under Assumptions \ref{Ass:1} and \ref{Ass:3}, and according to the  Algorithm \ref{Alg:one-shot-using data samoples} the expected per-iteration inner product between stochastic gradient and full batch gradient can be bounded by:
  \begin{align}
    -\eta_t \mathbb{E}\Big[\mathbb{E}_{\mathcal{P}_t}\big[\langle\nabla f(\bar{\boldsymbol{w}}^{(t)}),{{\tilde{\mathbf{g}}}^{(t)}}\rangle\big]\Big]&\leq -\mu \eta_t(f(\bar{\boldsymbol{w}}^{(t)})-f^*)-\frac{\eta_t}{2}\|\sum_{j=1}^pq
   _j\nabla{f}_j({\boldsymbol{w}}^{(t)}_{j})\|^2+\frac{\eta_t L^2}{2}\sum_{j=1}^pq_j\|\bar{\boldsymbol{w}}^{(t)}-\boldsymbol{w}_j^{(t)}\|^2
\end{align}
\end{corollary}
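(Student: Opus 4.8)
The plan is to derive this corollary directly from Lemma~\ref{lemma:cross-inner-bound-unbiased} by invoking the \pl~condition on the single gradient-norm term. Lemma~\ref{lemma:cross-inner-bound-unbiased} already establishes, under Assumption~\ref{Ass:1} alone, the bound
\begin{align*}
    -\eta_t \mathbb{E}\Big[\mathbb{E}_{\mathcal{P}_t}\big[\langle\nabla f(\bar{\boldsymbol{w}}^{(t)}),{{\tilde{\mathbf{g}}}^{(t)}}\rangle\big]\Big]&\leq -\frac{\eta_t}{2}\|\nabla f(\bar{\boldsymbol{w}}^{(t)})\|^2-\frac{\eta_t}{2}\Big\|\sum_{j=1}^pq_j\nabla{f}_j({\boldsymbol{w}}^{(t)}_{j})\Big\|^2+\frac{\eta_t L^2}{2}\sum_{j=1}^pq_j\|\bar{\boldsymbol{w}}^{(t)}-\boldsymbol{w}_j^{(t)}\|^2,
\end{align*}
so the only work remaining is to replace the first term on the right-hand side. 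I would invoke Assumption~\ref{Ass:3}, which states $\tfrac{1}{2}\|\nabla f(\boldsymbol{w})\|_2^2\geq \mu\big(f(\boldsymbol{w})-f^*\big)$ for all $\boldsymbol{w}$.

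Concretely, I would instantiate the \pl~inequality at $\boldsymbol{w}=\bar{\boldsymbol{w}}^{(t)}$, giving $\tfrac{1}{2}\|\nabla f(\bar{\boldsymbol{w}}^{(t)})\|^2\geq \mu\big(f(\bar{\boldsymbol{w}}^{(t)})-f^*\big)$. Multiplying through by $-\eta_t\leq 0$ reverses the inequality to yield $-\tfrac{\eta_t}{2}\|\nabla f(\bar{\boldsymbol{w}}^{(t)})\|^2 \leq -\mu\eta_t\big(f(\bar{\boldsymbol{w}}^{(t)})-f^*\big)$. Substituting this upper bound for the first term of the Lemma, and leaving the remaining two terms untouched, immediately produces the claimed inequality. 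Note that the \pl~condition is assumed only for the \emph{global} objective $f$, so it is important that it is applied to $f(\bar{\boldsymbol{w}}^{(t)})$ at the averaged iterate, which is precisely where the gradient-norm term appears.

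There is no real obstacle here; the statement is labelled an immediate implication for good reason, and the entire argument is a single application of the \pl~inequality to the already-established bound of Lemma~\ref{lemma:cross-inner-bound-unbiased}. The only point requiring minor care is the sign bookkeeping, namely that multiplying the \pl~inequality by the nonpositive factor $-\eta_t$ flips the direction correctly so that the function-value gap appears with a negative coefficient, matching the form needed in the subsequent descent recursion.
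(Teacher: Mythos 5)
Your proposal is correct and matches the paper's own proof exactly: the paper likewise starts from Lemma~\ref{lemma:cross-inner-bound-unbiased} and applies the \pl~condition at $\bar{\boldsymbol{w}}^{(t)}$ to replace $-\tfrac{\eta_t}{2}\|\nabla f(\bar{\boldsymbol{w}}^{(t)})\|^2$ with $-\mu\eta_t\big(f(\bar{\boldsymbol{w}}^{(t)})-f^*\big)$, leaving the other two terms untouched. Your sign bookkeeping is also correct.
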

\begin{proof}
From Lemma~\ref{lemma:cross-inner-bound-unbiased}  we have:
\begin{align}
    -\eta_t \mathbb{E}\Big[\mathbb{E}_{\mathcal{P}_t}\big[\langle\nabla f(\bar{\boldsymbol{w}}^{(t)}),{{\tilde{\mathbf{g}}}^{(t)}}\rangle\big]\Big]&\leq -\frac{\eta_t}{2}\|\nabla f(\bar{\boldsymbol{w}}^{(t)})\|^2-\frac{\eta_t}{2}\|\sum_{j=1}^pq
   _j\nabla{f}_j({\boldsymbol{w}}^{(t)}_{j})\|^2+\frac{\eta_t L^2}{2}\sum_{j=1}^pq_j\|\bar{\boldsymbol{w}}^{(t)}-\boldsymbol{w}_j^{(t)}\|^2\label{eq:lemma3}\\
    &\stackrel{\text{\ding{192}}}{\leq} -\mu\eta_t(f(\bar{\boldsymbol{w}}^{(t)})-f^*)-\frac{\eta_t}{2}\|\sum_{j=1}^pq
   _j\nabla{f}_j({\boldsymbol{w}}^{(t)}_{j})\|^2+\frac{\eta_t L^2}{2}\sum_{j=1}^pq_j\|\bar{\boldsymbol{w}}^{(t)}-\boldsymbol{w}_j^{(t)}\|^2
   \label{eqn-b4}
\end{align}
where \ding{192} follows from PL property.

\end{proof}

The following lemmas bounds the last term in~(\ref{eqn-b4}), i.e., the average  distance of local solutions from their virtual average.
\begin{lemma}\label{lemma:dif-under-pl-sgd}
Under Assumptions~\ref{Ass:2} we have:
\begin{align}
      \mathbb{E}\left[\sum_{j=1}^pq_j\|\bar{\boldsymbol{w}}^{(t)}-\boldsymbol{w}_j^{(t)}\|^2\right]&\leq2\left(\frac{K+1}{K}\right)\left([C_1+E]\sum_{k=t_c+1}^{t-1}\eta_{k}^2\sum_{j=1}^pq_j\|\
\nabla{f}_j(\boldsymbol{w}_j^{(k)})\|^2+\sum_{k=t_c+1}^{t-1}\frac{\eta_{k}^2\sigma^2}{B}\right)\nonumber\\
&\leq 2\left(\frac{K+1}{K}\right)\left(\lambda[C_1+E]\sum_{k=t_c+1}^{t-1}\eta_{k}^2\|\sum_{j=1}^pq_j\
\nabla{f}_j(\boldsymbol{w}_j^{(k)})\|^2+\sum_{k=t_c+1}^{t-1}\frac{\eta_{k}^2\sigma^2}{B}\right).
\end{align}

\end{lemma}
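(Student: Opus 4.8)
The plan is to mirror the argument behind the full-gradient analog (Lemma~\ref{lemma:dif-under-pl-gd}) but to additionally account for the two extra sources of randomness present in the stochastic setting: the sampling of devices $\mathcal{P}_t$ and the mini-batch noise. First I would invoke the restarting property: since the most recent communication round $t_c\triangleq\floor{\frac{t}{E}}E$ resets every local model to the common broadcast point $\bar{\boldsymbol{w}}^{(t_c)}$, for any $t>t_c$ I can unroll the local update rule $\boldsymbol{w}_j^{(k+1)}=\boldsymbol{w}_j^{(k)}-\eta_k\tilde{\mathbf{g}}_j^{(k)}$ and write $\boldsymbol{w}_j^{(t)}-\bar{\boldsymbol{w}}^{(t_c)}=-\boldsymbol{u}_j$ with $\boldsymbol{u}_j\triangleq\sum_{k=t_c+1}^{t-1}\eta_k\tilde{\mathbf{g}}_j^{(k)}$. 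Consequently $\bar{\boldsymbol{w}}^{(t)}-\boldsymbol{w}_j^{(t)}=\boldsymbol{u}_j-\bar{\boldsymbol{u}}$, where $\bar{\boldsymbol{u}}\triangleq\frac{1}{K}\sum_{i\in\mathcal{P}_t}\boldsymbol{u}_i$, so the quantity to bound is the scatter $\sum_j q_j\|\boldsymbol{u}_j-\bar{\boldsymbol{u}}\|^2$ around the sampled average.

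Second, because the devices are agnostic to the selection, I would peel off the expectations in the order $\mathbb{E}_{\mathcal{P}_t}$ first and then the mini-batch expectation $\mathbb{E}$. For the sampling step the key is the exact decomposition
\[
\sum_j q_j\|\boldsymbol{u}_j-\bar{\boldsymbol{u}}\|^2=\sum_j q_j\|\boldsymbol{u}_j-\hat{\boldsymbol{u}}\|^2+\|\hat{\boldsymbol{u}}-\bar{\boldsymbol{u}}\|^2,\qquad \hat{\boldsymbol{u}}\triangleq\sum_i q_i\boldsymbol{u}_i=\mathbb{E}_{\mathcal{P}_t}[\bar{\boldsymbol{u}}],
\]
where the cross term cancels because $\sum_j q_j\boldsymbol{u}_j=\hat{\boldsymbol{u}}$. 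The first term is a variance around the true weighted mean, hence $\leq\sum_j q_j\|\boldsymbol{u}_j\|^2$; the second is the variance of an average of $K$ independent (with-replacement) draws from the distribution $\boldsymbol{q}$, hence equal to $\frac{1}{K}\big(\sum_i q_i\|\boldsymbol{u}_i\|^2-\|\hat{\boldsymbol{u}}\|^2\big)\leq\frac{1}{K}\sum_i q_i\|\boldsymbol{u}_i\|^2$. Adding the two yields $\mathbb{E}_{\mathcal{P}_t}\big[\sum_j q_j\|\boldsymbol{u}_j-\bar{\boldsymbol{u}}\|^2\big]\leq\frac{K+1}{K}\sum_j q_j\|\boldsymbol{u}_j\|^2$, which is exactly the source of the $\tfrac{K+1}{K}$ prefactor.

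Third, I would bound $\mathbb{E}\|\boldsymbol{u}_j\|^2$ over the mini-batch noise. Splitting $\tilde{\mathbf{g}}_j^{(k)}=\mathbf{g}_j^{(k)}+(\tilde{\mathbf{g}}_j^{(k)}-\mathbf{g}_j^{(k)})$ and applying $\|a+b\|^2\leq 2\|a\|^2+2\|b\|^2$ separates a bias sum $\|\sum_k\eta_k\mathbf{g}_j^{(k)}\|^2$ from a noise sum. Cauchy--Schwarz on the bias sum over the at most $E-1$ active iterations gives the factor $E$, i.e.\ $\leq E\sum_k\eta_k^2\|\mathbf{g}_j^{(k)}\|^2$; for the noise sum the cross terms vanish by the tower property over the natural filtration (the increments are conditionally mean zero), leaving $\sum_k\eta_k^2\,\mathbb{E}\|\tilde{\mathbf{g}}_j^{(k)}-\mathbf{g}_j^{(k)}\|^2$, which Assumption~\ref{Ass:2} bounds by $\sum_k\eta_k^2\big(C_1\|\mathbf{g}_j^{(k)}\|^2+\sigma^2/B\big)$. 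Collecting terms gives $\mathbb{E}\|\boldsymbol{u}_j\|^2\leq 2(C_1+E)\sum_{k=t_c+1}^{t-1}\eta_k^2\|\nabla f_j(\boldsymbol{w}_j^{(k)})\|^2+2\sum_{k=t_c+1}^{t-1}\eta_k^2\sigma^2/B$. Weighting by $q_j$, summing, and combining with the $\frac{K+1}{K}$ factor from the sampling step produces the first displayed inequality; the second follows by applying the diversity bound $\sum_j q_j\|\nabla f_j(\boldsymbol{w})\|^2\leq\lambda\|\sum_j q_j\nabla f_j(\boldsymbol{w})\|^2$ term-by-term in $k$.

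The main obstacle I expect is the careful bookkeeping of the two nested expectations together with the martingale cancellation of the noise cross terms: one must keep the mini-batch increments conditionally centered with respect to the correct filtration (so that $\mathbb{E}\langle\tilde{\mathbf{g}}_j^{(k)}-\mathbf{g}_j^{(k)},\tilde{\mathbf{g}}_j^{(l)}-\mathbf{g}_j^{(l)}\rangle=0$ for $k\neq l$ even though $\mathbf{g}_j^{(k)}$ depends on earlier noise), and must carry out the sampling expectation with the exact bias--variance identity rather than a crude triangle inequality --- otherwise the tight $\tfrac{K+1}{K}$ constant degrades to a loose $4$ as in the full-gradient bound.
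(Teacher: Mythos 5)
Your proof is correct and follows the same skeleton as the paper's: unroll the local updates from the last communication round $t_c$ (exploiting the restart), separate the device-sampling randomness from the mini-batch randomness, kill the noise cross terms by conditional unbiasedness, invoke Assumption~\ref{Ass:2}, and finish with the gradient-diversity bound. The one place you genuinely diverge is the device-sampling step. The paper first applies $\|\mathbf{a}-\mathbf{b}\|^2\leq 2(\|\mathbf{a}\|^2+\|\mathbf{b}\|^2)$ to split $\|\boldsymbol{u}_l-\bar{\boldsymbol{u}}\|^2$ into a ``self'' term (coefficient $1$) and the sampled average (coefficient $\tfrac{1}{K}$ after Jensen), and the $\tfrac{K+1}{K}$ emerges from adding $1+\tfrac{1}{K}$; the global factor $2$ comes from that crude split, while the bias/noise separation is done with the exact identity $\mathbb{E}\|X\|^2=\mathrm{Var}(X)+\|\mathbb{E}X\|^2$. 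You do the opposite: the exact decomposition $\sum_j q_j\|\boldsymbol{u}_j-\bar{\boldsymbol{u}}\|^2=\sum_j q_j\|\boldsymbol{u}_j-\hat{\boldsymbol{u}}\|^2+\|\hat{\boldsymbol{u}}-\bar{\boldsymbol{u}}\|^2$ plus the with-replacement sampling variance gives $\tfrac{K+1}{K}$ exactly, and your factor $2$ instead comes from the $\|a+b\|^2\leq 2\|a\|^2+2\|b\|^2$ split of bias and noise. Both routes land on the identical constant $2\tfrac{K+1}{K}(C_1+E)$. Your version is arguably cleaner in one respect: you bound $\sum_{j=1}^p q_j\|\cdot\|^2$ directly, whereas the paper bounds $\sum_{j\in\mathcal{P}_t}\|\cdot\|^2$ and then invokes Fact~\ref{fact:1} to pass to the weighted sum even though the summand depends on $\mathcal{P}_t$; your explicit martingale argument for the vanishing of the noise cross terms (with $\mathbf{g}_j^{(k)}$ adapted to the past) is also stated more carefully than the paper's appeal to ``independent mini-batch sampling.''
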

Again, note that this lemma implies that the term $\mathbb{E}\sum_{j=1}^p\|\bar{\boldsymbol{w}}^{(t)}-\boldsymbol{w}_j^{(t)}\|^2$ only depends on iterations $t_c+1\triangleq \floor{\frac{t}{E}}E+1$ through $t-1$.

By plugging back all the above lemmas and Lemma~\ref{lemma:tasbih1} into~(\ref{eq:Lipschitz-c0}), we get:
\begin{align}
\mathbb{E}[f(\bar{\boldsymbol{w}}^{(t+1)})]-f^{*}&\leq (1-\mu\eta_t)\mathbb{E}[f(\bar{\boldsymbol{w}}^{(t)})-f^*]+\frac{L\eta_t^2 \sigma^2}{2K B}+\frac{\eta_t L^2}{K}\left(\sum_{k=t_c+1}^{t-1}\eta_{k}^2\frac{(K+1)\sigma^2}{K B}\right)\nonumber\\
    &+\frac{\eta_t}{2}\Big[-1+\frac{L\lambda\eta_t({C_1+ K})}{K}\Big]\|\sum_{j=1}^pq_j\
\nabla{f}_j(\boldsymbol{w}_j^{(t)})\|^2\nonumber\\
    &+\frac{\eta_t L^2(K+1)}{K^2}\Big[\lambda\Big(C_1+E)\Big)\sum_{k=t_c+1}^{t-1}\eta_k^2\|\sum_{j=1}^pq_j\
\nabla{f}_j(\boldsymbol{w}_j^{(k)})\|^2\Big]\nonumber\\
&\stackrel{\text{\ding{192}}}{=}\Delta_t\mathbb{E}[f(\bar{\boldsymbol{w}}^{(t)})-f^*]+{c}_t+{\frac{\eta_t}{2}\Big[-1+\lambda L\eta_t(\frac{C_1+ K}{K})\Big]}\|\sum_{j=1}^pq_j\
\nabla{f}_j(\boldsymbol{w}_j^{(t)})\|^2\nonumber\\&+{B}_{t}\sum_{k=t_c+1}^{t-1}\eta_k^2\|\sum_{j=1}^pq_j\
\nabla{f}_j(\boldsymbol{w}_j^{(k)})\|^2,\label{eq:final-ineq-to-sum}
\end{align}
where in \ding{192} we used the following abbreviations for simplicity: 
\begin{align}
  \Delta_t&\triangleq1-\mu\eta_t\\
  {c}_t&\triangleq \frac{\eta_t L \sigma^2}{KB}\Big[\frac{\eta_t}{2}+\frac{L(K+1)}{K}\sum_{k=t_c+1}^{t-1}\eta_{k}^2\Big]\\
   {B}_{t}&\triangleq\frac{\lambda(K+1)\eta_t L^2}{K^2}\Big(C_1+E\Big),\label{eq:middle-step-pl}
\end{align}
\begin{lemma}\label{lemm:main-seq}
  Let $t_c\triangleq \floor{\frac{t}{E}}E,\{a_t\}_{t}, a_{t}\geq 0,\{e_t\}_{t}, e_{t}\geq 0,\{c_t\}_{t},$ be sequences, with constant $D$, satisfying 
\begin{align}
      a_{t+1}\leq (1-\mu\eta_t)a_t+c_t+\frac{\eta_t}{2}\big(-1+D\eta_t\big)e_t+B_t\sum_{k=t_c+1}^{t-1}\eta_k^2e_k\label{eq:tasbih30}
  \end{align}
  where $B_t\triangleq A \eta_t (E+h)$ wherein $A$ and $h$ are constants.
Then, for time period $[t_c+1,t]$ if learning rates satisfy
\begin{align}
\eta_t&\leq \frac{1}{D}\nonumber\\
\eta_{t-1}&\leq\frac{1}{D+2\frac{B_t}{\Delta_{t}}}\nonumber\\
&\vdots\nonumber\\
\eta_{t_c+1}&\leq \frac{1}{D+\frac{2}{\Pi_{i=t_c+2}^t\Delta_i}\big[\Pi_{i=t_c+3}^t\Delta_i{B}_{t_c+2}+\ldots+\Delta_t{B}_{t-1}+B_t\big]}
\end{align}
the recursive relation in~(\ref{eq:tasbih30}) reduces to
\begin{align}
      a_{t+1}\leq (1-\mu\eta)a_t+c_t
  \end{align}
  
\end{lemma}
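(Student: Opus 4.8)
The plan is to prove the claim by unrolling the one-step recursion~(\ref{eq:tasbih30}) backward across the entire local period $[t_c+1,t]$ and showing that, under the stated schedule, the aggregate coefficient multiplying each surrogate term $e_k$ is non-positive; since every $e_k\geq0$, all of these terms can then be discarded, and what remains is exactly the clean recursion $a_{t+1}\leq \Delta_t a_t+c_t$ with $\Delta_t=1-\mu\eta_t$. The starting observation is that the index-$t$ descent term $\frac{\eta_t}{2}(-1+D\eta_t)e_t$ is already non-positive as soon as $\eta_t\leq 1/D$, which is precisely the first condition of the lemma and is the $k=t$ instance of the general argument below.

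To expose the general pattern I would substitute the bound for $a_t$ (i.e. (\ref{eq:tasbih30}) read at index $t-1$) into the bound for $a_{t+1}$, then substitute $a_{t-1}$, and so on down to $a_{t_c+1}$, at each level multiplying the inserted inequality by the accumulated contraction product. The delicate part is the bookkeeping of all occurrences of a fixed $e_k$: it enters once through its own descent term, carried down by the factor $\Pi_{i=k+1}^{t}\Delta_i$, and once inside each later accumulation term $B_{k'}\sum_{j}\eta_j^2 e_j$ for $k'>k$, carried down by $\Pi_{i=k'+1}^{t}\Delta_i$. Collecting these, the total coefficient of $e_k$ is
\begin{align*}
\Big(\Pi_{i=k+1}^{t}\Delta_i\Big)\frac{\eta_k}{2}\big(-1+D\eta_k\big)+\eta_k^2\sum_{k'=k+1}^{t}\Big(\Pi_{i=k'+1}^{t}\Delta_i\Big)B_{k'},
\end{align*}
where for $k=t$ the sum is empty and one recovers the bare term $\frac{\eta_t}{2}(-1+D\eta_t)$.

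Requiring this coefficient to be $\leq0$, dividing through by $\eta_k>0$, and solving the resulting linear inequality for $\eta_k$ gives
\begin{align*}
\eta_k\leq \frac{1}{\,D+\dfrac{2}{\Pi_{i=k+1}^{t}\Delta_i}\displaystyle\sum_{k'=k+1}^{t}\Big(\Pi_{i=k'+1}^{t}\Delta_i\Big)B_{k'}\,},
\end{align*}
and writing this out for $k=t-1$ and for $k=t_c+1$ reproduces verbatim the second and last displayed conditions of the lemma, the intermediate indices filling in the suppressed vertical dots. Once every coefficient is non-positive, dropping the $e_k$ terms leaves $a_{t+1}\leq \big(\Pi_{i=t_c+1}^{t}\Delta_i\big)a_{t_c+1}+\sum_{k=t_c+1}^{t}\big(\Pi_{i=k+1}^{t}\Delta_i\big)c_k$, which is precisely the accumulation of the clean per-step recursion over the period, as claimed.

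I expect the main obstacle to be purely combinatorial rather than analytic: correctly enumerating, for each $k$, every downstream accumulation term $B_{k'}$ that re-introduces $e_k$ and attaching the right telescoping product $\Pi_i\Delta_i$ to each, so that no contribution is double-counted or omitted. The explicit form $B_{k'}=A\eta_{k'}(E+h)$ plays no role in the reduction itself and is only invoked afterward to convert these implicit, schedule-dependent inequalities into the closed-form bounds on $\eta_t$ and $E$ used in Theorem~\ref{thm:proof-undrr-pl}.
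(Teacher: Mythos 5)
Your proposal is correct and follows essentially the same route as the paper's proof: backward unrolling of the recursion over the period $[t_c+1,t]$, collecting for each $e_k$ the coefficient $\bigl(\Pi_{i=k+1}^{t}\Delta_i\bigr)\tfrac{\eta_k}{2}(-1+D\eta_k)+\eta_k^2\sum_{k'=k+1}^{t}\bigl(\Pi_{i=k'+1}^{t}\Delta_i\bigr)B_{k'}$, and forcing it to be non-positive, which reproduces exactly the stated learning-rate conditions (the paper merely organizes the same bookkeeping as a step-by-step induction and handles $c_t$ via a WLOG reduction to $c_t=0$, which your direct accumulation of the $c_k$ terms makes unnecessary). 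No gaps.
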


Next, using Lemma~\ref{lemm:main-seq} with constants $D=\frac{\lambda L\left(C_1+K\right)}{K}$ and sequence ${B}_t \triangleq\frac{\lambda(K+1)\eta_t L^2(C_1+E)}{K^2}$, we conclude that for time period $[t_c,t]$ the bound in (\ref{eq:final-ineq-to-sum}) reduces to
\begin{align}
\mathbb{E}\left[f(\bar{\boldsymbol{w}}^{(t+1)})\right]-f^{*}&\leq (1-\mu\eta_t)\mathbb{E}\left[f(\bar{\boldsymbol{w}}^{(t)})-f^*\right]
\end{align}

In the following lemma-- which is essentially proven  in~\cite{haddadpour2019local} and adopted here to support random sampling of a subset of devices in  communication rounds, we show that with proper choice of learning rate this bound holds for all iterations. We include a distilled proof in Appendix~\ref{sec:appendix:omitted} for completeness.

\begin{lemma}
\label{lemmba:choice-of-learning-rate-pl}
Let $\alpha$ be a positive constant that satisfies $\alpha\exp{(-\frac{2}{\alpha})}<\kappa\sqrt{192\lambda\left(\frac{K+1}{K}\right)}$ and $a=\alpha E+4$. \textcolor{black}{Initializing all local model parameters at the same point $\bar{\boldsymbol{w}}^{(0)}$, {\textcolor{black}{for $E$ sufficiently large to ensure that $4{(a-3)}^{E-1}L(C_1+K)\leq \frac{64L^2 (K+1)}{\mu K}(E-1)E(a+1)^{E-2}$, $\frac{32L^2}{\mu}C_1(E-1)(a+1)^{E-2}\leq \frac{64L^2}{\mu}(E-1)E(a+1)^{E-2}$}} and }
\textcolor{black}{\begin{align}
    E\geq 1+\frac{\alpha^2+6\alpha}{\lambda\left(\frac{K+1}{K}\right)192\kappa^2e^{\frac{4}{\alpha}}-\alpha^2}+\frac{\sqrt{5}}{\sqrt{\lambda\left(\frac{K+1}{K}\right)192\kappa^2e^{\frac{4}{\alpha}}-\alpha^2}}\nonumber
\end{align}}
then, under Assumptions~\ref{Ass:1} to~\ref{Ass:2}, if we choose the learning rate as $\eta_t=\frac{4}{\mu(t+a)}$ inequality (\ref{eq:final-ineq-to-sum}) reduces to
\begin{align}
    \mathbb{E}\left[f(\bar{\boldsymbol{w}}^{(t+1)})\right]-f^*&\leq \Delta_t\mathbb{E}\left[f(\bar{\boldsymbol{w}}^{(t)})-f^*\right]+{c}_t\label{eq:denoised-iteratt}
\end{align}
for all iterations.
\end{lemma}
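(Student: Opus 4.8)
The plan is to read this statement not as a fresh estimate but as the concrete \emph{verification step} that licenses Lemma~\ref{lemm:main-seq} for the recursion (\ref{eq:final-ineq-to-sum}): once the step-size chain of Lemma~\ref{lemm:main-seq} is checked for the harmonic rate $\eta_t=\frac{4}{\mu(t+a)}$, the reduction of (\ref{eq:final-ineq-to-sum}) to (\ref{eq:denoised-iteratt}) is immediate. So I would first match (\ref{eq:final-ineq-to-sum}) to the abstract recursion (\ref{eq:tasbih30}) by setting $a_t=\mathbb{E}[f(\bar{\boldsymbol{w}}^{(t)})-f^*]$, $e_k=\|\sum_j q_j\nabla f_j(\boldsymbol{w}_j^{(k)})\|^2$, keeping $c_t$ as defined, and reading off $D=\lambda L(C_1+K)/K$ together with $B_t=\frac{\lambda(K+1)\eta_t L^2(C_1+E)}{K^2}$, which is of the required form $B_t=A\eta_t(E+h)$ with $A=\lambda(K+1)L^2/K^2$ and $h=C_1$. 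With this identification in place, the whole remaining task is to verify the step-size inequalities of Lemma~\ref{lemm:main-seq} over each local window $[t_c+1,t]$ with $t-t_c\le E$.

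Next I would substitute $\eta_t=\frac{4}{\mu(t+a)}$, so that $\Delta_i=1-\mu\eta_i=\frac{i+a-4}{i+a}$, and put the products appearing in the conditions into closed telescoping form, $\Pi_{i=t_c+2}^k\Delta_i=\frac{(t_c+a-2)(t_c+a-1)(t_c+a)(t_c+a+1)}{(k+a-3)(k+a-2)(k+a-1)(k+a)}$. Since the step sizes decrease while the denominators grow down the chain, the binding condition is the one on $\eta_{t_c+1}$ (largest step, smallest admissible bound); the earlier conditions concern shorter sub-windows and follow from the same estimate with $E$ replaced by a smaller window length. Writing the binding condition as $\tfrac{\mu(t_c+1+a)}{4}\ge D+\frac{2}{\Pi_{i=t_c+2}^t\Delta_i}\sum_{k=t_c+2}^t\big(\Pi_{i=k+1}^t\Delta_i\big)B_k$, I would simplify the quotient to $\sum_{k=t_c+2}^t B_k/\Pi_{i=t_c+2}^k\Delta_i$, bound each reciprocal product by $\exp\big(\sum_i \tfrac{4}{i+a-4}\big)\le e^{4/\alpha}$ (at most $E$ factors, each denominator $\ge a\ge\alpha E$), and then substitute $B_k$ and $a=\alpha E+4$.

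The crux, and the step I expect to be the main obstacle, is to collapse the resulting estimate into a single polynomial inequality in $E-1$ whose leading coefficient is $P:=192\kappa^2\lambda\tfrac{K+1}{K}e^{4/\alpha}-\alpha^2$, positive \emph{exactly} under the hypothesis $\alpha\exp(-2/\alpha)<\kappa\sqrt{192\lambda(K+1)/K}$, and whose larger root is dominated by the stated bound $E\ge 1+\frac{\alpha^2+6\alpha}{P}+\frac{\sqrt5}{\sqrt P}$. The last expression is precisely the crude root bound $\sqrt{b^2+4Pc}\le b+2\sqrt{Pc}$ applied with $b=\alpha^2+6\alpha$ and $c=5$ to the quadratic $P(E-1)^2-(\alpha^2+6\alpha)(E-1)-5\ge0$, so the goal is to show that the binding condition, after all substitutions, reduces to exactly this quadratic. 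Keeping every constant aligned so that the factor $192$ and the numerators $\alpha^2+6\alpha$ and $\sqrt5$ emerge is the delicate bookkeeping; this is where the bound on the accumulated residual sum with a decaying step size over a window of length up to $E$ must be carried out tightly rather than crudely.

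Finally, I would use the two auxiliary ``$E$ sufficiently large'' inequalities only to discard the boundary contributions that the telescoping does not already control: the comparison $4(a-3)^{E-1}L(C_1+K)\le\frac{64L^2(K+1)}{\mu K}(E-1)E(a+1)^{E-2}$ absorbs the term arising at the very start of a window (the decaying factor $(a-3)^{E-1}$ against the polynomial growth on the right), while $\frac{32L^2}{\mu}C_1(E-1)(a+1)^{E-2}\le\frac{64L^2}{\mu}(E-1)E(a+1)^{E-2}$ is just $C_1\le 2E$. With these in hand the binding condition, hence the entire step-size chain of Lemma~\ref{lemm:main-seq}, holds for every iteration, and that lemma delivers $\mathbb{E}[f(\bar{\boldsymbol{w}}^{(t+1)})]-f^*\le\Delta_t\mathbb{E}[f(\bar{\boldsymbol{w}}^{(t)})-f^*]+c_t$, i.e.\ (\ref{eq:denoised-iteratt}).
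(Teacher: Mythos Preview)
Your plan is essentially the same as the paper's: identify (\ref{eq:final-ineq-to-sum}) with the abstract recursion of Lemma~\ref{lemm:main-seq}, observe that the binding step-size condition is the last (and tightest) one in each local window, bound the accumulated $\Delta$-products uniformly by $e^{4/\alpha}$ using $a=\alpha E+4$, invoke the two ``$E$ sufficiently large'' hypotheses to absorb the $D$-term and the $C_1$-contribution into a single $E(E-1)$ term, and finally solve the resulting quadratic via $\sqrt{b^2+4Pc}\le b+2\sqrt{Pc}$ to recover the displayed lower bound on $E$. The paper organizes the intermediate bounding slightly differently---it replaces every $B_k$ by $B_1$ and every $\Delta_i$ by the extremal $\Delta_1$ or $\Delta_T$ up front, rather than keeping $\sum_k B_k/\prod_{i\le k}\Delta_i$ and bounding each reciprocal product---and it writes the quadratic in $E$ (then simplifies the root to $1+\frac{\alpha^2+6\alpha}{P}+\frac{\sqrt5}{\sqrt P}$) whereas you write it directly in $E-1$; these are equivalent rearrangements. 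One small slip: the denominators $i+a-4$ are $\ge a-3=\alpha E+1$, not $\ge a$, but this still yields the same $e^{4/\alpha}$ bound you need.
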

We conclude the proof of Theorem~\ref{thm:proof-undrr-pl} with the following lemma:

\begin{lemma}\label{lemma:tasbihb44}
For the learning rate as given in Lemma~\ref{lemmba:choice-of-learning-rate-pl},  iterating over (\ref{eq:denoised-iteratt}) leads to the following bound:
\begin{align}
     \mathbb{E}[f(\bar{\boldsymbol{w}}^{(T)})-f^*]
&\leq \frac{a^3}{(T+a)^3}\mathbb{E}\big[f(\bar{\boldsymbol{w}}^{(0)})-f^*\big]+ \frac{4\kappa \sigma^2T(T+2a)}{\mu KB(T+a)^3}+\frac{256\kappa^2\sigma^2T(E-1)}{\mu KB(T+a)^3}
\end{align}
  
\end{lemma}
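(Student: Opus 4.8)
The plan is to unroll the one-step contraction~(\ref{eq:denoised-iteratt}), namely $a_{t+1}\leq \Delta_t a_t + c_t$ with $a_t\triangleq\mathbb{E}[f(\bar{\boldsymbol{w}}^{(t)})-f^*]$, $\Delta_t=1-\mu\eta_t$ and $\eta_t=\frac{4}{\mu(t+a)}$, by introducing a polynomial ``integrating factor'' that makes the recursion telescope. Since $\mu\eta_t=\frac{4}{t+a}$ we have $\Delta_t=\frac{t+a-4}{t+a}$, which is exactly what motivates the cubic weight $w_t\triangleq(t+a-1)(t+a-2)(t+a-3)$ appearing through the $(T+a)^3$ denominators in the claim. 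First I would verify the key monotonicity $w_{t+1}\Delta_t=(t+a)(t+a-1)(t+a-2)\cdot\frac{t+a-4}{t+a}=(t+a-1)(t+a-2)(t+a-4)\leq w_t$, using only $t+a-4\leq t+a-3$. Multiplying~(\ref{eq:denoised-iteratt}) by $w_{t+1}$ then gives $w_{t+1}a_{t+1}\leq w_t a_t + w_{t+1}c_t$.

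Summing this over $t=0,\dots,T-1$ telescopes the $w_ta_t$ terms and yields $w_Ta_T\leq w_0a_0+\sum_{t=0}^{T-1}w_{t+1}c_t$. Dividing by $w_T$ and bounding the homogeneous part by $w_0/w_T\leq a^3/(T+a)^3$ (which follows from $(1-\tfrac{j}{a})\leq(1-\tfrac{j}{T+a})$ for $j=1,2,3$ since $a\leq T+a$, giving $\tfrac{(a-1)(a-2)(a-3)}{a^3}\leq \tfrac{(T+a-1)(T+a-2)(T+a-3)}{(T+a)^3}$) produces the first term $\frac{a^3}{(T+a)^3}\,a_0$ of the bound.

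The remaining work is to estimate the noise accumulation $\frac{1}{w_T}\sum_{t=0}^{T-1}w_{t+1}c_t$. Substituting $c_t=\frac{\eta_t L\sigma^2}{KB}\big[\frac{\eta_t}{2}+\frac{L(K+1)}{K}\sum_{k=t_c+1}^{t-1}\eta_k^2\big]$ and $\eta_t=\frac{4}{\mu(t+a)}$ splits $c_t$ into a piece of order $\frac{1}{(t+a)^2}$ and a piece of order $\frac{E-1}{(t+a)^3}$. For the latter I would bound the inner sum by $\sum_{k=t_c+1}^{t-1}\eta_k^2\leq (E-1)\eta_{t_c+1}^2$ and then use $a=\alpha E+4$ to control $\eta_{t_c+1}/\eta_t$ by an absolute constant across one communication period: since $t-t_c\leq E$ and $t_c+a\geq a\geq\alpha E$, we get $\frac{t+a}{t_c+1+a}\leq 1+\frac{E}{t_c+a}\leq 1+\frac{1}{\alpha}$, hence $\eta_{t_c+1}^2\leq(1+\tfrac1\alpha)^2\eta_t^2$. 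Because $w_{t+1}\asymp(t+a)^3$, the first piece gives $w_{t+1}c_t=O\big((t+a)\sigma^2/(\mu^2KB)\big)$ and the second gives $w_{t+1}c_t=O\big((E-1)\sigma^2\kappa^2/(\mu KB)\big)$; summing over $t$ then produces $\sum_t(t+a)=\Theta(T(T+2a))$ and $\sum_t 1=T$ respectively, which after division by $w_T\asymp(T+a)^3$ give exactly the two noise terms $\frac{4\kappa\sigma^2T(T+2a)}{\mu KB(T+a)^3}$ and $\frac{256\kappa^2\sigma^2T(E-1)}{\mu KB(T+a)^3}$.

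The main obstacle I anticipate is bookkeeping rather than conceptual: matching the explicit constants ($4$ and $256$) requires tracking the factors of $\kappa=L/\mu$, the $\frac{K+1}{K}$ factor, and the period-comparability constant relating $\eta_{t_c+1}$ to $\eta_t$, while ensuring that the crude bound $\sum_{k=t_c+1}^{t-1}\eta_k^2\leq(E-1)\eta_{t_c+1}^2$ and the \emph{inequality} $w_{t+1}\Delta_t\leq w_t$ (rather than equality) do not degrade the leading $1/T^3$ decay. I note that the hypotheses on $E$ and $\alpha$ imported from Lemma~\ref{lemmba:choice-of-learning-rate-pl} are precisely what guarantee that the denoised recursion~(\ref{eq:denoised-iteratt}) holds for every $t$, so they are assumed throughout this argument and are not re-derived here.
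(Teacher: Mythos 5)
Your proposal is correct and follows essentially the same route as the paper: the paper multiplies the denoised recursion by the cubic weight $z_t/\eta_t=\mu(t+a)^3/4$, uses $\Delta_t\, z_t/\eta_t\le z_{t-1}/\eta_{t-1}$ to telescope, and then bounds the two noise sums exactly as you propose (inner sum by $(E-1)$ times the squared learning rate at the start of the communication period, plus comparability of learning rates within a period via $a=\alpha E+4$). Your shifted weight $(t+a-1)(t+a-2)(t+a-3)$ is only a cosmetic variant of the paper's $(t+a)^3$, and the constant bookkeeping you flag is resolved in the paper with the same order of slack (e.g.\ bounding the within-period learning-rate ratio by $2$).
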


\section{Proof of Theorem \ref{thm:FedAvg}}
\label{sec:app:localsgd:general}
In this section we obtain the convergence rate of \texttt{LFSGD} for general non-convex objectives. The proof is based on Assumption~\ref{Ass:1}, by using $\tilde{\mathbf{g}}^{(t)}$ in (\ref{eq:decent-smoothe}) and taking expectation over random selection of devices that results in
\begin{align}
    \mathbb{E}\Big[\mathbb{E}_{ \mathcal{P}_t}[f(\bar{\boldsymbol{w}}^{(t+1)})]-f(\bar{\boldsymbol{w}}^{(t)})\Big]\leq -\eta \mathbb{E}\Big[\mathbb{E}_{ \mathcal{P}_t}\big[\big\langle\nabla f(\bar{\boldsymbol{w}}^{(t)}),{\tilde{\mathbf{g}}}^{(t)}\big\rangle\big]\Big]+\frac{\eta^2 L}{2}\mathbb{E}\Big[\mathbb{E}_{ \mathcal{P}_t}\Big[\|{{\tilde{\mathbf{g}}}^{(t)}}\|^2\Big]\Big]\label{eq:Lipschitz-c}
\end{align}
By taking the average of above inequality for all iterations $t=1,2, \ldots, T$, we get
\begin{align}
    \frac{1}{T} \sum_{t=0}^{T-1}\mathbb{E}\left[\mathbb{E}_{ \mathcal{P}_t}[f(\bar{\boldsymbol{w}}^{(t+1)})]-f(\bar{\boldsymbol{w}}^{(t)})\right]&\leq \frac{1}{T} \sum_{t=0}^{T-1}\left(-\eta \mathbb{E}\left[\mathbb{E}_{ \mathcal{P}_t}\big[\big\langle\nabla f(\bar{\boldsymbol{w}}^{(t)}),{{\tilde{\mathbf{g}}}^{(t)}}\big\rangle\big]\right]\right)+\frac{1}{T} \sum_{t=0}^{T-1}\frac{\eta^2 L}{2}\mathbb{E}\left[\mathbb{E}_{ \mathcal{P}_t}\Big[\|{{{\tilde{\mathbf{g}}}^{(t)}}}\|^2\Big]\right]\label{eq:avg-Lips}
\end{align}

To bound the first term in right hand side of~(\ref{eq:avg-Lips}), we need the following result which is specialization of Lemma~\ref{lemma:dif-under-pl-sgd} to over entire iterations.
\begin{lemma}\label{lemma:dif-b-u-x}
Under Assumptions~\ref{Ass:2} we have:
\begin{align}
    \frac{1}{T}\sum_{t=0}^{T-1}\sum_{j=1}^pq_j\mathbb{E}\left[\mathbb{E}_{\mathcal{P}_{t}}\|\bar{\boldsymbol{w}}^{(t)}-\boldsymbol{w}_j^{(t)}\|\right]&\leq  \frac{\big(2C_1+E(E+1)\big)}{T}\eta^2\left(\frac{K+1}{K}\right)\sum_{t=0}^{T-1}\sum_{j=1}^pq_j\|\mathbf{g}_{j}^{(t)}\|^2+\frac{\eta^2(K+1)(E+1)\sigma^2}{KB}\nonumber\\
    &\leq \frac{\lambda \eta^2\big(2C_1+E(E+1)\big)}{T}\left(\frac{K+1}{K}\right)\sum_{t=0}^{T-1}\|\sum_{j=1}^pq_j\mathbf{g}_{j}^{(t)}\|^2+\frac{\eta^2(K+1)(E+1)\sigma^2}{KB}
\end{align}

\end{lemma}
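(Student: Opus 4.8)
The plan is to obtain this bound as the time-averaged counterpart of the single-step deviation estimate already established in Lemma~\ref{lemma:dif-under-pl-sgd}. Concretely, I would take the per-iteration inequality from that lemma, which bounds $\mathbb{E}[\sum_{j=1}^p q_j\|\bar{\boldsymbol{w}}^{(t)}-\boldsymbol{w}_j^{(t)}\|^2]$ by a sum over the indices $k$ from $t_c+1$ to $t-1$ of $\eta_k^2$ times a weighted gradient term plus a $\sigma^2/B$ term, where $t_c=\lfloor t/E\rfloor E$ is the most recent communication round. Summing this inequality over $t=0,\ldots,T-1$, dividing by $T$, and specializing to a constant step size $\eta_t=\eta$ reduces the claim to controlling the two double sums $\sum_{t=0}^{T-1}\sum_{k=t_c+1}^{t-1}(\cdot)$ that arise from the gradient and the variance terms respectively.

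The core step is the exchange of the order of summation in these double sums, exploiting the restarting property of \texttt{LFSGD}. Because all local models are re-synchronized at every communication round, the inner index $k$ for a given $t$ never leaves the local period $[t_c,t_c+E-1]$ containing $t$; hence I would partition $\{0,\ldots,T-1\}$ into the consecutive periods of length $E$ delimited by $\{0,E,2E,\ldots\}$ and count, for each fixed $k=t_c+i$, the number of outer indices $t>k$ lying in the same period. This count equals $E-1-i$, so it is bounded by $E-1$ and, summed over one period, yields the triangular total $\tfrac{(E-2)(E-1)}{2}$. Feeding this count back gives $\sum_{t=0}^{T-1}\sum_{k=t_c+1}^{t-1}\eta^2 G_k \le (E-1)\eta^2\sum_{k=0}^{T-1}G_k$ for the gradient term (with $G_k=\sum_{j=1}^p q_j\|\nabla f_j(\boldsymbol{w}_j^{(k)})\|^2$), and $\sum_{t=0}^{T-1}(t-t_c-1)^{+}\le \tfrac{T}{E}\cdot\tfrac{(E-2)(E-1)}{2}$ for the $\sigma^2$ term; combining these with the prefactor $2(\tfrac{K+1}{K})[C_1+E]$ and the elementary estimate $\tfrac{(E-2)(E-1)}{E}\le E+1$ collapses the constants into the stated coefficients $2C_1+E(E+1)$ and $(E+1)$, establishing the first inequality.

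Finally, I would pass from the first to the second inequality by applying the weighted gradient diversity bound from Definition~\ref{def:gd-d}, namely $\sum_{j=1}^p q_j\|\nabla f_j(\boldsymbol{w}_j^{(k)})\|^2\le \lambda\|\sum_{j=1}^p q_j\nabla f_j(\boldsymbol{w}_j^{(k)})\|^2$, term by term inside the remaining sum over $t$, which simply introduces the factor $\lambda$ on the gradient contribution and leaves the $\sigma^2$ term untouched.

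The step I expect to be the main obstacle is the double-summation bookkeeping: keeping the per-period counting tight enough that the varying multiplicities $E-1-i$ collapse to exactly the advertised constants rather than a looser bound, and verifying that the restarting property genuinely confines every inner sum to a single period so that no cross-period gradient terms are double-counted. A secondary point of care is that the per-iteration lemma is invoked inside the sum only after the mini-batch and device-sampling expectations have been taken in the correct order (devices first, then local mini-batches, as fixed by the notation), so that Assumption~\ref{Ass:2} and the unbiasedness used in Lemma~\ref{lemma:dif-under-pl-sgd} apply verbatim at every $t$.
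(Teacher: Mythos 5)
Your overall strategy differs from the paper's: the paper proves Lemma~\ref{lemma:dif-b-u-x} by re-running the three-step deviation argument from scratch, keeping the within-period index $r$ explicit and evaluating the triangular sum $\sum_{r=1}^{E} r = E(E+1)/2$ before any crude bounding, whereas you propose to reuse Lemma~\ref{lemma:dif-under-pl-sgd} as a black box and then exchange the order of summation. Your treatment of the $\sigma^2$ term, the confinement of each inner sum to a single period via the restarting property, the specialization to constant $\eta$, and the final application of the gradient-diversity bound are all sound.

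The genuine gap is in the gradient term, and it is exactly the bookkeeping issue you flagged but then claimed to resolve. Lemma~\ref{lemma:dif-under-pl-sgd} has \emph{already} applied the weakening $r\leq E$ inside each period, replacing the position-dependent factor $[C_1+r]$ by $[C_1+E]$. Once that information is discarded, the best your multiplicity count can give is
\[
\sum_{t=0}^{T-1}\sum_{k=t_c+1}^{t-1}\eta^2 G_k \;\leq\; (E-1)\,\eta^2\sum_{k=0}^{T-1}G_k,
\]
so the gradient coefficient becomes $2\left(\tfrac{K+1}{K}\right)(C_1+E)(E-1)\eta^2$. This does \emph{not} collapse to the advertised $\left(\tfrac{K+1}{K}\right)\big(2C_1+E(E+1)\big)\eta^2$: the inequality $2(C_1+E)(E-1)\leq 2C_1+E(E+1)$ is equivalent to $2C_1(E-2)+E(E-3)\leq 0$, which fails for every $E\geq 4$ (the dominant term is $2E^2$ versus $E^2$). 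Your auxiliary estimate $\tfrac{(E-2)(E-1)}{E}\leq E+1$ only concerns the variance term and cannot repair the product $(C_1+E)(E-1)$. To obtain the stated constants you must, as the paper does, keep $r$ inside the double sum and use $\sum_{r=1}^{E} r\sum_{k=1}^{r}G_{t_c+k}\leq \tfrac{E(E+1)}{2}\sum_{k=1}^{E}G_{t_c+k}$ directly — i.e., you cannot route the argument through the already-weakened per-iteration bound of Lemma~\ref{lemma:dif-under-pl-sgd}. Your route does prove a bound of the same asymptotic form (larger by at most a factor of about $2$ on the $E^2$ term), which would suffice for the downstream rate in Theorem~\ref{thm:FedAvg}, but it does not prove the lemma with the constants as stated.
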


We continue the proof  by utilizing Lemmas~\ref{lemma:tasbih1} and~\ref{lemma:dif-b-u-x} to  further upper bound~(\ref{eq:avg-Lips}) as follows:

\begin{align}
     \frac{1}{T} \sum_{t=0}^{T-1}\mathbb{E}\left[\mathbb{E}_{ \mathcal{P}_t}[f(\bar{\boldsymbol{w}}^{(t+1)})]-f(\bar{\boldsymbol{w}}^{(t)})\right]&\leq \frac{1}{T} \sum_{t=0}^{T-1}\left(-\eta \mathbb{E}\left[\mathbb{E}_{ \mathcal{P}_t}\big[\big\langle\nabla f(\bar{\boldsymbol{w}}^{(t)}),{{\tilde{\mathbf{g}}}^{(t)}}\big\rangle\big]\right]\right)+\frac{1}{T} \sum_{t=0}^{T-1}\frac{\eta^2 L}{2}\mathbb{E}\left[\mathbb{E}_{ \mathcal{P}_t}\left[\|{{{\tilde{\mathbf{g}}}^{(t)}}}\|^2\right]\right]\nonumber\\
&\leq \frac{1}{T} \sum_{t=0}^{T-1}\left(-\frac{\eta}{2}\|\nabla f(\bar{\boldsymbol{w}}^{(t)})\|^2-\frac{\eta}{2}\|\sum_{j=1}^pq
   _j\nabla{f}_j({\boldsymbol{w}}^{(t)}_{j})\|^2\right)\nonumber\\
   &+\frac{\lambda\eta L^2}{2T}\left(\frac{K+1}{K}\right)\left([2C_1+E(E+1)]\eta^2\sum_{t=0}^{T-1}\|\sum_{j=1}^pq_j\nabla{f}_j(\boldsymbol{w}_j^{(t)})\|^2\right)\nonumber\\
&\qquad+\frac{\eta L^2}{2T}\left(\frac{K+1}{K}\right)\left(\frac{T(E+1)\eta^2\sigma^2}{B}\right)\nonumber\\
&+\frac{1}{T}\sum_{t=0}^{T-1}\frac{ L\eta^2}{2}\left(\lambda\left(\frac{C_1}{K}+1\right)    \left[{\|\sum_{j=1}^pq_j\nabla{f}_j(\boldsymbol{w}_j^{(t)})\|^2}\right]+\frac{\sigma^2}{K B}\right)\nonumber\\
&=\frac{1}{T} \sum_{t=0}^{T-1}\left(-\frac{\eta}{2}
   \|\nabla f(\bar{\boldsymbol{w}}^{(t)})\|^2-\frac{\eta}{2}\|\sum_{j=1}^pq
   _j\nabla{f}_j({\boldsymbol{w}}^{(t)}_{j})\|^2\right)\nonumber\\
   &+\frac{\eta L^2}{2T}\left(\frac{K+1}{K}\right)\left(\lambda\left[2C_1+E(E+1)\right]\eta^2\sum_{t=0}^{T-1}\|\sum_{j=1}^pq_j
\nabla{f}_j(\boldsymbol{w}_j^{(k)})\|^2+\frac{T(E+1)\eta^2\sigma^2}{B}\right)\nonumber\\
&+\frac{1}{T}\sum_{t=0}^{T-1}\frac{\lambda L\eta^2}{2}\left(\frac{C_1}{K}+1\right)    \left[\|\sum_{j=1}^pq_j\nabla{f}_j(\boldsymbol{w}_j^{(t)})\|^2\right]+\frac{L\eta^2}{2}\frac{\sigma^2}{K B}
\label{eq:midst0}
\end{align}
Now from (\ref{eq:midst0}) we have:
\begin{align}
\frac{1}{T} \sum_{t=0}^{T-1}\Big[\mathbb{E}[f(\bar{\boldsymbol{w}}^{(t+1)})]-f(\bar{\boldsymbol{w}}^{(t)})\Big]&\leq -\frac{1}{T}\sum_{t=0}^{T-1}\frac{\eta}{2}\|\nabla f(\bar{\boldsymbol{w}}^{(t)})\|^2\nonumber\\
   &+ \frac{1}{T}\sum_{t=0}^{T-1}\left[-\frac{\eta}{2}+\frac{\lambda(K+1)L^2\eta^3[2C_1+E(E+1)]}{2K}+\frac{\lambda L\eta^2}{2}\left(\frac{C_1}{K}+1\right)\right]\nonumber\\
&\times\Big[{\|\sum_{j=1}^pq_j\nabla{f}_j(\boldsymbol{w}_j^{(t)})\|^2}\Big]+ \frac{\eta^3L^2(E+1)\sigma^2}{B}\left(\frac{K+1}{K}\right)+\frac{L\eta^2}{2}\frac{\sigma^2}{K B}\nonumber\\
&\stackrel{\text{\ding{192}}}{\leq}-\frac{1}{T}\sum_{t=0}^{T-1}\frac{\eta}{2}\|\nabla f(\bar{\boldsymbol{w}}^{(t)})\|^2+ \frac{\eta^3L^2(E+1)\sigma^2}{B}\left(\frac{K+1}{K}\right)+\frac{L\eta^2}{2}\frac{\sigma^2}{K B}\label{eq:finstep}
\end{align}
where \ding{192} follows if the following  condition holds:
\begin{align}
    -\frac{\eta}{2}+\frac{\lambda(K+1)L^2\eta^3[2C_1+E(E+1)]}{2K}+\frac{\lambda L\eta^2}{2}\left(\frac{C_1}{K}+1\right)\leq 0\label{eq:cond-l-r-tbp}
\end{align}
By rearranging (\ref{eq:finstep}) we get:
\begin{align}
    \frac{1}{T}\sum_{t=0}^{T-1}\mathbb{E}\|\nabla f(\bar{\boldsymbol{w}}^{(t)})\|^2\leq \frac{2\big[f(\bar{\boldsymbol{w}}^{(1)})-f^*\big]}{\eta T}+\frac{L\eta\sigma^2}{K B}+\frac{2\eta^2\sigma^2L^2(E+1)}{B}\left(\frac{K+1}{K}\right)\label{eq:to-prove-informal}
\end{align}

\subsection{Proof of (Informal) Theorem~\ref{thm:FedAvg_informal}}
By plugging $\eta=\frac{1}{L}\sqrt{\frac{K}{T}}$ in (\ref{eq:to-prove-informal}) we get:
\begin{align}
     \frac{2\big[f(\bar{\boldsymbol{w}}^{(0)})-f^*\big]}{\eta T}+\frac{L\eta\sigma^2}{K B}+\frac{2\eta^2\sigma^2L^2(E+1)}{B}\left(\frac{K+1}{K}\right)&=O\left(\frac{1}{\sqrt{KT}}\right)+\frac{2\sigma^2K}{B T}(E+1)\left(\frac{K+1}{K}\right)\nonumber\\
     &\stackrel{\text{\ding{192}}}{=}O\left(\frac{1}{\sqrt{KT}}\right)+\frac{2\sigma^2K}{B T}\frac{\sqrt{T}}{K^{1.5}}\left(\frac{K+1}{K}\right)\nonumber\\
     &=O\left(\frac{1}{\sqrt{KT}}\right)
\end{align}
where \ding{192} follows from $E+1=O\left(\frac{\sqrt{T}}{K^{1.5}}\right)$. Next step is to ensure that this choice of $E$ does not violate the condition over learning rate in (\ref{eq:cond-l-r-tbp}). To this end, we derive the condition over $E+1$. We can further simplify  (\ref{eq:cond-l-r-tbp}) with plugging the value of learning rate $\eta=\frac{1}{L}\sqrt{\frac{K}{T}}$  as follows:
\begin{align}
    E(E+1)\leq(E+1)^2\leq \left[\frac{T}{K}\left(\frac{K}{\lambda(K+1)}-\sqrt{\frac{K}{T}}(\frac{K+C_1}{K+1})\right)-2C_1\right]=O\left(\frac{T}{\lambda K}\right)
\end{align}
and if ${(E+1)}^2=O\left(\frac{{T}}{K^{3}}\right)\leq O\left(\frac{T}{\lambda K}\right)$, or equivalently $K=\Omega\left(\sqrt{\lambda}\right) $ both conditions can be satisfied simultaneously.

\section{Proof of Theorem~\ref{thm:serverless-fed}}
\label{sec:app:serverless-fed}
For this part of the paper, we will use the following short-hand notations for ease of exposition:

\begin{align}
    \bar{\boldsymbol{w}}^{(t)}  \triangleq\sum_{j=1}^pq_j{\boldsymbol{w}}^{(t)}_j,\quad
   {{\tilde{\mathbf{g}}}^{(t)}}\triangleq \sum_{j=1}^pq_j {{\tilde{\mathbf{g}}}^{(t)}}_j
\end{align}

We note that one distinguishing feature of our proof compared to~\cite{wang2018cooperative}, is that how we define auxiliary variables $\bar{\boldsymbol{w}}^{(t)} $ and  ${{\tilde{\mathbf{g}}}^{(t)}}$ as these terms are adaptive to various data samples at each device.

Now using the fact that $\mathbf{W}\mathbf{1}^{p\times 1}=\mathbf{1}^{p\times 1}$, where $\mathbf{1}^{p\times 1} = [1, \ldots, 1]^{\top} \in \mathbb{R}^p$, we have the following relation for the auxiliary variable 
\begin{align}    \bar{\boldsymbol{w}}^{(t+1)}=\bar{\boldsymbol{w}}^{(t)}-\eta {\tilde{\mathbf{g}}^{(t)}}\label{eq:ax-net},
\end{align}

To see that form the following matrices:
\begin{align}
    \mathbf{W}^{(t)}&=\begin{bmatrix}
    q_1\boldsymbol{w}_1^{(t)} & \ldots & q_p\boldsymbol{w}_p^{(t)}
    \end{bmatrix}\nonumber\\
    \tilde{\mathbf{G}}^{(t)}&=\begin{bmatrix}
    q_1\tilde{\mathbf{g}}_1^{(t)} & \ldots & q_p\tilde{\mathbf{g}}_p^{(t)}
    \end{bmatrix}\nonumber\\
    \mathbf{P}^{(t)}&=\left\{\begin{array}{cc}
     \mathbf{W}    & E|t \\
    \mathbf{I}_{p\times p}     & \text{otherwise.} 
    \end{array}\right.
\end{align}
the updating  can be equivalently written as $$\mathbf{W}^{(t+1)}=\left[\mathbf{W}^{(t)}-\eta\tilde{\mathbf{G}}^{(t)}\right]\mathbf{P}^{(t)}$$ 
Then, multiplying both sides with $\mathbf{1}^{p\times 1}$ and using the Assumption~\ref{Assum:4}, we get 
$$\mathbf{W}^{(t+1)}\mathbf{1}^{p\times 1}=\left[\mathbf{W}^{(t)}-\eta\tilde{\mathbf{G}}^{(t)}\right]\mathbf{P}^{(t)}\mathbf{1}^{p\times 1}=\left[\mathbf{W}^{(t)}-\eta\tilde{\mathbf{G}}^{(t)}\right]\mathbf{1}^{p\times 1}$$
which leads to Eq.~(\ref{eq:ax-net}).

From the $L$-smoothness gradient assumption on the objective we have:
\begin{align}
    f(\bar{\boldsymbol{w}}^{(t+1)})-f(\bar{\boldsymbol{w}}^{(t)})\leq -\eta_t \big\langle\nabla f(\bar{\boldsymbol{w}}^{(t)}),{\tilde{\mathbf{g}}}^{(t)}\big\rangle+\frac{\eta_t^2 L}{2}\|{{\tilde{\mathbf{g}}}^{(t)}}\|^2\label{eq:Lipschitz-c}
\end{align}
By taking the expectation on both sides of above inequality and summing up for all iterations, we get
\begin{align}
    \frac{1}{T} \sum_{t=0}^{T-1}\mathbb{E}\left[f(\bar{\boldsymbol{w}}^{(t+1)})]-f(\bar{\boldsymbol{w}}^{(t)})\right]&\leq \frac{1}{T} \sum_{t=0}^{T-1}-\eta \mathbb{E}\left[\big\langle\nabla f(\bar{\boldsymbol{w}}^{(t)}),{{\tilde{\mathbf{g}}}^{(t)}}\big\rangle\right]+\frac{1}{T} \sum_{t=0}^{T-1}\frac{\eta^2 L}{2}\mathbb{E}\left[\|{{{\tilde{\mathbf{g}}}^{(t)}}}\|^2\right]\label{eq:avg-Lips-decent}
\end{align}

\begin{lemma}\label{lemma:decent_tasbih1}
Under Assumptions~\ref{Ass:2}, we have the following bound: 
\begin{align}
\mathbb{E}\big[\|\tilde{\mathbf{g}}^{(t)}\|^2\big]&\leq \Big(\frac{C_1}{p}+1\Big)    \Big[{\sum_{j=1}^pq_j\|\nabla{f}_j(\boldsymbol{w}_j^{(t)})\|^2}\Big]+\frac{\sigma^2}{p B}\nonumber\\
&\leq \Big(\frac{C_1}{p}+1\Big)    \Big[{\lambda\|\sum_{j=1}^pq_j\nabla{f}_j(\boldsymbol{w}_j^{(t)})\|^2}\Big]+\frac{\sigma^2}{p B}\label{eq:lemma1}
\end{align}

\end{lemma}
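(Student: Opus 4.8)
The plan is to apply the standard bias--variance decomposition to $\tilde{\mathbf{g}}^{(t)}=\sum_{j=1}^p q_j\tilde{\mathbf{g}}_j^{(t)}$, exploiting two facts guaranteed by Assumption~\ref{Ass:2}: that each local mini-batch gradient is an unbiased estimate of the local full gradient, so the conditional mean of $\tilde{\mathbf{g}}^{(t)}$ is exactly $\sum_j q_j\nabla f_j(\boldsymbol{w}_j^{(t)})$, and that the mini-batches $\xi_1^{(t)},\ldots,\xi_p^{(t)}$ are drawn independently across machines, so the deviations $\tilde{\mathbf{g}}_j^{(t)}-\mathbf{g}_j^{(t)}$ are mutually independent and zero-mean and every cross term vanishes in expectation. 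This mirrors the centralized Lemma~\ref{lemma:tasbih1}; the only structural difference is that here there is no device sampling, so the sole source of randomness is the mini-batch noise and $\mathbb{E}[\cdot]$ is the conditional expectation over $\xi^{(t)}$ given the current models.

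First I would write $\mathbb{E}\big[\|\tilde{\mathbf{g}}^{(t)}\|^2\big]=\big\|\sum_{j=1}^p q_j\mathbf{g}_j^{(t)}\big\|^2+\mathbb{E}\big[\|\sum_{j=1}^p q_j(\tilde{\mathbf{g}}_j^{(t)}-\mathbf{g}_j^{(t)})\|^2\big]$, using $\mathbb{E}[\tilde{\mathbf{g}}_j^{(t)}\mid \boldsymbol{w}_j^{(t)}]=\mathbf{g}_j^{(t)}$. For the second (variance) term, independence across $j$ removes the cross products and reduces it to $\sum_{j=1}^p q_j^2\,\mathbb{E}\big[\|\tilde{\mathbf{g}}_j^{(t)}-\mathbf{g}_j^{(t)}\|^2\big]$, to which I apply Assumption~\ref{Ass:2} termwise to obtain $\sum_j q_j^2\big(C_1\|\mathbf{g}_j^{(t)}\|^2+\sigma^2/B\big)$. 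With the uniform weighting $q_j=1/p$ native to the networked setting we have $q_j^2=q_j/p$ and $\sum_j q_j^2=1/p$, so this variance contribution is bounded by $\frac{C_1}{p}\sum_j q_j\|\mathbf{g}_j^{(t)}\|^2+\frac{\sigma^2}{pB}$. For the mean-square term I would invoke Jensen's inequality (convexity of $\|\cdot\|^2$ together with $\boldsymbol{q}\in\Delta_p$) to pass from $\|\sum_j q_j\nabla f_j(\boldsymbol{w}_j^{(t)})\|^2$ up to $\sum_j q_j\|\nabla f_j(\boldsymbol{w}_j^{(t)})\|^2$. Adding the two contributions then produces the coefficient $(C_1/p+1)$ in front of $\sum_j q_j\|\nabla f_j(\boldsymbol{w}_j^{(t)})\|^2$ together with the additive $\sigma^2/(pB)$, which is precisely the first claimed inequality.

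The second inequality is immediate from the definition of weighted gradient diversity: since $\Lambda(\boldsymbol{w},\boldsymbol{q})\le\lambda$ gives $\sum_j q_j\|\nabla f_j(\boldsymbol{w})\|^2\le\lambda\|\sum_j q_j\nabla f_j(\boldsymbol{w})\|^2$, I would substitute this into the first bound and finish. The main subtlety here is bookkeeping rather than a deep obstacle: one must exploit the uniform networked weights $q_j=1/p$, for which $\sum_j q_j^2=1/p$, to collapse the per-machine $q_j^2$ variance weights into the clean $1/p$ factor (note $\sum_j q_j^2\ge 1/p$ always holds, with equality exactly in the uniform case), and one must apply the convexity step in the correct direction so that the mean-square term is \emph{upper} bounded by the weighted sum of squared gradient norms and not the reverse.
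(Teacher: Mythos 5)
Your proof is correct and follows essentially the same route the paper uses for the centralized analogue (Lemma~\ref{lemma:tasbih1}): bias--variance decomposition, cancellation of the cross terms by independence of the local mini-batches, termwise application of Assumption~\ref{Ass:2}, and Jensen's inequality for the mean-square term; the paper gives no separate proof for this networked version. Your explicit observation that the $1/p$ factors in $C_1/p$ and $\sigma^2/(pB)$ require the uniform weights $q_j=1/p$ (since $\sum_j q_j^2\ge 1/p$ in general, with equality only in the uniform case) is a genuine subtlety the paper glosses over, and is worth keeping.
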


\begin{lemma}\label{lemma:dif-b-u-x-decent}
Under Assumptions \ref{Ass:1}, \ref{Ass:2} and \ref{Assum:4}, we have:
\begin{align}
    \frac{1}{T}\sum_{t=0}^{T-1}\sum_{j=1}^pq_j\mathbb{E}\left[\|\bar{\boldsymbol{w}}^{(t)}-\boldsymbol{w}_j^{(t)}\|\right]&\leq \frac{\eta^2\sigma^2}{B}\left(\frac{1+\zeta^2}{1-\zeta^2}E-1\right)+\frac{2\eta^2C_1 E}{1-\zeta^2}\frac{1}{T}\sum_{t=0}^{T-1}\sum_{j=1}^pq_j\|\nabla{f}_j(\boldsymbol{w}_j^{(t)})\|^2\nonumber\\
    &\quad+\frac{\eta^2E^2}{1-\zeta}\left(\frac{2\zeta^2}{1+\zeta}+\frac{2\zeta}{1-\zeta}+\frac{E-1}{E}\right)\frac{1}{T}\sum_{t=0}^{T-1}\sum_{j=1}^pq_j\|\nabla{f}_j(\boldsymbol{w}_j^{(t)})\|^2,\nonumber\\
    &\leq \frac{\eta^2\sigma^2}{B}\left(\frac{1+\zeta^2}{1-\zeta^2}E-1\right)+\frac{2\lambda\eta^2C_1 E}{1-\zeta^2}\frac{1}{T}\sum_{t=0}^{T-1}\|\sum_{j=1}^pq_j\nabla{f}_j(\boldsymbol{w}_j^{(t)})\|^2\nonumber\\
    &\quad+\frac{\lambda\eta^2E^2}{1-\zeta}\left(\frac{2\zeta^2}{1+\zeta}+\frac{2\zeta}{1-\zeta}+\frac{E-1}{E}\right)\frac{1}{T}\sum_{t=0}^{T-1}\|\sum_{j=1}^pq_j\nabla{f}_j(\boldsymbol{w}_j^{(t)})\|^2
\end{align}
\end{lemma}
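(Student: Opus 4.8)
The plan is to adapt the decentralized consensus-error analysis to the weighted-gossip recursion $\mathbf{W}^{(t+1)}=[\mathbf{W}^{(t)}-\eta\tilde{\mathbf{G}}^{(t)}]\mathbf{P}^{(t)}$ set up above, where $\mathbf{P}^{(t)}$ equals the mixing matrix $\mathbf{W}$ on communication rounds ($E\mid t$) and the identity otherwise. The target quantity $\sum_jq_j\mathbb{E}\|\bar{\boldsymbol{w}}^{(t)}-\boldsymbol{w}_j^{(t)}\|^2$ is the squared disagreement of the local iterates from their weighted average $\bar{\boldsymbol{w}}^{(t)}=\mathbf{W}^{(t)}\mathbf{1}$, i.e. the Frobenius norm of $\mathbf{W}^{(t)}(\mathbf{I}-\mathbf{J})$, where $\mathbf{J}$ is the rank-one averaging operator satisfying $\mathbf{P}^{(t)}\mathbf{J}=\mathbf{J}\mathbf{P}^{(t)}=\mathbf{J}$. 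Crucially, unlike the centralized analogue (Lemma~\ref{lemma:dif-b-u-x}) where exact averaging resets the disagreement to zero every $E$ iterations, here mixing is only approximate, so the first step is to unroll the recursion all the way back to initialization. Since all models start at the common point $\bar{\boldsymbol{w}}^{(0)}$ the disagreement vanishes at $t=0$, and the iterate deviation becomes the accumulated, mixing-propagated gradient sequence $-\eta\sum_{s=0}^{t-1}\tilde{\mathbf{G}}^{(s)}\big(\prod_{r=s+1}^{t-1}\mathbf{P}^{(r)}\big)(\mathbf{I}-\mathbf{J})$.

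Next I would use the identity $\big(\prod_r\mathbf{P}^{(r)}\big)(\mathbf{I}-\mathbf{J})=\prod_r(\mathbf{P}^{(r)}-\mathbf{J})$ together with Assumption~\ref{Assum:4}, which yields the contraction $\|(\mathbf{W}-\mathbf{J})\|\le\zeta$; since the identity steps between communications contribute no contraction, the propagator from step $s$ to $t$ has operator norm at most $\zeta^{c(s,t)}$, where $c(s,t)$ counts the communication rounds in $(s,t)$. Taking expectations and expanding the squared Frobenius norm, the cross terms between distinct iterations vanish by independence and unbiasedness of the local mini-batches (Assumption~\ref{Ass:2}), leaving $\eta^2\sum_s \zeta^{2c(s,t)}\mathbb{E}\|\tilde{\mathbf{G}}^{(s)}\|_F^2$ up to the cross-device mixing weights. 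Splitting each $\mathbb{E}\|\tilde{\mathbf{g}}_j^{(s)}\|^2\le(C_1+1)\|\nabla f_j(\boldsymbol{w}_j^{(s)})\|^2+\sigma^2/B$ by Assumption~\ref{Ass:2} separates the bound into a stochastic-noise part and a gradient part. Grouping the summation by communication blocks and evaluating the geometric sums $\sum_k\zeta^{2k}$ and $\sum_k k\zeta^k$ exactly turns the noise part into the $\frac{\sigma^2}{B}\big(\frac{1+\zeta^2}{1-\zeta^2}E-1\big)$ term, while the gradient part, combined with the within-block accumulation of up to $E$ steps (handled by Jensen/Cauchy--Schwarz), yields the $\frac{2C_1E}{1-\zeta^2}$ and $\frac{E^2}{1-\zeta}\big(\frac{2\zeta^2}{1+\zeta}+\frac{2\zeta}{1-\zeta}+\frac{E-1}{E}\big)$ coefficients multiplying $\sum_jq_j\|\nabla f_j(\boldsymbol{w}_j^{(s)})\|^2$. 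Summing over $t=0,\dots,T-1$, dividing by $T$, and invoking the gradient-diversity bound $\sum_jq_j\|\nabla f_j(\boldsymbol{w})\|^2\le\lambda\|\sum_jq_j\nabla f_j(\boldsymbol{w})\|^2$ for the second inequality completes the argument.

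The main obstacle will be the exact bookkeeping of the $\zeta$-factors: one must track how gradient steps issued at different times are propagated through different numbers of mixing rounds and combine under the spectral-gap contraction, and must correctly separate the (larger) deterministic $E^2$-type accumulation --- which arises from double-counting pairs $(s,s')$ within a block and across blocks via the cross terms in $\|\sum_s(\cdots)\|^2$ --- from the (single-sum) stochastic $E$-type accumulation that survives the independence cancellation. Obtaining the precise constants $\frac{1+\zeta^2}{1-\zeta^2}$, $\frac{2\zeta^2}{1+\zeta}$ and $\frac{2\zeta}{1-\zeta}$ requires evaluating the relevant geometric and arithmetic--geometric series exactly rather than bounding them crudely, and ensuring the heterogeneity is injected only at the last step through $\lambda$, so that no looser per-device estimate is used earlier.
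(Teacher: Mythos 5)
Your plan is mathematically the right one, but it is worth noting that the paper does not actually carry it out: the paper's entire ``proof'' of Lemma~\ref{lemma:dif-b-u-x-decent} consists of quoting the uniform-weight bound~(\ref{eq:Gauri}) from Theorem~1 of~\cite{wang2018cooperative} for the special case $q_j=\frac{1}{p}$, asserting that replacing $\frac{1}{p}\sum_{j}\|\cdot\|^2$ by $\sum_{j}q_j\|\cdot\|^2$ via an application of Lemma~\ref{lemma:tasbih1} ``easily'' yields the weighted statement, and then appending the gradient-diversity bound $\Lambda(\boldsymbol{w},\boldsymbol{q})\le\lambda$ for the second inequality. What you propose is, in effect, a self-contained reconstruction of the argument inside that citation: unrolling $\mathbf{W}^{(t)}(\mathbf{I}-\mathbf{J})$ through the gossip recursion, using $\bigl(\prod_{r}\mathbf{P}^{(r)}\bigr)(\mathbf{I}-\mathbf{J})=\prod_{r}(\mathbf{P}^{(r)}-\mathbf{J})$ and the spectral contraction $\|\mathbf{W}-\mathbf{J}\|\le\zeta$, separating the martingale part from the mean part, and evaluating the geometric and arithmetic--geometric sums exactly to recover the constants $\frac{1+\zeta^2}{1-\zeta^2}$, $\frac{2\zeta^2}{1+\zeta}$, $\frac{2\zeta}{1-\zeta}$. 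Your route buys a complete proof where the paper only offers a pointer plus an unverified generalization step; the paper's route buys brevity at the cost of leaving the weighted case unchecked.

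Two points in your sketch need tightening before it would compile into a rigorous argument. First, your middle paragraph claims that ``the cross terms between distinct iterations vanish by independence and unbiasedness \dots leaving $\eta^2\sum_s\zeta^{2c(s,t)}\mathbb{E}\|\tilde{\mathbf{G}}^{(s)}\|_F^2$''; this is false as stated, because the conditional means $\mathbf{G}^{(s)}$ are correlated across $s$ (they depend on the evolving iterates), so only the centered increments $\tilde{\mathbf{G}}^{(s)}-\mathbf{G}^{(s)}$ have vanishing cross terms. You do correct this in your closing paragraph by attributing the $E^2$-type coefficient to the surviving cross terms of the deterministic part, but the decomposition into noise plus mean must be performed \emph{before} any cross terms are dropped, exactly as in Lemma~\ref{lemma:dif-b-u-x}. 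Second, with the paper's convention that the columns of $\mathbf{W}^{(t)}$ are $q_j\boldsymbol{w}_j^{(t)}$, the plain Frobenius norm of $\mathbf{W}^{(t)}(\mathbf{I}-\mathbf{J})$ gives $\sum_jq_j^2\|\boldsymbol{w}_j^{(t)}-\bar{\boldsymbol{w}}^{(t)}\|^2$ rather than the target $\sum_jq_j\|\boldsymbol{w}_j^{(t)}-\bar{\boldsymbol{w}}^{(t)}\|^2$, so you need a $q$-weighted norm (columns rescaled by $\sqrt{q_j}$ and the mixing matrix conjugated accordingly) and must verify that the conjugated matrix still contracts with factor $\zeta$. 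Neither issue is fatal, but both are precisely the details the paper waves away with ``easily generalized''.
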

where $\zeta$ is the second largest eigenvalue of mixing matrix $\mathbf{W}$.

To prove Lemma~\ref{lemma:dif-b-u-x-decent} we need the following result from~\cite{wang2018cooperative} (see the of Theorem~1 and its proof in appendix of~\cite{wang2018cooperative})  for the special case of $q_j=\frac{1}{p}$: 
\begin{align}
    \frac{1}{T}\sum_{t=0}^{T-1}\frac{1}{p}\sum_{j=1}^p\mathbb{E}\left[\|\bar{\boldsymbol{w}}^{(t)}-\boldsymbol{w}_j^{(t)}\|\right]&\leq \frac{\eta^2\sigma^2}{B}\left(\frac{1+\zeta^2}{1-\zeta^2}E-1\right)+\frac{2\eta^2C_1 E}{1-\zeta^2}\frac{1}{T}\sum_{t=0}^{T-1}\frac{1}{p}\sum_{j=1}^p\|\nabla{f}(\boldsymbol{w}_j^{(t)})\|^2\nonumber\\
    &\quad+\frac{\eta^2E^2}{1-\zeta}\left(\frac{2\zeta^2}{1+\zeta}+\frac{2\zeta}{1-\zeta}+\frac{E-1}{E}\right)\frac{1}{T}\sum_{t=0}^{T-1}\frac{1}{p}\sum_{j=1}^p\|\nabla{f}(\boldsymbol{w}_j^{(t)})\|^2.\label{eq:Gauri}
\end{align}
With an application of Lemma~\ref{lemma:tasbih1}, the above inequality  can be easily generalized to show Lemma~\ref{lemma:dif-b-u-x-decent}, in particular by using  $\sum_{j=1}^pq_j\|\nabla{f}(_j\boldsymbol{w}_j^{(t)})\|^2$ instead of $\frac{1}{p}\sum_{j=1}^p\|\nabla{f}(\boldsymbol{w}_j^{(t)})\|^2$ in original theorem, and we skip it.

Having above results in place, we now proceed to  upper bound~(\ref{eq:avg-Lips-decent}) and derive the claimed bound:
\begin{align}
    \frac{1}{T} \sum_{t=0}^{T-1}\mathbb{E}&\left[f(\bar{\boldsymbol{w}}^{(t+1)})]-f(\bar{\boldsymbol{w}}^{(t)})\right]\leq \frac{1}{T} \sum_{t=0}^{T-1}-\eta \mathbb{E}\left[\big\langle\nabla f(\bar{\boldsymbol{w}}^{(t)}),{{\tilde{\mathbf{g}}}^{(t)}}\big\rangle\right]+\frac{1}{T} \sum_{t=0}^{T-1}\frac{\eta^2 L}{2}\mathbb{E}\left[\|{{{\tilde{\mathbf{g}}}^{(t)}}}\|^2\right]\nonumber\\
    &\leq -\frac{\eta}{2}\frac{1}{T}\sum_{t=0}^{T-1}\|\nabla f(\bar{\boldsymbol{w}}^{(t)})\|^2-\frac{\eta}{2}\frac{1}{T}\sum_{t=0}^{T-1}\|\sum_{j=1}^pq
   _j\nabla{f}_j({\boldsymbol{w}}^{(t)}_{j})\|^2\nonumber\\
   &+\frac{\eta L^2}{2}\frac{1}{T}\sum_{t=0}^{T-1}\sum_{j=1}^pq_j\|\bar{\boldsymbol{w}}^{(t)}-\boldsymbol{w}_j^{(t)}\|^2\nonumber\\
   &+\frac{\lambda\eta^2 L}{2}\Big(\frac{C_1}{p}+1\Big)    \Big[\frac{1}{T}\sum_{t=0}^{T-1}{\|\sum_{j=1}^pq_j\nabla{f}_j(\boldsymbol{w}_j^{(t)})\|^2}\Big]+\frac{\eta^2 L\sigma^2}{2p B}\nonumber\\
    &\leq -\frac{\eta}{2}\frac{1}{T}\sum_{t=0}^{T-1}\|\nabla f(\bar{\boldsymbol{w}}^{(t)})\|^2-\frac{\eta}{2}\frac{1}{T}\sum_{t=0}^{T-1}\|\sum_{j=1}^pq
   _j\nabla{f}_j({\boldsymbol{w}}^{(t)}_{j})\|^2\nonumber\\
   &+\frac{L^2\eta^3\sigma^2}{2B}\left(\frac{1+\zeta^2}{1-\zeta^2}E-1\right)\nonumber\\
   &+\frac{\lambda\eta L^2}{2}\Big[\frac{2\eta^2C_1 E}{1-\zeta^2}+\frac{\eta^2E^2}{1-\zeta}\left(\frac{2\zeta^2}{1+\zeta}+\frac{2\zeta}{1-\zeta}+\frac{E-1}{E}\right)\Big]\frac{1}{T}\sum_{t=0}^{T-1}\|\sum_{j=1}^pq_j\nabla{f}_j(\boldsymbol{w}_j^{(t)})\|^2\nonumber\\
   &+\frac{\lambda\eta^2 L}{2}\Big(\frac{C_1}{p}+1\Big)    \Big[\frac{1}{T}\sum_{t=0}^{T-1}{\|\sum_{j=1}^pq_j\nabla{f}_j(\boldsymbol{w}_j^{(t)})\|^2}\Big]+\frac{\eta^2 L\sigma^2}{2p B}\nonumber\\
   &=-\frac{\eta}{2}\frac{1}{T}\sum_{t=0}^{T-1}\|\nabla f(\bar{\boldsymbol{w}}^{(t)})\|^2\nonumber\\
   &+\frac{\eta}{2}\left[-1+\lambda\Big[\frac{2L^2\eta^2C_1 E}{1-\zeta^2}+\frac{L^2\eta^2E^2}{1-\zeta}\left(\frac{2\zeta^2}{1+\zeta}+\frac{2\zeta}{1-\zeta}+\frac{E-1}{E}\right)\Big]+\lambda\eta L\left(\frac{C_1}{p}+1\right)\right]\frac{1}{T}\sum_{t=0}^{T-1}{\sum_{j=1}^pq_j\|\nabla{f}_j(\boldsymbol{w}_j^{(t)})\|^2}\nonumber\\
   &+\frac{\eta^2 L\sigma^2}{2p B}+\frac{L^2\eta^3\sigma^2}{2B}\left(\frac{1+\zeta^2}{1-\zeta^2}E-1\right)\nonumber\\
   &\stackrel{\text{\ding{192}}}{\leq}-\frac{\eta}{2}\frac{1}{T}\sum_{t=0}^{T-1}\|\nabla f(\bar{\boldsymbol{w}}^{(t)})\|^2+\frac{\eta^2 L\sigma^2}{2p B}+\frac{L^2\eta^3\sigma^2}{2B}\left(\frac{1+\zeta^2}{1-\zeta^2}E-1\right)
\end{align}
where \ding{192} follows from the condition:
\begin{align}
    \frac{2L^2\eta^2C_1 E}{1-\zeta^2}+\frac{L^2\eta^2E^2}{1-\zeta}\left(\frac{2\zeta^2}{1+\zeta}+\frac{2\zeta}{1-\zeta}+\frac{E-1}{E}\right)+\eta L\left(\frac{C_1}{p}+1\right)\leq \frac{1}{\lambda}\label{eq:net-cond}
\end{align}
The rest of the proof is similar to the proof of Theorem~\ref{thm:FedAvg}. 
\subsection{Derivation of Conditions in Theorem~\ref{thm:serverless-fed-informal}}
Adopting the approach in \cite{wang2018cooperative} (see Eq.~(145)) condition in Eq.~(\ref{eq:net-cond}) reduces to \begin{align}
    \eta L\left(\frac{C_1}{p}+2\right)+\frac{5\eta^2L^2E^2}{{(1-\zeta)}^2}\leq \frac{1}{\lambda}
\end{align}
Therefore, by plugging $\eta=\frac{1}{L}\sqrt{\frac{p}{T}}$ we get 
\begin{align}
  \sqrt{\frac{p}{T}} \left(\frac{C_1}{p}+2\right)+\frac{p}{T}\frac{5E^2}{{(1-\zeta)}^2}\leq \frac{1}{\lambda}  
\end{align}
which with $T\geq 4\lambda^2{\left(\frac{C_1}{p}+2\right)}^2$ leads to the condition
\begin{align}
    E\leq (1-\zeta)\sqrt{\frac{T}{10 \lambda p}}
\end{align}
Therefore, if $p=\Omega\left(\frac{1+\zeta^2}{1+\zeta}\sqrt{\lambda}\right)$ with $E=\left(\frac{1-\zeta^2}{1+\zeta^2}\right)\sqrt{\frac{T}{p^3}}$ linear speed up can be achieved.
\section{Proof of Omitted Lemmas}
\label{sec:appendix:omitted}
We will use the following fact (which is also used in \cite{li2019convergence}) in proving results.
\begin{fact}\label{fact:1}
Let
$\{x_i\}_{i=1}^p$ denote any fixed deterministic sequence. We sample a multiset $\mathcal{P}$ (with size $K$) uniformly at random where $x_j$ is sampled  with probability $q_j$ for $1\leq j\leq p$ with replacement.  Let $\mathcal{P} = \{i_1,\ldots, i_K\} \subset[p]$ (some $i_j$’s may have the same value). Then
\begin{align}
    \mathbb{E}_{\mathcal{P}}\left[\sum_{i\in \mathcal{P}}x_i\right]=\mathbb{E}_{\mathcal{P}}\left[\sum_{k=1}^Kx_{i_k}\right]=K\mathbb{E}_{\mathcal{P}}\left[x_{i_k}\right]=K\left[\sum_{j=1}^pq_jx_j\right]
\end{align}
\end{fact}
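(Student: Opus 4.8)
The plan is to prove Fact~\ref{fact:1} by a direct application of linearity of expectation together with the fact that sampling with replacement produces identically distributed draws. First I would make the representation explicit: since $\mathcal{P}$ is a multiset obtained by drawing $K$ indices $i_1,\ldots,i_K$ independently, each according to the probability vector $\boldsymbol{q}$ (so that index $j$ is selected with probability $q_j$), the random sum over the multiset is by definition $\sum_{i\in\mathcal{P}}x_i=\sum_{k=1}^K x_{i_k}$. This establishes the first equality in the statement essentially by unwinding notation.

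Next I would apply linearity of expectation to write $\mathbb{E}_{\mathcal{P}}\big[\sum_{k=1}^K x_{i_k}\big]=\sum_{k=1}^K \mathbb{E}_{\mathcal{P}}[x_{i_k}]$. The key observation, which yields the second equality, is that because the $K$ draws are with replacement and each uses the same distribution $\boldsymbol{q}$, the random indices $i_1,\ldots,i_K$ are identically distributed; hence $\mathbb{E}_{\mathcal{P}}[x_{i_k}]$ does not depend on $k$ and equals the common single-draw value, giving $K\,\mathbb{E}_{\mathcal{P}}[x_{i_k}]$.

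Finally, I would evaluate that single-draw expectation. Since the sequence $\{x_j\}_{j=1}^p$ is fixed and deterministic, and a single draw lands on index $j$ with probability $q_j$, we have $\mathbb{E}_{\mathcal{P}}[x_{i_k}]=\sum_{j=1}^p q_j x_j$. Combining the three steps yields $\mathbb{E}_{\mathcal{P}}\big[\sum_{i\in\mathcal{P}}x_i\big]=K\sum_{j=1}^p q_j x_j$, exactly as claimed.

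I do not expect a genuine obstacle here, as the statement is elementary. The only point that warrants care is the second equality: it rests on the draws being identically distributed, which is guaranteed precisely by sampling with replacement from the same fixed distribution $\boldsymbol{q}$, and on the $x_j$ being deterministic so that they factor cleanly out of the expectation. I would flag this explicitly, since it clarifies why the with-replacement scheme is the operative hypothesis and why the per-draw expectation collapses to the single expression $\sum_{j=1}^p q_j x_j$ rather than to a more involved marginal.
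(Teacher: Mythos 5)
Your proof is correct, and it follows the standard argument (linearity of expectation plus identical distribution of the with-replacement draws) that the paper itself takes for granted: the paper states this as a Fact borrowed from prior work and gives no proof at all. Your explicit note that the second equality hinges on the draws being i.i.d.\ from $\boldsymbol{q}$ and on the $x_j$ being deterministic is the right point to flag.
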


\subsection{Proof of Lemma~\ref{lemma:tasbih1-gd}}
The proof is straightforward and follows from the definition of gradient over sampled machines. In particular, for the the set of sampled machines $\mathcal{P}_t, |\mathcal{P}_t| = K$ we have
\begin{align}
\|{{{\mathbf{g}}}}^{(t)}\|^2
&=\left\|\frac{1}{K}\sum_{j\in\mathcal{P}_t} {{\mathbf{g}}_j^{(t)}}\right\|^2 \nonumber\\
&\stackrel{\text{\ding{192}}}{\leq}\frac{1}{K}\sum_{j\in\mathcal{P}_t}\left\|\mathbf{g}_{j}^{(t)}\right\|^2,\label{eq:mid-bounding-absg-full-gd}
\end{align}
where \text{\ding{192}} follows from the inequality   $\|\sum_{i=1}^m\mathbf{a}_i\|^2\leq m\sum_{i=1}^{m}\|\mathbf{a}_i\|^2 $ where $\mathbf{a}_i\in\mathbb{R}^n$.

Applying Fact~\ref{fact:1} on both sides of (\ref{eq:mid-bounding-absg-full-gd}) results in
the following: 
\begin{align}
    \mathbb{E}_{\mathcal{P}_t}\|{{{\mathbf{g}}}}^{(t)}\|^2&\leq \mathbb{E}_{\mathcal{P}_t}\frac{1}{K}\sum_{j\in\mathcal{P}_t}\left\|\mathbf{g}_{j}^{(t)}\right\|^2\nonumber\\
    &=\sum_{j=1}^pq_j\left\|\mathbf{g}_{j}^{(t)}\right\|^2\nonumber\\
    &\stackrel{\text{\ding{192}}}{\leq}\lambda \left\|\sum_{j=1}^pq_j\mathbf{g}_{j}^{(t)}\right\|^2 
\end{align}

where \ding{192} follows  from the definition of weighted gradient diversity and an upper bound $\lambda$ over this quantity.



\subsection{Proof of Corollary~\ref{corol:pl-full-gd}}
The proof simply follows from definition:
\begin{align}
    -\eta \mathbb{E}\Big[\langle\nabla f(\bar{\boldsymbol{w}}^{(t)}),{{\mathbf{g}}}^{(t)}\rangle\Big]&{=}-\eta\mathbb{E}\left[\langle\nabla{f}(\bar{\boldsymbol{w}}^{(t)}),\frac{1}{K}\sum_{j\in\mathcal{P}_t}\mathbf{g}_j^{(t)}\rangle\right]\nonumber\\
    &\stackrel{\text{\ding{192}}}{=}-\eta\left[\langle \nabla{f}(\bar{\boldsymbol{w}}^{(t)}),\sum_{j=1}^{p}q_j\nabla{f}_j({\boldsymbol{w}}_j^{(t)})\rangle\right]\nonumber\\
    &\stackrel{\text{\ding{193}}}{=}-\frac{\eta}{2}\left[\|\nabla{f}(\bar{\boldsymbol{w}}^{(t)})\|_2^2+\|\sum_{j=1}^{p}q_j\nabla{f}_j({\boldsymbol{w}}_j^{(t)})\|_2^2-\|\nabla{f}(\bar{\boldsymbol{w}}^{(t)})-\sum_{j=1}^{p}q_j\nabla{f}_j({\boldsymbol{w}}_j^{(t)})\|_2^2\right]\nonumber\\
    &=\frac{\eta}{2}\left[-\|\nabla{f}(\bar{\boldsymbol{w}}^{(t)})\|_2^2-\|\sum_{j=1}^{p}q_j\nabla{f}_j({\boldsymbol{w}}_j^{(t)})\|_2^2+\|\sum_{j=1}^{p}q_j\left[\nabla{f}_j(\bar{\boldsymbol{w}}^{(t)})-\nabla{f}_j({\boldsymbol{w}}_j^{(t)})\right]\|_2^2\right]\nonumber\\
    &\stackrel{\text{\ding{194}}}{\leq}\frac{\eta}{2}\left[-\|\nabla{f}(\bar{\boldsymbol{w}}^{(t)})\|_2^2-\|\sum_{j=1}^{p}q_j\nabla{f}_j({\boldsymbol{w}}_j^{(t)})\|_2^2+\sum_{j=1}^{p}q_j\|\left[\nabla{f}_j(\bar{\boldsymbol{w}}^{(t)})-\nabla{f}_j({\boldsymbol{w}}_j^{(t)})\right]\|_2^2\right]\nonumber\\
   &\stackrel{\text{\ding{195}}}{\leq} -\mu \eta(f(\bar{\boldsymbol{w}}^{(t)})-f^*)-\frac{\eta}{2}\|\sum_{j=1}^pq
   _j\nabla{f}_j({\boldsymbol{w}}^{(t)}_{j})\|^2+\frac{\eta L^2}{2}\sum_{j=1}^pq_j\|\bar{\boldsymbol{w}}^{(t)}-\boldsymbol{w}_j^{(t)}\|^2
\end{align}
where \ding{192} holds because of Fact~\ref{fact:1}, \ding{193} comes from $2\langle\mathbf{a},\mathbf{b}\rangle=\|\mathbf{a}\|_2^2+\|\mathbf{b}\|_2^2-\|\mathbf{a}-\mathbf{b}\|_2^2$, \ding{194} is due to Assumption~\ref{Ass:1}, and finally  \ding{195} follows from Assumption~\ref{Ass:3}.


\subsection{Proof of Lemma \ref{lemma:dif-under-pl-gd}}
Let us set $t_c\triangleq \floor{\frac{t}{E}}E$. Therefore, according to Algorithm \ref{Alg:one-shot-using data samoples} we have:
 \begin{align}\bar{\boldsymbol{w}}^{(t_c+1)}=\frac{1}{K}\sum_{j\in\mathcal{P}_t} \boldsymbol{w}_j^{(t_c+1)}\end{align}
 Then, for $1\leq j\leq p$, the local model can be expressed as 
 \begin{align}
 \boldsymbol{w}_j^{(t)}=  \boldsymbol{w}_j^{(t-1)}-\eta_{t-1}{\mathbf{g}}_{j}^{(t-1)}\stackrel{\text{\ding{192}}}{=}\boldsymbol{w}_j^{(t-2)}-\Big[\eta_{t-2}{\mathbf{g}}_{j}^{(t-2)}+\eta_{t-1}{\mathbf{g}}_{j}^{(t-1)}\Big]=\bar{\boldsymbol{w}}^{(t_c+1)}-\Big[\sum_{k=t_c+1}^{t-1}\eta_k{\mathbf{g}}_{j}^{(k)}\Big], \label{eq:j-model-updates}
 \end{align}
 where \ding{192} follows from the updating rule of Algorithm~\ref{Alg:one-shot-using data samoples}. Now, from (\ref{eq:j-model-updates}) we compute the average model as follows:
\begin{align}
    \bar{\boldsymbol{w}}^{(t)}=\bar{\boldsymbol{w}}^{(t_c+1)}-\Big[\frac{1}{K}\sum_{j\in\mathcal{P}_t}\sum_{k=t_c+1}^{t-1}\eta_k{\mathbf{g}}_{j}^{(k)}\Big]\label{eq:model-avg-time-t}
\end{align}
First, without loss of generality, suppose $t=s_tE+r$ where $s_t$ and $r$ denotes the indices of communication round and local updates, respectively.
We note that  for $t_c+1 < t\leq t_c+ E$, $\mathbb{E}_t\|\bar{\boldsymbol{w}}^{(t)}-\boldsymbol{w}_j^{(t)}\|^2$  does not depend on time $t\leq t_c$ for $1\leq j\leq p$. We bound the term $\mathbb{E}\|\bar{\boldsymbol{w}}^{(t)}-\boldsymbol{w}_l^{({t})}\|^2$ for $t_c+1 \leq  {t}=t_c+r\leq t_c+ E$ as follows: 

\begin{align}
 \mathbb{E}_{\mathcal{P}_t}\|\bar{\boldsymbol{w}}^{(t_c+r)}-\boldsymbol{w}_l^{(t_c+r)}\|^2 &=\mathbb{E}_{\mathcal{P}_t}\|\bar{\boldsymbol{w}}^{(t_c+1)}-\Big[\sum_{k=t_c+1}^{{t-1}}\eta_k{\mathbf{g}}_{l}^{(k)}\Big]-\bar{\boldsymbol{w}}^{(t_c+1)}+\Big[\frac{1}{K}\sum_{j\in\mathcal{P}_t}\sum_{k=t_c+1}^{{t-1}}\eta_k{\mathbf{g}}_{j}^{(k)}\Big]\|^2\nonumber\\
 &\stackrel{\text{\ding{192}}}{=}\mathbb{E}_{\mathcal{P}_t}\|\sum_{k=1}^{r}\eta_k{\mathbf{g}}_{l}^{(t_c+k)}-\frac{1}{K}\sum_{j\in\mathcal{P}_t}\sum_{k=1}^{r}\eta_k{\mathbf{g}}_{j}^{(t_c+k)}\|^2\nonumber\\
 &\stackrel{\text{\ding{193}}}{\leq} 2\Big[\|\sum_{k=1}^{r}\eta_k{\mathbf{g}}_{l}^{(t_c+k)}\|^2+\mathbb{E}_{\mathcal{P}_t}\|\frac{1}{K}\sum_{j\in\mathcal{P}_t}\sum_{k=1}^{r}\eta_k{\mathbf{g}}_{j}^{(t_c+k)}\|^2\Big]\label{eq:some-mid-step}
\end{align}

where \ding{192} holds because ${t}=t_c+r\leq t_c+ E$, and \ding{193} is due to $\|\mathbf{a}-\mathbf{b}\|^2\leq 2(\|\mathbf{a}\|^2+\|\mathbf{b}\|^2)$.

Next, we can bound Eq.~(\ref{eq:some-mid-step}) as follows:
\begin{align}
 &\stackrel{\text{\ding{194}}}{\leq}{2}\Big(r\sum_{k=1}^{r}\eta_k^2\|\mathbf{g}_{l}^{(t_c+k)}\|^2+{r}\sum_{j=1}^p\sum_{k=1}^{r}\eta_k^2q_j\|\mathbf{g}_{j}^{(t_c+k)}\|^2\Big)
\label{eq:four-term-boundinggg}
\end{align}
where \ding{194} follow from inequality $\|\sum_{i=1}^m\mathbf{a}_i\|^2\leq m\sum_{i=1}^m\|\mathbf{a}_i\|^2$.

Expanding the  expectation over sampling probabilities of devices  in (\ref{eq:four-term-boundinggg}), we obtain:
\begin{align}
\sum_{j=1}^pq_j\|\bar{\boldsymbol{w}}^{(t)}-\boldsymbol{w}_j^{(t)}\|^2&\leq{2}\Big(\Big[r\sum_{l=1}^pq_l\sum_{k=1}^{r}\eta_k^2\|\mathbf{g}_{l}^{(t_c+k)}\|^2\Big]+r\sum_{j=1}^p\sum_{k=1}^r\eta_k^2q_j\|\mathbf{g}_{j}^{(t_c+k)}\|^2\Big)\nonumber\\
 &={4r}\sum_{j=1}^p\sum_{k=1}^{r}\eta_k^2q_j\|\mathbf{g}_{j}^{(t_c+k)}\|^2\nonumber\\
 &\stackrel{\text{\ding{195}}}{\leq} 4E\sum_{k=t_c+1}^{t-1}\sum_{j=1}^pq_j\eta_k^2\|\mathbf{g}_{j}^{(k)}\|^2,\label{eq:local-sum-timess}
\end{align}
where \ding{195} is due to $r\leq E$. Finally, Eq.~(\ref{eq:local-sum-timess}) leads to
\begin{align}
    \sum_{j=1}^pq_j\|\bar{\boldsymbol{w}}^{(t)}-\boldsymbol{w}_j^{(t)}\|^2&\leq 4E\sum_{k=t_c+1}^{t-1}\sum_{j=1}^pq_j\eta_k^2\|\
\nabla{f}_j(\boldsymbol{w}_j^{(k)})\|^2\nonumber\\
&\stackrel{\text{\ding{192}}}{\leq} 4E\lambda\sum_{k=t_c+1}^{t-1}\eta_k^2\|\sum_{j=1}^pq_j\
\nabla{f}_j(\boldsymbol{w}_j^{(k)})\|^2.
\end{align}
as stated in the lemma. Note that  \ding{192} follows from upper bound over the weighted gradient diversity $\lambda$.


\subsection{Proof of Lemma~\ref{lemma:simplification-constant}}
In this section, we derive the necessary condition we need to impose on learning rate make sure the bound stated in Lemma~\ref{lemma:simplification-constant} holds. Before establishing the necessary conditions on learning rate, we note that in the statement of lemma since $c_t$ only affects $a_t$ and it is independent of  $e_t$ and its co-efficient, we can simply show the statement for $c_t=0$ and the final result follows immediately for  $c_t \neq 0$. 

To do so, we start by deriving the conditions on learning rate that, for every time instance, allows us to remove the terms involving the coefficients of $e_k = \|\sum_{j=1}^pq_j
\nabla{f}_j(\boldsymbol{w}_j^{(k)})\|^2$ from the upper bound. Recalling the notations $\Delta=1-\mu\eta$ and ${B}\triangleq 2\lambda\eta L^2E$, we have:
\begin{align}
    a_{t+1}&\leq\Delta a_t+\frac{\eta}{2}\left(-1+L\lambda\eta\right)e_t+2\eta L^2E\sum_{k=t_c+1}^{t-1}\eta^2e_k\nonumber\\
&\stackrel{\text{\ding{192}}}{\leq} (1-\eta\mu)a_t+B \sum_{k=t_c+2}^{t-1}\eta^2e_k\nonumber\\
    &{=}\Delta a_t+B\Big(\eta^2\sum_{k=t_c+1}^{t-2}e_k+\eta^2e_{t-1}\Big),\label{eq:first-iteration}
\end{align}
where \ding{192} follows from the choice of 
\begin{align}
\label{eqn:lemma-4:eta}
\eta\leq \frac{1}{\lambda L},
\end{align} 

In what follows we show that the terms multiplied $B$ can be recursively removed:
\begin{align}
    a_{t+1}&\leq \Delta a_t
    +B\Big(\sum_{k=t_c+1}^{t-2}\eta^2e_k+\eta^2e_{t-1}\Big)\nonumber\\
    &\leq \Delta\Big[\Delta a_t+\frac{\eta}{2}\Big[-1+\eta\lambda L\Big]e_{t-1}+{B}\Big(\sum_{k=t_c+1}^{t-3}\eta^2e_k+\eta^2e_{t-2}\Big)\Big]+{B}\Big(\sum_{k=t_c+1}^{t-2}\eta^2e_k+\eta^2e_{t-1}\Big)\nonumber\\
    &={\Delta}^2a_{t-1}+\frac{\eta\Delta}{2}\Big[-1+L\lambda\eta+\frac{2\eta{B}}{ \Delta}\Big]e_{t-1}+\left(\Delta{B}+B\right)\Big(\sum_{k=t_c+1}^{t-3}\eta^2e_k+\eta^2e_{t-2}\Big),\label{eq:snd-cndd}
\end{align}
Now we bound (\ref{eq:snd-cndd}) using condition $-1+\lambda L\eta+\frac{2\eta{B}}{ \Delta}\leq 0$ or equivalently
$$\eta\leq\frac{1}{\lambda L+2\frac{B}{ \Delta}},$$ which gives us the following bound:
\begin{align}
    &{\leq} \Delta^2a_{t-1}+\big[\Delta{B}+{B}\big]\Big(\sum_{k=t_c+1}^{t-3}\eta^2e_k+\eta^2e_{t-2}\Big)\nonumber\\
    &\leq {\Delta^2}\Big[{\Delta}a_{t-2}+\frac{\eta}{2}\Big[-1+L\lambda\eta\Big]e_{t-2}+{B}\Big(\eta^2\sum_{k=t_c+1}^{t-4}e_k+\eta^2e_{t-3}\Big)\Big]\nonumber\\
    &\:+\big[\Delta{B}+{B}\big]\Big(\eta^2\sum_{k=t_c+1}^{t-3}e_k+\eta^2e_{t-2}\Big)\nonumber\\
    &=\Delta^3a_{t-2}+\frac{\Delta^2\eta}{2}\Big[-1+L\lambda\eta+\frac{2\eta}{\Delta^2}\big[\Delta {B}+{B}\big]\Big]e_{t-2}\nonumber\\
    &+\Big[\Delta^2{B}+\Delta{B}+{B}\Big]\Big(\sum_{k=t_c+1}^{t-4}\eta^2e_k+\eta^2e_{t-3}\Big)\nonumber\\
    &\stackrel{\text{\ding{192}}}{\leq} \Delta^3a_{t-2}+\Big[\Delta^2{B}+\Delta{B}+{B}\Big]\Big(\eta^2\sum_{k=t_c+1}^{t-4}e_k+\eta^2e_{t-3}\Big),\label{eq:induction-step-2-unbb}
\end{align}
where \ding{192} follows from $-1+L\eta\lambda+\frac{2\eta}{\Delta^2}\big[\Delta {B}(E)+{B}(E)\big]\leq 0$ or equivalently
\begin{align}
    \eta&\leq\frac{1}{L\lambda+\frac{2B}{\Delta^2}\big[\Delta +1\big]}.
\end{align}
By induction on (\ref{eq:induction-step-2-unbb}) we get:
\begin{align}
a_{t+1}&\leq \Delta^{E-1}a_{t_c+2}+\frac{\Delta^{E-2}\eta}{2}  \Big[-1+\lambda L\eta+\frac{2\eta}{\Delta^{E-2}}\big[\Delta^{E-3}{B}+\ldots+\Delta {B}+{B}\big]\Big]e_{t_c+2}\nonumber\\
    &+\Big[\Delta^{E-2}{B}+\ldots+\Delta{B}+{B}\Big]e_{t_c+1}\nonumber\\
   &\stackrel{\text{\ding{192}}}{\leq} \Delta^{E-1}a_{t_c+2}+\Big[\Delta^{E-2}{B}+\ldots+\Delta{B}+{B}\Big]e_{t_c+1},\label{eq:induction-step-3-biased}
\end{align}
similarly \ding{192} follows from the condition $$-1+\lambda L\eta+\frac{2\eta}{\Delta^{E-2}}\big[\Delta^{E-3}{B}+\ldots+\Delta {B}+{B}\big]\leq 0$$ which gives
\begin{align}
    \eta&\leq\frac{1}{L\lambda+\frac{2}{\Delta^{E-2}}\big[\Delta^{E-3}{B}+\ldots+\Delta {B}+{B}\big]},
\end{align}
Continuing from (\ref{eq:induction-step-3-biased}) results in:
\begin{align}
a_{t+1}&\stackrel{\text{\ding{192}}}{\leq}\Delta^{E}a_{t_c+1}+\frac{\Delta^{E-1}\eta}{2}\Big[-1+\lambda{L\eta}+\frac{2\eta}{\Delta^{E-1}}\big[\Delta^{E-2}{B}+\ldots+\Delta{B}+{B}\big]\Big]e_{t_c+1}\nonumber\\
    &\stackrel{\text{\ding{193}}}{\leq}\Delta^{E}a_{t_c+1},\label{eq:induction-step-4}
\end{align}
where \ding{192} is due to the update rule that $\bar{\boldsymbol{w}}^{(t_c+1)}={\boldsymbol{w}}^{(t_c+1)}_j$ for all $1\leq j\leq p$, and \ding{193} comes from
\begin{align}
0&\geq     -1+\lambda{L\eta}+\frac{2\eta}{\Delta^{E-1}}\big[\Delta^{E-2}{B}+\ldots+\Delta{B}+B\big]
\end{align}
which simplifies as  
\begin{align}
    \eta&\leq \frac{1}{\lambda L+\frac{2{B}}{\Delta^{E-1}}\big[\Delta^{E-2}+\ldots+\Delta+1\big]},\label{eq:bnding-eta}
\end{align}
We note the final condition is  tighter than  the condition in (\ref{eqn:lemma-4:eta}) and implies it.

\subsection{Proof of Lemma~\ref{lemma:2nd-cond-gd}}
The proof follows from lower bounding the constraint imposed on the learning rate. Specifically, we have: 
\begin{align}
  \frac{1}{\lambda L+\frac{2{B}}{\Delta^{E-1}}\big[\Delta^{E-2}+\ldots+\Delta+1\big]}&= \frac{1}{L\lambda+\frac{2{B}}{\Delta^{E-1}}\frac{1-\Delta^{E-1}}{1-\Delta}},\nonumber\\
  &=\frac{1}{L\lambda+\frac{4\lambda L^2\eta E}{\Delta^{E-1}}\frac{1-\Delta^{E-1}}{1-\Delta}}\nonumber\\
  &\geq \frac{1}{L\lambda+\frac{4\lambda L^2\eta E}{\Delta^{E-1}}\frac{1}{\mu\eta}}\label{eq:lower-bnd-lambda}
\end{align}

Therefore, by the choice of learning rate $\eta$ satisfying:
\begin{align}
    \eta&\leq  \frac{1}{\lambda\left(L+\frac{4L^2\eta E}{\Delta^{E-1}}\frac{1}{\mu\eta}\right)}
\end{align}
which yields to the condition:

\begin{align}
    \eta \Big(L+\frac{4EL^2}{\mu(1-\mu\eta)^{E-1}}\Big)\leq \frac{1}{\lambda}
\end{align}


\subsection{Proof of Lemma \ref{lemma:tasbih1}}

The following lemma is a middle step in proving Lemma~\ref{lemma:tasbih1}.

\begin{lemma}\label{lemma:variance-bound-for-prrof1}
Under Assumptions \ref{Ass:2} and our sampling scheme in Algorithm~\ref{Alg:one-shot-using data samoples}, we have the following variance bound from the averaged stochastic gradient:
\begin{align}
    \mathbb{E}\left[\mathbb{E}_{\mathcal{P}_t}\Big[\|{{\tilde{\mathbf{g}}}^{(t)}}-{{{\mathbf{g}}}^{(t)}}\|^2\Big]\right]\leq \frac{C_1}{K}\sum_{j=1}^pq_j  \|\nabla{f}_j(\boldsymbol{w}_{j}^{(t)})\|^2+\frac{1}{K}C_2^2
\end{align}
\end{lemma}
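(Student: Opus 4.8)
The plan is to reduce the variance of the sampled-and-averaged stochastic gradient to a sum of per-device variances and then control each of them with Assumption~\ref{Ass:2}. First I would write the centered noise as $\tilde{\mathbf{g}}^{(t)}-\mathbf{g}^{(t)}=\frac{1}{K}\sum_{j\in\mathcal{P}_t}\big(\tilde{\mathbf{g}}_j^{(t)}-\mathbf{g}_j^{(t)}\big)$. Since the device selection $\mathcal{P}_t$ and the local mini-batches $\xi^{(t)}$ are drawn independently, Fubini lets me evaluate the inner expectation over the mini-batches first for a frozen $\mathcal{P}_t$, and only afterwards average over $\mathcal{P}_t$; this is the order that makes the per-device structure transparent.

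The core step is the inner computation $\mathbb{E}_\xi\big[\|\tilde{\mathbf{g}}^{(t)}-\mathbf{g}^{(t)}\|^2 \mid \mathcal{P}_t\big]$. Expanding the squared norm of the average into a double sum over $\mathcal{P}_t$, every cross term between two \emph{distinct} devices vanishes: by unbiasedness of local mini-batches ($\mathbb{E}_{\xi_j}[\tilde{\mathbf{g}}_j^{(t)}]=\mathbf{g}_j^{(t)}$) each factor has zero conditional mean, and by the independence of mini-batch draws across devices the two factors decouple. What survives is the diagonal, giving $\frac{1}{K^2}\sum_{j\in\mathcal{P}_t}\mathbb{E}_\xi\big[\|\tilde{\mathbf{g}}_j^{(t)}-\mathbf{g}_j^{(t)}\|^2\big]$, to which I apply Assumption~\ref{Ass:2} termwise to obtain the bound $\frac{1}{K^2}\sum_{j\in\mathcal{P}_t}\big(C_1\|\mathbf{g}_j^{(t)}\|^2+\frac{\sigma^2}{B}\big)$.

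Finally I would take the outer expectation over the sampled multiset. Applying Fact~\ref{fact:1} with $x_j=C_1\|\mathbf{g}_j^{(t)}\|^2$ replaces $\sum_{j\in\mathcal{P}_t}$ by $K\sum_{j=1}^pq_j$, so one factor of $K$ cancels and the data-dependent term becomes $\frac{C_1}{K}\sum_{j=1}^pq_j\|\nabla f_j(\boldsymbol{w}_j^{(t)})\|^2$; the constant term $\frac{1}{K^2}\cdot K\cdot\frac{\sigma^2}{B}$ collapses to $\frac{\sigma^2}{KB}$, which is exactly the claimed residual with $C_2^2=\sigma^2/B$. The main point requiring care is the with-replacement sampling of Algorithm~\ref{Alg:one-shot-using data samoples}: if a device is drawn more than once into $\mathcal{P}_t$, the corresponding slots share the same local noise, so the cross-term argument must treat the $K$ draws as carrying independent mini-batch noise (consistent with the i.i.d.-slot model underlying Fact~\ref{fact:1}); under that convention the diagonal reduction above is exact, and the rest is bookkeeping.
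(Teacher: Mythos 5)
Your proposal is correct and follows essentially the same route as the paper's proof: decompose the centered noise as the average of per-device noises, kill the cross terms via unbiasedness and independence of the local mini-batch draws, bound each diagonal term with Assumption~\ref{Ass:2}, and then apply Fact~\ref{fact:1} to the outer expectation over $\mathcal{P}_t$, with $C_2^2=\sigma^2/B$. Your explicit remark about treating repeated draws in the with-replacement multiset as carrying independent mini-batch noise is a point the paper leaves implicit, but it matches the convention the paper's argument relies on.
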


\begin {proof}
We have
\begin{align}
    \mathbb{E}\Big[\|{{\tilde{\mathbf{g}}}^{(t)}}-{{{\mathbf{g}}}^{(t)}}\|^2\Big]&\stackrel{\text{\ding{192}}}{=}\mathbb{E}\Big[\|\frac{1}{K}\sum_{j\in\mathcal{P}_t} {{\tilde{\mathbf{g}}}^{(t)}}_j-\frac{1}{K}\sum_{j\in\mathcal{P}_t} {{\mathbf{g}}^{(t)}_j}\|^2\Big]\nonumber\\
    &=\frac{1}{K^2}\mathbb{E}\Big[\sum_{j\in\mathcal{P}_t}\|\Big(\tilde{\mathbf{g}}_{j}^{(t)}-\mathbf{g}_{j}^{(t)})\Big)\|^2+\sum_{i\neq j}\Big\langle{\tilde{\mathbf{g}}}_{i}^{(t)}-\mathbf{g}_{i}^{(t)},\tilde{\mathbf{g}}_{j}^{(t)}-\mathbf{g}_{j}^{(t)}\Big\rangle\Big]\nonumber\\
    &{=}\frac{1}{K^2}\sum_{j\in\mathcal{P}_t}\mathbb{E}_{\xi^{(t)}_{j}|{\boldsymbol{w}}^{(t)}}\|\Big(\tilde{\mathbf{g}}_{j}^{(t)}-\mathbf{g}_{j}^{(t)}\Big)\|^2+\sum_{i\neq j}\frac{1}{K^2}\mathbb{E}\big[\langle\tilde{\mathbf{g}}_{j}^{(t)}-\mathbf{g}_{j}^{(t)},\tilde{\mathbf{g}}_{i}^{(t)}-\mathbf{g}_{i}^{(t)}\rangle\big]\label{eq:variance bound}\\
       &\stackrel{\text{\ding{193}}}{=}\frac{1}{K^2}\sum_{j\in\mathcal{P}_t}\mathbb{E}_{\xi^{(t)}_{j}|{\boldsymbol{w}}^{(t)}}\|\Big(\tilde{\mathbf{g}}_{j}^{(t)}-\mathbf{g}_{j}^{(t)}\Big)\|^2+\frac{1}{K^2}\sum_{i\neq j}\Big\langle\mathbb{E}_{\xi^{(t)}_{j}|{\boldsymbol{w}}^{(t)}_l}\big[\tilde{\mathbf{g}}_{j}^{(t)}-\mathbf{g}_{j}^{(t)}\big],\mathbb{E}_{\xi^{(t)}_{j}|\boldsymbol{w}^{(t)}_m}\big[\tilde{\mathbf{g}}_{i}^{(t)}-\mathbf{g}_{i}^{(t)}\big]\Big\rangle\nonumber\\
    &\stackrel{\text{\ding{194}}}{\leq} \frac{1}{K^2}\sum_{j\in\mathcal{P}_t}\Big[C_1\|\mathbf{g}_{j}^{(t)}\|^2+C_2^2\Big]\label{eq:after-eq}\\
    &=\frac{C_1}{K^2}\sum_{j\in\mathcal{P}_t}\|\mathbf{g}_{j}^{(t)}\|^2+\frac{C_2^2}{K}\label{eq:var_b_mid}
    \end{align}
where in \text{\ding{192}} we use the definition of ${\tilde{\mathbf{g}}}^t$ and ${{\mathbf{g}}}^t$, in {\ding{193}} we use the fact that mini-batches are chosen in i.i.d. manner at each local machine, and {\ding{194}} immediately follows from Assumptions~\ref{Ass:2}.

Next, by taking expectation from both sides of (\ref{eq:var_b_mid}) with respect to random sampling of devices, we get:
\begin{align}
    \mathbb{E}_{ \mathcal{P}_t}\left[\mathbb{E}\Big[\|{{\tilde{\mathbf{g}}}^{(t)}}-{{{\mathbf{g}}}^{(t)}}\|^2\Big]\right]&\leq \mathbb{E}_{ \mathcal{P}_t}\Big[\frac{C_1}{K^2}\sum_{j\in\mathcal{P}_t}\|\mathbf{g}_{j}^{(t)}\|^2+\frac{C_2^2}{K}\Big]\nonumber\\
    &=\frac{C_1}{K^2}\mathbb{E}_{ \mathcal{P}_t}\left[\sum_{j\in\mathcal{P}_t}\|\mathbf{g}_{j}^{(t)}\|^2\right]+\frac{C_2^2}{K}\nonumber\\
    &\stackrel{\text{\ding{192}}}{=}\frac{K C_1}{K^2}\sum_{j=1}^pq_j\|\mathbf{g}_{j}^{(t)}\|^2+\frac{C_2^2}{K}\nonumber
    \end{align}
where \ding{192} comes from Fact~\ref{fact:1}. 

\end{proof}
Equipped with Lemma~\ref{lemma:variance-bound-for-prrof1}, we now turn to proving Lemma~\ref{lemma:tasbih1}. First we note that the Assumption~\ref{Ass:2} implies $\mathbb{E}[{\tilde{\mathbf{g}}}_j^{(t)}]={{{\mathbf{g}}}}_j^{(t)}$, from which we have
\begin{align}
\mathbb{E}\Big[\|{\tilde{\mathbf{g}}}^{(t)}\|^2\Big]&=\mathbb{E}\big[\|{\tilde{\mathbf{g}}}^{(t)}-\mathbb{E}\big[{\tilde{\mathbf{g}}}^{(t)}\big]\|^2\Big]+\|\mathbb{E}\big[{\tilde{\mathbf{g}}}^{(t)}\big]\|^2\nonumber\\
&=\mathbb{E}\big[\|{\tilde{\mathbf{g}}}^{(t)}-{{{\mathbf{g}}}}^{(t)}\big]\|^2\Big]+\|{{{\mathbf{g}}}}^{(t)}\|^2\nonumber\\
&\leq \frac{C_1}{K^2}\sum_{j\in\mathcal{P}_t}\|\mathbf{g}_{j}^{(t)}\|^2+\frac{C_2^2}{K}+\|\frac{1}{K}\sum_{j\in\mathcal{P}_t} {{\mathbf{g}}_j^{(t)}}\|^2\nonumber\\
&\stackrel{\text{\ding{193}}}{\leq}\frac{C_1}{K^2}\sum_{j\in\mathcal{P}_t}\|\mathbf{g}_{j}^{(t)}\|^2+\frac{C_2^2}{K}+\frac{1}{K}\sum_{j\in\mathcal{P}_t}\|\mathbf{g}_{j}^{(t)}\|^2\nonumber\\
&=\left(\frac{C_1+K}{K^2}\right)\sum_{j\in\mathcal{P}_t}\|\mathbf{g}_{j}^{(t)}\|^2+\frac{C_2^2}{K}\label{eq:mid-bounding-absg}
\end{align}
where \text{\ding{192}} and \text{\ding{193}} follows from the fact that $\|\sum_{i=1}^m\mathbf{a}_i\|^2\leq m\sum_{i=1}^{m}\|\mathbf{a}_i\|^2 $ where $\mathbf{a}_i\in\mathbb{R}^n$. Applying Fact~\ref{fact:1} on both sides of (\ref{eq:mid-bounding-absg}) and using the upperbound over the weighted gradient diversity, $\lambda$, 
\begin{align}
    \mathbb{E}_{\mathcal{P}_t}\mathbb{E}\Big[\|{\tilde{\mathbf{g}}}^{(t)}\|^2\Big]\leq \lambda\left(\frac{C_1+K}{K^2}\right)\|\sum_{j=1}^pq_j\mathbf{g}_{j}^{(t)}\|^2+\frac{C_2^2}{K}
\end{align}

results in the stated bound.

\subsection{Proof of Lemma \ref{lemma:cross-inner-bound-unbiased}}
Let $\mathcal{P}_t=\{i_1,\ldots,i_K\}$ and $\tilde{\mathbf{g}}^{(t)}= \frac{1}{K}\sum_{j\in\mathcal{P}_t} {{\tilde{\mathbf{g}}}^{(t)}}_j$ be the set of sampled machines and average of their local stochastic gradients at $t$th iteration, respectively.  We have:

\begin{align}
    -\mathbb{E}_{\{{\xi}^{(t)}_{1}, \ldots, {\xi}^{(t)}_{p}|{\boldsymbol{w}}^{(t)}_{1},\ldots,  {\boldsymbol{w}}^{(t)}_{p}\}} &\mathbb{E}_{\{i_1,\ldots,i_K\}\in\mathcal{P}_t}\Big[\Big\langle \nabla f(\bar{\boldsymbol{w}}^{(t)}),\tilde{\mathbf{g}}^{(t)}\Big\rangle\Big]\nonumber\\
    &=-\mathbb{E}_{\{{\xi}^{(t)}_{1}, \ldots, {\xi}^{(t)}_{p}|{\boldsymbol{w}}^{(t)}_{1},\ldots,  {\boldsymbol{w}}^{(t)}_{p}\}}\mathbb{E}_{\{i_1,\ldots,i_K\}\in\mathcal{P}_t}\Big[\Big\langle \nabla f(\bar{\boldsymbol{w}}^{(t)}),\frac{1}{K}\sum_{j\in\mathcal{P}_t} {{\tilde{\mathbf{g}}}^{(t)}}_j\Big\rangle\Big]\nonumber\\
    &\stackrel{\text{\ding{192}}}{=}-\mathbb{E}_{\{i_1,\ldots,i_K\}\in\mathcal{P}_t}\mathbb{E}_{\{{\xi}^{(t)}_{1}, \ldots, {\xi}^{(t)}_{p}|{\boldsymbol{w}}^{(t)}_{1},\ldots,  {\boldsymbol{w}}^{(t)}_{p}\}}\Big[\Big\langle \nabla f(\bar{\boldsymbol{w}}^{(t)}),\frac{1}{K}\sum_{j\in\mathcal{P}_t} {{\tilde{\mathbf{g}}}^{(t)}}_j\Big\rangle\Big]\nonumber\\
    &=-\Big\langle \nabla f(\bar{\boldsymbol{w}}^{(t)}),\mathbb{E}_{\mathcal{P}_t}\Big[\frac{1}{K}\sum_{j\in\mathcal{P}_t}\mathbb{E}_t[\tilde{\mathbf{g}}_j]\Big\rangle\nonumber\\
    &=-\Big\langle \nabla f(\bar{\boldsymbol{w}}^{(t)}),\mathbb{E}_{\mathcal{P}_t}\Big[\frac{1}{K}\sum_{j\in\mathcal{P}_t}\nabla{f}_j({\boldsymbol{w}}^{(t)}_j)\Big]\Big\rangle\nonumber\\
     &=-\Big\langle \nabla f(\bar{\boldsymbol{w}}^{(t)}),\frac{1}{K}\mathbb{E}_{\mathcal{P}_t}\Big[\sum_{k=1}^K\nabla{f}_j({\boldsymbol{w}}^{(t)}_{i_k})\Big]\Big\rangle\nonumber\\
     &\stackrel{\text{\ding{193}}}{=}-\Big\langle \nabla f(\bar{\boldsymbol{w}}^{(t)}),\frac{1}{K}\Big[K\sum_{j=1}^pq_j\nabla{f}_j({\boldsymbol{w}}^{(t)}_{j})\Big]\Big\rangle\nonumber\\
     &\stackrel{\text{\ding{194}}}{=}\frac{1}{2}\left[-\|\nabla f(\bar{\boldsymbol{w}}^{(t)})\|_2^2-\|\sum_{j=1}q_j\nabla{f}_j(\boldsymbol{w}_j^{(t)})\|_2^2+\|\nabla f(\bar{\boldsymbol{w}}^{(t)})-\sum_{j=1}q_j\nabla{f}_j(\boldsymbol{w}_j^{(t)})\|_2^2\right]\nonumber\\
     &=\frac{1}{2}\left[-\|\nabla f(\bar{\boldsymbol{w}}^{(t)})\|_2^2-\|\sum_{j=1}q_j\nabla{f}_j(\boldsymbol{w}_j^{(t)})\|_2^2+\|\sum_{j=1}q_j\left(\nabla f_j(\bar{\boldsymbol{w}}^{(t)})-\nabla{f}_j(\boldsymbol{w}_j^{(t)})\right)\|_2^2\right]\nonumber\\
     &\stackrel{\text{\ding{195}}}{\leq}\frac{1}{2}\left[-\|\nabla f(\bar{\boldsymbol{w}}^{(t)})\|_2^2-\|\sum_{j=1}q_j\nabla{f}_j(\boldsymbol{w}_j^{(t)})\|_2^2+\sum_{j=1}q_j\|\nabla f_j(\bar{\boldsymbol{w}}^{(t)})-\nabla{f}_j(\boldsymbol{w}_j^{(t)})\|_2^2\right] \nonumber\\
    &\stackrel{\text{\ding{196}}}{\leq}\frac{1}{2}\left[-\|\nabla f(\bar{\boldsymbol{w}}^{(t)})\|_2^2-\|\sum_{j=1}q_j\nabla{f}_j(\boldsymbol{w}_j^{(t)})\|_2^2+\sum_{j=1}q_jL^2\|\bar{\boldsymbol{w}}^{(t)}-\boldsymbol{w}_j^{(t)}\|_2^2\right]
   \label{eq:bounding-cross-no-redundancy}
\end{align}
where \ding{192} is due to the fact that random variables $\xi^{(t)}$ and $\mathcal{P}_t$ are independent, since the choice of random mini-batch is independent of whether or not a device is selected randomly,\ding{193} follows from Fact~\ref{fact:1}, \ding{194} is due to $2\langle \mathbf{a},\mathbf{b}\rangle=\|\mathbf{a}\|^2+\|\mathbf{b}\|^2-\|\mathbf{a}-\mathbf{b}\|^2$, \ding{195} holds because of convexity of $\|.\|_2$, and \ding{196} follows from Assumption \ref{Ass:1}.

\subsection{Proof of Lemma~\ref{lemma:dif-under-pl-sgd}}
Let us set $t_c\triangleq \floor{\frac{t}{E}}E$. Therefore, according to Algorithm \ref{Alg:one-shot-using data samoples} we have:
 \begin{align}\bar{\boldsymbol{w}}^{(t_c+1)}=\frac{1}{K}\sum_{j\in \mathcal{P}_t} \boldsymbol{w}_j^{(t_c+1)}\end{align} for $1\leq j\leq p$. Then, the update rule of Algorithm \ref{Alg:one-shot-using data samoples}, can be rewritten  as:
 \begin{align}
 \boldsymbol{w}_j^{(t)}=  \boldsymbol{w}_j^{(t-1)}-\eta_{t-1}\tilde{\mathbf{g}}_{j}^{(t-1)}\stackrel{\text{\ding{192}}}{=}\boldsymbol{w}_j^{(t-2)}-\Big[\eta_{t-2}\tilde{\mathbf{g}}_{j}^{(t-2)}+\eta_{t-1}\tilde{\mathbf{g}}_{j}^{(t-1)}\Big]=\bar{\boldsymbol{w}}^{(t_c+1)}-\sum_{k=t_c+1}^{t-1}\eta_k\tilde{\mathbf{g}}_{j}^{(k)}, \label{eq:j-model-updatee}
 \end{align}
where \ding{192} follows from the updating rule. Continuing from~(\ref{eq:j-model-updatee}), we now compute the average model as follows:
\begin{align}
    \bar{\boldsymbol{w}}^{(t)}=\bar{\boldsymbol{w}}^{(t_c+1)}-\frac{1}{K}\sum_{j\in \mathcal{P}_t}\sum_{k=t_c+1}^{t-1}\eta_k\tilde{\mathbf{g}}_{j}^{(k)}\label{eq:model-avg-time-t}
\end{align}
First, without loss of generality, suppose {$t=t_c+r$} where  $r$ denotes the indices of local updates.
We note that  for $t_c+1 < t\leq t_c+ E$, $\mathbb{E}_t\|\bar{\boldsymbol{w}}^{(t)}-\boldsymbol{w}_j^{(t)}\|^2$  does not depend on time $t\leq t_c$ for $1\leq j\leq p$. 

We bound the term {$\mathbb{E}\|\bar{\boldsymbol{w}}^{(t)}-\boldsymbol{w}_l^{({t})}\|^2$ for $t_c+1 \leq  {t}=t_c+r\leq t_c+ E$} in three steps: 
\textcolor{black}{1) We first relate this quantity to the variance between stochastic gradient and full gradient, 2) We use Assumption~\ref{Ass:1} on unbiased estimation and i.i.d sampling, 
3) We use Assumption~\ref{Ass:2} to bound the final terms.} 

We proceed to the details each of these three steps.

\noindent\textbf{Step 1: Relating to variance}

\begin{align}
 \mathbb{E}&\left[\|\bar{\boldsymbol{w}}^{(t_c+r)}-\boldsymbol{w}_l^{(t_c+r)}\|^2\right] =\mathbb{E}\left[\|\bar{\boldsymbol{w}}^{(t_c+1)}-\Big[\sum_{k=t_c+1}^{{t-1}}\eta_k\tilde{\mathbf{g}}_{l}^{(k)}\Big]-\bar{\boldsymbol{w}}^{(t_c+1)}+\Big[\frac{1}{K}\sum_{j\in \mathcal{P}_t}\sum_{k=t_c+1}^{{t-1}}\eta_k\tilde{\mathbf{g}}_{j}^{(k)}\Big]\|^2\right]\nonumber\\
 &\stackrel{\text{\ding{192}}}{=}\mathbb{E}\left[\|\sum_{k=1}^{r}\eta_{t_c+k}\tilde{\mathbf{g}}_{l}^{(t_c+k)}-\frac{1}{K}\sum_{j\in \mathcal{P}_t}\sum_{k=1}^{r}\eta_{t_c+k}\tilde{\mathbf{g}}_{j}^{(t_c+k)}\|^2\right]\nonumber\\
 &\stackrel{\text{\ding{193}}}{\leq} 2\left[\mathbb{E}\left[\|\sum_{k=1}^{r}\eta_{t_c+k}\tilde{\mathbf{g}}_{l}^{(t_c+k)}\|^2\right]+\mathbb{E}\left[\|\frac{1}{K}\sum_{j\in \mathcal{P}_t}\sum_{k=1}^{r}\eta_{t_c+k}\tilde{\mathbf{g}}_{j}^{(t_c+k)}\|^2\right]\right]\nonumber\\
 &\stackrel{\text{\ding{194}}}{=}2\left[\mathbb{E}\left[\|\sum_{k=1}^{r}\eta_{t_c+k}\tilde{\mathbf{g}}_{l}^{(t_c+k)}-\mathbb{E}\big[\sum_{k=1}^{r}\eta_{t_c+k}\tilde{\mathbf{g}}_{l}^{(t_c+k)}\big]\|^2\right]+\|\mathbb{E}\left[\sum_{k=1}^{{r}}\eta_{t_c+k}\tilde{\mathbf{g}}_{l}^{(t_c+k)}\right]\|^2\right]\nonumber\\
 &+2\mathbb{E}\left[\|\frac{1}{K}\sum_{j\in \mathcal{P}_t}\sum_{k=1}^{{r}}\eta_{t_c+k}\tilde{\mathbf{g}}_{j}^{(t_c+k)}-\mathbb{E}\left[\frac{1}{K}\sum_{j\in \mathcal{P}_t}\sum_{k=1}^{r}\eta_{t_c+k}\tilde{\mathbf{g}}_{j}^{(t_c+k)}\right]\|^2+\|\mathbb{E}\left[\frac{1}{K}\sum_{j\in \mathcal{P}_t}\sum_{k=1}^{r}\eta_{t_c+k}\tilde{\mathbf{g}}_{j}^{(t_c+k)}\right]\|^2\right]\nonumber\\
 &\stackrel{\text{\ding{195}}}{=}2\mathbb{E}\left[\|\sum_{k=1}^{r}\eta_{t_c+k}\left[\tilde{\mathbf{g}}_{l}^{(t_c+k)}-\mathbf{g}_{l}^{(t_c+k)}\right]\|^2+\|\sum_{k=1}^{r}\eta_{t_c+k}\mathbf{g}_{l}^{(t_c+k)}\|^2\right]\nonumber\\
 &\quad+2\mathbb{E}\left[\|\frac{1}{K}\sum_{j\in \mathcal{P}_t}\sum_{k=1}^{r}\eta_{t_c+k}\Big[\tilde{\mathbf{g}}_{j}^{(t_c+k)}-\mathbf{g}_{j}^{(t_c+k)}\Big]\|^2+\|\frac{1}{K}\sum_{j\in \mathcal{P}_t}\sum_{k=1}^{r}\eta_{t_c+k}\mathbf{g}_{j}^{(t_c+k)}\|^2\right],\nonumber\\
\end{align}

where \ding{192} holds because ${t}=t_c+r\leq t_c+ E$, \ding{193} is due to $\|\mathbf{a}-\mathbf{b}\|^2\leq 2(\|\mathbf{a}\|^2+\|\mathbf{b}\|^2)$, \ding{194} comes from $\mathbb{E}[\boldsymbol{w}^2]=\mathbb{E}[[\boldsymbol{w}-\mathbb{E}[\boldsymbol{w}]]^2]+\mathbb{E}[\boldsymbol{w}]^2$, \ding{195} comes from unbiased estimation Assumption~\ref{Ass:1}.

\noindent\textbf{Step 2: Local unbiased estimation and i.i.d. sampling}

\begin{align}
    &{=}{2}\mathbb{E}\Big(\Big[\sum_{k=1}^{r}\eta_{t_c+k}^2\|\tilde{\mathbf{g}}_{l}^{(t_c+k)}-\mathbf{g}_{l}^{(t_c+k)}\|^2\nonumber\\&+\sum_{w\neq z \vee l\neq v}\Big\langle\eta_w\tilde{\mathbf{g}}_{l}^{(w)}-\eta_w\mathbf{g}_{l}^{(w)},\eta_z\tilde{\mathbf{g}}_{v}^{(z)}-\eta_z\mathbf{g}_{v}^{(z)}\Big\rangle+\|\sum_{k=1}^{r}\eta_{t_c+k}\mathbf{g}_{l}^{(t_c+k)}\|^2\Big]\nonumber\\
 &\quad+\frac{1}{K^2}\sum_{j\in \mathcal{P}_t}\sum_{k=1}^{r}\eta_{t_c+k}^2\|\tilde{\mathbf{g}}_{l}^{(t_c+k)}-\mathbf{g}_{l}^{(t_c+k)}\|^2\nonumber\\
 &+\frac{1}{K^2}\sum_{w\neq z \vee l\neq v}\Big\langle\eta_w\tilde{\mathbf{g}}_{l}^{(w)}-\eta_w\mathbf{g}_{l}^{(w)},\eta_z\tilde{\mathbf{g}}_{v}^{(z)}-\eta_z\mathbf{g}_{v}^{(z)}\Big\rangle+\|\frac{1}{K}\sum_{j\in \mathcal{P}_t}\sum_{k=1}^{r}\eta_{t_c+k}\mathbf{g}_{j}^{(t_c+k)}\|^2 \Big)\nonumber\\
 &\stackrel{\text{\ding{196}}}{=}{2}\mathbb{E}\Big(\Big[\sum_{k=1}^{r}\eta^2_{t_c+k}\|\tilde{\mathbf{g}}_{l}^{(t_c+k)}-\mathbf{g}_{l}^{(t_c+k)}\|^2+\|\sum_{k=1}^{r}\eta_{t_c+k}\mathbf{g}_{l}^{(t_c+k)}\|^2\Big]\nonumber\\
 &\quad+\frac{1}{K^2}\sum_{j\in \mathcal{P}_t}\sum_{k=1}^{r}\eta^2_{t_c+k}\|\tilde{\mathbf{g}}_{j}^{(t_c+k)}-\mathbf{g}_{j}^{(t_c+k)}\|^2+\|\frac{1}{K}\sum_{j\in \mathcal{P}_t}\sum_{k=1}^{r}\eta_{t_c+k}\mathbf{g}_{j}^{(t_c+k)}\|^2\Big)\nonumber\\
 &\stackrel{\text{\ding{197}}}{\leq}{2}\mathbb{E}\Big(\Big[\sum_{k=1}^{r}\eta_{t_c+k}^2\|\tilde{\mathbf{g}}_{l}^{(t_c+k)}-\mathbf{g}_{l}^{(t_c+k)}\|^2+r\sum_{k=1}^{r}\eta_{t_c+k}^2\|\mathbf{g}_{l}^{(t_c+k)}\|^2\Big]\nonumber\\
 &\quad+\frac{1}{K^2}\sum_{j\in \mathcal{P}_t}\sum_{k=1}^{r}\|\tilde{\mathbf{g}}_{j}^{(t_c+k)}-\mathbf{g}_{j}^{(t_c+k)}\|^2+\frac{r}{K^2}\sum_{j\in \mathcal{P}_t}\sum_{k=1}^{r}\eta_{t_c+k}^2\|\mathbf{g}_{j}^{(t_c+k)}\|^2\Big)\nonumber\\
 &={2}\Big(\Big[\sum_{k=1}^{r}\eta_{t_c+k}^2\mathbb{E}\|\tilde{\mathbf{g}}_{l}^{(t_c+k)}-\mathbf{g}_{l}^{(t_c+k)}\|^2+r\sum_{k=1}^{r}\eta_{t_c+k}^2\mathbb{E}\|\mathbf{g}_{l}^{(t_c+k)}\|^2\Big]\nonumber\\
 &\quad+\frac{1}{K^2}\sum_{j\in \mathcal{P}_t}\sum_{k=1}^{r}\eta_{t_c+k}^2\mathbb{E}\|\tilde{\mathbf{g}}_{j}^{(t_c+k)}-\mathbf{g}_{j}^{(t_c+k)}\|^2+\frac{r}{K^2}\sum_{j\in \mathcal{P}_t}\sum_{k=1}^{r}\eta_{t_c+k}^2\mathbb{E}\|\mathbf{g}_{j}^{(t_c+k)}\|^2\Big),
\label{eq:four-term-bounding}
\end{align}
\ding{196} is due to independent mini-batch sampling as well as unbiased estimation assumption over individual local data shards, and  \ding{197}  follow from inequality $\|\sum_{i=1}^m\mathbf{a}_i\|^2\leq m\sum_{i=1}^m\|\mathbf{a}_i\|^2$. \\

\noindent\textbf{Step 3: Using Assumption \ref{Ass:2}}

Next step is to bound the terms in (\ref{eq:four-term-bounding}) using Assumption \ref{Ass:2} as follow:

{\begin{align}\mathbb{E}\|\bar{\boldsymbol{w}}^{({t})}-\boldsymbol{w}_l^{({t})}\|^2&\leq {2}\Big(\Big[\sum_{k=1}^{r}\eta_{t_c+k}^2\Big[C_1\|{\mathbf{g}}_l^{(t_c+k)})\|^2+{\frac{\sigma^2}{B}}\Big]+r\sum_{k=1}^{r}\eta_{t_c+k}^2\|\Big[{\mathbf{g}}^{(t_c+k)}_l\Big]\|^2\Big]\nonumber\\
 &\quad+\frac{1}{K^2}\sum_{j\in \mathcal{P}_t}\sum_{k=1}^{r}\eta_{t_c+k}^2\Big[C_1\|{\mathbf{g}}^{(t_c+k)}_j\|^2+{\frac{\sigma^2}{B}}\Big]+\frac{r}{K^2}\sum_{j\in \mathcal{P}_t}\sum_{k=1}^{r}\eta_{t_c+k}^2\|\Big[{\mathbf{g}}_j^{(t_c+k)}\Big]\|^2\Big)\nonumber\\
 &={2}\Big(\Big[\sum_{k=1}^{r}\eta_{t_c+k}^2C_1\|\mathbf{g}_{l}^{(t_c+k)}\|^2+\sum_{k=1}^{r}\eta_{t_c+k}^2{\frac{\sigma^2}{B}}+r\sum_{k=1}^{r}\eta_{t_c+k}^2\|\mathbf{g}_{l}^{(t_c+k)}\|^2\Big]\nonumber\\
 &\quad+\frac{1}{K^2}\sum_{j\in \mathcal{P}_t}\sum_{k=1}^{r}\eta_{t_c+k}^2C_1\|\mathbf{g}_{j}^{(t_c+k)}\|^2+\sum_{k=1}^r\eta^2_{t_c+k}\frac{\sigma^2}{K^2B}+\frac{r}{K^2}\sum_{j\in \mathcal{P}_t}\sum_{k=1}^{r}\eta_{t_c+k}^2\mathbb{E}\|\mathbf{g}_{j}^{(t_c+k)}\|^2\Big),\label{eq:var-n-bounnd}
\end{align}}
Now taking summation over worker indices (\ref{eq:var-n-bounnd}), we obtain:
{\begin{align}
    \mathbb{E}\sum_{j\in \mathcal{P}_t}\|\bar{\boldsymbol{w}}^{(t)}-\boldsymbol{w}_j^{(t)}\|^2&\leq{2}\Big(\Big[\sum_{l\in \mathcal{P}_t}\sum_{k=1}^r\eta_{t_c+k}^2C_1\|\mathbf{g}_{l}^{(t_c+k)}\|^2+\sum_{k=1}^r\eta^2_{t_c+k}{\frac{\sigma^2}{B}}+r\sum_{l\in \mathcal{P}_t}\sum_{k=1}^{r}\eta_{t_c+k}^2\|\mathbf{g}_{l}^{(t_c+k)}\|^2\Big]\nonumber\\
 &\quad+\frac{1}{K}\sum_{j\in \mathcal{P}_t}\sum_{k=1}^r\eta_{t_c+k}^2C_1\|\mathbf{g}_{j}^{(t_c+k)}\|^2+\sum_{k=1}^r\eta_{t_c+k}^2\frac{\sigma^2}{KB}+\frac{r}{K}\sum_{j\in \mathcal{P}_t}\sum_{k=1}^r\eta_{t_c+k}^2\|\mathbf{g}_{j}^{(t_c+k)}\|^2\Big)\nonumber\\
 &={2}\Big(\Big[\left(\frac{K+1}{K}\right)\sum_{j\in \mathcal{P}_t}\sum_{k=1}^{r}\eta_{t_c+k}^2C_1\|\mathbf{g}_{j}^{(t_c+k)}\|^2+\sum_{k=1}^r\eta_{t_c+k}^2\frac{(K+1)\sigma^2}{KB}\nonumber\\
 &\quad+{r\left(\frac{K+1}{K}\right)}\sum_{j\in \mathcal{P}_t}\sum_{k=1}^{r}\eta_{t_c+k}^2\|\mathbf{g}_{j}^{(t_c+k)}\|^2\Big)\nonumber\\
 &=2\Big(\Big[\left(\frac{K+1}{K}\right)(C_1+r)\Big]\sum_{j\in \mathcal{P}_t}\sum_{k=1}^{r}\eta_{t_c+k}^2\|\mathbf{g}_{j}^{(t_c+k)}\|^2+\sum_{k=1}^r\eta_{t_c+k}^2\frac{(K+1)\sigma^2}{KB}\Big)\nonumber\\
 &\stackrel{\text{\ding{192}}}{\leq} 2\left(\frac{K+1}{K}\right)\Big(\Big[C_1+E\Big]\big(\sum_{k=t_c+1}^{t-2}\sum_{j\in \mathcal{P}_t}\eta_{k}^2\|\mathbf{g}_{j}^{(k)}\|^2\sum_{j\in \mathcal{P}_t}\eta_{t-1}^2\|\mathbf{g}_{j}^{(t-1)}\|^2\big)+\sum_{k=t_c+1}^{t-1}\eta_{k}^2\frac{\sigma^2}{B}\Big),\label{eq:local-sum-time}
\end{align}
where \ding{192} follows from $r\leq E$. Finally, Eq.~(\ref{eq:local-sum-time}) leads to
\begin{align}
    \mathbb\sum_{j\in \mathcal{P}_t}\|\bar{\boldsymbol{w}}^{(t)}-\boldsymbol{w}_j^{(t)}\|^2&\leq 2\left(\frac{K+1}{K}\right)\Big([C_1+E]\sum_{k=t_c+1}^{t-1}\eta_{k}^2\sum_{j\in \mathcal{P}_t}\|\
\nabla{f}_j(\boldsymbol{w}_j^{(k)})\|^2+\sum_{k=t_c+1}^{t-1}\eta_{k}^2\frac{\sigma^2}{B}\Big)\label{eq:tbf}.
\end{align}
Now by applying Fact~\ref{fact:1} on both sides of Eq.~(\ref{eq:tbf}) and using the upperbound over the weighted gradient diversity the proof is concluded.}

\subsection{Proof of Lemma~\ref{lemm:main-seq}}

{\textcolor{black}{We start by deriving the conditions on the learning rate that allows us to make the coefficients $D_t$ and $B_t$ negative and consequently cancel out the contribution of the terms including $\|\sum_{j=1}^pq_j\nabla{f}(\boldsymbol{w}^{(t)})\|^2$ from upper bound.}} 
\begin{remark}
Before proceeding to the proof, note that in this Lemma since $c_t$ only influences $a_t$ and is independent from the $e_t$ and its co-efficient, if Lemma holds for $c_t=0$, it will hold also for the case of $c_t\neq 0$. Therefore, without loss of generality we prove this Lemma for the case of $c_t=0$.
\end{remark}

We have:
\begin{align}
     a_{t+1}\leq (1-\mu\eta_t)a_t+\frac{\eta_t}{2}\big(-1+D\eta_t\big)e_t+B_t\sum_{k=t_c+1}^{t-1}\eta_k^2e_k,\label{eq:fst-cond-gd}
\end{align}

Now we can derive the first condition over learning rate where the bound in (\ref{eq:fst-cond-gd}) reduces to:
\begin{align}
        &\stackrel{\text{\ding{193}}}{\leq} (1-\eta_t\mu)a_t+B_t\sum_{k=t_c+2}^{t-1}\eta_k^2e_k\nonumber\\
    &{=}(1-\eta_t\mu) a_t+B_t\Big(\sum_{k=t_c+1}^{t-2}\eta_k^2e_k+\eta_{t-1}^2e_{t-1}\Big),\label{eq:first-iteration}
\end{align}

where \ding{193} follows from $\Delta_t\triangleq1-\mu\eta_t$ and the choice of 
\begin{align}\eta_t\leq \frac{1}{D},
\end{align}

 With this notation in mind, we continue from (\ref{eq:first-iteration}) as follows:
\begin{align}
    a_{t+1}&\leq\Delta_t a_t
    +B_t\Big(\sum_{k=t_c+1}^{t-2}\eta_k^2e_k+\eta_{t-1}^2e_{t-1}\Big)\nonumber\\
    &\leq \Delta_t\Big[\Delta_{t-1} a_t+\frac{\eta_{t-1}}{2}(-1+\eta_{t-1} D\big)e_{t-1}+{B}_{t-1}\Big(\sum_{k=t_c+1}^{t-3}\eta_k^2e_k+\eta_{t-2}^2e_{t-2}\Big)\Big]+{B}_t\Big(\sum_{k=t_c+1}^{t-2}\eta_k^2e_k+\eta^2_{t-1}e_{t-1}\Big)\nonumber\\
    &={\Delta}_t{\Delta}_{t-1}a_{t-1}+\frac{\eta_{t-1}\Delta_t}{2}\Big[-1+D\eta_{t-1}+\frac{2C\eta_{t-1}{B}_t}{ \Delta_{t}}\Big]e_{t-1}+\left(\Delta_t{B}_{t-1}+B_t\right)\Big(\sum_{k=t_c+1}^{t-3}\eta_k^2e_k+\eta_{t-2}^2e_{t-2}\Big),\label{eq:snd-cnd}
\end{align}
Now the bound in (\ref{eq:snd-cnd}) under condition $-1+D\eta_{t-1}+\frac{2\eta_{t-1}{B}_t}{ \Delta_{t}}\leq 0$ or equivalently 
$$\eta_{t-1}\leq\frac{1}{D+2\frac{B_t}{\Delta_{t}}},$$ gives us the following bound:
\begin{align}
    &{\leq} \Delta_{t}\Delta_{t-1}a_{t-1}+\big[\Delta_{t}{B}_{t-1}+{B}_t\big]\Big(\sum_{k=t_c+1}^{t-3}\eta_k^2e_k+\eta_{t-2}^2e_{t-2}\Big)\nonumber\\
    &\leq {\Delta_{t}\Delta_{t-1}}\Big[{\Delta}_{t-2}a_{t-2}+\frac{\eta_{t-2}}{2}\Big[-1+D\eta_{t-2}\Big]e_{t-2}+{B}_{t-2}\Big(\sum_{k=t_c+1}^{t-4}\eta_k^2e_k+\eta_{t-3}^2e_{t-3}\Big)\Big]\nonumber\\
    &\:+\big[\Delta_t{B}_{t-1}+{B}_t\big]\Big(\sum_{k=t_c+1}^{t-3}\eta^2_k e_k+\eta_{t-2}^2e_{t-2}\Big)\nonumber\\
    &=\Delta_{t}\Delta_{t-1}\Delta_{t-2}a_{t-2}+\frac{\Delta_{t}\Delta_{t-1}\eta_{t-2}}{2}\Big[-1+D\eta_{t-2}+\frac{2\eta_{t-2}}{\Delta_{t}\Delta_{t-1}}\big[\Delta_t {B}_{t-1}+{B}_t\big]\Big]e_{t-2}\nonumber\\
    &+\Big[\Delta_{t}\Delta_{t-1}{B}_{t-2}+\Delta_t{B}_{t-1}+{B}_t\Big]\Big(\sum_{k=t_c+1}^{t-4}\eta_k^2e_k+\eta_{t-3}^2e_{t-3}\Big)\nonumber\\
    &\stackrel{\text{\ding{193}}}{\leq} \Delta_{t}\Delta_{t-1}\Delta_{t-2}a_{t-2}+\Big[\Delta_{t}\Delta_{t-1}{B}_{t-2}+\Delta_{t}{B}_{t-1}+{B}_t\Big]\Big(\sum_{k=t_c+1}^{t-4}\eta_k^2e_k+\eta_{t-3}^2e_{t-3}\Big),\label{eq:induction-step-2-unb}
\end{align}

where \ding{193} follows from $-1+D\eta_{t-2}+\frac{2\eta_{t-2}}{\Delta_{t}\Delta_{t-1}}\big[\Delta_t {B}_{t-1}+{B}_t\big]\leq 0$ or equivalently
\begin{align}
    \eta_{t-2}&\leq\frac{1}{D+\frac{2}{\Delta_{t}\Delta_{t-1}}\big[\Delta_t {B}_{t-1}+{B}_t\big]}.
\end{align}

By induction on (\ref{eq:induction-step-2-unb}) we get:

\begin{align}
a_{t+1}&{\leq}\Pi_{i=t_c+1}^{t}\Delta_ia_{t_c+1}+\frac{\Pi_{i=t_c+2}^t\Delta_i\eta_{t_c+1}}{2}\Big[-1+D\eta_{t_c+1}+\frac{2\eta_{t_c+1}}{\Pi_{i=t_c+2}^t\Delta_i}\big[\Pi_{i=t_c+3}^t\Delta_i{B}_{t_c+2}\nonumber\\
    &\qquad\qquad\qquad+\ldots+\Delta_t{B}_{t-1}+{B}_t\big]\Big]e_{t_c+1}\nonumber\\
    &\stackrel{\text{\ding{192}}}{\leq}\Pi_{i=t_c+1}^{t}\Delta_ia_{t_c+1},\label{eq:induction-step-4}
\end{align}
where \ding{192} follows from
\begin{align}
0&\geq     -1+{D\eta_{t_c+1}}+\frac{2\eta_{t_c+1}}{\Pi_{i=t_c+2}^t\Delta_i}\big[\Pi_{i=t_c+3}^t\Delta_i{B}_{t_c+2}+\ldots+\Delta_t{B}_{t-1}+B_t\big]
\end{align}
which leads to 
\begin{align}
    \eta_{t_c+1}&\leq \frac{1}{D+\frac{2}{\Pi_{i=t_c+2}^t\Delta_i}\big[\Pi_{i=t_c+3}^t\Delta_i{B}_{t_c+2}+\ldots+\Delta_t{B}_{t-1}+B\big]}.\label{eq:bnding-eta}
\end{align}


{\subsection{Proof of Lemma~\ref{lemmba:choice-of-learning-rate-pl}}
Before proceeding to the proof of next Lemma, according to the condition derived in Lemma~\ref{lemm:main-seq}, we would like to highlight the fact that 
\begin{align}
\frac{1}{\lambda L(C_1+K)}&\leq\frac{1}{\lambda L(C_1+K)+\frac{2K{B}_t}{ \Delta_t}}\nonumber\\
&\vdots\nonumber\\
&\leq\frac{1}{\lambda L(C_1+K)+\frac{2K}{\Pi_{i=t_c+2}^t\Delta_i}\big[\Pi_{i=t_c+3}^t\Delta_i{B}_{t_c+2}+\ldots+\Delta_t{B}_{t-1}+B\big]} 
\end{align}
Therefore, for the proof of Lemma~\ref{lemmba:choice-of-learning-rate-pl}, we focus on the minimum quantity of term $$\frac{1}{\lambda L(C_1+K)+\frac{2K}{\Pi_{i=t_c+2}^t\Delta_i}\big[\Pi_{i=t_c+3}^t\Delta_i{B}_{t_c+2}(E)+\ldots+\Delta_t{B}_{t-1}(E)+B\big]} .$$

In the following, we show that the imposed conditions on the learning rate are satisfied for all of the iterations.
We use some properties over the learning rate related quantities as follows:
\begin{enumerate}
    \item[1)] $\eta_{t_1}>\eta_{t_2}$ if $t_1<t_2$.
    \item[2)] $\Delta_{t_1}<\Delta_{t_2}$ if $t_1<t_2$.
    \item[3)] ${B}_{t_1}>{B}_{t_2}$  if $t_1<t_2$.
\end{enumerate}

Using these properties, we have:
\begin{align}
  &\frac{1}{\lambda L(C_1+K)+\frac{2K}{\Pi^{t}_{i=t_c+2}\Delta_i}\big[\Pi^{t}_{i=t_c+3}\Delta_i{B}_{t_c+2}+\ldots+\Delta_t{B}_{t-1}+{B}_t\big]}\nonumber\\
  &\qquad\stackrel{\text{\ding{192}}}{\geq}\frac{1}{\lambda L(C_1+K)+\frac{2K}{\Pi_{i=t}^{t_c+2}\Delta_i}\big[\Pi_{i=t}^{t_c+3}\Delta_i{B}_{1}+\ldots+\Delta_t{B}_{1}+{B}_1\big]}\nonumber\\
  &\qquad=\frac{1}{\lambda L(C_1+K)+\frac{2K}{\Pi_{i=t}^{t-E+2}\Delta_i}{B}_{1}\big[\Pi_{i=t}^{t_c+3}\Delta_i+\ldots+\Delta_t+1\big]}\nonumber\\
  &\qquad\stackrel{\text{\ding{193}}}{\geq}\frac{1}{\lambda L(C_1+K)+\frac{2K}{\Pi_{i=t}^{t_c+2}\Delta_i}{B}_{1}\big[\Pi_{i=t}^{t_c+3}\Delta_T+\ldots+\Delta_T+1\big]}\nonumber\\
  &\qquad\stackrel{\text{\ding{194}}}{\geq}\frac{1}{\lambda L(C_1+K)+\frac{2K}{\Pi_{i=t}^{t_c+2}\Delta_1}{B}_{1}\big[\Pi_{i=t}^{t_c+3}\Delta_T+\ldots+\Delta_T+1\big]}\nonumber\\
  &\qquad\stackrel{\text{\ding{195}}}{\geq}\frac{1}{\lambda L(C_1+K)+\frac{2K}{\Delta^{E-1}_1}{B}_{1}\big[\Delta^{E-2}_T+\ldots+\Delta_T+1\big]}\nonumber\\
  &\qquad\stackrel{\text{\ding{196}}}{\geq}\frac{1}{\lambda L(C_1+K)+\frac{2K}{\Delta^{E-1}_1}{B}_{1}\big[E-1\big]}\nonumber
\end{align}

 \ding{192} is due to item (3), \ding{193} comes from property item (2) and finally \ding{194} holds because of property item (2), \ding{195} follows from $t-(t_c+3)\leq E-2$, and \ding{196} follows from $\Delta_T\leq 1$.

Next, we show that for the choice of $a=\alpha E+4$ where $\alpha \exp{(-\frac{2}{\alpha})}<\kappa\sqrt{192\lambda\left(\frac{K+1}{K}\right)}$ the conditions hold. To this end, we have 
\begin{align}
    \eta_t&=\frac{4}{\mu(t+a)}\nonumber\\
    &\leq\eta_1\nonumber\\
    &\leq \frac{1}{\lambda L(C_1+K)+\frac{2K}{\Delta^{E-1}_1}{B}_{1}\big[E-1\big]}\nonumber\\
    &=\frac{\Delta^{E-1}_1}{\Delta^{E-1}_1\lambda L(C_1+K)+2K{B}_{1}\big[E-1\big]}\nonumber\\
    &=\frac{\big({\frac{1+a-4}{a+1}\big)}^{E-1}}{\big({\frac{1+a-4}{a+1}\big)}^{E-1}L(C_1+K)+2K{B}_{1}\big[E-1\big]}\nonumber\\
    &=\frac{\big({\frac{1+a-4}{a+1}\big)}^{E-1}}{\big({\frac{1+a-4}{a+1}\big)}^{E-1}\lambda L(C_1+K)+2K\big(\frac{4L^2\left(\frac{K+1}{K}\right)(C_1+E)}{\mu K(a+1)}\big)(E-1)}\nonumber\\
    &=\frac{{(a-3)}^{E-1}}{{(a-3)}^{E-1}\lambda L(C_1+K)+(\frac{K+1}{K})\frac{8\lambda L^2}{\mu}\big(C_1(E-1)+(E-1)E\big)(a+1)^{E-2}},\label{eq:main-condition}
\end{align}

{From (\ref{eq:main-condition}), we have:
\begin{align}
  4\lambda {(a-3)}^{E-1}L(C_1+K)&+\lambda\frac{32L^2}{\mu}\left(\frac{K+1}{K}\right)\big(C_1(E-1)+E(E-1)\big)(a+1)^{E-2}\nonumber\\
  &\stackrel{\text{\ding{192}}}{\leq} \frac{192\lambda  L^2}{\mu^2}\left(\frac{K+1}{K}\right)(E-1)E(a+1)^{E-2}\nonumber\\
  &\leq \mu\big[(t+a)(a-3)\big]{(a-3)}^{E-2},\label{eq:m-c-o-c} 
\end{align}
where \ding{192} follows from the fact that ${(a-3)}^{E-1}L(C_1+K)\leq \frac{16 L^2}{\mu}E(E-1)(a+1)^{E-2}$ and $\frac{32L^2}{\mu}C_1\left(\frac{K+1}{K}\right)(E-1)(a+1)^{E-2}\leq (\frac{K+1}{K})\frac{64L^2}{\mu}(E-1)^2(a+1)^{E-2}$.}

Letting $a=\alpha E+4$ and to analyze the worse case set $t=1$ in inequality (\ref{eq:m-c-o-c}) which leads to the following condition:
\begin{align}
   \frac{\alpha^2E^2+6\alpha E+5}{192\lambda\left(\frac{K+1}{K}\right)\frac{L^2}{\mu^2}E(E-1)}&\leq \left(\frac{a+1}{a-3}\right)^{E-2}\nonumber\\
   &=\left(1+\frac{4}{a-3}\right)^{E-2}\nonumber\\
   &=\left(1+\frac{4}{\alpha E+4-3}\right)^{E-2}\nonumber\\
   &\leq e^{\frac{4(E-2)}{\alpha E+1}}\nonumber\\
   &\stackrel{\text{\ding{192}}}{\leq} e^{\frac{4}{\alpha}},\label{eq:final-chain-of-condition}
\end{align}
where \ding{192} follows from the property that $\frac{E-2}{\alpha E+1}$ is non-decreasing with respect to $E$. From (\ref{eq:final-chain-of-condition}) we get our condition over $\alpha$ as follows:

\begin{align}
\left(\lambda\left(\frac{K+1}{K}\right)192\kappa^2e^{\frac{4}{\alpha}}-\alpha^2\right)E^2-\left(\lambda\left(\frac{K+1}{K}\right)192\lambda\kappa^2e^{\frac{4}{\alpha}}+6\alpha\right)E-5\geq0
\end{align}
which implies first $\frac{\alpha}{e^{\frac{2}{\alpha}}}\leq \kappa \sqrt{192\lambda\left(\frac{K+1}{K}\right)}$ and second \begin{align}
    E\geq \frac{\left(\lambda\left(\frac{K+1}{K}\right)192\kappa^2e^{\frac{4}{\alpha}}+6\alpha\right)+\sqrt{{\left(\lambda\left(\frac{K+1}{K}\right)192\kappa^2e^{\frac{4}{\alpha}}+6\alpha\right)}^2+20\left(\lambda\left(\frac{K+1}{K}\right)192\kappa^2e^{\frac{4}{\alpha}}-\alpha^2\right)}}{2\left(\lambda\left(\frac{K+1}{K}\right)192\kappa^2e^{\frac{4}{\alpha}}-\alpha^2\right)}\label{eq:cnd-alpha}
\end{align} which holds for $a=\alpha E+4$ and $E=O\left(\frac{T^{\frac{2}{3}}}{K^{\frac{1}{3}}}\right)$.
Note that using the inequality $\sqrt{a^2+b}\leq a+\sqrt{b}$ we can upper bound right-hand side of~(\ref{eq:cnd-alpha}) by
\begin{align}
     &\frac{\left(\lambda\left(\frac{K+1}{K}\right)192\kappa^2e^{\frac{2}{\alpha}}+6\alpha\right)+\sqrt{{\left(\lambda\left(\frac{K+1}{K}\right)192\kappa^2e^{\frac{4}{\alpha}}+6\alpha\right)}^2+20\left(\lambda\left(\frac{K+1}{K}\right)192\kappa^2e^{\frac{4}{\alpha}}-\alpha^2\right)}}{2\left(\lambda(\frac{K+1}{K})192\kappa^2e^{\frac{4}{\alpha}}-\alpha^2\right) }\\
     &\quad\leq \frac{2\left(\left(\lambda\frac{K+1}{K}\right)192\kappa^2e^{\frac{4}{\alpha}}+6\alpha\right)+\sqrt{20\left(\lambda\left(\frac{K+1}{K}\right)192\kappa^2e^{\frac{4}{\alpha}}-\alpha^2\right)}}{2\left(\lambda\left(\frac{K+1}{K}\right)192\kappa^2e^{\frac{4}{\alpha}}-\alpha^2\right)}\nonumber\\
     &\quad=1+\frac{\alpha^2+6\alpha}{\lambda\left(\frac{K+1}{K}\right)192\kappa^2e^{\frac{4}{\alpha}}-\alpha^2}+\frac{\sqrt{5}}{\sqrt{\lambda\left(\frac{K+1}{K}\right)192\kappa^2e^{\frac{4}{\alpha}}-\alpha^2}}
\end{align}
}

\subsection{Proof of Lemmas~\ref{lemma:tasbihb44}}

Now letting $\zeta{(t)}\triangleq \mathbb{E}[f(\bar{\boldsymbol{w}}^{(t)})-f^*]$ and multiplying both sides of (\ref{eq:denoised-iteratt}) with $(t+b+1)^2$ we get:
\begin{align}
    \zeta(t+1)\leq \Delta_t\zeta(t)+{c}_t
\end{align}
Next, by defining $z_t\triangleq (t+a)^2$ similar to  \cite{stich2018local}, we have
\begin{align}
    \Delta_t\frac{z_t}{\eta_t}=(1-\mu\eta_t)\mu\frac{(t+a)^3}{4}=\frac{\mu(a+t-4)(a+t)^2}{4}\leq \mu \frac{(a+t-1)^3}{4}=\frac{z_{t-1}}{\eta_{t-1}}\label{eq:seb-ineq}
\end{align}

Now by multiplying both sides of (\ref{eq:final-iter}) with $\frac{z_t}{\eta_t}$ we have:
\begin{align}
    \frac{z_t}{\eta_t}\zeta(t+1)&\leq \zeta(t)\Delta_t\frac{z_t}{\eta_t}+\frac{z_t}{\eta_t}{c}_t\nonumber\\
    &\stackrel{\text{\ding{192}}}{\leq} \zeta(t)\frac{z_{t-1}}{\eta_{t-1}}+\frac{z_t}{\eta_t}{c}_t,\label{eq:final-iter}
\end{align}
where \ding{192} follows from (\ref{eq:seb-ineq}). Next iterating over (\ref{eq:final-iter}) leads to the following bound:
\begin{align}
    \zeta(T)\frac{z_{T-1}}{\eta_{T-1}}&\leq (1-\mu \eta_0)\frac{z_{0}}{\eta_{0}}\zeta(0)+\sum_{k=0}^{T-1}\frac{z_k}{\eta_k}{c}_k\nonumber\\
    \label{eq:l-t-final-step}
\end{align}
Final step in proof is to bound $\sum_{k=0}^{T-1}\frac{z_k}{\eta_k}{c}_k$ as follows:
{\begin{align}
  \sum_{k=0}^{T-1}\frac{z_k}{\eta_k}{c}_{k} &= \frac{\mu}{4}\sum_{k=0}^{T-1}(k+a)^3 \Big(\frac{L\eta_k^2 \sigma^2}{2pB}+\frac{\eta_k L^2}{p}\big(\sum_{k=t_c+1}^{k-1}\eta_{k}^2\frac{(p+1)\sigma^2}{pB}\big)\Big)\nonumber\\
  &\stackrel{\text{\ding{192}}}{\leq} \frac{\mu}{4}\sum_{k=0}^{T-1}(k+a)^3 \Big(\frac{L\eta_k^2\sigma^2}{2pB}+\frac{\eta_kL^2}{p}\eta^2_{\big(\floor{\frac{k}{E}}E\big)}(E-1)\frac{\sigma^2}{b}(\frac{p+1}{p})\Big)\nonumber\\
  &=\frac{L\sigma^2\mu}{8pB}\sum_{k=0}^{T-1}(k+a)^3\eta_k^2+\frac{L^2\frac{\sigma^2}{B}(p+1)(E-1)\mu}{4p^2}\sum_{k=0}^{T-1}(k+a)^3\eta_k\eta^2_{\big(\floor{\frac{k}{E}}E\big)},\label{eq:bounding-e-fin}
\end{align}}

\ding{192} is due to fact that $\eta_t$ is non-increasing. 

Next we bound two terms in (\ref{eq:bounding-e-fin}) as follows:

\begin{align}
    \sum_{k=0}^{T-1}(k+a)^3\eta_k^2 &=\sum_{k=0}^{T-1}(k+a)^3\frac{16}{\mu^2 (k+a)^2}\nonumber\\
    &=\frac{16}{\mu^2}\sum_{k=0}^{T-1}(k+a)\nonumber\\
    &= \frac{16}{\mu^2}\left(\frac{T(T-1)}{2}+aT\right)\nonumber\\
    &\leq \frac{8T(T+2a)}{\mu^2},
\end{align}
and similarly we have:
\begin{align}
  \sum_{k=0}^{T-1}(k+a)^3\eta_k\eta^2_{\big(\ceil{\frac{k}{E}}E\big)}&= \frac{64}{\mu^3}\sum_{k=0}^{T-1}(k+a)^3\frac{1}{k+a}\left(\frac{1}{\floor{\frac{k}{E}}E+a}\right)^2\nonumber\\
  &\stackrel{\text{\ding{192}}}{\leq} \frac{64}{\mu^3}\sum_{k=0}^{T-1}\left(\frac{k+a}{\floor{k+a}}\right)^2\nonumber\\
  &\stackrel{\text{\ding{193}}}{\leq}\frac{256}{\mu^3}T, 
\end{align}
where \ding{192} follows from $\floor{\frac{k}{E}}E+a\geq \floor{k+a}$ and \ding{193} comes from the fact that $\frac{n}{\floor{n}}\leq 2$ for any integer $n>0$.

Now, we get:

{\begin{align}
    \sum_{k=0}^{T-1}\frac{z_k}{\eta_k}{c}_{k-1}(k)&\leq \frac{L\sigma^2\mu}{8pB}(\frac{8T(T+2a)}{\mu^2})+\frac{L^2\frac{\sigma^2}{b}(p+1)(E-1)\mu}{4p^2}(\frac{256}{\mu^3}T)\nonumber\\
    &=\frac{L\sigma^2T(T+2a)}{pB\mu}+\frac{64L^2\sigma^2T(E-1)}{pB\mu^2}\nonumber\\
    &=\frac{\kappa \sigma^2T(T+2a)}{pB}+\frac{64\kappa^2\sigma^2T(E-1)}{pB},\label{eq:t-p-in-f-b}
\end{align}}
Then, the upper bound becomes as follows:
{\begin{align}
        \zeta(T)\frac{z_{T-1}}{\eta_{T-1}}&=\mathbb{E}\big[f(\bar{\boldsymbol{w}}^{(t)})-f^*\big]\frac{\mu(T+a)^3}{4}\nonumber\\
        &\leq (1-\mu \eta_0)\frac{z_{T-1}}{\eta_{T-1}}\zeta(0)+\sum_{k=0}^{T-1}\frac{z_k}{\eta_k}{c}_k\nonumber\\
        &\leq (1-\mu \eta_0)\frac{z_{0}}{\eta_{0}}\zeta(0)+\frac{\kappa \frac{\sigma^2}{b}T(T+2a)}{pB}+\frac{64\kappa^2\sigma^2T(E-1)}{pB}\nonumber\\
        &\leq \frac{\mu a^3}{4}\mathbb{E}\big[f(\bar{\boldsymbol{w}}^{(0)})-f^*\big]+\frac{\kappa \sigma^2T(T+2a)}{pB}+\frac{64\kappa^2\sigma^2T(E-1)}{pB}, \label{eq:v-final}
\end{align}}
Finally, from (\ref{eq:v-final}) we conclude:
{\begin{align}
   \mathbb{E}\big[f(\bar{\boldsymbol{w}}^{(t)})-f^*\big]&\leq \frac{a^3}{(T+a)^3}\mathbb{E}\big[f(\bar{\boldsymbol{w}}^{(0)})-f^*\big]+ \frac{4\kappa \sigma^2T(T+2a)}{\mu pB(T+a)^3}+\frac{256\kappa^2\sigma^2T(E-1)}{\mu pB(T+a)^3}
\end{align}}

\subsection{Proof of Lemma \ref{lemma:dif-b-u-x}}
Recalling $t_c\triangleq \floor{\frac{t}{E}}E$ and 
 $\bar{\boldsymbol{w}}^{(t_c+1)}=\frac{1}{K}\sum_{j\in\mathcal{P}_t} \boldsymbol{w}_j^{(t_c+1)}$, the local solution at $j$th machine at any particular iteration $t > t_c$  can be written  as:
 \begin{align}
 \boldsymbol{w}_j^{(t)}=  \boldsymbol{w}_j^{(t-1)}-\eta\tilde{\mathbf{g}}_{j}^{(t-1)}\stackrel{\text{\ding{192}}}{=}\boldsymbol{w}_j^{(t-2)}-\Big[\eta\tilde{\mathbf{g}}_{j}^{(t-2)}+\eta\tilde{\mathbf{g}}_{j}^{(t-1)}\Big]=\bar{\boldsymbol{w}}^{(t_c+1)}-\sum_{k=t_c+1}^{t-1}\eta\tilde{\mathbf{g}}_{j}^{(k)}, \label{eq:j-model-update}
 \end{align}
 where \ding{192} follows from the update rule of local solutions. Now, from (\ref{eq:j-model-update}) we compute the average virtual model at $t$th iteration as follows:
\begin{align}
    \bar{\boldsymbol{w}}^{(t)}=\bar{\boldsymbol{w}}^{(t_c+1)}-\frac{1}{K}\sum_{j\in\mathcal{P}_t}\sum_{k=t_c+1}^{t-1}\eta\tilde{\mathbf{g}}_{j}^{(k)}\label{eq:model-avg-time-t}
\end{align}
First, without loss of generality, suppose $t=s_tE+r$ where $s_t$ and $r$ denotes the indices of communication round and local updates, respectively.

Next consider that for $t_c+1 < t\leq t_c+E$, $\mathbb{E}_t\|\bar{\boldsymbol{w}}^{(t)}-\boldsymbol{w}_j^{(t)}\|^2$  does not depend on time $t\leq t_c$ for $1\leq j\leq p$. Therefore, for all iterations $0\leq t\leq T-1$ we can write: 
\begin{align}
\label{eqn:c1:overall}
    \frac{1}{T}\sum_{t=0}^{T-1}\sum_{j=1}^p\mathbb{E}\|\bar{\boldsymbol{w}}^{({t})}-\boldsymbol{w}_j^{({t})}\|^2 & =\frac{1}{T}\sum_{s_t=0}^{\frac{T-1}{E}-1}\sum_{r=1}^{E}\sum_{j=1}^p\mathbb{E}\|\bar{\boldsymbol{w}}^{({s_tE+r})}-\boldsymbol{w}_j^{({s_tE+r})}\|^2
\end{align}

We bound the term $\mathbb{E}\|\bar{\boldsymbol{w}}^{(t)}-\boldsymbol{w}_l^{({t})}\|^2$ for $t_c+1 \leq  {t}=s_tE+r\leq t_c+ E$ in three steps: (1) We first relate this quantity to the variance between stochastic gradient and full gradient, (2) We use Assumption~\ref{Ass:1} on unbiased estimation and i.i.d sampling, (3) We use Assumption~\ref{Ass:2} to bound the final terms. 

In what follows, we proceed to the details of each of these three steps. \\

 \noindent \textbf{Step 1: Relating to variance}
\begin{align}
 \mathbb{E}\|\bar{\boldsymbol{w}}^{(s_tE+r)}-&\boldsymbol{w}_l^{(s_tE+r)}\|^2 =\mathbb{E}\|\bar{\boldsymbol{w}}^{(t_c+1)}-\Big[\sum_{k=t_c+1}^{{t-1}}\eta\tilde{\mathbf{g}}_{l}^{(k)}\Big]-\bar{\boldsymbol{w}}^{(t_c+1)}+\Big[\frac{1}{K}\sum_{j\in\mathcal{P}_t}\sum_{k=t_c+1}^{{t-1}}\eta\tilde{\mathbf{g}}_{j}^{(k)}\Big]\|^2\nonumber\\
 &\stackrel{\text{\ding{192}}}{=}\mathbb{E}\|\sum_{k=1}^{r}\eta\tilde{\mathbf{g}}_{l}^{(s_tE+k)}-\frac{1}{K}\sum_{j\in\mathcal{P}_t}\sum_{k=1}^{r}\eta\tilde{\mathbf{g}}_{j}^{(s_tE+k)}\|^2\nonumber\\
 &\stackrel{\text{\ding{193}}}{\leq} 2\Big[\mathbb{E}\|\sum_{k=1}^{r}\eta\tilde{\mathbf{g}}_{l}^{(s_tE+k)}\|^2+\mathbb{E}\|\frac{1}{K}\sum_{j\in\mathcal{P}_t}\sum_{k=1}^{r}\eta\tilde{\mathbf{g}}_{j}^{(s_tE+k)}\|^2\Big]\nonumber\\
 &\stackrel{\text{\ding{194}}}{=}2\Big[\mathbb{E}\|\sum_{k=1}^{r}\eta\tilde{\mathbf{g}}_{l}^{(s_tE+k)}-\mathbb{E}\big[\sum_{k=1}^{r}\eta\tilde{\mathbf{g}}_{l}^{(s_tE+k)}\big]\|^2+\|\mathbb{E}\big[\sum_{k=1}^{{r}}\eta\tilde{\mathbf{g}}_{l}^{(s_tE+k)}\big]\|^2\nonumber\\
 &+\mathbb{E}\|\frac{1}{K}\sum_{j\in\mathcal{P}_t}\sum_{k=1}^{{r}}\eta\tilde{\mathbf{g}}_{j}^{(s_tE+k)}-\mathbb{E}\big[\frac{1}{K}\sum_{j\in\mathcal{P}_t}\sum_{k=1}^{r}\eta\tilde{\mathbf{g}}_{j}^{(s_tE+k)}\big]\|^2\Big]\nonumber\\
 &\qquad+\|\mathbb{E}\big[\frac{1}{K}\sum_{j\in\mathcal{P}_t}\sum_{k=1}^{r}\eta\tilde{\mathbf{g}}_{j}^{(s_tE+k)}\big]\|^2\nonumber\\
 &\stackrel{\text{\ding{195}}}{=}{2}\mathbb{E}\Big(\Big[\|\sum_{k=1}^{r}\eta\Big[\tilde{\mathbf{g}}_{l}^{(s_tE+k)}-\mathbf{g}_{l}^{(s_tE+k)}\Big]\|^2+\|\sum_{k=1}^{r}\eta\mathbf{g}_{l}^{(s_tE+k)}\|^2\Big]\nonumber\\
 &\quad+\|\frac{1}{K}\sum_{j\in\mathcal{P}_t}\sum_{k=1}^{r}\eta\Big[\tilde{\mathbf{g}}_{j}^{(s_tE+k)}-\mathbf{g}_{j}^{(s_tE+k)}\Big]\|^2+\|\frac{1}{K}\sum_{j\in\mathcal{P}_t}\sum_{k=1}^{r}\eta\mathbf{g}_{j}^{(s_tE+k)}\|^2\Big),\nonumber\\
\end{align}

where \ding{192} holds because ${t}=s_tE+r\leq t_c+ E$, \ding{193} is due to $\|\mathbf{a}-\mathbf{b}\|^2\leq 2(\|\mathbf{a}\|^2+\|\mathbf{b}\|^2)$, \ding{194} comes from $\mathbb{E}[\boldsymbol{w}^2]=\mathbb{E}[[\boldsymbol{w}-\mathbb{E}[\boldsymbol{w}]]^2]+\mathbb{E}[\boldsymbol{w}]^2$, \ding{195} comes from unbiased estimation Assumption~\ref{Ass:1}.\\

\noindent\textbf{Step 2: Unbiased estimation and i.i.d. sampling}

\begin{align}
    &{=}{2}\mathbb{E}\Big(\Big[\sum_{k=1}^{r}\eta^2\|\tilde{\mathbf{g}}_{l}^{(s_tE+k)}-\mathbf{g}_{l}^{(s_tE+k)}\|^2\nonumber\\&+\sum_{w\neq z \vee l\neq v}\Big\langle\eta\tilde{\mathbf{g}}_{l}^{(w)}-\eta\mathbf{g}_{l}^{(w)},\eta\tilde{\mathbf{g}}_{v}^{(z)}-\eta\mathbf{g}_{v}^{(z)}\Big\rangle+\|\sum_{k=1}^{r}\eta\mathbf{g}_{l}^{(s_tE+k)}\|^2\Big]\nonumber\\
 &\quad+\frac{1}{K^2}\sum_{l\in\mathcal{P}_t}\sum_{k=1}^{r}\eta^2\|\tilde{\mathbf{g}}_{l}^{(s_tE+k)}-\mathbf{g}_{l}^{(s_tE+k)}\|^2\nonumber\\
 &+\frac{1}{K^2}\sum_{w\neq z \vee l\neq v}\Big\langle\eta\tilde{\mathbf{g}}_{l}^{(w)}-\eta\mathbf{g}_{l}^{(w)},\eta\tilde{\mathbf{g}}_{v}^{(z)}-\eta\mathbf{g}_{v}^{(z)}\Big\rangle+\|\frac{1}{K}\sum_{j\in\mathcal{P}_t}\sum_{k=1}^{r}\eta\mathbf{g}_{j}^{(s_tE+k)}\|^2\Big)\nonumber\\
 &\stackrel{\text{\ding{196}}}{=}{2}\mathbb{E}\Big(\Big[\sum_{k=1}^{r}\eta^2\|\tilde{\mathbf{g}}_{l}^{(s_tE+k)}-\mathbf{g}_{l}^{(s_tE+k)}\|^2+\|\sum_{k=1}^{r}\eta\mathbf{g}_{l}^{(s_tE+k)}\|^2\Big]\nonumber\\
 &\quad+\frac{1}{K^2}\sum_{j\in\mathcal{P}_t}\sum_{k=1}^{r}\eta^2\|\tilde{\mathbf{g}}_{j}^{(s_tE+k)}-\mathbf{g}_{j}^{(s_tE+k)}\|^2+\|\frac{1}{K}\sum_{j\in\mathcal{P}_t}\sum_{k=1}^{r}\eta\mathbf{g}_{j}^{(s_tE+k)}\|^2\Big)\nonumber\\
 &\stackrel{\text{\ding{197}}}{\leq}{2}\mathbb{E}\Big(\Big[\sum_{k=1}^{r}\eta^2\|\tilde{\mathbf{g}}_{l}^{(s_tE+k)}-\mathbf{g}_{l}^{(s_tE+k)}\|^2+r\sum_{k=1}^{r}\eta^2\|\mathbf{g}_{l}^{(s_tE+k)}\|^2\Big]\nonumber\\
 &\quad+\frac{1}{K^2}\sum_{j\in\mathcal{P}_t}\sum_{k=1}^{r}\|\tilde{\mathbf{g}}_{j}^{(s_tE+k)}-\mathbf{g}_{j}^{(s_tE+k)}\|^2+\frac{r}{K^2}\sum_{j\in\mathcal{P}_t}\sum_{k=1}^{r}\eta_{s_tE+k}^2\|\mathbf{g}_{j}^{(s_tE+k)}\|^2\Big)\nonumber\\
 &={2}\Big(\Big[\sum_{k=1}^{r}\eta^2\mathbb{E}\|\tilde{\mathbf{g}}_{l}^{(s_tE+k)}-\mathbf{g}_{l}^{(s_tE+k)}\|^2+r\sum_{k=1}^{r}\eta^2\mathbb{E}\|\mathbf{g}_{l}^{(s_tE+k)}\|^2\Big]\nonumber\\
 &\quad+\frac{1}{K^2}\sum_{j\in\mathcal{P}_t}\sum_{k=1}^{r}\eta^2\mathbb{E}\|\tilde{\mathbf{g}}_{j}^{(s_tE+k)}-\mathbf{g}_{j}^{(s_tE+k)}\|^2+\frac{r}{K^2}\sum_{j\in\mathcal{P}_t}\sum_{k=1}^{r}\eta^2\mathbb{E}\|\mathbf{g}_{j}^{(s_tE+k)}\|^2\Big),
\label{eq:four-term-boundingg}
\end{align}
\ding{196} is due to independent mini-batch sampling as well as unbiased estimation assumption, and  \ding{197}  follows from inequality $\|\sum_{i=1}^m\mathbf{a}_i\|^2\leq m\sum_{i=1}^m\|\mathbf{a}_i\|^2$. \\

\noindent \textbf{Step 3: Using Assumption \ref{Ass:2}}

Next step is to bound the terms in (\ref{eq:four-term-boundingg}) using Assumption \ref{Ass:2} as follow:

\begin{align}\mathbb{E}\|\bar{\boldsymbol{w}}^{({t})}-\boldsymbol{w}_l^{({t})}\|^2&\leq {2}\Big(\Big[\sum_{k=1}^{r}\eta^2\Big[C_1\|{\mathbf{g}}_l^{(s_tE+k)})\|^2+{\frac{\sigma^2}{B}}\Big]+r\sum_{k=1}^{r}\eta^2\|\Big[{\mathbf{g}}^{(s_tE+k)}_l\Big]\|^2\Big]\nonumber\\
 &\quad+\frac{1}{K^2}\sum_{j\in\mathcal{P}_t}\sum_{k=1}^{r}\eta^2\Big[C_1\|{\mathbf{g}}^{(s_tE+k)}_j\|^2+{\frac{\sigma^2}{B}}\Big]+\frac{r}{K^2}\sum_{j\in\mathcal{P}_t}\sum_{k=1}^{r}\eta^2\|\Big[{\mathbf{g}}_j^{(s_tE+k)}\Big]\|^2\Big)\nonumber\\
 &={2}\Big(\Big[\sum_{k=1}^{r}\eta^2C_1\|\mathbf{g}_{l}^{(s_tE+k)}\|^2+\sum_{k=1}^{r}\eta^2{\frac{\sigma^2}{B}}+r\sum_{k=1}^{r}\eta^2\|\mathbf{g}_{l}^{(s_tE+k)}\|^2\Big]\nonumber\\
 &\quad+\frac{1}{K^2}\sum_{j\in\mathcal{P}_t}\sum_{k=1}^{r}\eta^2C_1\|\mathbf{g}_{j}^{(s_tE+k)}\|^2+\sum_{k=1}^r\eta^2\frac{\sigma^2}{KB}+\frac{r}{K^2}\sum_{j\in\mathcal{P}_t}\sum_{k=1}^{r}\eta^2\|\mathbf{g}_{j}^{(s_tE+k)}\|^2\Big),\label{eq:var-n-bound}
\end{align}
Now taking expectation over the random selection of workers of (\ref{eq:var-n-bound}), we obtain:
\begin{align}
    \mathbb{E}_{\mathcal{P}_t}\Big[\mathbb{E}\|\bar{\boldsymbol{w}}^{(t)}-\boldsymbol{w}_l^{(t)}\|^2\Big]&\leq2\mathbb{E}_{\mathcal{P}_t}\Big(\Big[\sum_{k=1}^{r}\eta^2C_1\|\mathbf{g}_{l}^{(s_tE+k)}\|^2+\sum_{k=1}^{r}\eta^2{\frac{\sigma^2}{B}}+r\sum_{k=1}^{r}\eta^2\|\mathbf{g}_{l}^{(s_tE+k)}\|^2\Big]\nonumber\\
 &\qquad+\frac{1}{K^2}\sum_{j\in\mathcal{P}_t}\sum_{k=1}^{r}\eta^2C_1\|\mathbf{g}_{j}^{(s_tE+k)}\|^2+\sum_{k=1}^{r}\eta^2\frac{\sigma^2}{KB}+\frac{r}{K^2}\sum_{j\in\mathcal{P}_t}\sum_{k=1}^{r}\eta^2\mathbb{E}\|\mathbf{g}_{j}^{(s_tE+k)}\|^2\Big)\nonumber\\
 &\stackrel{\text{\ding{192}}}{=}2\Big(\Big[\sum_{k=1}^{r}\eta^2C_1\|\mathbf{g}_{l}^{(s_tE+k)}\|^2+\sum_{k=1}^{r}\eta^2{\frac{\sigma^2}{B}}+r\sum_{k=1}^{r}\eta^2\|\mathbf{g}_{l}^{(s_tE+k)}\|^2\Big]\nonumber\\
 &\qquad+\frac{1}{K^2}K\sum_{j=1}^pq_j\sum_{k=1}^{r}\eta^2C_1\|\mathbf{g}_{j}^{(s_tE+k)}\|^2+\sum_{k=1}^{r}\eta^2\frac{\sigma^2}{KB}\\
 &\qquad+\frac{Kr}{K^2}\sum_{j=1}^pq_j\sum_{k=1}^{r}\eta^2\mathbb{E}\|\mathbf{g}_{j}^{(s_tE+k)}\|^2\Big)\label{eq:step}
 \end{align}
where \ding{192} comes from applying Fact~\ref{fact:1}. Now, we upper bound $\sum_{r=1}^{E}\sum_{j=1}^pq_j\mathbb{E}_{\mathcal{P}_t}\Big[\mathbb{E}\|\bar{\boldsymbol{w}}^{(t)}-\boldsymbol{w}_j^{(t)}\|\Big]$ using (\ref{eq:step}) as follows:
 \begin{align}
 \sum_{r=1}^{E}\sum_{j=1}^pq_j\mathbb{E}_{\mathcal{P}_t}\Big[\mathbb{E}\|\bar{\boldsymbol{w}}^{(s_tE+k)}-\boldsymbol{w}_j^{(s_tE+k)}\|\Big]&\leq 2\sum_{r=1}^E\sum_{l=1}^pq_l\Big(\Big[\sum_{k=1}^{r}\eta^2C_1\|\mathbf{g}_{l}^{(s_tE+k)}\|^2+\sum_{k=1}^{r}\eta^2{\frac{\sigma^2}{B}}+r\sum_{k=1}^{r}\eta^2\|\mathbf{g}_{l}^{(s_tE+k)}\|^2\Big]\nonumber\\
 &\qquad+\frac{1}{K^2}K\sum_{j=1}^pq_j\sum_{k=1}^{r}\eta^2C_1\|\mathbf{g}_{j}^{(s_tE+k)}\|^2+\sum_{k=1}^{r}\eta^2\frac{\sigma^2}{KB}\nonumber\\
 &\qquad+\frac{Kr}{K^2}\sum_{j=1}^pq_j\sum_{k=1}^{E}\eta^2\|\mathbf{g}_{j}^{(s_tE+k)}\|^2\Big)\\
 &{=}2\eta^2\sum_{r=1}^E\Big(\Big[\sum_{k=1}^{r}C_1\sum_{l=1}^pq_l\|\mathbf{g}_{l}^{(s_tE+k)}\|^2+\frac{r\sigma^2}{B}+r\sum_{k=1}^{r}\sum_{l=1}^pq_l\|\mathbf{g}_{l}^{(s_tE+k)}\|^2\Big]\nonumber\\
 &\qquad+\frac{1}{K}\sum_{j=1}^pq_j\sum_{k=1}^{r}C_1\|\mathbf{g}_{j}^{(s_tE+k)}\|^2+\frac{(r)\sigma^2}{KB}+\frac{r}{K}\sum_{j=1}^pq_j\sum_{k=1}^{r}\|\mathbf{g}_{j}^{(s_tE+k)}\|^2\Big)\\
 &\stackrel{\text{\ding{192}}}{\leq}2\eta^2\Big(\Big[\sum_{k=1}^{E}C_1\sum_{l=1}^pq_l\|\mathbf{g}_{l}^{(s_tE+k)}\|^2+\frac{E(E+1)\sigma^2}{2B}\nonumber\\
 &\qquad+\frac{E(E+1)}{2}\sum_{k=1}^{E}\sum_{l=1}^pq_l\|\mathbf{g}_{l}^{(s_tE+k)}\|^2\Big]\nonumber\\
 &\qquad+\frac{1}{K}\sum_{j=1}^pq_j\sum_{k=1}^{E}C_1\|\mathbf{g}_{j}^{(s_tE+k)}\|^2+\frac{E(E+1)\sigma^2}{2KB}\nonumber\\&\qquad+\frac{E(E+1)}{2K}\sum_{j=1}^pq_j\sum_{k=1}^{E}\|\mathbf{g}_{j}^{(s_tE+k)}\|^2\Big)\nonumber\\
 &= \frac{\eta^2(K+1)}{K}\Big(\Big[\big(2C_1+E(E+1)\big)\sum_{k=1}^{E}\sum_{j=1}^pq_j\|\mathbf{g}_{j}^{(s_tE+k)}\|^2\Big]+\frac{E(E+1)\sigma^2}{B}\Big),\label{eq:local-sum-timee}
\end{align}
where \ding{192} follows from the fact that the terms $\|\mathbf{g}_l\|^2$ are positive.

Finally, taking summation over communication periods in (\ref{eq:local-sum-timee}) gives:
\begin{align}
    \sum_{s_t=0}^{(T-1)/E-1}\sum_{r=1}^{E}\sum_{j=1}^pq_j\mathbb{E}_{\mathcal{P}_t}\Big[\mathbb{E}\|\bar{\boldsymbol{w}}^{(s_tE+k)}-\boldsymbol{w}_j^{(s_tE+k)}\|\Big]&\leq \frac{\eta^2(K+1)}{K}\Big(\Big[\big(2C_1+E(E+1)\big)\    \sum_{s_t=0}^{(T-1)/E-1}\sum_{k=1}^{E}\sum_{j=1}^pq_j\|\mathbf{g}_{j}^{(s_tE+k)}\|^2\Big]\nonumber\\
    &\quad+\frac{T(E+1)\sigma^2}{2B}\Big)\nonumber\\
    &=\frac{\eta^2(K+1)}{K}\Big(\Big[\big(2C_1+E(E+1)\big)\    \sum_{t=0}^{T-1}\sum_{j=1}^pq_j\|\mathbf{g}_{j}^{(t)}\|^2\Big]+\frac{T(E+1)\sigma^2}{B}\Big)
\end{align}
which leads to
\begin{align}
    \frac{1}{T}\sum_{t=0}^{T-1}\sum_{j=1}^pq_j\mathbb{E}_{\mathcal{P}_t}\Big[\mathbb{E}\|\bar{\boldsymbol{w}}^{(t)}-\boldsymbol{w}_j^{(t)}\|\Big]&\leq \frac{\big(2C_1+E(E+1)\big)}{T}\frac{\eta^2(K+1)}{K}\sum_{t=0}^{T-1}\sum_{j=1}^pq_j\|\mathbf{g}_{j}^{(t)}\|^2+\frac{\eta^2(K+1)(E+1)\sigma^2}{KB}\nonumber\\
    &\stackrel{\text{\ding{192}}}{\leq} \frac{\big(2C_1+E(E+1)\big)}{T}\frac{\lambda\eta^2(K+1)}{K}\sum_{t=0}^{T-1}\|\sum_{j=1}^pq_j\mathbf{g}_{j}^{(t)}\|^2+\frac{\eta^2(K+1)(E+1)\sigma^2}{KB}
\end{align}
where \ding{192} follows from the definition of weighted gradient diversity and bound $\Lambda(\boldsymbol{w},\mathbf{q})\leq \lambda$.


\end{document}